\newtheorem{assumption}{Assumption}
\newtheorem{theorem}{Theorem}
\newtheorem{remark}{Remark}
\newtheorem{lemma}{Lemma}
\newtheorem{definition}{Definition}
\newtheorem*{proof}{Proof}
\DeclareMathOperator*{\argmin}{arg\,min}
\title{Online Nonstochastic Control with Adversarial and Static Constraints}
\author{Xin Liu \\ShanghaiTech University \\ liuxin7@shanghaitech.edu.cn
   \and Zixian Yang \\University of Michigan, Ann Arbor \\ zixian@umich.edu   
   \and Lei Ying \\University of Michigan, Ann Arbor \\ leiying@umich.edu}
\date{}
\begin{document}

\maketitle

\begin{abstract}
This paper studies online nonstochastic control problems with adversarial and static constraints. We propose online nonstochastic control algorithms that achieve both sublinear regret and sublinear adversarial constraint violation while keeping static constraint violation minimal against the optimal constrained linear control policy in hindsight. To establish the results, we introduce an online convex optimization with memory framework under adversarial and static constraints, which serves as a subroutine for the constrained online nonstochastic control algorithms. This subroutine also achieves the state-of-the-art regret and constraint violation bounds for constrained online convex optimization problems, which is of independent interest. Our experiments demonstrate the proposed control algorithms are adaptive to adversarial constraints and achieve smaller cumulative costs and violations. Moreover, our algorithms are less conservative and achieve significantly smaller cumulative costs than the state-of-the-art algorithm.
\end{abstract}

\section{Introduction}
Online nonstochastic control paradigm has been widely applied in practice \cite{HazSin_22,SuoGhaMin_21,OcoShiShi_22}. It has a topic of great interest in both the learning and control communities because online nonstochastic control algorithms are robust to time-varying and even adversarial environments \cite{AgaBulHaz_19,AgaHazSin_19,AloAviTom_18,DeaTuSte_19}. 
In an online nonstochastic control problem, the learner aims to learn a controller that minimizes the cumulative costs in a time-varying or even adversarial environment (e.g., the adversarial cost functions and disturbances).  
Practical control systems often operate under various constraints (e.g., safety, demand-supply, or energy  constraints), 
which could be unpredictable and adversarial as well.  
For example, robots need to navigate along a collision-free path by maintaining a minimum distance from each other with complicated surroundings (e.g., pedestrians or other robots) \cite{LukMelAda_22}; cloud computing platforms should guarantee low-latency service for users with time-varying traffic workload \cite{TirBarDen_20}. 
Motivated by these applications, we focus on online nonstochastic control problems with adversarial constraints and propose online nonstochastic control algorithms to achieve the minimum cost and the best constraint satisfaction. 

It is a challenging task to synthesize a safe (online) controller because of the conflicting objectives, i.e. minimizing the cumulative costs while satisfying all constraints. The traditional way to guarantee constraint satisfaction is to incorporate the constraints into Model Predictive Control (constrained MPC) \cite{MaySerRak_05,RawMayDie_17}. However, constrained MPC often introduces overly-conservative, even infeasible, actions in the presence of system disturbances. To address the issue, a sequence of works have relaxed or softened the constraints in MPC \cite{ZeiJonMor_10,WabKriZei_22,RakZhaSun_23}. For online nonstochatic control problems where  cost functions or disturbance could be arbitrary and time-varying, it is impossible to predict the model so the constrained MPC method is not applicable. 
Only a few works  \cite{NonMulMat_21, LiDasLi_21} have considered online non-stochastic control with constraints, and they only studied ``static'' affine constraints on the state $x_t$ and input $u_t$, i.e., $D_x x_t \leq 0$ and $D_u u_t \leq 0.$  
The work by \cite{NonMulMat_21} studied an online nonstochatic control problem with state and input constraints, but the system dynamics are noise/disturbance-free. 
The work most related to ours is \cite{LiDasLi_21}, which considers adversarial cost functions 
and adversarial disturbance but static affine constraints. The paper proposed a gradient descent-based control algorithm (called online gradient descent with buffer zones (OGD-BZ)) to achieve $\Tilde{O}(\sqrt{T})$ regret while the static affine constraints are satisfied. However, the work assumed the affine constraints and the knowledge of slackness of the constraints so that robust optimization methods can be used to construct safe/feasible regions for the affine constraints. The method also has the issue of being over-conservative as constrained MPC. Moreover, the method in \cite{LiDasLi_21} cannot be applied to the adversarial constraints because they are unknown to the controller before an action is taken. 

In this paper, we study an online nonstochastic control problem with adversarial constraints, where the cost and constraint functions are revealed after the control/action has been taken (our setting can also include static constraints).
Specifically, we consider a discrete-time, linear system as follows
$$x_{t+1} = A x_{t} + B u_t + w_t,$$
where $x_t$ is the state, $u_t$ is the control/action and $w_t$ is the adversarial noise/disturbance at time $t.$ The learner takes an action $u_t$ and observe the cost function $c_t(x_t, u_t)$ and constraint functions $d_t(x_t,u_t)$ and $l(x_t,u_t).$  The goal of the learner is to minimize the cumulative costs $\sum_{t=1}^T c_t(x_t, u_t)$ while satisfying the constraints, where the constraint violation will be measured by three different metrics: the soft cumulative constraint violation $\sum_{t=1}^T d_t(x_t, u_t),$ the hard cumulative violation $\sum_{t=1}^T d_t^{+}(x_t, u_t),$ and static anytime violation $l(x_t, u_t), \forall t.$ We propose a class of constrained online nonstochastic control algorithms (COCA) that guarantee sublinear regret and sublinear constraint violation against the {\em optimal linear} controllers for the constrained problems in hindsight, where the controller knows everything apriori. Our contributions are summarized below (in Table \ref{tab:main contribution}):
\begin{itemize}
    \item We propose COCA-Soft when adversarial constraints are measured using soft cumulative violation. The algorithm is based on the Lyapunov optimization method. COCA-Soft achieves $\Tilde{O}(\sqrt{T})$ regret, $O(1)$ cumulative soft violation for adversarial constraints, and $o(1)$ anytime violation for static constraints.  
    \item When considering hard cumulative violation as the metric, we propose COCA-Hard  based on proximal optimization methods. COCA-Hard achieves $\Tilde{O}(T^{2/3})$ regret, $\Tilde{O}(T^{2/3})$ hard cumulative violation for adversarial constraints, and $\Tilde{O}(T^{-1/3})$ anytime violation for static constraints. 
    \item When the cost functions are strongly-convex, we propose COCA-Best2Worlds that integrates proximal and Lyapunov optimization methods and provides performance guarantees in terms of both soft and hard violation metrics. COCA-Best2Worlds achieves $\Tilde{O}(\sqrt{T})$ regret, $O(1)$ and $\Tilde{O}(\sqrt{T})$ cumulative soft and hard violation for adversarial constraints, respectively, and $o(1)$ anytime violation for static constraints.  
\end{itemize}
To the best of our knowledge, all these results are new in the setting of online non-stochastic control with {\it adversarial} and {\it general static} constraints (not necessarily static affine constraints). 

\begin{table*}
\centering
\begin{tabular}{|c|c|c|c|c|}
\hline
Algorithms                            & Cost Function                                          & Regret                                                  & Soft/Hard Adversarial Vio. & Static Vio.  \\ \hline
OGD-BZ in \cite{LiDasLi_21} & Convex &$\Tilde O(\sqrt{T})$ & None/None & $\text{None}^*$ \\ \cline{1-1} \cline{2-3} \cline{3-4} \cline{4-5}
COCA-Soft & Convex &$\Tilde O(\sqrt{T})$ & $O(1)$/None & $o(1)$\\ \cline{1-1} \cline{2-3} \cline{3-4} \cline{4-5}  
 \cline{1-1} \cline{2-3} \cline{3-4} \cline{4-5} 
COCA-Hard &Convex &$\Tilde O(T^{2/3})$&$\Tilde O(T^{2/3})$/$\Tilde O(T^{2/3})$ & $\Tilde O(T^{-1/3})$
\\ 
 \cline{1-1} \cline{2-3} \cline{3-4} \cline{4-5}
COCA-Best2Worlds & Strongly Convex                                                       &  $\Tilde O(\sqrt{T})$  & $O(1)$/$\Tilde O(\sqrt T)$                   &  $o(1)$\\ \cline{1-1} \cline{2-3} \cline{3-4} \cline{4-5} 
\end{tabular}
\caption{Our contribution and related work. *\cite{LiDasLi_21} establishes  zero violation for static anytime affine constraints, which is a special case of static constraints studied in this paper.  Moreover, if we use projection-based method for static affine constraints, our algorithm can also achieve zero violation. 
}
\end{table*}\label{tab:main contribution}

\subsection{Related work}

{\bf Online Nonstochastic Control of Dynamic System: }
Online nonstochastic control leverages online learning or data-driven methods to design efficient and robust control algorithms in an  adversarial environment, where both cost functions and disturbances could be adversarial \cite{AgaBulHaz_19}. The main idea behind online nonstochastic control is to design a disturbance-action controller by carefully synthesizing the historical disturbance through the subroutine of online convex optimization (OCO) \cite{Haz_16}. The initial work \cite{AgaBulHaz_19} shows the disturbance-action controller can achieve $\Tilde{O}(\sqrt{T})$ regret w.r.t. the optimal linear controller in hindsight that knows all costs and disturbances beforehand. The results have been refined in \cite{AgaHazSin_19,FosSim_20} when the cost functions are strongly-convex and have been generalized to various settings \cite{SimSinHaz_20,MinGraSim_21}. 
The work \cite{SimSinHaz_20} studied online nonstochastic control problems for a block-box time-invariant system and it has been extended to the time-varying system in \cite{MinGraSim_21}. 
Further, a neural network has been used to parameterize the control policy in \cite{CheMinLee_22} and the regret performance is analyzed by combining OCO algorithm \cite{Haz_16} and neural-tangle kernel (NTK) theory \cite{JacGabHon_18}. 
However, these works do not consider any adversarial or static constraints. 

{\bf Online Learning with Constraints: }
Online learning with constraints has been widely studied in the literature \cite{MahJinYan_12,SunDeyKap_17, NeeYu_17, CaoZhaPoo_21, YiLiYan_21a, YiLiYan_21b, GuoLiuWei_22}. Existing results can be classified according to the types of constraints, e.g., static, stochastic, and adversarial constraints. We next only review the papers on adversarial constraints because they are the most related ones. 
The work \cite{SunDeyKap_17} studied OCO with adversarial constraints and established $O(\sqrt{T})$ regret and $O(T^{3/4})$ soft cumulative constraint violation. 
The work \cite{YiLiYan_21b, GuoLiuWei_22} considered the benchmark of hard cumulative violation and established $O(\sqrt{T})$ regret and $O(T^{3/4})$ hard violation. The performance has been further improved to be $O(\log{T})$ regret and $O(\sqrt{T\log T})$ hard violation when the objective is strongly-convex \cite{GuoLiuWei_22}. 

{\bf Safe Reinforcement Learning:}
Safe reinforcement learning (RL) refers to reinforcement learning with safety constraints and has received great interest as well
\cite{AniHumSha_13, FisAkaZei_19, JavFer_15, KolBerTur_18, WabZei_18, CheOroMur_19,TirBarDen_20,EfrManPir_20,DinZhaBas_20,DinWeiYan_21,LiuZhoKal_21,AmaThrYan_21,VasYanSze_22,CheJaiLuo_22,GhoZhoShr_22,WeiLiuYin_22}. In safe RL, The agent optimizes the policy by interacting with the environment without violating safety constraints. However, the line of safe RL requires either the knowledge of the initial safe policy or a stationary environment where the reward and cost distributions are time-invariant. 

\section{Online Nonstochastic Control with Constraints}
In this section, we introduce the online nonstochastic control problem with constraints and the performance metrics for evaluating the cost and constraint satisfaction. We consider the following linear  system:
$$x_{t+1} = A x_{t} + B u_t + w_t,$$
where $x_t \in \mathbb R^n$ is the state, $u_t \in \mathbb R^m$ is the control/action and $w_t \in \mathbb R^n$ is the noise or disturbance at time $t.$ Note $w_t$ could be even adversarial. The system parameters  $A \in \mathbb R^{n\times n}$ and $B \in \mathbb R^{n\times m}$ are assumed to be known.  A constrained online nonstochastic control system works as follows:
given the state $x_t, \forall t\in [T],$ the learner takes action $u_t$ and observes the cost function $c_t(x_t, u_t)$ and constraint functions $d_t(x_t,u_t)$ and $l(x_t,u_t).$ The system evolves to the next state $x_{t+1}$ according to the system equation. Our objective is to design an optimal control policy to minimize $\sum_{t=1}^T c_t(x_t, u_t)$ while satisfying the constraints. 
Next, we introduce our baseline control policy and define the performance metrics of regret and constraint satisfaction. 

{\noindent \bf Offline Control Problem:} Assuming the full knowledge of disturbance, cost functions, and constraint functions beforehand, the offline control problem is defined to be: 
\begin{align*}
    \min_{\{u_t\}}& ~ \sum_{t=1}^T c_t(x_t, u_t) \\
    \text{s.t.}& ~ x_{t+1} = Ax_t + Bu_t + w_t, \forall t \in [T], \\
    &~d_t(x_t, u_t) \leq 0, ~l(x_t, u_t) \leq 0, \forall t \in [T].
\end{align*}
We define $K^* \in \mathbb R^{m\times n}$ to be the optimal linear control $u_t^{K^*} = -K^* x_t^{K^*}$ which satisfies the constraints, i.e., $K^* \in \Omega$ such that $$\Omega =\left\{ \pi ~|~ d_t(x_t^{\pi}, u_t^{\pi}) \leq 0, l(x_t^{\pi}, u_t^{\pi}) \leq 0, \forall t \in [T]\right\}.$$

{\noindent \bf Regret:} Given the optimal linear policy $K^*$ as the baseline, the goal of the leaner is to design an online nonstochastic control policy $\pi$ that minimizes the following regret
\begin{align}
\mathcal R(T) = \sum_{t=1}^T c_t(x_t^\pi, u_t^\pi) - \sum_{t=1}^T c_t(x_t^{K^*}, u_t^{K^*})\nonumber.
\end{align}
{\noindent \bf Constraint Violation:} The control algorithm needs to obey the constraints. However, since the constraints are unknown and adversarial, some violation has to occur during learning and control. To evaluate the level of constraint satisfaction, we consider two different metrics for adversarial constraints: soft violation and hard violation:  
\begin{align*}
\mathcal V_d^{soft}(T) = \sum_{t=1}^T d_t(x_t^\pi, u_t^\pi), ~~
\mathcal V_d^{hard}(T) = \sum_{t=1}^T d_t^{+}(x_t^\pi, u_t^\pi),
\end{align*}
and the anytime violation for the static constraint $l$ is $$\mathcal V_l(t) = l(x_t^\pi,u_t^\pi).$$
Note soft and hard violation metrics for adversarial constraints are for different applications. For example, in cloud computing, the latency constraint is soft and the soft violation is a natural metric; however in drone control, the power constraint is hard and the hard violation is a better metric. We consider anytime violation for static constraint function $l$ because it is related to state and input constraints that needs to be satisfied anytime, resembling stability requirements. 
To present our algorithm, we first present several key concepts of online nonstochastic control from \cite{AgaBulHaz_19}.  

\subsection{Preliminary on Constrained Online Nonstochastic Control}
\begin{definition}[Strong Stability]
A linear controller $K \in \mathbb R^{m\times n}$ is $(\kappa, \rho)$-strongly stable if there exists matrices $A,B,U,L$ such that $$\tilde{A} := A-BK = ULU^{-1}$$ with $\max(\|U\|_2, \|H^{-1}\|_2, \|K\|_2) \leq \kappa$ and $\|L\|_2 \leq 1-\rho, \rho \in (0, 1].$ 
\end{definition}
Given the knowledge of system dynamics $A$ and $B,$ the $(\kappa, \rho)$-strongly stable controller $K$ can be computed with semi-definite programming (SDP) \cite{AloAviTom_18}. Note the stable controller $K$  might not satisfy the constraints in $\Omega$, i.e., $K\notin \Omega.$ 

\begin{definition}[Disturbance-Action Policy Class (DAC)]
A disturbance-action policy $\pi(K, \{\mathbf M_t\})$ with memory size $H$ is defined as follows $$u_t = -K x_t + \sum_{i=1}^H {\mathbf M}^{[i]}_t w_{t-i},$$
where $\mathbf M_t \in \mathcal M$ and $\mathcal M = \{{\mathbf M}^{[1]}_t, \cdots, {\mathbf M}^{[H]}_t ~|~ \|{\mathbf M}^{[i]}_t\| \leq a (1-\rho)^i, {\mathbf M}^{[i]}_t \in \mathbb R^{m\times n}, a>0, \forall i \in [H]\}.$ \label{def:dac}
\end{definition}
The disturbance-action policy consists of a linear combination of the disturbance (the memory size is $H=\Theta(\log T)$ in the paper). Given a stable controller $K$ and by carefully choosing $\{\mathbf M_t\},$ $\pi(K, \{\mathbf M_t\})$ aims to approximate a good linear stable controller that achieves small costs and satisfies the constraints in $\Omega.$ 
Further, we define DAC with fixed weights, which serves as an intermediate policy class and is frequently used in our analysis. 
\begin{definition}[DAC with Fixed Weight]
For a DAC $\pi(K, \{\mathbf M_t\}),$ the set of fixed weight DAC policies is 
\begin{align}
    \mathcal E = \{\pi(K, \{\mathbf M_t\}) ~|~ \mathbf M_t = \mathbf M, \forall t \in [T]\}. 
\end{align}
\end{definition}\label{def:approx cost and cons}
Let $\mathbf M_{s:t} := \{\mathbf M_{s}, \cdots, \mathbf M_{t}\}$ and $\tilde{A} = A - BK.$ Under a policy $\pi(K, \{\mathbf M_t\})$ in DAC,  $\Psi_{t,i}^\pi(\mathbf M_{t-H:t-1})$ is defined to be the disturbance-state transfer matrix:
\begin{align*}
&\Psi_{t,i}^\pi(\mathbf M_{t-H:t-1}) \\
=& \tilde{A}^{i-1} \mathbb I(i\leq H) + \sum_{j=1}^H \tilde{A}^{j-1} B \mathbf M^{[i-j]}_{t-j} \mathbb I_{i-j \in [1,H]}.
\end{align*}
We occasionally abbreviate $\Psi_{t,i}^\pi(\mathbf M_{t-H:t-1})$ to be $\Psi_{t,i}^\pi$ without causing any confusion.   
As shown in \cite{AgaBulHaz_19}, under a policy $\pi$ in DAC, the state is represented by
$x_{t}^\pi = \sum_{i=1}^t  \Psi_{t,i}^\pi w_{t-i},$    
which is equivalent to
\begin{align}
    x_{t}^\pi = \tilde{A}^H x_{t-H} + \sum_{i=1}^{2H}  \Psi_{t,i}^\pi w_{t-i}. \nonumber
\end{align}
By truncating the true states, we define the approximated states and actions
\begin{align}
   \tilde x_t^\pi = \sum_{i=1}^{2H}  \Psi_{t,i}^\pi w_{t-i}, ~~ 
   \tilde u_t^\pi =  -K \tilde x_t^\pi + \sum_{i=1}^{H}  {\mathbf M}^{[i]}_t w_{t-i}. \label{def: approx stateaction}
\end{align} 
Further, we have the approximated cost and constraint functions in the following
\begin{align}
    c_t(\tilde x_t^\pi, \tilde u_t^\pi),~~ d_t(\tilde x_t^\pi, \tilde u_t^\pi),~~  l(\tilde x_t^\pi, \tilde u_t^\pi). \label{def:approx cost and cons}
\end{align}
Based on the definition of approximated constraint functions, we define an approximated constraint set
$\tilde{\Omega}$ such that
$$\tilde{\Omega} = \{ \pi ~|~ d_t(\tilde x_t^\pi, \tilde u_t^\pi) \leq 0, l(\tilde x_t^\pi, \tilde u_t^\pi) \leq 0, \forall t \in [T]\}.$$ Note the states in both $\Omega$ and $\tilde{\Omega}$ are driven by the same underlying dynamics with the policy $\pi$. Intuitively, $\Omega$ and $\tilde{\Omega}$ are ``close'' if the approximated errors of states and actions are small.  
We introduce the following assumptions on cost and constraint functions. 
\begin{assumption}
The cost $c_t(x, u)$ and constraint functions $d_t(x, u)$ and $l(x, u)$ are convex and differentiable. Let $C_0$ and $C_1$ be positive constants. As long as $\|x-x'\| \leq D$ and $\|u-u'\| \leq D,$ we assume the gradients $\|\nabla_x c_t\|, \|\nabla_u c_t\|, \|\nabla_x d_t\|,  \|\nabla_u d_t\|$, $\|\nabla_x l\|, \|\nabla_u l\|$ are bounded by $C_0D;$ we assume the functions $c_t, d_t,$ and $l$ are bounded by $C_1 D.$
\label{assumption: fun bound}
\end{assumption}
Further, we introduce an assumption on the feasibility of the offline control problem. This assumption can be regarded as Slater's condition in the optimization literature. Note this assumption is necessary for establishing the cumulative soft violation in COCA-Soft and COCA-Best2Worlds, and COCA-Hard does not need this assumption.    
\begin{assumption} Let $\delta$ be a positive constant, there exists a policy $\pi \in \mathcal E$ such that 
\begin{align*}
    d_t(x_t^\pi, u_t^\pi) \leq -\delta, ~~
    l(x_t^\pi, u_t^\pi) \leq -\delta, ~\forall t\in [T].
\end{align*} \label{assumption: slater}
\end{assumption} 

\section{Constrained Online Nonstochastic Control Algorithm}\label{sec:coca}
\label{alg}
Given an (arbitrary) stable control policy $K,$ we develop a set of online nonstochastic control policy $\pi(K,\{\mathbf M_t\})$ to adjust the weights of disturbance/noise $\{\mathbf M_t\}$ such that it achieves small regret and constraint violation.
Specifically, we use constrained online learning algorithms as the subroutines of our online nonstochastic control policy $\pi(K,\{\mathbf M_t\})$ to optimize the weights $\{\mathbf M_t\}.$

According to the definition in \eqref{def: approx stateaction}, the approximated state $\tilde x_t^\pi$ and action $\tilde u_t^\pi$ are only related to the weights of the past $H$ steps $\mathbf M_{t-H:t}:=\{\mathbf M_{t-H}, \cdots, \mathbf M_t\}.$ Therefore, we denote 
$
    \tilde c_t(\mathbf M_{t-H:t}):= c_t(\tilde x_t^\pi, \tilde u_t^\pi), ~
    \tilde d_t(\mathbf M_{t-H:t}):= d_t(\tilde x_t^\pi, \tilde u_t^\pi), ~
    \tilde l(\mathbf M_{t-H:t}):= l(\tilde x_t^\pi, \tilde u_t^\pi). 
$
To simplify notation, we further define $\tilde c_t(\mathbf M):= \tilde c_t(\mathbf M, \cdots, \mathbf M)$ and similarly for $\tilde d_t(\mathbf M)$ and $\tilde l(\mathbf M).$ We are ready to present our constrained online nonstochastic control algorithm (COCA) by using the constrained online convex optimization solver (COCO-Solver) as the subroutine.
\begin{algorithm}[H]
\renewcommand{\thealgorithm}{}
\floatname{algorithm}{}
\caption{{\bf Constrained Online Nonstochastic Control Algorithm}}
\begin{algorithmic}
   \STATE {\bfseries Initialize:} a $(\kappa,\rho)$ stable controller $K$ 
   and the proper learning rates in COCO-Solver.
   \FOR{$t=1,\cdots, T,$}
   \STATE {\bf Observe} state $x_t$ and compute the disturbance $w_{t-1}$.
\STATE {\bf Apply} control $u_{t} = - Kx_t +\sum_{i=1}^H \mathbf M^{[i]}_t w_{t-i}.$
\STATE {\bf Receive} feedback including cost function $c_{t}(x_t,u_t)$ and constraint functions $d_{t}(x_t,u_t)$ and  $l(x_t,u_t).$
\STATE {\bf Compute} the approximated cost function $\tilde c_{t}(\cdot)$ and constraint functions $\tilde d_{t}(\cdot)$ and $\tilde l(\cdot)$.  
\STATE {\bf Invoke} the {\bf COCO-Solver}$(\mathbf M_t,$ $Q_t,$ $\tilde c_t(\cdot),$ $\tilde d_t(\cdot),$ $\tilde l(\cdot))$ to obtain $\mathbf M_{t+1}$ and $Q_{t+1}.$ 
   \ENDFOR
\end{algorithmic}
\end{algorithm}
For COCA at time $t$, we observe the state $x_t$ and infer the previous disturbance $w_{t-1} = x_{t} - A x_{t-1} - Bu_{t-1}.$ The information of state and the past disturbances are used in $\pi(K,\{\mathbf M_t\})$ to output a control/action $u_t.$ Then we observe the full information of the cost function $c_t(\cdot, \cdot)$ and constraint functions $d_t(\cdot, \cdot)$ and $l_t(\cdot, \cdot).$ Based on the feedback, we compute $\tilde c_t(\cdot),$ $\tilde d_t(\cdot),$ and $\tilde l(\cdot),$ and invoke COCO-Solver to optimize the weights of disturbance  $\{\mathbf M_{t}\}$ for the next control period $t+1.$ Note COCO-Solver has an input variable of $Q_t,$ which is designed to track the soft cumulative violation of $d_t(\cdot,\cdot)$ and is also a feedback signal to control the trade-off between the cost and soft constraint satisfaction of $d_t(\cdot,\cdot).$

As discussed, COCO-Solver is the key to optimizing the cumulative costs while minimizing (soft or hard) constraint violations. Depending on the types of constraint violation metrics we want to optimize, COCO-Solver will be instantiated with the COCO-Soft or COCO-Hard  solvers. Moreover, when the cost functions are strongly-convex, we design COCO-Best2Worlds solver that can optimize soft and hard cumulative violations simultaneously. It is worth mentioning that COCA with dedicated solvers is computationally efficient and less conservative compared to the existing robust optimization-based control approaches \cite{DeaTuSte_19,LiDasLi_21}. Next, we introduce these solvers and their corresponding theoretical performance, respectively.

\subsection{COCA with COCO-Soft Solver}
We instantiate COCO-Solver in COCA with the algorithm {\bf COCO-Soft}$(\mathbf M_t,$ $Q_t,$ $\tilde c_t(\cdot),$ $\tilde d_t(\cdot),$ $\tilde l(\cdot))$. The main idea behind COCO-soft is to carefully design a control surrogate function based on the Lyapunov optimization method such that the cumulative cost and soft violation are balanced. Specifically, for the loss function, we use $\tilde c_{t}(\mathbf M_t) + \langle \mathbf M-\mathbf M_t, \nabla \tilde c_t(\mathbf M_t) \rangle$ to approximate $\tilde c_{t+1}(\mathbf M).$ For the adversarial constraint, we use $\tilde d_{t}(\mathbf M_t) + \langle \mathbf M-\mathbf M_t, \nabla \tilde d_t(\mathbf M_t) \rangle$ to approximate $\tilde d_{t+1}(\mathbf M)$ and the virtual queue $Q_t$ 
indicates the degree of the constraint violation, i.e., a large/small $Q_t$ means a large/small violation of the adversarial constraints.
The product term $Q_t \langle \mathbf M-\mathbf M_t, \nabla \tilde d_t(\mathbf M_t) \rangle$ in the control surrogate function is a proxy term of $Q_t \tilde d_{t+1}(\mathbf M).$ Combining with the virtual queue update, minimizing the product term is equivalent to minimize the Lyapunov drift of $Q_{t+1}^2(\mathbf M) - Q^2_{t}.$ For the static constraint function $\tilde l(\mathbf M),$ we directly impose the penalty factor $\eta$ to prevent the violation.   
In summary, the control surrogate function carefully integrates the approximated cost and constraints in $V \tilde c_{t+1}(\mathbf M) + Q_{t+1}^2(\mathbf M) + \eta \tilde l^{+}(\mathbf M),$ so optimizing the surrogate function guarantees the best trade-off between the cumulative costs and constraint violations. 
\begin{algorithm}
\renewcommand{\thealgorithm}{}
\floatname{algorithm}{}
\caption{\bf{COCO-Soft}$(\mathbf M_t,$ $Q_t,$ $\tilde c_t(\cdot),$ $\tilde d_t(\cdot),$ $\tilde l(\cdot))$}
   \label{alg:example}
\begin{algorithmic}
\STATE {\bf Control Decision:} Choose $\mathbf M_{t+1}\in \mathcal M$ to minimize the control surrogate function
\begin{align}
&V \langle \mathbf M-\mathbf M_t, \nabla \tilde c_t(\mathbf M_t) \rangle + Q_t \langle \mathbf M-\mathbf M_t, \nabla \tilde d_t(\mathbf M_t) \rangle  + \eta \tilde l^{+}(\mathbf M) + \alpha \|\mathbf M - \mathbf M_{t}\|^2. \nonumber
\end{align}
\STATE {\bf Virtual Queue Update:} 
\begin{align}
  Q_{t+1} 
  = \left[Q_t + \tilde d_t(\mathbf M_t) + \langle \mathbf M_{t+1} - \mathbf M_{t}, \nabla \tilde d_t(\mathbf M_t) \rangle  + \epsilon \right]^{+} \nonumber
 \end{align}
\noindent {\bf Output:} $\mathbf M_{t+1}$ and $Q_{t+1}.$
\end{algorithmic}
\end{algorithm}\label{coco-soft}
Next, we present the theoretical results for COCA with the COCO-Soft solver. We only present order-wise results. The exact constants and detailed proof can be found in Appendix \ref{app:soft}.
\begin{theorem}
Given a stable linear controller $K,$ under Assumptions \ref{assumption: fun bound} and \ref{assumption: slater}, COCA with COCO-Soft solver achieves 
\begin{align*}
    \mathcal R(T) =& O\left(\sqrt{T}\log^3 T\right),\\
    \mathcal V_d^{soft}(T) = O\left(1\right),& ~\mathcal V_l(t) = O(1/\log T), \forall t \in [T],
\end{align*}
for a large $T$ with probability at least $1-1/T.$\label{thm:occa}
\end{theorem}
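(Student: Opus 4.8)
\noindent\emph{Proof plan.} The argument splits into a \emph{control-to-OCO reduction} (in the spirit of \cite{AgaBulHaz_19}) and a \emph{Lyapunov drift-plus-penalty analysis} of COCO-Soft that simultaneously reads off the regret, the soft violation, and the anytime static violation. For the reduction, I would first use $x_t^\pi=\tilde A^H x_{t-H}+\sum_{i=1}^{2H}\Psi_{t,i}^\pi w_{t-i}$ and the strong stability of $K$ (so $\|\tilde A^H\|_2\le\kappa^2(1-\rho)^H=T^{-\Theta(1)}$ once $H=\Theta(\log T)$), together with Assumption~\ref{assumption: fun bound}, to bound the truncation errors $\sum_t|c_t(x_t^\pi,u_t^\pi)-\tilde c_t(\mathbf M_{t-H:t})|=o(1)$, the analogous $d_t$-sum $=o(1)$, and $|l(x_t^\pi,u_t^\pi)-\tilde l(\mathbf M_{t-H:t})|=o(1/\log T)$ for each $t$; the probability-$(1-1/T)$ clause is used exactly here, via a high-probability bound on $\max_{t\le T}\|w_t\|$ (hence on $\sup_t\|x_t^\pi\|$). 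Lipschitzness of the approximate functions in their $H+1$ block arguments then gives $|\tilde c_t(\mathbf M_{t-H:t})-\tilde c_t(\mathbf M_t)|=O(\sum_{i=1}^{H}\|\mathbf M_t-\mathbf M_{t-i}\|)$, which sums to $O(H^2)$ times the path length $\mathrm{PL}:=\sum_t\|\mathbf M_{t+1}-\mathbf M_t\|$ (likewise for $\tilde d,\tilde l$). Finally, since $K^*\in\Omega$, I would exhibit a fixed-weight comparator $\mathbf M^{\diamond}\in\mathcal M$ shadowing the optimal linear policy $K^*$ — its $i$-th block of the form $(K-K^*)(A-BK^*)^{i-1}$, which lies in $\mathcal M$ for the radius $a$ large by strong stability — so that $\sum_t\tilde c_t(\mathbf M^{\diamond})\le\sum_t c_t(x_t^{K^*},u_t^{K^*})+o(1)$ while $\tilde d_t(\mathbf M^{\diamond}),\tilde l(\mathbf M^{\diamond})\le o(1)$ for all $t$. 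After this it suffices to control, for the COCO-Soft iterates, $\sum_t(\tilde c_t(\mathbf M_t)-\tilde c_t(\mathbf M^{\diamond}))$, $\sum_t\tilde d_t(\mathbf M_t)$, $\sup_t\tilde l^{+}(\mathbf M_t)$, and $\mathrm{PL}$.

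\emph{Drift-plus-penalty.} With the Lyapunov function $L_t=\tfrac12 Q_t^2$, the optimality of $\mathbf M_{t+1}$ for the $\alpha$-strongly-convex surrogate gives, for every fixed $\mathbf M^{\star}\in\mathcal M$,
\begin{align*}
&V\langle\mathbf M_{t+1}-\mathbf M_t,\nabla\tilde c_t(\mathbf M_t)\rangle+Q_t\langle\mathbf M_{t+1}-\mathbf M_t,\nabla\tilde d_t(\mathbf M_t)\rangle+\eta\tilde l^{+}(\mathbf M_{t+1})+\alpha\|\mathbf M_{t+1}-\mathbf M_t\|^2\\
&\quad\le V\langle\mathbf M^{\star}-\mathbf M_t,\nabla\tilde c_t(\mathbf M_t)\rangle+Q_t\langle\mathbf M^{\star}-\mathbf M_t,\nabla\tilde d_t(\mathbf M_t)\rangle+\eta\tilde l^{+}(\mathbf M^{\star})+\alpha\|\mathbf M^{\star}-\mathbf M_t\|^2-\alpha\|\mathbf M^{\star}-\mathbf M_{t+1}\|^2 .
\end{align*}
Squaring the virtual-queue update yields $L_{t+1}-L_t\le Q_t(\tilde d_t(\mathbf M_t)+\langle\mathbf M_{t+1}-\mathbf M_t,\nabla\tilde d_t(\mathbf M_t)\rangle+\epsilon)+O(1)$, whose cross-term is exactly the Lyapunov-proxy term built into the surrogate and therefore cancels on combining the two. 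Applying convexity ($\langle\mathbf M^{\star}-\mathbf M_t,\nabla\tilde c_t\rangle\le\tilde c_t(\mathbf M^{\star})-\tilde c_t(\mathbf M_t)$, likewise for $\tilde d_t$), absorbing $-V\langle\mathbf M_{t+1}-\mathbf M_t,\nabla\tilde c_t\rangle$ against $\alpha\|\mathbf M_{t+1}-\mathbf M_t\|^2$ by AM--GM, telescoping $L_t$ (with $Q_1=0$) and the prox quadratics (bounded by $\alpha\,\mathrm{diam}(\mathcal M)^2$), and summing, I obtain for every $\mathbf M^{\star}\in\mathcal M$ a master inequality bounding $\tfrac12 Q_{T+1}^2+V\sum_t(\tilde c_t(\mathbf M_t)-\tilde c_t(\mathbf M^{\star}))+\eta\sum_t\tilde l^{+}(\mathbf M_{t+1})$ by $\sum_tQ_t(\tilde d_t(\mathbf M^{\star})+\epsilon)+\alpha\,\mathrm{diam}(\mathcal M)^2+V^2G^2T/(4\alpha)+O(T)$, $G$ being a bound on the approximate gradients.

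\emph{Assembling the bounds; the main obstacle.} Taking $\mathbf M^{\star}=\mathbf M^{\diamond}$ makes $\sum_tQ_t(\tilde d_t(\mathbf M^{\diamond})+\epsilon)\le 2\epsilon\sum_tQ_t$; dividing by $V$ gives the surrogate-regret estimate, and summing the queue lower bound $Q_{t+1}\ge Q_t+\tilde d_t(\mathbf M_t)+\langle\mathbf M_{t+1}-\mathbf M_t,\nabla\tilde d_t\rangle+\epsilon$ gives $\sum_t\tilde d_t(\mathbf M_t)\le Q_{T+1}-\epsilon T+G\cdot\mathrm{PL}$. Assumption~\ref{assumption: slater} is used precisely to bound $Q_t$: with $\mathbf M^{\star}$ the Slater policy, $\tilde d_t(\mathbf M^{\star})\le-\delta+o(1)$, so for $\epsilon<\delta/2$ the term $\sum_tQ_t(\tilde d_t(\mathbf M^{\star})+\epsilon)$ is $\le-\tfrac\delta2\sum_tQ_t$; moving this negative drift to the left of the summed inequality (together with the standard ``negative drift outside a ball, bounded increments'' lemma) forces $Q_{T+1}$ and $\tfrac1T\sum_tQ_t$ to be sublinear, of order $\tilde O(\sqrt T)$. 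Feeding this back, strong convexity gives $\|\mathbf M_{t+1}-\mathbf M_t\|\le(VG+Q_tG+\eta G')/(2\alpha)$, so $\mathrm{PL}=\tilde O(T(V+\eta)/\alpha)$, and isolating $\eta\tilde l^{+}(\mathbf M_{t+1})$ in the prox inequality (whose right side is $\tilde O(\sqrt T)$ once $Q_t=\tilde O(\sqrt T)$ and $\|\mathbf M_{t+1}-\mathbf M_t\|=\tilde O(1/\sqrt T)$) gives $\tilde l^{+}(\mathbf M_t)=\tilde O(\sqrt T)/\eta$ uniformly in $t$. \textbf{The main obstacle is the joint tuning of $V,\alpha,\eta,\epsilon$} (in $T$ and in the constants $\delta,G,a,\kappa,\rho$): $\alpha$ must be large enough to make $\mathrm{PL}=\tilde O(\sqrt T)$ (so the Step-1 memory losses $O(H^2)\cdot\mathrm{PL}$ and the path-length term of the violation are $\tilde O(\sqrt T)$) yet not so large that $\alpha\,\mathrm{diam}(\mathcal M)^2/V$, $VG^2T/\alpha$, or the queue threshold become superlinear; $V=\tilde\Theta(\sqrt T)$ must balance every regret contribution at $\tilde O(\sqrt T)$, giving $\mathcal R(T)=O(\sqrt T\log^3 T)$; $\epsilon$ must be large enough that $\epsilon T$ dominates $Q_{T+1}+G\cdot\mathrm{PL}=\tilde O(\sqrt T)$, so $\sum_t\tilde d_t(\mathbf M_t)\le0$ and, adding the $o(1)$ truncation error and the $O(H^2)\cdot\mathrm{PL}$ memory error (again absorbed by $\epsilon T$), $\mathcal V_d^{soft}(T)=O(1)$; and $\eta$ must be of order $\sqrt T$ times a suitable polylog so that $\tilde l^{+}(\mathbf M_t)=O(1/\log T)$, whence $\mathcal V_l(t)=\tilde l^{+}(\mathbf M_t)+o(1/\log T)=O(1/\log T)$. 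The remainder is bookkeeping: add the Step-1 truncation and memory errors to the three surrogate bounds.
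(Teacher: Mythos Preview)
Your overall architecture matches the paper's: the same control-to-OCO reduction (truncation via $H=\Theta(\log T)$, memory-to-unary via path length, and a fixed DAC comparator $\mathbf M^{\diamond}$ shadowing $K^*$), followed by the same drift-plus-penalty master inequality built from the strong-convexity prox bound and the queue-squaring step. Two points, however, are off.

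First, a factual misattribution: in this paper the disturbances $w_t$ are (adversarially) bounded, so there is no ``high-probability bound on $\max_t\|w_t\|$''. The probability $1-1/T$ enters \emph{only} through the concentration of the virtual queue $Q_t$ (the drift lemma gives $\mathbb P(Q_t>Q_{\max})\le 1/T^2$, then one union-bounds over $t$). This matters because several of your downstream estimates (path length, anytime violation) are conditioned on $Q_t\le Q_{\max}$.

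Second, and more substantively, your queue bound is where the argument actually bites, and the invocation of a ``standard negative drift outside a ball'' lemma hides the real difficulty. The one-step drift of $\tfrac12 Q_t^2$, read off from your own master inequality with $\mathbf M^\star$ the Slater point, is
\[
\tfrac12(Q_{t+1}^2-Q_t^2)\ \le\ -\tfrac{\delta}{2}Q_t\ +\ O(V)\ +\ \alpha\big(\|\mathbf M^\star-\mathbf M_t\|^2-\|\mathbf M^\star-\mathbf M_{t+1}\|^2\big).
\]
The last term telescopes in sum but can be as large as $\Theta(\alpha D^2)$ in a single step, so the threshold in a one-step drift lemma would be $\theta=\Theta(\alpha/\delta)$; with the $\alpha$ you need for $\mathrm{PL}=\tilde O(\sqrt T)$ this is $\Theta(T)$, not $\tilde O(\sqrt T)$, and then $\epsilon T$ cannot dominate $Q_{T+1}$ without inflating the regret. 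The paper resolves this by a \emph{multi-step} drift over $S=\sqrt T\,\mathrm{polylog}(T)$ steps, so that the telescoping $\alpha$-term contributes only $\alpha D^2/S$ per step and the threshold becomes $\theta=\tilde O(\sqrt T)$; this is exactly what yields $Q_{\max}=\tilde O(\sqrt T)$ and, in turn, $\mathcal V_d^{soft}(T)=O(1)$ with $\epsilon=\tilde\Theta(1/\sqrt T)$ and $\mathcal R(T)=\tilde O(\sqrt T)$. Your summed inequality alone only gives $Q_{T+1}=\tilde O(T^{3/4})$, which is not enough. Relatedly, the paper's anytime bound $h^+(\mathbf M_{t+1})\le((V+Q_{\max})LD+\alpha D^2)/\eta$ forces $\eta$ of order $T^{3/2}\,\mathrm{polylog}(T)$ (since $\alpha D^2$ dominates), not $\sqrt T\,\mathrm{polylog}(T)$ as you propose; with your $\eta$ the $\alpha D^2/\eta$ term alone would be $\tilde\Theta(\sqrt T)$, not $O(1/\log T)$.
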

\begin{remark}
Theorem \ref{thm:occa} implies COCA with COCO-Soft achieves similar performance as the optimal offline linear controller $K^*$ when $T$ is large.  COCO-Soft  only needs to solve an almost unconstrained optimization problem, which is more computationally efficient than \cite{LiDasLi_21} that requires constructing a feasible region via robust optimization and use a projection-based method to guarantee the constraints in each control period. Moreover, if we project $\mathbf M_{t+1}$ into the set of $\{\mathbf M~|~ \tilde{l}(\mathbf M) \leq 0\}$ instead of the penalty-based design $\eta \tilde{l}^{+}(\mathbf M),$ we can also achieve zero anytime violation as in \cite{LiDasLi_21}, which is verified in Appendix \ref{app: good anytime}. Finally, we would like to mention that Lyapunov optimization with the pessimistic design in virtual queue allows COCO-Soft to achieve the best trade-off between regret and violation (please refer to Theorem \ref{thm:ocowm} and Remark \ref{rmk:bestcoco} for details). \label{remark: good anytime}
\end{remark}

\subsection{COCA with COCO-Hard Solver}
We instantiate COCO-Solver in COCA with the algorithm {\bf COCO-Hard}$(\mathbf M_t,$ $Q_t,$ $\tilde c_t(\cdot),$ $\tilde d_t(\cdot),$ $\tilde l(\cdot))$. The main idea behind COCO-hard is to capture the constraint directly in the control surrogate function with the proximal penalty-based method, which is different from the Lyapunov optimization method in COCO-Soft. Since the design for $\tilde c_t(\mathbf M)$ and $\tilde l(\mathbf M)$ is similar to that in COCO-Soft. We focus on the new design for taking care of the adversarial constraint. 
Specifically, 
we directly use $\tilde d_{t}^{+}(\mathbf M)$ as a proxy term of $\tilde d_{t+1}^{+}(\mathbf M)$ and impose a penalty factor $\gamma$ to prevent the violation.  
Therefore, the control surrogate function approximates $V \tilde c_{t+1}(\mathbf M) + \gamma \tilde d_{t+1}^{+}(\mathbf M) + \eta \tilde l^{+}(\mathbf M)$, which directly captures the cumulative costs and (hard) constraint violation. 
\begin{algorithm}
\renewcommand{\thealgorithm}{}
\floatname{algorithm}{}
\caption{\bf{COCO-Hard}$(\mathbf M_t,$ $\tilde c_t(\cdot),$ $\tilde d_t(\cdot),$ $\tilde l(\cdot))$}
   \label{alg:example}
\begin{algorithmic}
\STATE {\bf Control Decision:} Choose $\mathbf M_{t+1} \in \mathcal M$ to minimize the control surrogate function
\begin{align}
& V \langle \mathbf M-\mathbf M_t, \nabla \tilde c_t(\mathbf M_t) \rangle + \gamma \tilde d_t^{+}(\mathbf M) + \eta \tilde l^{+}(\mathbf M) + \alpha \|\mathbf M - \mathbf M_{t}\|^2 \nonumber
\end{align}
\STATE {\bf Output:} $\mathbf M_{t+1}.$ 
\end{algorithmic}
\end{algorithm}

Next, we present the theoretical results for COCA with the COCO-Hard solver. The detailed parameters and proof can be found in Appendix \ref{app:hard}.
\begin{theorem}
Given a stable linear controller $K,$ under Assumptions \ref{assumption: fun bound}, COCA with COCO-Hard solver achieves \begin{align*}
    \mathcal R(T) =& O(T^{\frac{2}{3}}\log^2 T),  ~~\mathcal V_l(t) = O(\log T/T^{\frac{1}{3}}), \forall t \in [T],\\
    &\mathcal V_d^{hard}(T) = O(T^{\frac{2}{3}}\log^2 T),
\end{align*}
for a large $T.$
\label{thm:occa-hard}
\end{theorem}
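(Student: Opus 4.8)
The plan is to reduce the control problem to a constrained online-convex-optimization-with-memory (OCO-M) problem solved by COCO-Hard, analyze that subroutine on the truncated surrogates $\tilde c_t,\tilde d_t,\tilde l$, and then pay the truncation/approximation error to transfer the bounds back to the true quantities. First I would establish the standard structural facts of the disturbance-action framework from \cite{AgaBulHaz_19}: under $(\kappa,\rho)$-strong stability and $\|\mathbf M^{[i]}\|\le a(1-\rho)^i$, the true state/action stay in a bounded set of radius $O(\mathrm{poly}(\kappa,a,\|w\|_\infty,1/\rho))$, the tail $\tilde A^H x_{t-H}$ contributes $O((1-\rho)^H)$, so with $H=\Theta(\log T)$ the approximation errors $|c_t(x_t,u_t)-\tilde c_t(\mathbf M_{t-H:t})|$, $|d_t(x_t,u_t)-\tilde d_t(\mathbf M_{t-H:t})|$, $|l-\tilde l|$ are all $O(1/\mathrm{poly}(T))$ (here using the Lipschitz/boundedness in Assumption~\ref{assumption: fun bound}). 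I would also record that $\tilde c_t,\tilde d_t,\tilde l$ are convex, $G$-Lipschitz and $D$-bounded on $\mathcal M$ with $G,D=O(\mathrm{poly}\log T)$, and that $\|\mathbf M_{t+1}-\mathbf M_t\|$ is small because the surrogate objective is $\alpha$-strongly convex in $\mathbf M$ (movement bounded by $O(G(V+\gamma+\eta)/\alpha)$), which controls the memory-switching cost $|\tilde c_t(\mathbf M_{t-H:t})-\tilde c_t(\mathbf M_t)|=O(H\cdot G\cdot\text{movement})$.

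Next I would carry out the core OCO-M-with-proximal-penalty analysis. Using the standard one-step inequality for the $\alpha$-strongly-convex proximal update, for any fixed comparator $\mathbf M\in\mathcal M$,
\begin{align*}
V\langle \mathbf M_{t+1}-\mathbf M,\nabla\tilde c_t(\mathbf M_t)\rangle + \gamma\bigl(\tilde d_t^{+}(\mathbf M_{t+1})-\tilde d_t^{+}(\mathbf M)\bigr) + \eta\bigl(\tilde l^{+}(\mathbf M_{t+1})-\tilde l^{+}(\mathbf M)\bigr) \le \alpha\|\mathbf M_t-\mathbf M\|^2 - \alpha\|\mathbf M_{t+1}-\mathbf M\|^2 - \alpha\|\mathbf M_{t+1}-\mathbf M_t\|^2,
\end{align*}
which telescopes. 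Summing and rearranging gives, for the fixed-weight comparator that I would take to be the truncated optimal linear policy $K^*$ (note $\tilde l(\mathbf M_{K^*})\le 0$ by the structure of $\tilde\Omega$, so the $\eta\tilde l^+$ terms cause no loss against it): a regret-type bound of the form
\begin{align*}
\sum_{t=1}^T \tilde c_t(\mathbf M_t) - \sum_{t=1}^T \tilde c_t(\mathbf M_{K^*}) \;=\; O\!\Bigl(\tfrac{\alpha D^2}{V} + \tfrac{G^2 T}{\alpha} + \tfrac{\gamma}{V}\sum_t \tilde d_t^{+}(\mathbf M_{t+1}) + \tfrac{\eta}{V}\sum_t \tilde l^+(\mathbf M_{t+1}) \Bigr),
\end{align*}
and simultaneously, by isolating the $\gamma$ and $\eta$ terms and using $D$-boundedness of the losses to absorb the $\langle\cdot,\nabla\tilde c_t\rangle$ term, hard-violation bounds $\sum_t \tilde d_t^+(\mathbf M_{t+1}) = O(\tfrac{\alpha D^2+VGDT}{\gamma})$ and $\tilde l^+(\mathbf M_{t+1}) = O(\tfrac{\alpha D^2+VGDT}{\eta\,T})$ pointwise (the static constraint is the same function each round, so the telescoped bound is an \emph{anytime} bound after dividing by the number of rounds it has been penalized). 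Balancing $V=T^{1/3}$, $\gamma=\eta=T^{2/3}$, $\alpha=T^{2/3}$ (up to $\log$ factors coming from $G,D,H$) yields $\tilde{\mathcal R}=O(T^{2/3}\log^2 T)$, $\sum_t\tilde d_t^+=O(T^{2/3}\log^2 T)$, and $\tilde l^+(\mathbf M_t)=O(\log T/T^{1/3})$.

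Finally I would transfer everything to the true system: add the $\sum_t |c_t-\tilde c_t| = O(\mathrm{poly}\log T)$ and $\sum_t|d_t^+-\tilde d_t^+|=O(\mathrm{poly}\log T)$ and $|l-\tilde l|=O(1/\mathrm{poly}T)$ error terms, plus the memory-switching cost $\sum_t|\tilde c_t(\mathbf M_{t-H:t})-\tilde c_t(\mathbf M_t)| = O(H\cdot T\cdot\text{movement}) = O(T\log T \cdot \tfrac{\log T(V+\gamma+\eta)}{\alpha})=O(T^{2/3}\log^2 T)$ with the chosen parameters; and similarly for $d_t^+$ and $l$. Because the comparator regret is against $\mathbf M_{K^*}$, a final step bounds $\sum_t \tilde c_t(\mathbf M_{K^*})-\sum_t c_t(x_t^{K^*},u_t^{K^*})$ by another approximation term of the same order. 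Collecting all contributions gives the stated $O(T^{2/3}\log^2 T)$ regret and hard violation and $O(\log T/T^{1/3})$ anytime static violation. I expect the main obstacle to be making the anytime static-violation argument rigorous: the $\eta\tilde l^+$ penalty telescopes to a bound on $\min_{s\le t}\tilde l^+(\mathbf M_s)$ or an average, not obviously on $\tilde l^+(\mathbf M_t)$ for every $t$, so one must argue that strong convexity of the surrogate plus the fact that $\tilde l$ is a single fixed convex function forces $\tilde l^+(\mathbf M_t)$ to decay monotonically enough (or re-run the one-step inequality keeping the $-\alpha\|\mathbf M_{t+1}-\mathbf M_t\|^2$ term to show the iterates are quasi-stationary), which is the delicate part of the argument; the other nontrivial point is checking that the movement bound feeding the memory cost is itself consistent with the final parameter choice rather than circular.
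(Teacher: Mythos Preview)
Your high-level roadmap (reduce to constrained OCO with memory, analyze COCO-Hard on the truncated surrogates via a strong-convexity one-step inequality, then pay truncation and memory-switching costs) matches the paper's, and the one-step inequality you write is exactly the paper's key lemma. However, the way you close the argument has two genuine gaps.

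The first is the movement/memory-cost accounting. Your per-step bound $\|\mathbf M_{t+1}-\mathbf M_t\|=O\bigl(G(V+\gamma+\eta)/\alpha\bigr)$ is valid but too crude because it picks up the large penalty coefficients $\gamma,\eta$. With your choices $V=T^{1/3}$, $\gamma=\eta=T^{2/3}$, $\alpha=T^{2/3}$ this ratio is $O(1)$, so the total memory-switching cost is $O\bigl(T\cdot H\cdot G\bigr)=\tilde O(T)$, not the $O(T^{2/3}\log^2 T)$ you write; the final equality in your last paragraph is an arithmetic slip. The paper avoids this by reading the movement off the \emph{telescoped} inequality with a \emph{feasible} comparator $\mathbf M$: then $\gamma\tilde d_t^{+}(\mathbf M)=\eta\tilde l^{+}(\mathbf M)=0$ and one obtains $\sum_t\alpha\|\mathbf M_{t+1}-\mathbf M_t\|^2\le 2TVLD+\alpha D^2$, with no $\gamma,\eta$ dependence. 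With the paper's parameters $V=1$, $\alpha=T^{2/3}$ this gives $\sum_t\|\mathbf M_{t+1}-\mathbf M_t\|^2=O(T^{1/3})$ and hence, by Cauchy--Schwarz, $\sum_t\|\mathbf M_{t+1}-\mathbf M_t\|=O(T^{2/3})$, which is precisely what the memory term needs.

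The second gap is the one you flag yourself. Your ``divide by the number of rounds'' argument for the anytime static violation only controls an average or a minimum, not $\tilde l^{+}(\mathbf M_t)$ for every $t$. The paper does not attempt any monotonicity or quasi-stationarity argument; instead it takes $\eta$ much larger than you do, namely $\eta=T^{3/2}$. The same telescoped inequality then gives $\eta\sum_t\tilde l^{+}(\mathbf M_{t+1})\le 2TVLD+\alpha D^2$, so $\sum_t\tilde l^{+}(\mathbf M_{t+1})=O(T^{-1/2})$; since every summand is nonnegative, each individual $\tilde l^{+}(\mathbf M_t)$ is automatically $O(T^{-1/2})$. The $O(\log T/T^{1/3})$ in the theorem then comes from the memory-switching term for $\tilde l$, not from $\tilde l^{+}(\mathbf M_t)$ itself. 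As a minor side note, in your displayed regret bound the terms $\tfrac{\gamma}{V}\sum\tilde d_t^{+}$ and $\tfrac{\eta}{V}\sum\tilde l^{+}$ enter with the wrong sign: they sit on the left of the telescoped inequality and \emph{help} the regret, so they should simply be dropped.
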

\begin{remark}
 COCO-Hard establishes Theorem \ref{thm:occa-hard} without a ``Slater-like'' Assumption \ref{assumption: slater}. Similar as in COCO-Soft, COCO-Hard is computationally efficient and avoids the complex projection operator. Moreover, by tuning learning rates $V, \gamma, \eta,$ and $\alpha$ in COCO-Hard, we are able to establish a trade-off $\mathcal R(T) = \Tilde{O}(T^{\max\{1-\frac{c}{2},c\}}),$ $\mathcal V_d^{hard}(T) = \Tilde{O}(T^{\max\{1-\frac{c}{2},0.5\}})$, and $\mathcal V_l(t) = \Tilde{O}(T^{-\frac{c}{2}}), \forall t \in [T],$ with $c\in [0.5,1)$ (please refer to Appendix \ref{app: lr tuning} for details). \label{remark: lr tuning}  
\end{remark}

\subsection{COCA with COCO-Best2Worlds Solver}
In this section, we show that when the cost functions are strongly-convex and the disturbances satisfy a mild condition, we are able to combine the Lyapunov optimization method and the proximal penalty-based method to guarantee small soft and hard violations simultaneously for adversarial constraints, which is called COCO-Best2Worlds. Specifically, in the control surrogate function, we minimize $Q_t \langle \mathbf M-\mathbf M_t, \nabla \tilde d_t(\mathbf M_t)\rangle$ and $\gamma \tilde d_t^{+}(\mathbf M)$ for soft and hard violation of adversarial constraints simultaneously. 
\begin{algorithm}
\renewcommand{\thealgorithm}{}
\floatname{algorithm}{}
\caption{\bf{COCO-Best2Worlds}$(\mathbf M_t,$ $Q_t,$ $\tilde c_t(\cdot),$ $\tilde d_t(\cdot),$ $\tilde l(\cdot))$}
\begin{algorithmic}
\STATE {\bf Control:} Choose $\mathbf M_{t+1} \in \mathcal M$ to minimize the control surrogate function
\begin{align}
&V_t \langle \mathbf M-\mathbf M_t, \nabla \tilde c_t(\mathbf M_t) \rangle + Q_t \langle \mathbf M-\mathbf M_t, \nabla \tilde d_t(\mathbf M_t) \rangle \nonumber\\
&~~+ \gamma_t \tilde d_t^{+}(\mathbf M) + \eta_t \tilde l^{+}(\mathbf M) + \alpha_t \|\mathbf M - \mathbf M_{t}\|^2 \nonumber
\end{align}
\STATE {\bf Virtual Queue Update:}
\begin{align}
  Q_{t+1} = \left[Q_t + \tilde d_t(\mathbf M_t) + \langle \mathbf M_{t+1} - \mathbf M_{t}, \nabla \tilde d_t(\mathbf M_t) \rangle  + \epsilon \right]^{+}\nonumber
 \end{align}
\end{algorithmic}
\end{algorithm}

We assume the cost functions are strongly-convex and disturbances satisfy a mild condition as follows. 
\begin{assumption}
The cost function $c_t(x, u),$ is $\alpha$-strongly convex, $\forall t\in[T].$ 
The disturbances introduced per time step are bounded, i.i.d, and zero-mean with a lower bounded covariance i.e., $\mathbb{E}[w_t w_t^{\dag}] \succeq I, \forall t\in[T].$ 
\label{assumption: strongly convex}
\end{assumption}

We are ready to present the theoretical results for COCA with the COCO-Best2Worlds solver. The detailed proof can be found in Appendix \ref{app:best}.
\begin{theorem}
Given a stable linear controller $K,$ under Assumptions \ref{assumption: fun bound}, \ref{assumption: slater}, and \ref{assumption: strongly convex}, COCA with COCO-Best2Worlds solver achieves
\begin{align*}
    \mathcal R(T) = O(\sqrt{T}\log^3 T),&  ~~\mathcal V_l(t) = O(1/\log T), \forall t \in [T],\\
    \mathcal V_d^{soft}(T) = O\left(1\right),& ~~ \mathcal V_d^{hard}(T) = O(\sqrt{T}\log^3 T),
\end{align*}
for a large $T$ with the probability at least $1-1/T.$ \label{thm:occa-best}
\end{theorem}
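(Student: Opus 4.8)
The plan is to run the standard two-layer argument for constrained online nonstochastic control — first strip off the control-to-OCO-with-memory approximation error, then analyze the COCO-Best2Worlds subroutine as a constrained online convex optimization with memory problem — but to carry the Lyapunov virtual-queue bookkeeping of COCO-Soft and the proximal-penalty bookkeeping of COCO-Hard through the \emph{same} drift-plus-penalty inequality, and to use Assumption~\ref{assumption: strongly convex} to keep the extra penalty term from inflating the regret. Throughout I fix the memory length $H=\Theta(\log T)$ large enough that $\|\tilde A^{H}\|=O(\kappa^{2}(1-\rho)^{H})\le T^{-2}$.

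First I would reduce the three control quantities to their disturbance-action surrogates. As in the analyses behind Theorems~\ref{thm:occa} and \ref{thm:occa-hard}, the truncation gaps $|c_t(x_t^\pi,u_t^\pi)-\tilde c_t(\mathbf M_{t-H:t})|$, $|d_t(x_t^\pi,u_t^\pi)-\tilde d_t(\mathbf M_{t-H:t})|$, $|l(x_t^\pi,u_t^\pi)-\tilde l(\mathbf M_{t-H:t})|$ are each $O(1/T^{2})$ per step under Assumption~\ref{assumption: fun bound}, so they contribute only $O(1/T)$ in aggregate; and since the DAC class approximates the benchmark $K^*$, there is a fixed $\mathbf M^\star\in\mathcal E$ with $\tilde c_t(\mathbf M^\star)\le c_t(x_t^{K^*},u_t^{K^*})+O(1/T^{2})$ and $\tilde d_t(\mathbf M^\star),\tilde l(\mathbf M^\star)\le O(1/T^{2})$. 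This reduces the problem to controlling the with-memory regret $\sum_t\tilde c_t(\mathbf M_{t-H:t})-\sum_t\tilde c_t(\mathbf M^\star)$, the sums $\sum_t\tilde d_t(\mathbf M_{t-H:t})$ and $\sum_t\tilde d_t^{+}(\mathbf M_{t-H:t})$, and $\tilde l^{+}(\mathbf M_{t-H:t})$ for each $t$. Assumption~\ref{assumption: slater} is transported to the surrogate world: up to the truncation error there is $\bar{\mathbf M}\in\mathcal M$ with $\tilde d_t(\bar{\mathbf M})\le-\delta/2$ and $\tilde l(\bar{\mathbf M})\le-\delta/2$ for all $t$, the strictly feasible point that will drive the $O(1)$ soft bound. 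Next, by the coordinatewise Lipschitzness of $\tilde c_t,\tilde d_t,\tilde l$ in their $H+1$ arguments, $|\tilde c_t(\mathbf M_{t-H:t})-\tilde c_t(\mathbf M_t)|=O\!\big(\sum_{i=1}^{H}\|\mathbf M_t-\mathbf M_{t-i}\|\big)$, which telescopes to $O(H^{2}P_T)$ with $P_T:=\sum_t\|\mathbf M_{t+1}-\mathbf M_t\|$ (likewise for $\tilde d_t,\tilde l$), so it suffices to analyze the memoryless surrogates and to show $P_T=\tilde O(\sqrt T)$. The path-length bound comes from the $\alpha_t\|\mathbf M-\mathbf M_t\|^{2}$ regularizer: first-order optimality of $\mathbf M_{t+1}$ gives $\|\mathbf M_{t+1}-\mathbf M_t\|=O\big((V_t\|\nabla\tilde c_t(\mathbf M_t)\|+Q_t\|\nabla\tilde d_t(\mathbf M_t)\|+\gamma_t+\eta_t)/\alpha_t\big)$, so once $Q_t=\tilde O(\sqrt T)$ is established and $\alpha_t$ is taken at least $\tilde\Theta(\sqrt T)$, each step moves $\tilde O(1/\sqrt T)$ and $P_T=\tilde O(\sqrt T)$.

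The core is the drift-plus-penalty estimate. Put $\Phi_t=\tfrac12 Q_t^{2}$. Combining (a) first-order optimality of $\mathbf M_{t+1}$ for the surrogate against $\mathbf M^\star$ and separately against $\bar{\mathbf M}$, (b) the queue recursion, which turns $Q_t\langle\mathbf M_{t+1}-\mathbf M_t,\nabla\tilde d_t(\mathbf M_t)\rangle+Q_t(\tilde d_t(\mathbf M_t)+\epsilon)$ into an upper bound on $\Phi_{t+1}-\Phi_t$ up to an $O(\epsilon^{2}+\|\mathbf M_{t+1}-\mathbf M_t\|^{2})$ remainder, (c) convexity of $\tilde d_t^{+}$ and $\tilde l^{+}$ to pass from the $\mathbf M_t$-linearizations to true values, and (d) the \emph{expected} $\mu$-strong convexity of $\tilde c_t$ in $\mathbf M$ — which holds because under Assumption~\ref{assumption: strongly convex} the map $\mathbf M\mapsto(\tilde x_t^\pi,\tilde u_t^\pi)$ is affine and $\mathbb E[w_tw_t^{\dag}]\succeq I$ forces the Hessian in $\mathbf M$ to be bounded below — I would derive, for each $t$,
\begin{align*}
&V_t\big(\tilde c_t(\mathbf M_t)-\tilde c_t(\mathbf M^\star)\big)+\gamma_t\big(\tilde d_t^{+}(\mathbf M_{t+1})-\tilde d_t^{+}(\mathbf M^\star)\big)+\eta_t\,\tilde l^{+}(\mathbf M_{t+1})\\
&\quad\le\ \Phi_t-\Phi_{t+1}+\big(\alpha_t+\tfrac12 V_t\mu\big)\|\mathbf M^\star-\mathbf M_t\|^{2}-\alpha_t\|\mathbf M^\star-\mathbf M_{t+1}\|^{2}+R_t+\xi_t,
\end{align*}
where $R_t=O\big((V_t^{2}+Q_t^{2}+\gamma_t^{2}+\eta_t^{2})/\alpha_t+\epsilon Q_t\big)$ and $\xi_t$ is a bounded martingale difference (the gap between the pathwise and the expected strong-convexity terms). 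Telescoping, taking $V_t\equiv V=\Theta(\sqrt T)$, $\alpha_t$ suitably time-varying and increasing (with $\tfrac12 V\mu$ reserved to dominate the quadratic remainders — this is exactly where strong convexity replaces the $T^{2/3}$ tuning of Theorem~\ref{thm:occa-hard}), $\gamma_t\equiv\gamma=\tilde\Theta(\sqrt T)$, $\eta_t\equiv\eta=\tilde\Theta(\sqrt T)$, $\epsilon=\Theta(\log^{3}T/\sqrt T)$, and using $\bar{\mathbf M}$ to make the coefficient of $Q_t$ at most $-\delta/2$ whenever $Q_t$ is large, yields: (i) $Q_t=\tilde O(\sqrt T)$ for all $t$; (ii) $\sum_t V\big(\tilde c_t(\mathbf M_t)-\tilde c_t(\mathbf M^\star)\big)=\tilde O(T)$; (iii) $\sum_t\gamma\,\tilde d_t^{+}(\mathbf M_{t+1})=\tilde O(T)$; and (iv) $\tilde l^{+}(\mathbf M_{t+1})=\tilde O(1)/\eta$ for every $t$ (a non-accumulating term, hence a per-step divide-by-$\eta$).

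To assemble the four bounds I would divide (ii) by $V$ and add back the Step-1 and Step-2 errors, giving $\mathcal R(T)=\tilde O(\sqrt T)=O(\sqrt T\log^{3}T)$, the three logs collecting $H^{2}=\Theta(\log^{2}T)$ from the memory/path terms and one more from the truncation scale; divide (iii) by $\gamma$ and convert $\tilde d_t^{+}(\mathbf M_{t+1})\to\tilde d_t^{+}(\mathbf M_{t-H:t})\to d_t^{+}(x_t^\pi,u_t^\pi)$ for $\mathcal V_d^{hard}(T)=O(\sqrt T\log^{3}T)$; use (iv) together with $\tilde l^{+}(\mathbf M_{t-H:t})\le\tilde l^{+}(\mathbf M_t)+O(H\max_i\|\mathbf M_t-\mathbf M_{t-i}\|)$ and $\eta=\tilde\Theta(\sqrt T)$ for $\mathcal V_l(t)=O(1/\log T)$; and for the soft bound sum the queue recursion to get $\sum_t\tilde d_t(\mathbf M_t)\le Q_{T+1}-Q_1-\epsilon T+O(GP_T)$, so with $Q_{T+1}=\tilde O(\sqrt T)$, $P_T=\tilde O(\sqrt T)$, and $\epsilon T=\Theta(\sqrt T\log^{3}T)$ (large constant) the right side — and hence $\mathcal V_d^{soft}(T)$ after the $O(1/T)$ truncation error — is $O(1)$. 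The $1-1/T$ qualifier comes from controlling $\sum_t\xi_t$ and the event that the pathwise drift matches the expected drift, via a Freedman/matrix-Azuma bound over $T$ steps using the bounded i.i.d. disturbances of Assumption~\ref{assumption: strongly convex}. The main obstacle is precisely Step~3's joint calibration: the term $Q_t\langle\mathbf M-\mathbf M_t,\nabla\tilde d_t(\mathbf M_t)\rangle$ carries a queue known only to be $\tilde O(\sqrt T)$, and the non-smooth $\gamma\tilde d_t^{+}$ penalty behaves like an extra $\tilde\Theta(\sqrt T)$ of gradient magnitude; making the strong-convexity slack $\tfrac12 V\mu\|\mathbf M^\star-\mathbf M_t\|^{2}$ simultaneously absorb $R_t$ (so regret stays $\tilde O(\sqrt T)$), leave the proximal term strong enough to force $\mathcal V_d^{hard}=\tilde O(\sqrt T)$, and leave the $\epsilon$-slack in the queue large enough to drive $\mathcal V_d^{soft}$ to $O(1)$, all under one choice of $(V,\gamma,\eta,\alpha_t,\epsilon)$, is the delicate ``best-of-both-worlds'' reconciliation at the heart of the proof.
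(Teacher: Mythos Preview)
Your two-layer architecture—strip off the truncation/DAC-representation error, then run a single drift-plus-penalty inequality carrying both the virtual-queue and the proximal-penalty terms—matches the paper. But the concrete calibration you propose does not close, and the gaps are not cosmetic.

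First, your residual $R_t=O\big((V_t^{2}+Q_t^{2}+\gamma_t^{2}+\eta_t^{2})/\alpha_t+\epsilon Q_t\big)$ is wrong. The proximal penalties $\gamma_t\tilde d_t^{+}(\cdot)$ and $\eta_t\tilde l^{+}(\cdot)$ are kept \emph{whole} in the surrogate, so the three-point (pushback) inequality delivers $\gamma_t\tilde d_t^{+}(\mathbf M_{t+1})\le\gamma_t\tilde d_t^{+}(\mathbf M^\star)=0$ exactly, with no $\gamma_t^{2}/\alpha_t$ leakage; likewise for $\eta_t$, and the $Q_t$ term combines with $Q_t\tilde d_t(\mathbf M_t)$ and convexity of $\tilde d_t$ to become $Q_t\tilde d_t(\mathbf M^\star)\le 0$, not $Q_t^{2}/\alpha_t$. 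With your stated $\gamma=\eta=\tilde\Theta(\sqrt T)$ and $\alpha_t\ge\tilde\Theta(\sqrt T)$, your own $R_t$ would already force $\sum_tR_t=\tilde\Theta(T^{3/2})$ and hence $\tilde\Theta(T)$ regret after dividing by $V$. Second, your path-length step fails: with $V,Q_t,\gamma,\eta=\tilde O(\sqrt T)$ and $\alpha_t=\tilde\Theta(\sqrt T)$, the crude first-order bound gives $\|\mathbf M_{t+1}-\mathbf M_t\|=O(1)$, not $\tilde O(1/\sqrt T)$. Third—and this is the decisive point—with constant $V=\Theta(\sqrt T)$ and $\gamma=\tilde\Theta(\sqrt T)$ the hard-violation sum obeys $\gamma\sum_t\tilde d_t^{+}(\mathbf M_{t+1})\le 2TVLD+\alpha D^{2}$, so $\sum_t\tilde d_t^{+}(\mathbf M_{t+1})=\tilde\Theta(T)$; no choice of $\alpha_t$ rescues this once $V/\gamma$ is a constant.

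The paper's fix is a specific \emph{time-varying} schedule, $V_t=\sqrt t$, $\gamma_t=\eta_t=t^{3/2}$, $\alpha_t=\tfrac{\lambda}{2}t^{3/2}$, $\epsilon=\Theta(\log^{3}T/\sqrt T)$, engineered around three identities. (i) $\alpha_t/V_t-\alpha_{t-1}/V_{t-1}=\lambda/2$, so the strong-convexity term exactly cancels the telescoping mismatch and the $\|\mathbf M^\star-\mathbf M_t\|^{2}$ terms vanish from the regret sum. (ii) $V_t/\gamma_t=1/t$, so the \emph{direct} hard-violation sum $\sum_t\tilde d_t^{+}(\mathbf M_{t+1})\le\sum_t(2V_tLD+\nu_{\max}^{2})/\gamma_t=O(\log T)$; the final $\tilde O(\sqrt T)$ hard bound enters only through the path-length correction $\sum_t|\tilde d_t^{+}(\mathbf M_t)-\tilde d_t^{+}(\mathbf M_{t+1})|$. (iii) $V_t^{2}/\alpha_t=O(1/\sqrt t)$ is bounded, so the Lyapunov drift is \emph{one-step} contractive and $Q_{\max}=O(\log T)$, not $\tilde O(\sqrt T)$. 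The path length is obtained not from first-order optimality but by reading $\sum_t\alpha_t\|\mathbf M_{t+1}-\mathbf M_t\|^{2}$ off the left side of the pushback inequality against a feasible comparator, giving $\sum_t\|\mathbf M_{t+1}-\mathbf M_t\|^{2}=O(\log T/\lambda)$ and $P_T=O(\sqrt{T\log T})$ by Cauchy--Schwarz. The $1-1/T$ probability comes from the drift-based tail bound on $Q_t$, not from a separate Freedman-type control of a martingale $\xi_t$.
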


\subsection{A Roadmap to Prove Theorems \ref{thm:occa}, \ref{thm:occa-hard}, and \ref{thm:occa-best}} \label{sec:template}
We provide a general roadmap to prove the regret and constraint violation in Theorems \ref{thm:occa}, \ref{thm:occa-hard}, and \ref{thm:occa-best}. 

{\noindent \bf Regret analysis:} we have the following decomposition for the regret
\begin{align}
\mathcal R&(T) = \sum_{t=1}^T c_t(x_t^{\pi}, u_t^{\pi}) - \sum_{t=1}^T c_t(x_t^{K^*}, u_t^{K^*}) \nonumber\\
=& \sum_{t=1}^T \left[c_t(x_t^{\pi}, u_t^{\pi}) -  c_t(\tilde x_t^{\pi}, \tilde u_t^{\pi})\right] \label{term:A}\\
&+ \sum_{t=1}^T  c_t(\tilde x_t^{\pi}, \tilde u_t^{\pi}) - \min_{\pi \in \tilde{\Omega} \bigcap
 \mathcal E}\sum_{t=1}^T c_t(\tilde x_t^{\pi}, \tilde u_t^{\pi}) \label{term:B}\\
&+\min_{\pi \in \tilde{\Omega} \bigcap
 \mathcal E}\sum_{t=1}^T c_t(\tilde x_t^{\pi}, \tilde u_t^{\pi}) - \sum_{t=1}^T c_t(x^{K^*}_t, u^{K^*}_t). \label{term:C}
\end{align}
The term in \eqref{term:A} is on the approximation error of cost functions, related to the approximated errors of states and actions, and is bounded by $O(1)$ in Lemma \ref{lem:fun-diff} when choosing the memory size $H=\Theta(\log T)$ for a disturbance-action policy.
The term in \eqref{term:C} is on the representation ability of a disturbance-action policy with constraints, which can also be bounded by $O(1)$ in Lemma \ref{lem:dac-rep} because $K^*$ intuitively belongs to the class $\tilde{\Omega} \bigcap
 \mathcal E.$ 
The term in \eqref{term:B} is the key to the regret of COCA, which depends on the regret of COCO-Solver and will be established in Theorems \ref{thm:ocowm} and \ref{thm:ocowm-hard} in the next section, respectively. 

{\noindent \bf Cumulative soft/hard violation of $d_t$ function:} we have the following decomposition for the soft/hard violation of adversarial $d_t$:
\begin{align*}
\mathcal V_d^{soft}(T)
=& \sum_{t=1}^T [d_t(x_t^\pi, u_t^\pi) - d_t(\tilde x_t^\pi, \tilde u_t^\pi)] + \sum_{t=1}^T d_t(\tilde x_t^\pi, \tilde u_t^\pi) \\
\mathcal V_d^{hard}(T)
\leq& \sum_{t=1}^T [d_t(x_t^\pi, u_t^\pi) - d_t(\tilde x_t^\pi, \tilde u_t^\pi)]^{+} + \sum_{t=1}^T d_t^{+}(\tilde x_t^\pi, \tilde u_t^\pi) \nonumber
\end{align*}
The difference terms in $\mathcal V_d^{soft}(T)$ and $\mathcal V_d^{hard}(T)$ are on the approximation error of constraint functions, which are also related to the approximated errors of states and actions and are 
bounded by $O(1)$ in Lemma \ref{lem:fun-diff}; 
the terms $\sum_{t=1}^T d_t(\tilde x_t^\pi, \tilde u_t^\pi)$ or $\sum_{t=1}^T d_t^{+}(\tilde x_t^\pi, \tilde u_t^\pi)$ are on the soft or hard constraint violation of COCO-Solver, which are established in the next section.

{\noindent \bf Anytime violation of $l$ function:}
we have the following decomposition for the constraint $l$ function:
\begin{align*}
\mathcal V_l(t)
= l(x_t^\pi, u_t^\pi) - l(\tilde x_t^\pi, \tilde u_t^\pi) + l(\tilde x_t^\pi, \tilde u_t^\pi).
\end{align*}
Similarly, the difference term is on the anytime approximated error of constraint functions, which is bounded by $O(1/T)$ in Lemma \ref{lem:fun-diff}; the term of $l(\tilde x_t^\pi, \tilde u_t^\pi)$ depends on the anytime violation of COCO-Solver, which is established in the next section.

As discussed above, the key is to analyze the performance of $c_t(\tilde x_t^\pi, \tilde u_t^\pi), d_t(\tilde x_t^\pi, \tilde u_t^\pi)$ and $l(\tilde x_t^\pi, \tilde u_t^\pi)$ with COCO-Solver, where the approximated states $\tilde x_t^\pi$ and actions $\tilde u_t^\pi$ depend on the past states and actions up to the previous $H$ steps, i.e., $\mathbf M_{t-H:t}.$ COCO-Solver is naturally implemented by a constrained online convex optimization with memory framework (COCOwM). 
Since COCOwM is a plug-in component of COCA, we present the analysis in a separate section and any advance in COCOwM can be directly translated to that in COCA.    

\section{COCO-Solver via Constrained Online Convex Optimization \\with Memory}
In the standard constrained online convex optimization (COCO), the loss and constraint functions at time $t$ only depend on the current decision $\mathbf M_t \in \mathcal M.$   
In the constrained online convex optimization with memory (COCOwM), the loss function $f_t(\mathbf M_{t-H:t})$ and cost functions $g_t(\mathbf M_{t-H:t})$ and $h(\mathbf M_{t-H:t})$ at time $t$ depends on the historical decisions of $\{\mathbf M_{t-H:t}\}$ up to the previous $H$-steps. If we associate $f_t(\mathbf M_{t-H:t}),$ $g_t(\mathbf M_{t-H:t}),$ and $h(\mathbf M_{t-H:t})$ with $c_t(\tilde x_t^\pi, \tilde u_t^\pi), d_t(\tilde x_t^\pi, \tilde u_t^\pi),$ and $l(\tilde x_t^\pi, \tilde u_t^\pi),$ respectively, COCOwM is naturally used to optimize $\{\mathbf M_t\}$ and the performance of COCOwM (or COCO-Solver) can be translated to that of COCA. Similar to COCA, we define the metrics of regret and constraint
violation for COCOwM.  

{\noindent \bf Offline COCOwM:} Recall for a simple notation, we define $f_t(\mathbf M) = f_t(\mathbf M,\cdots,\mathbf M)$ and similarly for $g_t(\mathbf M)$ and $h(\mathbf M).$ We formulate the offline COCOwM as follows: 
\begin{align}
    \min_{\mathbf M \in \mathcal M}& ~ \sum_{t=1}^T f_t(\mathbf M) \label{eq:  coco-m baseline obj} \\
    \text{s.t.} 
    & ~ h(\mathbf M) \leq 0,~ g_t(\mathbf M) \leq 0, \forall t \in [T]. 
    \label{eq:  coco-m baseline cons}
\end{align}
Let the optimal solution to \eqref{eq:  coco-m baseline obj}-\eqref{eq:  coco-m baseline cons} be $\mathbf M^{*}.$ 
We define the regret and constraint violations of COCOwM
\begin{align}
    \mathcal R_f(T) = \sum_{t=1}^T f_t(\mathbf M_{t-H:t}&) -   \sum_{t=1}^T f_t(\mathbf M^*), \nonumber\\
    \mathcal V_g^{soft}(T) = \sum_{t=1}^T g_t(\mathbf M_{t-H:t}),&~  \mathcal V_g^{hard}(T) = \sum_{t=1}^T g_t^{+}(\mathbf M_{t-H:t}) \nonumber\\
    \mathcal V_h(t) = h(\mathbf M_{t-H:t}&),~ \forall t \in [T]. \nonumber
\end{align}

Before presenting the formal analysis of COCOwM (or COCO-Solver) algorithms, we introduce several necessary assumptions.
\begin{assumption}
The feasible set $\mathcal M$ is convex with diameter $D$ such that $\|\mathbf M - \mathbf M'\| \leq D, \forall \mathbf M, \mathbf M' \in \mathcal M$. \label{assumption:set}
\end{assumption}
\vspace{-20pt}
\begin{assumption}
The loss and constraint functions are convex and Lipschitz continuous with Lipschitz constant $L.$ 
Further, assume $h(\mathbf M) \leq E$ and $g_t(\mathbf M)\leq E, \forall t \in [T].$
\label{assumption:obj}
\end{assumption}
\begin{assumption}
There exists a positive constant $\xi>0$ and $\mathbf M\in \mathcal M$ such that $h(\mathbf M) \leq -\xi$ and $g_t(\mathbf M) \leq -\xi, \forall t\in[T].$  \label{assumption:slater}
\end{assumption}
We are ready to present the theoretical results of COCO-Soft, COCO-Hard, and COCO-Best2Worlds solvers introduced in Section \ref{sec:coca}.  
\subsection{Theoretical Analysis of COCO-Soft}

{\bf COCO-Soft}$(\mathbf M_t,$ $Q_t,$ $f_t(\cdot),$ $g_t(\cdot),$ $h(\cdot))$  optimizes $f_t(\mathbf M_t),$ $g_t(\mathbf M_t),$ and $h(\mathbf M_t),$ which are slightly off to the true targets $f_t(\mathbf M_{t-H, t}),$ $g_t(\mathbf M_{t-H, t}),$ and $h(\mathbf M_{t-H, t}).$ Therefore, we also need to quantify the mismatches by establishing the stability terms $\|\mathbf M_{t} - \mathbf M_{t+1}\|.$
The following theorem establishes the regret and constraint violation of COCO-Soft. 
\begin{theorem}
Under Assumptions \ref{assumption:set}-\ref{assumption:slater}, COCO-Soft algorithm achieves 
\begin{align*}
    \mathcal R_f(T) =& O\left(\sqrt{T}\log^3 T\right),  ~~\mathcal V_h(t) = O(1/\log T), \forall t \in [T],\\
    &\mathcal V_g^{soft}(T) = O\left(1\right),
\end{align*} for a large $T$ with the probability at least $1-1/T.$ \label{thm:ocowm}
\end{theorem}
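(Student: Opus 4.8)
The plan is to run a drift-plus-penalty (Lyapunov optimization) analysis on the potential $L_t:=\tfrac12 Q_t^2$, combined with the standard reduction from online convex optimization with memory to its ``diagonal'' (fixed-weight) counterpart. Two deterministic preliminaries feed everything else. First, a per-step stability estimate: since the surrogate minimized at step $t$ is $2\alpha$-strongly convex in $\mathbf M$ through the term $\alpha\|\mathbf M-\mathbf M_t\|^2$, comparing its value at $\mathbf M_{t+1}$ with its value at $\mathbf M_t$ (both using that $\mathbf M_{t+1}$ is the minimizer) and using Cauchy--Schwarz together with the Lipschitz bounds of Assumptions \ref{assumption:set}--\ref{assumption:obj} gives $D_t:=\|\mathbf M_{t+1}-\mathbf M_t\|=O\big((V+Q_t+\eta)/\alpha\big)$ (or the sharper $O\big((V+Q_t)/\alpha+\sqrt{\eta h^+(\mathbf M_t)/\alpha}\big)$). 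Second, Lipschitz continuity gives the memory-to-diagonal estimates $|f_t(\mathbf M_{t-H:t})-f_t(\mathbf M_t)|\le L\sum_{i=1}^H\|\mathbf M_{t-i}-\mathbf M_t\|\le L\sum_{i=1}^{H}\sum_{j=1}^{i}D_{t-j}$, and likewise for $g_t$ and $h$; summing over $t$ turns each with-memory error into $O(H^2)\sum_t D_t$ (or, via Cauchy--Schwarz, $O(H^{3/2})\sqrt{\sum_tD_t^2}$). So, up to these switching sums, it suffices to control the diagonal regret $\sum_t[f_t(\mathbf M_t)-f_t(\mathbf M^*)]$, the diagonal soft violation $\sum_t g_t(\mathbf M_t)$, and the per-step quantity $h^+(\mathbf M_t)$.

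For the diagonal regret I would use the queue recursion to get $\Delta_t:=L_{t+1}-L_t\le Q_t(\hat g_t+\epsilon)+\tfrac12(\hat g_t+\epsilon)^2$, where $\hat g_t:=g_t(\mathbf M_t)+\langle\mathbf M_{t+1}-\mathbf M_t,\nabla g_t(\mathbf M_t)\rangle\le g_t(\mathbf M_{t+1})$ is the linearized constraint. Adding $V\langle\mathbf M_{t+1}-\mathbf M_t,\nabla f_t(\mathbf M_t)\rangle$, invoking optimality of $\mathbf M_{t+1}$ against the offline optimum $\mathbf M^*$ (for which $h^+(\mathbf M^*)=0$), using convexity of $f_t$ and $g_t$ so the $Q_t g_t(\mathbf M_t)$ terms cancel, telescoping the three-point term $\alpha(\|\mathbf M^*-\mathbf M_t\|^2-\|\mathbf M^*-\mathbf M_{t+1}\|^2)\le\alpha D^2$, and splitting $V\langle\mathbf M_{t+1}-\mathbf M_t,\nabla f_t\rangle$ by AM--GM (absorbing part of it into $\tfrac\alpha2\sum_tD_t^2$), one reaches after summation
\begin{align*}
V\sum_{t=1}^T\big(f_t(\mathbf M_t)-f_t(\mathbf M^*)\big)\ \le\ \alpha D^2+\frac{TV^2L^2}{2\alpha}+\epsilon\sum_{t=1}^T Q_t+O(T),
\end{align*}
while the queue recursion with $Q_1=0$ yields directly $\sum_{t=1}^T\hat g_t\le Q_{T+1}-T\epsilon$, and optimality of $\mathbf M_{t+1}$ against a Slater point for $h$ yields $\eta\big(\xi+h^+(\mathbf M_{t+1})\big)=O\big((V+Q_t)D+\alpha D\,D_t\big)$. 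Dividing the first inequality by $V$ and adding the switching sums gives $\mathcal R_f(T)$; the chain $\mathcal V_g^{soft}(T)\le Q_{T+1}-T\epsilon+O(H^2)\sum_tD_t$ gives $O(1)$ soft violation once $\epsilon$ is taken just large enough that $T\epsilon$ dominates $Q_{T+1}$ plus the switching sum (this is the role of the ``pessimistic'' slack $\epsilon$); and $\mathcal V_h(t)\le h^+(\mathbf M_t)+O(H^2\max_tD_t)$ gives the anytime bound. Then one picks $V\asymp\sqrt T$ and $\alpha,\eta,\epsilon$ as poly-log-scaled powers of $T$ balancing all terms.

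The crux, and the reason Assumption \ref{assumption:slater} and the high-probability qualifier appear, is a uniform bound $Q_t\le\bar Q$ with $\bar Q$ only poly-logarithmic in $T$; without it the stability estimate, the term $\epsilon\sum_tQ_t$, and $Q_{T+1}$ all degrade to $\Theta(\sqrt T)$-type quantities and the soft violation becomes $O(\sqrt T)$ rather than $O(1)$. I would obtain it via a drift lemma: whenever $Q_t$ exceeds a poly-log threshold, optimality of $\mathbf M_{t+1}$ against the Slater point $\hat{\mathbf M}$ (with $g_t(\hat{\mathbf M})\le-\xi$) forces $\langle\mathbf M_{t+1}-\mathbf M_t,\nabla g_t\rangle$, hence $\hat g_t+\epsilon$, to be negative by a fixed margin, so the queue has a strictly negative one-step drift; combined with the bounded increments $|Q_{t+1}-Q_t|=O(1)$ this pins $Q_t$ below the threshold plus a constant, and the probabilistic form of this drift lemma (which also supplies a deviation bound on $Q_t$ and accommodates randomized instances) produces the ``with probability $1-1/T$'' statement. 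The secondary difficulty is the accounting: the memory reduction injects $O(H^2)=O(\log^2T)$ factors that must fit inside the $\log^3T$ regret budget and the $1/\log T$ anytime budget simultaneously, and since enlarging $\eta$ shrinks $h^+(\mathbf M_{t+1})$ but enlarges $D_t$ (through the $\sqrt{\eta/\alpha}$ term) while enlarging $\alpha$ helps $D_t$ but hurts the regret via $\alpha D^2/V$, the feasible window for $(V,\alpha,\eta,\epsilon)$ is narrow and must be pinned down carefully.
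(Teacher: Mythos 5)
Your overall architecture matches the paper's: a memory-to-diagonal reduction paying $O(H^2)\sum_t\|\mathbf M_{t+1}-\mathbf M_t\|$ switching sums, a pushback inequality from the $2\alpha$-strong convexity of the surrogate, a drift-plus-penalty analysis on $\tfrac12Q_t^2$, the pessimistic slack $\epsilon$ turning the queue recursion $\sum_t g_t(\mathbf M_t)\le Q_{T+1}+L\sum_t\|\mathbf M_{t+1}-\mathbf M_t\|-T\epsilon$ into an $O(1)$ soft violation, and optimality against an $h$-feasible point for the anytime bound. However, the step you yourself call the crux is where the proposal as written fails. You claim a uniform bound $Q_t\le\bar Q$ with $\bar Q$ \emph{polylogarithmic} in $T$, obtained from a \emph{one-step} negative drift above a polylog threshold. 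Neither half is correct. The drift inequality obtained by comparing $\mathbf M_{t+1}$ against the Slater point $\hat{\mathbf M}$ carries the terms $VLD$ and $\alpha\bigl(\|\hat{\mathbf M}-\mathbf M_t\|^2-\|\hat{\mathbf M}-\mathbf M_{t+1}\|^2\bigr)$; the latter does not telescope within a single step and can be as large as $\alpha D^2$. Since the switching sums force $\alpha=\Theta(T\,\mathrm{polylog}\,T)$ (your own tension between $\alpha$, $D_t$, and $\alpha D^2/V$), a strictly negative one-step drift only holds once $Q_t=\Omega(\alpha)=\Omega(T\,\mathrm{polylog}\,T)$, which is useless. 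The paper resolves this with a \emph{multi-step} drift over $S=\sqrt T\log^5T$ steps, amortizing $\alpha D^2$ to $\alpha D^2/S$ and yielding a threshold $\theta=\Theta(VLD/\xi+\alpha D^2/(\xi S))$ and hence $Q_{\max}=\Theta(\sqrt T\,\mathrm{polylog}\,T)$ — not polylog. Consequently your diagnosis of why $\mathcal V_g^{soft}(T)=O(1)$ is off: it is not that $Q_{T+1}$ is polylog, but that $T\epsilon=\Theta(\sqrt T\,\mathrm{polylog}\,T)$ is tuned to dominate the $\Theta(\sqrt T\,\mathrm{polylog}\,T)$-sized $Q_{T+1}$ plus switching sum.

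Two secondary issues. First, your per-step stability bound $\|\mathbf M_{t+1}-\mathbf M_t\|=O\bigl((V+Q_t+\eta)/\alpha\bigr)$ is fatally lossy because $\eta=\Theta(T^{3/2}\,\mathrm{polylog}\,T)\gg\alpha$; the paper's Lemma on $\sum_t\|\mathbf M_{t+1}-\mathbf M_t\|$ avoids this by setting $\mathbf M=\mathbf M_t$ in the pushback inequality so that the $\eta$-dependence appears only through $\eta\bigl(h^+(\mathbf M_t)-h^+(\mathbf M_{t+1})\bigr)$, which telescopes to a nonpositive total. Second, in the regret you compare directly against $\mathbf M^*$ and are left with $\epsilon\sum_tQ_t$, which, given the true size of $Q_{\max}$, contributes $\epsilon TQ_{\max}/V=\Theta(\sqrt T\,\mathrm{polylog}\,T)$ with extra log factors; the paper instead compares against the $\epsilon$-tightened offline problem so that $Q_t\bigl(g_t(\mathbf M^\epsilon)+\epsilon\bigr)\le0$ drops out, and then pays $LDT\epsilon/\delta$ once via the sensitivity lemma comparing the tightened and original problems. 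Your route survives order-wise, but only after the queue bound is repaired by the multi-step drift argument.
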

We outline the key steps of the proof and leave the details in Appendix \ref{app:soft}.
We first study the regret 
\begin{align*}
\mathcal R_f(T) \leq&  \sum_{t=1}^T|f_t(\mathbf M_{t-H:t}) - f_t(\mathbf M_{t})| +
f_t(\mathbf M_{t}) - f_t(\mathbf M^*) \\
\leq& O(H^2 \sum_{t=1}^T\|\mathbf M_{t+1} - \mathbf M_{t}\|) + O(\sqrt{T}\log^3 T+T\epsilon), 
\end{align*}
which proves the regret bound by letting the pessimistic factor $\epsilon = \Theta(\log^3 T/\sqrt{T})$ in Lemma \ref{lemma:ocowm} and by Lemma \ref{lemma:diff}. Similarly, we establish the soft constraint violation 
\begin{align*}
\mathcal V_g^{soft}(T) \leq \sum_{t=1}^T|g_t(\mathbf M_{t-H:t}) - g_t(\mathbf M_{t})| + g_t(\mathbf M_{t}) \\
= O(H^2 \sum_{t=1}^T\|\mathbf M_{t+1} - \mathbf M_{t}\| + \sqrt{T}\log^3 T-T\epsilon)
\end{align*}
which is $O(1)$ with $\epsilon = \Theta(\log^3 T/\sqrt{T})$ for a relatively large $T.$ Finally, we have anytime violation
\begin{align*}
\mathcal V_h(t) \leq |h(\mathbf M_{t-H:t}) - h(\mathbf M_{t})| + h(\mathbf M_{t}) = O(1/\log T).
\end{align*}
\begin{remark}
Theorem \ref{thm:ocowm} achieves $\Tilde{O}(\sqrt{T})$ regret and $O(1)$ violation, which significantly improves the best existing results of $O(\sqrt{T})$ regret and $O(\sqrt{T})$ violation in \cite{NeeYu_17,YuNeeWei_17}. The key design for such improvement is to introduce the pessimistic factor $\epsilon$ in virtual queue update \cite{AkhAmrRaj_21,LiuLiShi_21} such that 
 we can trade regret (the amount of $T\epsilon$) to achieve the constant soft violation $\mathcal V_g^{soft}(T).$ Note a very recent work \cite{KimLee_23} studied OCO with stochastic constraints and established $O(\sqrt{T})$ regret and $O(1)$ soft cumulative violation by using a similar pessimistic technique as in this paper. However, the result in \cite{KimLee_23} can be regarded as a special case of ours because we considered COCOwM with adversarial constraints.  \label{rmk:bestcoco}      
\end{remark}

\subsection{Theoretical Analysis of COCO-Hard and COCO-Best2Worlds}
Similar to {\bf COCO-Soft}, we establish the regret and constraint violation 
under {\bf COCO-Hard} and {\bf COCO-Best2Worlds} algorithms, 
where we combine the stability term $\|\mathbf M_{t} - \mathbf M_{t+1}\|$ to have Theorems \ref{thm:ocowm-hard} and \ref{thm:ocowm-best}. 

\begin{theorem}
Under Assumptions \ref{assumption:set}-\ref{assumption:obj}, COCO-Hard algorithm achieves  \begin{align*}
    \mathcal R_f(T) =& O\left(T^{\frac{2}{3}}\log^2 T\right),  ~\mathcal V_h(t) = O(\log T/T^{\frac{1}{3}}), \forall t \in [T],\\
    &\mathcal V_g^{hard}(T) = O\left(T^{\frac{2}{3}}\log^2 T\right),
\end{align*} \label{thm:ocowm-hard}
for a large $T.$
\end{theorem}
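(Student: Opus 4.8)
The plan is to track a single potential function that couples the loss gap, the hard constraint penalty, and the stability term, in the spirit of the proximal/drift analysis used for COCO-Soft but without a virtual queue. First I would fix the parameter scaling $V = \Theta(T^{2/3})$, $\gamma = \Theta(T^{2/3})$, $\eta = \Theta(T^{2/3}\log T)$, and $\alpha = \Theta(T^{2/3})$ (exact constants deferred to Appendix \ref{app:hard}), so that the per-step proximal step size is $\Theta(1/T^{2/3})$. The optimality condition for $\mathbf M_{t+1}$ as the minimizer of the strongly convex surrogate $V\langle \mathbf M - \mathbf M_t, \nabla f_t(\mathbf M_t)\rangle + \gamma g_t^{+}(\mathbf M) + \eta h^{+}(\mathbf M) + \alpha\|\mathbf M - \mathbf M_t\|^2$ over $\mathcal M$ gives, for any comparator $\mathbf M^*$, a one-step inequality of the form
\begin{align*}
V\langle \mathbf M_{t+1} - \mathbf M^*, \nabla f_t(\mathbf M_t)\rangle + \gamma\bigl(g_t^{+}(\mathbf M_{t+1}) - g_t^{+}(\mathbf M^*)\bigr) + \eta\bigl(h^{+}(\mathbf M_{t+1}) - h^{+}(\mathbf M^*)\bigr) \le \alpha\|\mathbf M_t - \mathbf M^*\|^2 - \alpha\|\mathbf M_{t+1} - \mathbf M^*\|^2 - \alpha\|\mathbf M_{t+1} - \mathbf M_t\|^2,
\end{align*}
using the three-point identity for the quadratic and convexity (together with the subgradient characterization of the nonsmooth terms $g_t^{+}, h^{+}$).

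Next I would convert $\langle \mathbf M_{t+1} - \mathbf M^*, \nabla f_t(\mathbf M_t)\rangle$ into $f_t(\mathbf M_t) - f_t(\mathbf M^*)$ up to a stability correction: write $\langle \mathbf M_{t+1} - \mathbf M^*, \nabla f_t(\mathbf M_t)\rangle = \langle \mathbf M_t - \mathbf M^*, \nabla f_t(\mathbf M_t)\rangle + \langle \mathbf M_{t+1} - \mathbf M_t, \nabla f_t(\mathbf M_t)\rangle \ge f_t(\mathbf M_t) - f_t(\mathbf M^*) - L\|\mathbf M_{t+1} - \mathbf M_t\|$ by convexity and Assumption \ref{assumption:obj}. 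Summing over $t$, the $\alpha$-telescoping kills the distance terms at cost $\alpha D^2$; the $\sum_t \alpha\|\mathbf M_{t+1} - \mathbf M_t\|^2$ on the right can absorb the $VL\sum_t\|\mathbf M_{t+1}-\mathbf M_t\|$ error via Young's inequality, leaving $\sum_t \|\mathbf M_{t+1}-\mathbf M_t\|^2 = O(V^2 L^2/\alpha^2 \cdot T)$ as a stability bound — the analogue of Lemma \ref{lemma:diff} for the hard solver, which I would state as a lemma. Since $\mathbf M^*$ is feasible for the offline problem \eqref{eq:  coco-m baseline obj}-\eqref{eq:  coco-m baseline cons}, $g_t^{+}(\mathbf M^*) = h^{+}(\mathbf M^*) = 0$, so the surviving terms are $\gamma \sum_t g_t^{+}(\mathbf M_{t+1})$ and $\eta\sum_t h^{+}(\mathbf M_{t+1})$ on the left, both nonnegative, hence can be dropped for the regret bound and retained for the violation bounds. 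This yields $\sum_t f_t(\mathbf M_t) - \sum_t f_t(\mathbf M^*) = O(\alpha D^2/V + VL^2 T/\alpha)$, which with the chosen scaling is $O(T^{2/3})$; the $\log^2 T$ factors enter through $H = \Theta(\log T)$ in the memory-to-pointwise conversion below and through the $\eta$-scaling needed for the anytime bound.

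The three remaining pieces are routine given the above. For regret, add the memory correction $\sum_t |f_t(\mathbf M_{t-H:t}) - f_t(\mathbf M_t)| = O(H^2\sum_t\|\mathbf M_{t+1}-\mathbf M_t\|)$ (Lipschitzness plus the telescoping structure of $\mathbf M_{t-H:t}$ vs.\ $\mathbf M_t$, exactly as in the COCO-Soft outline) and bound $\sum_t\|\mathbf M_{t+1}-\mathbf M_t\| \le \sqrt{T}\sqrt{\sum_t\|\mathbf M_{t+1}-\mathbf M_t\|^2} = O(T^{2/3})$ by Cauchy-Schwarz with the stability lemma; hence $\mathcal R_f(T) = O(T^{2/3}\log^2 T)$. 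For hard violation, keep the $\gamma\sum_t g_t^{+}(\mathbf M_{t+1})$ term on the left of the summed inequality; everything else on the right is $O(\alpha D^2 + V L^2 T/\alpha) = O(T^{4/3})$ wait — divide by $\gamma = \Theta(T^{2/3})$ to get $\sum_t g_t^{+}(\mathbf M_{t+1}) = O(T^{2/3})$, then shift the index and add the constraint-approximation/stability correction to reach $\mathcal V_g^{hard}(T) = O(T^{2/3}\log^2 T)$. For the anytime bound on $h$, I would argue pointwise: at each $t$, $\mathbf M_{t+1}$ minimizes the surrogate, so comparing against a Slater point $\bar{\mathbf M}$ with $h(\bar{\mathbf M}) \le -\xi$ (Assumption \ref{assumption:slater} is available here, though Theorem \ref{thm:ocowm-hard} is stated under only \ref{assumption:set}-\ref{assumption:obj}, so I'd instead use the crude bound $h^{+}(\mathbf M_{t+1}) \le (\text{surrogate slack})/\eta$ against $\mathbf M_t$) gives $\eta h^{+}(\mathbf M_{t+1}) \le V L D + \gamma E + \alpha D^2 = O(T^{2/3}\log T)$ roughly, so $h^{+}(\mathbf M_{t+1}) = O(\log T/T^{1/3})$ after dividing by $\eta = \Theta(T^{2/3}\log^2 T)$, wait — I need $\eta/\!\bigl(VLD+\gamma E+\alpha D^2\bigr) = \Theta(T^{1/3}/\log T)$, i.e.\ $\eta = \Theta(T\log\!\text{-free}\cdots)$; I would instead set $\eta = \Theta(T)$ to make this clean and re-verify the regret term $\alpha D^2/V$ is unaffected while $\eta$ does not appear in the loss telescoping.

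The main obstacle I expect is the \emph{coupling of the three penalty weights}: the anytime-$h$ bound wants $\eta$ as large as possible (it divides the error), but $\eta$ also appears additively in the per-step surrogate slack that bounds $h^{+}(\mathbf M_{t+1})$ through other comparators and — more importantly — the stability term $\|\mathbf M_{t+1}-\mathbf M_t\|$ scales like $(\text{all penalty gradients})/\alpha$, so a large $\eta$ degrades stability and hence regret and hard violation. Getting the simultaneous assignment $V=\gamma=\Theta(T^{2/3})$, $\alpha=\Theta(T^{2/3})$, $\eta=\Theta(T)$ to land all three targets ($T^{2/3}\log^2 T$ regret, $T^{2/3}\log^2 T$ hard violation, $\log T/T^{1/3}$ anytime) requires carefully checking that the $\eta$-induced contribution to $\sum_t\|\mathbf M_{t+1}-\mathbf M_t\|^2$ is subdominant — this is where the $\log$ factors and the precise constants in Appendix \ref{app:hard} do the real work, and it is the one place a naive calculation can silently break the bound.
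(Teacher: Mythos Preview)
Your skeleton---the three-point/pushback inequality from strong convexity of the surrogate, summing with a feasible comparator so that $g_t^{+}(\mathbf M^*)=h^{+}(\mathbf M^*)=0$, telescoping the $\alpha$-distance terms, and adding the memory correction via a stability bound on $\sum_t\|\mathbf M_{t+1}-\mathbf M_t\|$---is exactly the paper's route. The gap is in the parameter assignment. You set $V=\Theta(T^{2/3})$ with $\alpha=\Theta(T^{2/3})$, but then your own regret formula $O(\alpha D^2/V + VL^2T/\alpha)$ evaluates to $O(D^2 + L^2T)$, which is linear, not $O(T^{2/3})$. The paper takes $V=1$, $\gamma=\alpha=T^{2/3}$, $\eta=T^{3/2}$; with $V=1$ the regret becomes $\alpha D^2 + TL^2/(4\alpha)=O(T^{2/3})$, the stability sum becomes $\sum_t\|\mathbf M_{t+1}-\mathbf M_t\|^2\le 2TVLD/\alpha+D^2=O(T^{1/3})$ and hence $\sum_t\|\mathbf M_{t+1}-\mathbf M_t\|=O(T^{2/3})$ by Cauchy--Schwarz, and the hard violation satisfies $\sum_t g_t^{+}(\mathbf M_{t+1})\le (2TVLD+\alpha D^2)/\gamma=O(T^{1/3})$ before the index shift.

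Your anytime-$h$ argument also does not close. Comparing against $\mathbf M_t$ in the pushback gives only the recursive $\eta h^{+}(\mathbf M_{t+1})\le \eta h^{+}(\mathbf M_t)+\gamma g_t^{+}(\mathbf M_t)+VL\|\mathbf M_{t+1}-\mathbf M_t\|$, which cannot be unrolled without losing everything. The paper instead keeps the $\eta\sum_t h^{+}(\mathbf M_{t+1})$ term on the left of the \emph{summed} inequality against a feasible $\mathbf M$, obtaining $\sum_t h^{+}(\mathbf M_{t+1})\le (2TVLD+\alpha D^2)/\eta=O(T^{-1/2})$ with $\eta=T^{3/2}$; since every summand is nonnegative, each $h^{+}(\mathbf M_{t+1})$ is individually $O(T^{-1/2})$, and the memory correction $O(H^2\|\mathbf M_{t+1}-\mathbf M_t\|)=O(\log^2 T/T^{1/3})$ then dominates. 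Finally, your worry that large $\eta$ spoils stability is unfounded: the stability bound is obtained from the pushback against a \emph{feasible} $\mathbf M$, where $\eta h^{+}(\mathbf M)=\gamma g_t^{+}(\mathbf M)=0$, so neither $\eta$ nor $\gamma$ appears in the stability estimate at all. That decoupling is precisely what lets you push $\eta$ up to $T^{3/2}$ for free.
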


\begin{theorem}
Under Assumptions \ref{assumption:set}-\ref{assumption:slater},  COCO-Best2Worlds algorithm achieves 
\begin{align*}
    \mathcal R_f(T) = O(\sqrt{T}\log^3 T),  ~~\mathcal V_h(t) = O(1/\log T), \forall t \in [T],\\
    \mathcal V_g^{soft}(T) = O\left(1\right), ~ \mathcal V_g^{hard}(T) = O(\sqrt{T}\log^3 T),
\end{align*}
for a large $T$ with the probability at least $1-1/T.$ \label{thm:ocowm-best}
\end{theorem}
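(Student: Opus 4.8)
The plan is to prove Theorem~\ref{thm:ocowm-best} by welding together the virtual-queue (Lyapunov) argument behind Theorem~\ref{thm:ocowm} and the proximal-penalty argument behind Theorem~\ref{thm:ocowm-hard}, organized around a single per-step drift-plus-penalty inequality. Writing $\Phi_t$ for the COCO-Best2Worlds surrogate (the objective minimized in the algorithm box), I would use that $\Phi_t$ is $\alpha_t$-strongly convex, so its minimizer $\mathbf M_{t+1}\in\mathcal M$ satisfies $\Phi_t(\mathbf M)\ge\Phi_t(\mathbf M_{t+1})+\alpha_t\|\mathbf M-\mathbf M_{t+1}\|^2$ for every $\mathbf M\in\mathcal M$. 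Expanding this, converting inner products into function-value gaps by convexity and $L$-Lipschitzness (Assumption~\ref{assumption:obj}, at the cost of $O(V_t L\|\mathbf M_{t+1}-\mathbf M_t\|)$ slack), squaring the virtual-queue recursion to get $\tfrac12(Q_{t+1}^2-Q_t^2)\le Q_t\bigl(g_t(\mathbf M_t)+\langle\mathbf M_{t+1}-\mathbf M_t,\nabla g_t(\mathbf M_t)\rangle+\epsilon\bigr)+O(1)$, and cancelling the shared term $Q_t\langle\mathbf M_{t+1}-\mathbf M_t,\nabla g_t(\mathbf M_t)\rangle$, I would reach, for every $\mathbf M\in\mathcal M$, an inequality of the form
\begin{align*}
&V_t\bigl(f_t(\mathbf M_t)-f_t(\mathbf M)\bigr)+\tfrac12(Q_{t+1}^2-Q_t^2)+\gamma_t g_t^{+}(\mathbf M_{t+1})+\eta_t h^{+}(\mathbf M_{t+1}) \\
&\quad\le -Q_t\bigl(g_t(\mathbf M)+\epsilon\bigr)+\gamma_t g_t^{+}(\mathbf M)+\eta_t h^{+}(\mathbf M) \\
&\qquad\quad +\alpha_t\bigl(\|\mathbf M-\mathbf M_t\|^2-\|\mathbf M-\mathbf M_{t+1}\|^2\bigr)+\frac{V_t^2 L^2}{4\alpha_t}+O(1),
\end{align*}
having also used $-\alpha_t\|\mathbf M_{t+1}-\mathbf M_t\|^2+V_t L\|\mathbf M_{t+1}-\mathbf M_t\|\le V_t^2 L^2/(4\alpha_t)$.

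Second, I would read off the four guarantees from this inequality by specializing $\mathbf M$ and summing over $t$. Taking $\mathbf M=\mathbf M^*$ kills the three penalty terms on the right ($g_t(\mathbf M^*)\le0$, $g_t^{+}(\mathbf M^*)=h^{+}(\mathbf M^*)=0$), so after telescoping the $Q_t^2$ terms (with $Q_1=0$) and the $\alpha_t\|\cdot-\mathbf M^*\|^2$ differences, $\sum_t(f_t(\mathbf M_t)-f_t(\mathbf M^*))$ is controlled by $O\bigl((\sum_t Q_t\epsilon+\alpha_T D^2+\sum_t V_t^2 L^2/\alpha_t)/\min_t V_t\bigr)$. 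Taking $\mathbf M=\hat{\mathbf M}$, the Slater point of Assumption~\ref{assumption:slater}, converts $-Q_t(g_t(\hat{\mathbf M})+\epsilon)$ into the negative queue drift $-(\xi-\epsilon)Q_t$: the standard ``large $Q_t$ forces negative drift'' argument pins $Q_{T+1}=\tilde O(\sqrt T)$ (which is why $\epsilon$ must stay below $\xi$), and dropping $\tfrac12(Q_{t+1}^2-Q_t^2)$ and $(\xi-\epsilon)Q_t\ge0$ and summing bounds $\sum_t\gamma_t g_t^{+}(\mathbf M_{t+1})$ and $\sum_t\eta_t h^{+}(\mathbf M_{t+1})$, hence, dividing by $\min_t\gamma_t$ and $\min_t\eta_t$, the hard and anytime violations of the iterates; the queue is precisely what lets the hard rate improve to $\tilde O(\sqrt T)$ from COCO-Hard's $\tilde O(T^{2/3})$. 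For the soft violation I would instead unroll the queue directly, $Q_{T+1}\ge\sum_t(g_t(\mathbf M_t)+\langle\mathbf M_{t+1}-\mathbf M_t,\nabla g_t\rangle+\epsilon)$, giving $\sum_t g_t(\mathbf M_t)\le Q_{T+1}-T\epsilon+L\sum_t\|\mathbf M_{t+1}-\mathbf M_t\|$, which the bound $Q_{T+1}=\tilde O(\sqrt T)$ and a pessimistic $\epsilon=\tilde\Theta(1/\sqrt T)$ collapse to $O(1)$. Finally, to pass from these statements about $\mathbf M_t$ and $\mathbf M_{t+1}$ to the genuine memory quantities $f_t(\mathbf M_{t-H:t})$, $g_t(\mathbf M_{t-H:t})$, $h(\mathbf M_{t-H:t})$, I would invoke a stability lemma (an analogue of Lemma~\ref{lemma:diff} re-derived for this surrogate), so that each memory mismatch costs only $O(H^2\sum_s\|\mathbf M_{s+1}-\mathbf M_s\|)$ in the cumulative metrics and $O(H^2\max_s\|\mathbf M_{s+1}-\mathbf M_s\|)$ in the anytime metric; with $H=\Theta(\log T)$ and the time-varying schedules $V_t,\alpha_t,\gamma_t,\eta_t,\epsilon$ tuned so that $\sum_t\|\mathbf M_{t+1}-\mathbf M_t\|=\tilde O(\sqrt T)$, the four bounds land at $\mathcal R_f(T)=O(\sqrt T\log^3 T)$, $\mathcal V_g^{soft}(T)=O(1)$, $\mathcal V_g^{hard}(T)=O(\sqrt T\log^3 T)$, and $\mathcal V_h(t)=O(1/\log T)$; the high-probability clause enters, exactly as in Theorem~\ref{thm:ocowm}, through the same concentration step.

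The part I expect to fight hardest is the \emph{joint} control of quantities that pull in opposite directions. The queue must remain $\tilde O(\sqrt T)$ for the $O(1)$ soft bound, which forces the proximal weight $\alpha_t$ and the penalty weights to be modest (they enter the queue's drift through $\alpha_t D^2$ and the like); yet the penalty $\gamma_t$ must be large enough to make $\sum_t g_t^{+}(\mathbf M_{t+1})$ small; and all of $V_t,Q_t,\gamma_t,\eta_t$ enter the per-step movement $\|\mathbf M_{t+1}-\mathbf M_t\|$, which is of order $(V_t+Q_t+\gamma_t+\eta_t)L/\alpha_t$, while the memory-approximation error $O(H^2\sum_t\|\mathbf M_{t+1}-\mathbf M_t\|)$ is added to \emph{all four} metrics. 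Because COCO-Best2Worlds keeps both the queue channel $Q_t\langle\mathbf M-\mathbf M_t,\nabla g_t\rangle$ and the penalty channel $\gamma_t g_t^{+}(\mathbf M)$ active on the same iterate $\mathbf M_{t+1}$, the stability lemma has to be re-established for this richer surrogate, and the time-varying learning-rate schedule has to be chosen so that the switching cost, the queue size, the regret, and the hard-violation sum are simultaneously $\tilde O(\sqrt T)$ or better. This coupled tuning, rather than any single inequality, is where the real effort lies; everything else follows the templates already laid down for COCO-Soft and COCO-Hard.
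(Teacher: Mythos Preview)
Your high-level architecture is right: one drift-plus-penalty inequality that simultaneously carries the queue channel $Q_t\langle\cdot,\nabla g_t\rangle$ and the proximal channel $\gamma_t g_t^{+}(\cdot)$, then specialize $\mathbf M$ and sum. But you are missing the one ingredient that makes COCO-Best2Worlds work at all, namely the $\lambda$-strong convexity of $f_t$. The whole point of the ``strongly-convex cost'' hypothesis in this section is that it upgrades the convexity step from $f_t(\mathbf M_t)+\langle\mathbf M-\mathbf M_t,\nabla f_t\rangle\le f_t(\mathbf M)$ to $f_t(\mathbf M_t)+\langle\mathbf M-\mathbf M_t,\nabla f_t\rangle\le f_t(\mathbf M)-\tfrac{\lambda}{2}\|\mathbf M-\mathbf M_t\|^2$, so that the right-hand side of your master inequality carries $(\alpha_t-\lambda V_t/2)\|\mathbf M-\mathbf M_t\|^2-\alpha_t\|\mathbf M-\mathbf M_{t+1}\|^2$ rather than $\alpha_t(\|\mathbf M-\mathbf M_t\|^2-\|\mathbf M-\mathbf M_{t+1}\|^2)$. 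With the paper's schedule $V_t=\sqrt t$, $\alpha_t=\lambda t^{1.5}/2$, the identity $\alpha_t/V_t-\alpha_{t-1}/V_{t-1}-\lambda/2=0$ makes the telescoping term vanish exactly. Your version, which uses only convexity, leaves a residual $\sum_t(\alpha_t/V_t-\alpha_{t-1}/V_{t-1})\,D^2=\Theta(T)D^2$ in the regret sum, which is fatal.

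The same omission breaks two other pieces of your plan. First, the queue bound: you claim $Q_{T+1}=\tilde O(\sqrt T)$ by the COCO-Soft multi-step drift, but the paper gets $Q_{\max}=O(\log T)$ via a \emph{single-step} drift, precisely because the strong-convexity term kills the $\alpha_t D^2$ residual that forces COCO-Soft into the multi-step analysis. Second, your stability estimate $\|\mathbf M_{t+1}-\mathbf M_t\|\approx(V_t+Q_t+\gamma_t+\eta_t)L/\alpha_t$ is the wrong route here: with $\gamma_t,\eta_t=t^{1.5}$ and $\alpha_t=\Theta(t^{1.5})$ this is $O(1)$ per step, so $\sum_t\|\mathbf M_{t+1}-\mathbf M_t\|=O(T)$ and the memory correction $O(H^2\sum_t\|\cdot\|)$ swamps every bound. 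The paper instead keeps $\alpha_t\|\mathbf M_{t+1}-\mathbf M_t\|^2$ on the left of the master inequality, divides by $\alpha_t$, and sums to get $\sum_t\|\mathbf M_{t+1}-\mathbf M_t\|^2=O((\log T)/\lambda)$ directly; Cauchy--Schwarz then gives $\sum_t\|\mathbf M_{t+1}-\mathbf M_t\|=O(\sqrt{T\log T})$. Again this only works because the $(\alpha_t-\lambda V_t/2)$ cancellation tames the telescoping term.

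A smaller issue: for the regret you take $\mathbf M=\mathbf M^*$, which leaves $Q_t(g_t(\mathbf M^*)+\epsilon)\le Q_t\epsilon$ on the right; even with the paper's $Q_{\max}=O(\log T)$ this is harmless, but with your $Q_t=\tilde O(\sqrt T)$ it would be linear. The paper sidesteps this by comparing against an $\epsilon$-tightened optimum $\mathbf M^{\epsilon}$ (so $g_t(\mathbf M^{\epsilon})+\epsilon\le0$) and paying $O(T\epsilon)$ via Lemma~\ref{lem:tightness}. In short: your template is correct, but you must invoke strong convexity of $f_t$ at the linearization step and then re-derive the telescoping, the queue drift, and the stability lemma with the extra $-\tfrac{\lambda}{2}V_t\|\mathbf M-\mathbf M_t\|^2$ term present; without it none of the $\tilde O(\sqrt T)$ rates close.
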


Note once the regret and constraint violation of COCO solvers have been established in Theorems \ref{thm:ocowm}, \ref{thm:ocowm-hard}, and \ref{thm:ocowm-best},
we plug them into the roadmap in Section \ref{sec:template} and prove the performance of COCA in Theorems \ref{thm:occa}, \ref{thm:occa-hard}, and \ref{thm:occa-best}. 

\section{Experiments}
In this section, we test our algorithms on a quadrotor vertical flight (QVF) control under an adversarial environment, which is modified from \cite{LiZhaDas_23}. The experiment is designed to verify if our algorithms are adaptive to time-varying/adversarial constraints. 
We also test our algorithms on a Heating Ventilation and Air Conditioning (HVAC) control with static affine constraints in \cite{LiDasLi_21}. 
The experiment is to verify if our approach is less conservative in designing COCA algorithms. The omitted details and results (e.g. values of parameters in system equations or learning rates of COCA algorithms) are in Appendix~\ref{app-exp}. 
    
{\bf QVF Control}:
The system equation is $\ddot{x}_t = \frac{u_t}{m} - g - \frac{I^a {\dot x}_t}{m} + w_t$, where $x_t$ is the altitude of the quadrotor, $u_t$ is the motor thrust, $m$ is the mass of the quadrotor, $g$ is the gravitational acceleration, and $I^a$ is the drag coefficient of the air resistance. 
The system disturbances are i.i.d. drawn from a uniform distribution $w_t\sim U(-5.5,-4.5)$ simulating winds blowing down.
We impose time-varying constraints, $x_t \ge 0.3 + 0.3\sin(t/10)$, to emulate the adversarial and time-varying obstacles on the ground. The constraints are especially challenging when winds blow down.
The static affine constraints are $x_t\le 1.7$ and $0\le u_t \le 12$. We consider time-varying quadratic cost functions $0.1(x_t-0.7)^2 + 0.1\dot{x}_t^2 + \chi_t (u_t-9.8)^2$ with $\chi_t\sim U(0.1, 0.2).$ 

We compare COCA-Soft, COCA-Hard, and COCA-Best2Worlds with a strongly-stable linear controller $K$, which is obtained by solving an SDP problem \cite{AloAviTom_18}. 
Figure~\ref{fig:exp-results} (a)-(c) show that COCA algorithms achieve much better performance than the stable controller.
Specifically, our algorithms have much smaller cumulative costs, negative cumulative soft violations that decrease over time, and cumulative hard violations that remain constant shortly after the initial stages. These results verify our algorithms are very adaptive to the adversarial environment and achieve the minimum cumulative costs and the best constraint satisfaction. Moreover, we observe COCA algorithms have almost identical performance on cumulative costs and soft violations, but COCA-Hard and COCA-Best2Worlds have a better hard violation than COCA-Soft in Figure~\ref{fig:exp-results} (c). It justifies the penalty-based design is efficient in tackling the hard violation. Note we also test our algorithms with the disturbance $w_t\sim U(4.5,5.5)$ simulating winds blowing up, where we also observe similar results. The details can be found in Appendix~\ref{app-exp}.  

{\bf HVAC Control}:
The system equation is $\dot{x}_t = \frac{\theta^o - x_t}{v\zeta} - \frac{u_t}{v} + \frac{w_t+\iota}{v},$ where $x_t$ is the room temperature, $u_t$ is the airflow rate of the HVAC system as the control input, $\theta^o$ is the outdoor temperature, $w_t$ is the random disturbance, $\iota$ represents the impact of the external heat sources, $v$ and $\zeta$ denotes the environmental factors. 
Similar to \cite{LiDasLi_21}, the state and input constraints are $22.5 \le x_t \le 25.5$ and $0.5\le u_t\le 4.5,$ and the time-varying cost functions are $2(x_t-24)^2 + \chi_t(u_t-2.5)^2$ with $\chi_t \sim U(0.1, 4.0).$

We compare COCA with OGD-BZ algorithm (COCA-Soft, COCA-Hard, and COCA-Best2Worlds are exactly identical, called COCA, when only static constraints exist). Figure~\ref{fig:exp-results} (d)-(e) show the cumulative costs and the room temperature $x_t$. We observe that COCA has a significantly better cumulative cost than OGD-BZ algorithm with a near-zero constraint violation. The results verify our approach is effective in designing less-conservative COCA algorithms. 
\begin{figure*}
    \centering
    \tabcolsep=0.02\linewidth
    \divide\tabcolsep by 8
    \begin{tabular}{ccccc}
    \includegraphics[width=0.194\linewidth]{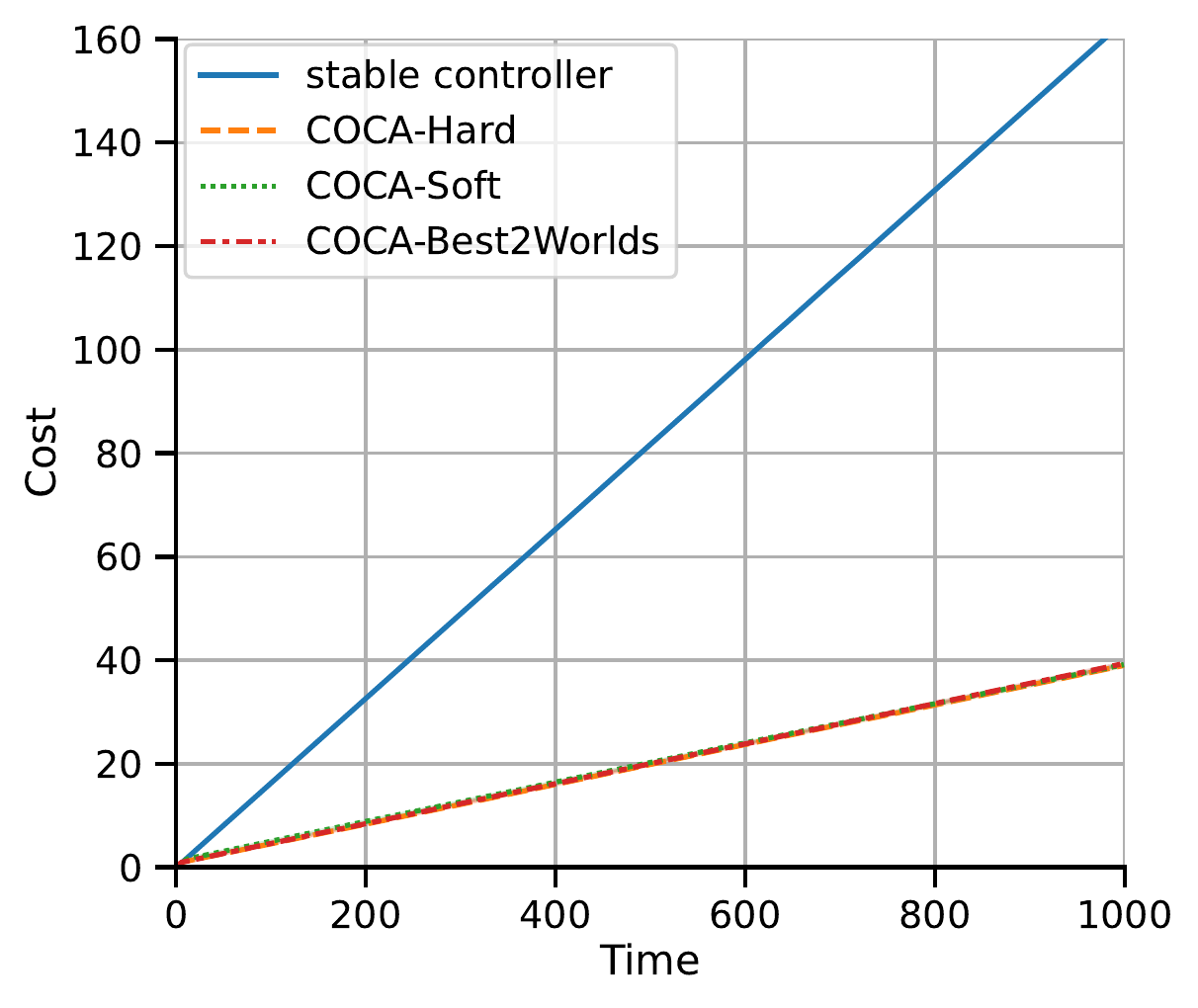} &
        \includegraphics[width=0.192\linewidth]{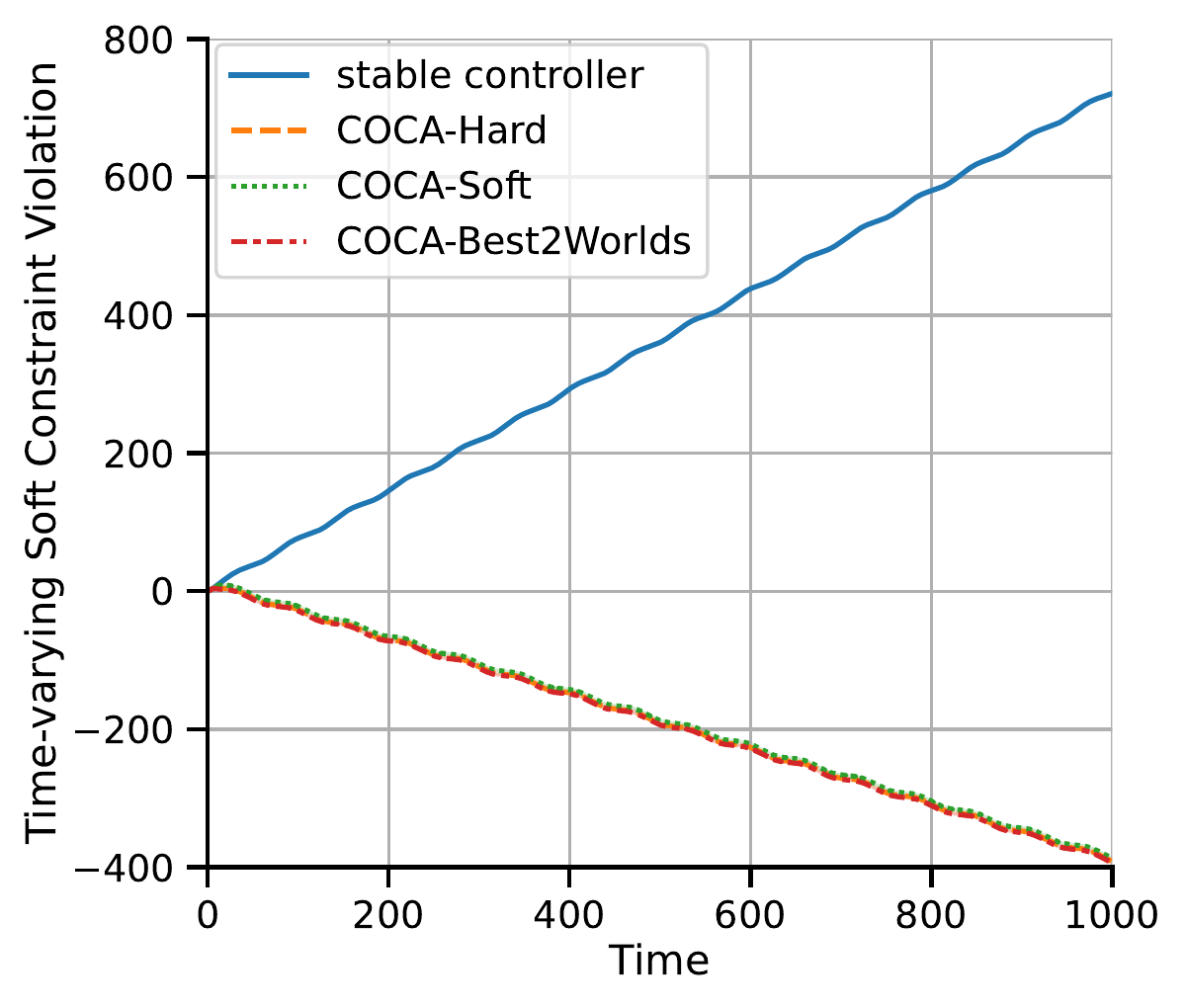} &
        \includegraphics[width=0.19\linewidth]{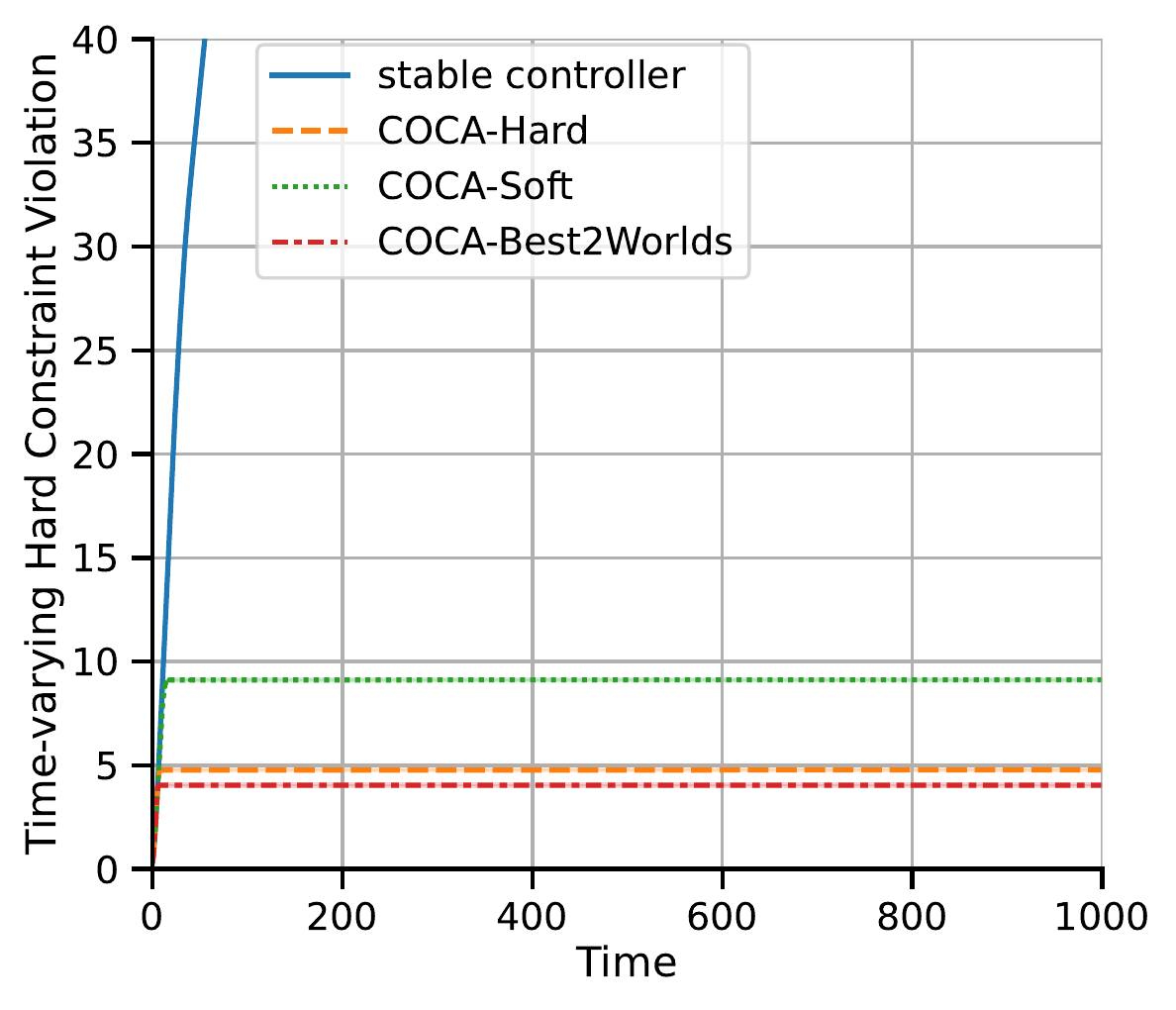} &
        \includegraphics[width=0.195\linewidth]{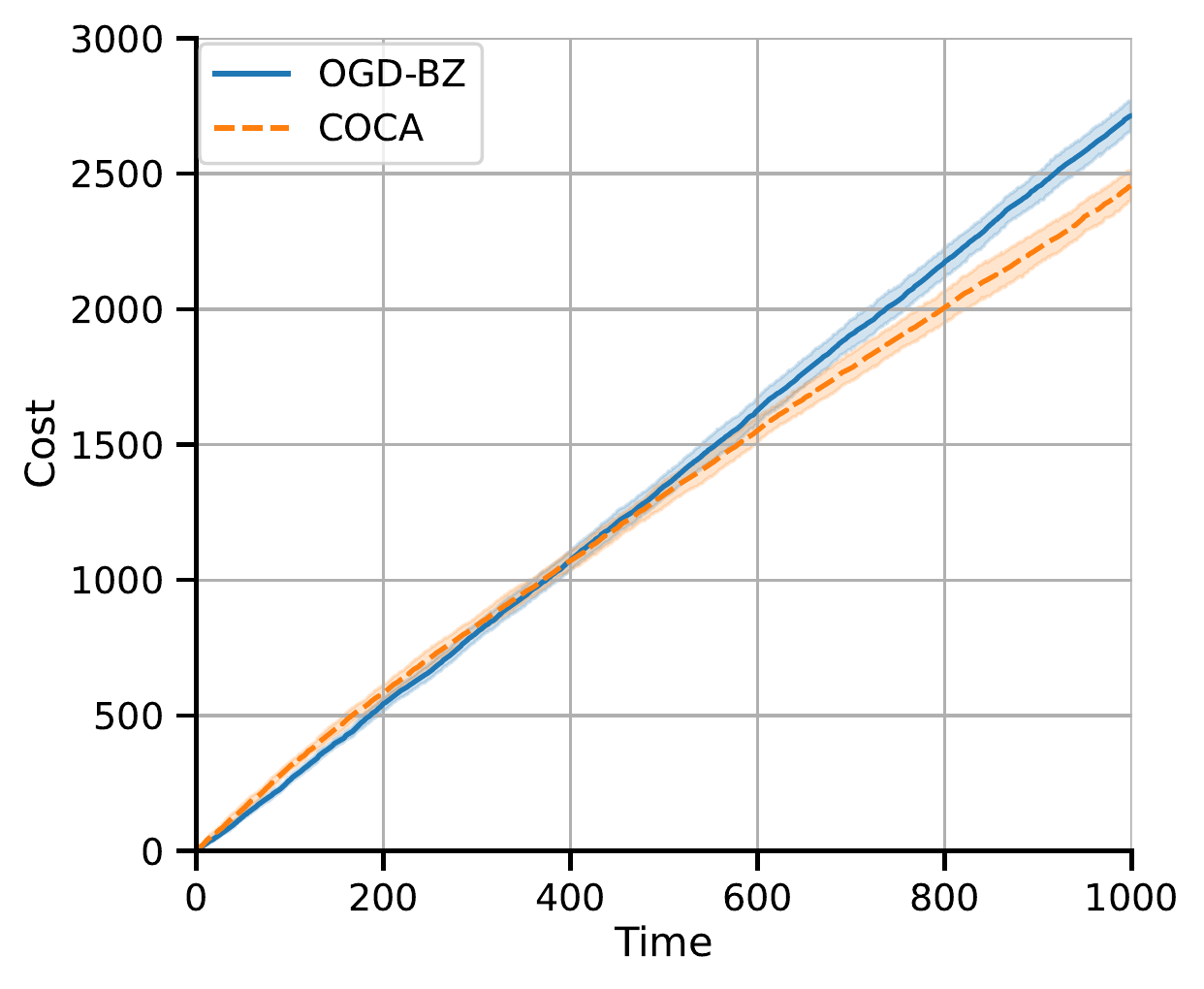} &
        \includegraphics[width=0.195\linewidth]{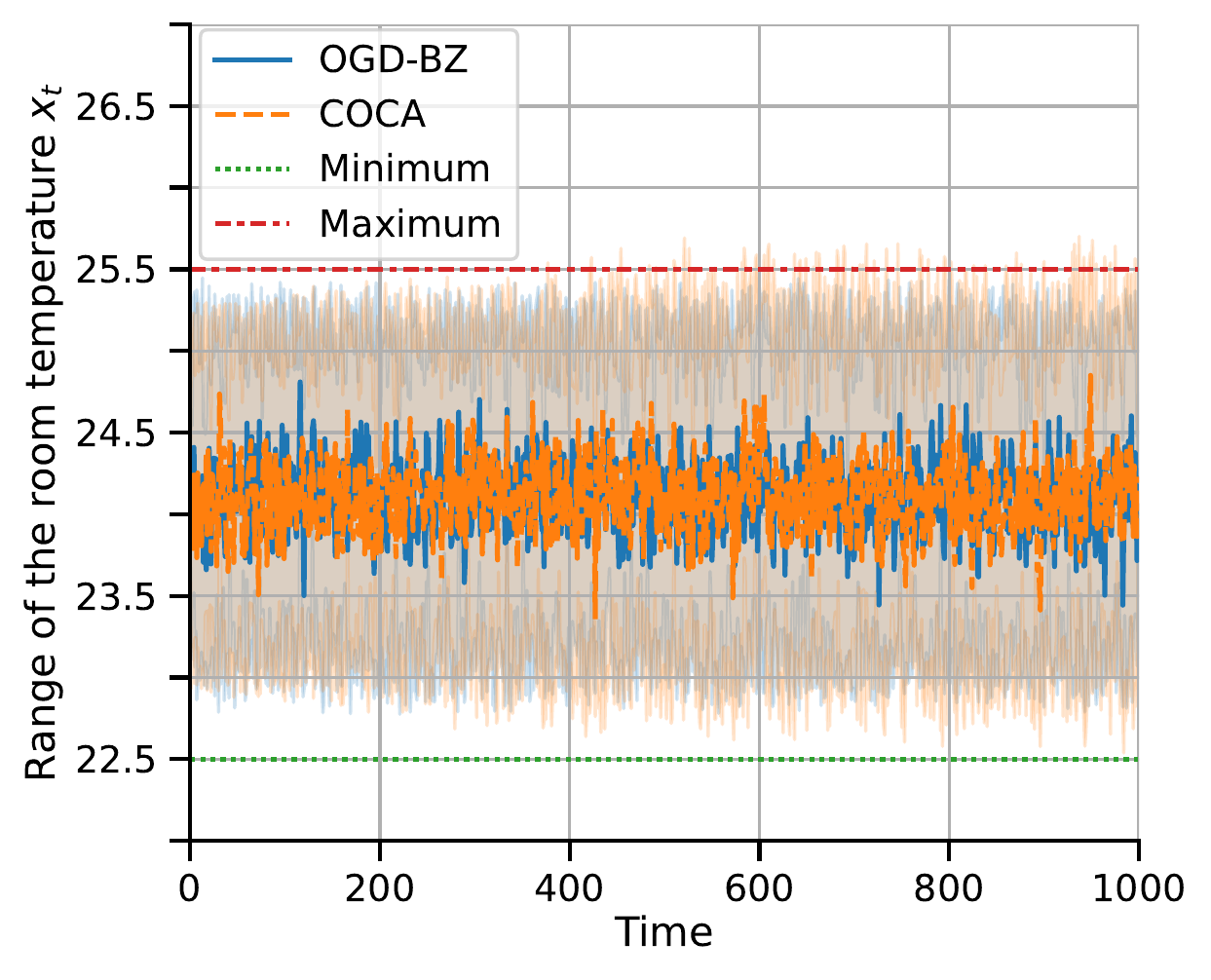}\\
        \small (a) Cumulative costs &
        \small (b) Soft violations &
        \small (c) Hard violations &
        \small (d) Cumulative costs &
        \small (e) Range of state $x_t$
    \end{tabular}
    \caption{Experiment results: Figures (a)-(c) are results for QVF control; Figures (d) (e) are results for HVAC control. 
    The lines are plotted by averaging over 10 independent runs. The shaded areas in Figures (a)-(d) are 95\% confidence intervals. The shaded areas in Figure (e) are the full ranges of the states.
    }
    \label{fig:exp-results}
\end{figure*}

\section{Conclusions}
In this paper, we studied online nonstochastic control problems with adversarial and static constraints. We developed COCA algorithms that minimize cumulative costs and soft or/and hard constraint violations based on Lyapunov optimization and proximal penalty-based methods. Our experiments showed the proposed algorithms are adaptive to time-varying environments and less conservative in achieving a better performance. 

\bibliographystyle{plain}
\bibliography{ref}

\newpage
\appendix
\section{Auxiliary Lemmas for COCO-Solver}
The following lemma provides a useful upper bound on the optimal value of strongly convex function \cite{GonMar_93, YuNeeWei_17}.
\begin{lemma}
Let $\mathcal U$ be a convex set. Let $F: \mathcal U \to \mathcal R$ be $\alpha$-strongly convex function on $\mathcal U$ and $u_{opt}$ be an optimal solution of $F,$ i.e., $u_{opt} = \argmin_{u\in\mathcal U} F(u).$ Then, $F(u_{opt}) \leq F(u) - \frac{\alpha}{2}\|u - u_{opt}\|^2$ holds for any $u\in \mathcal U.$ 
\label{lemma:tool}
\end{lemma}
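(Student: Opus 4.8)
The plan is to derive the inequality directly from the definition of $\alpha$-strong convexity together with the optimality of $u_{opt}$, using a convex-combination argument so that no differentiability of $F$ is actually needed (even though the convexity/differentiability hypotheses on the cost would grant it).

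First I would fix an arbitrary $u \in \mathcal U$ and a scalar $\lambda \in (0,1]$, and set $u_\lambda := \lambda u + (1-\lambda) u_{opt}$; convexity of $\mathcal U$ guarantees $u_\lambda \in \mathcal U$. Applying the defining inequality of $\alpha$-strong convexity to the pair $(u, u_{opt})$ with weight $\lambda$ gives
\[
F(u_\lambda) \le \lambda F(u) + (1-\lambda) F(u_{opt}) - \tfrac{\alpha}{2}\lambda(1-\lambda)\|u - u_{opt}\|^2 .
\]
Next I would invoke optimality: since $u_{opt}$ minimizes $F$ over $\mathcal U$ and $u_\lambda \in \mathcal U$, we have $F(u_{opt}) \le F(u_\lambda)$. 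Chaining the two bounds and subtracting $(1-\lambda)F(u_{opt})$ from both sides yields $\lambda F(u_{opt}) \le \lambda F(u) - \tfrac{\alpha}{2}\lambda(1-\lambda)\|u-u_{opt}\|^2$; dividing through by $\lambda>0$ gives $F(u_{opt}) \le F(u) - \tfrac{\alpha}{2}(1-\lambda)\|u-u_{opt}\|^2$. Letting $\lambda \downarrow 0$ produces the claimed bound $F(u_{opt}) \le F(u) - \tfrac{\alpha}{2}\|u-u_{opt}\|^2$.

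There is essentially no hard step here; the only point requiring care is the choice of which characterization of strong convexity to use. Phrasing the argument via the convex-combination definition works verbatim even when $u_{opt}$ lies on the boundary of $\mathcal U$. If one prefers first-order conditions, an equally short route uses the gradient (or subgradient) inequality $F(u) \ge F(u_{opt}) + \langle g, u - u_{opt}\rangle + \tfrac{\alpha}{2}\|u-u_{opt}\|^2$ for $g \in \partial F(u_{opt})$, together with the variational inequality $\langle g, u - u_{opt}\rangle \ge 0$ that holds because $u_{opt}$ is a constrained minimizer over the convex set $\mathcal U$; this, however, needs a nonemptiness-of-subdifferential (or differentiability) remark that the convex-combination proof sidesteps, so I would present the latter as the main argument and only mention the former as an alternative.
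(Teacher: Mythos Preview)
Your proof is correct. The paper, however, takes the alternative route you mention at the end rather than the convex-combination argument you lead with: it invokes the first-order characterization of strong convexity, $F(u) \ge F(u_{opt}) + \langle \nabla F(u_{opt}), u - u_{opt}\rangle + \tfrac{\alpha}{2}\|u-u_{opt}\|^2$, and then applies the first-order optimality condition $\langle \nabla F(u_{opt}), u - u_{opt}\rangle \ge 0$ for a minimizer over a convex set. Your main argument has the advantage of not needing differentiability or subdifferential nonemptiness at $u_{opt}$ (potentially a boundary point), whereas the paper's version is slightly shorter but implicitly relies on the existence of $\nabla F(u_{opt})$, which is justified there because the lemma is only ever applied to functions built from differentiable costs plus a quadratic.
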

\begin{proof}
\normalfont
The proof of the lemma is based on the definition of $\alpha$-strongly convex functions and the first-order optimality condition. Define the subgradient of $F(u)$ to be $\nabla F(u),$ According to the definition of strong convexity, we have 
\begin{align}
F(u) \geq F(v) +  \langle \nabla F(v) , u - v\rangle + \frac{\alpha}{2} \|u - v\|^2. \label{eq:Fu}
\end{align}
Define $u_{opt} = \argmin_{u\in\mathcal U} F(u).$ Let $v=u_{opt}$ in \eqref{eq:Fu}, we have
\begin{align*}
F(u) \geq F(u_{opt}) +  \langle \nabla F(u_{opt}) , u - u_{opt}\rangle + \frac{\alpha}{2} \|u - u_{opt}\|^2.
\end{align*}
We then conclude the proof based on the first-order optimality condition that for any $u\in \mathcal U,$ $$\langle \nabla F(u_{opt}) , u - u_{opt} \rangle\geq 0.$$ 
\end{proof}

We introduce the following two lemmas to ``compare'' two optimization problems: an original problem with its relaxed version.
\begin{lemma}
Consider a convex optimization problem defined on $\mathcal U$ with the objective function $F(u)$ and constraints $G(u) \leq 0,$ that is,
\begin{align}
    \min_{u\in \mathcal U} ~ F(u) ~~ 
    \text{s.t.} ~ G(u) \leq 0. \label{eq:prob-original}
\end{align}
Assume Slater's condition holds, i.e., there exists $u \in \mathcal U$ and a positive constant $\delta$ such that $G(u) \leq -\delta.$ Given a ``loose'' convex optimization problem with $\delta> \epsilon > 0,$ we have
\begin{align}
    \min_{u\in \mathcal U} ~ F(u) ~~ 
    \text{s.t.} ~ G(u) \leq \epsilon. \label{eq:prob-loose}
\end{align}
Assume $\mathcal U$ is bounded with radius $R$ and $F(u)$ is Lipschitz continuous with constant $L.$ Let $u^{*}$ and $u^{*,\epsilon}$ be the optimal solution to \eqref{eq:prob-original} and \eqref{eq:prob-loose}, respectively, we have
\begin{align*}
    F(u^{*}) - F(u^{*,\epsilon}) \leq LR\epsilon/\delta. 
\end{align*}\label{lem:fun-loose}
\end{lemma}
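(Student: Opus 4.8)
The plan is to reduce the tight problem \eqref{eq:prob-original} to the loose problem \eqref{eq:prob-loose} by a standard Slater-point interpolation argument. First I would observe that the feasible region of \eqref{eq:prob-loose} contains that of \eqref{eq:prob-original}, and in fact contains the Slater point (since $-\delta \le \epsilon$), so both problems are feasible; consequently $F(u^{*,\epsilon}) \le F(u^{*})$, and the quantity $F(u^{*}) - F(u^{*,\epsilon})$ is nonnegative. All the work is therefore in establishing the upper bound.

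Next I would let $\bar u \in \mathcal U$ denote the Slater point with $G(\bar u) \le -\delta$ and form the convex combination $u_\theta = (1-\theta)\, u^{*,\epsilon} + \theta\, \bar u$ for $\theta \in [0,1]$, which lies in $\mathcal U$ by convexity of $\mathcal U$. Using convexity of $G$ together with $G(u^{*,\epsilon}) \le \epsilon$ (feasibility of $u^{*,\epsilon}$ for \eqref{eq:prob-loose}),
\begin{align*}
G(u_\theta) \le (1-\theta)\,G(u^{*,\epsilon}) + \theta\, G(\bar u) \le (1-\theta)\epsilon - \theta\delta = \epsilon - \theta(\epsilon+\delta).
\end{align*}
Choosing $\theta = \epsilon/(\epsilon+\delta) \in (0,1)$ makes the right-hand side zero, so $u_\theta$ is feasible for \eqref{eq:prob-original}, and hence $F(u^{*}) \le F(u_\theta)$ by optimality of $u^{*}$.

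Finally I would bound $F(u_\theta) - F(u^{*,\epsilon})$ using $L$-Lipschitz continuity of $F$ and $\|u_\theta - u^{*,\epsilon}\| = \theta\,\|\bar u - u^{*,\epsilon}\| \le \theta R$ (the diameter bound on $\mathcal U$), which gives
\begin{align*}
F(u^{*}) - F(u^{*,\epsilon}) \le F(u_\theta) - F(u^{*,\epsilon}) \le L\theta R = \frac{LR\epsilon}{\epsilon+\delta} \le \frac{LR\epsilon}{\delta},
\end{align*}
as claimed. There is no serious obstacle: the only point that needs a little care is the choice of the mixing weight $\theta$, which must be taken just large enough that convexity of $G$ drags $u_\theta$ back into the tight feasible set, and the elementary bookkeeping $\theta = \epsilon/(\epsilon+\delta) \le \epsilon/\delta$; everything else is convexity of $\mathcal U$ and $G$ plus Lipschitzness of $F$. (Existence of the minimizers $u^{*}$ and $u^{*,\epsilon}$ is assumed implicitly, e.g. from compactness of $\mathcal U$ and continuity of $F$.)
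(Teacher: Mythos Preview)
Your proof is correct and follows essentially the same Slater-point interpolation argument as the paper: form a convex combination of the loose optimizer with a strictly feasible point, choose the mixing weight so the combination lands in the tight feasible set, and then invoke Lipschitz continuity together with the diameter bound. The only cosmetic difference is that the paper uses the weight $\epsilon/\delta$ directly (paired with a Slater point satisfying $G\le -\delta+\epsilon$), whereas you use $\theta=\epsilon/(\epsilon+\delta)$ with the full Slater point and then bound $\theta\le\epsilon/\delta$; the two are equivalent up to bookkeeping.
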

\begin{proof}
\normalfont
Let $u^{\epsilon}$ be any feasible point to \eqref{eq:prob-original} and let $u^{\text{in}}$ be a point such that $G(u^{\text{in}}) \leq -\delta + \epsilon.$ Construct $u = (1-\frac{\epsilon}{\delta}) u^{\epsilon} + \frac{\epsilon}{\delta} u^{\text{in}},$ we have
\begin{align*}
    G(u) =& G((1-\frac{\epsilon}{\delta}) u^{\epsilon} + \frac{\epsilon}{\delta} u^{\text{in}}) \\
    \leq& (1-\frac{\epsilon}{\delta}) G(u^{\epsilon}) + \frac{\epsilon}{\delta} G(u^{\text{in}}) \\
    \leq& 0,
\end{align*}
which implies $u$ is a feasible solution to \eqref{eq:prob-loose}. Therefore, we have
\begin{align*}
    F(u^*) - F(u^\epsilon) \leq F(u) - F(u^\epsilon) \leq L \|u-u^\epsilon\| \leq LR\epsilon/\delta,
\end{align*}
which concludes the proof by letting $u^{\epsilon}=u^{*,\epsilon}.$ 
\end{proof}

Similar with Lemma \ref{lem:fun-loose} that compares the original problem with a relaxed version, we are able to compare the original problem with a tight version in Lemma \ref{lem:fun-tight}.
\begin{lemma}\label{lem:tightness}
Consider a convex optimization problem defined on $\mathcal U$ with the objective function $F(u)$ and constraints $G(u) \leq 0,$ that is,
\begin{align}
    \min_{u\in \mathcal U} ~ F(u) ~~ 
    \text{s.t.} ~ G(u) \leq 0. \label{eq:prob-original-1}
\end{align}
Assume Slater's condition holds, i.e., there exists $u \in \mathcal U$ and a positive constant $\delta$ such that $G(u) \leq -\delta.$ Given a ``tight'' convex optimization problem with $\delta> \epsilon > 0,$ we have
\begin{align}
    \min_{u\in \mathcal U} ~ F(u) ~~ 
    \text{s.t.} ~ G(u) \leq -\epsilon. \label{eq:prob-strict}
\end{align}
Assume $\mathcal U$ is bounded with radius $R$ and $F(u)$ is Lipschitz continuous with constant $L.$ Let $u^{*}$ and $u^{*,\epsilon}$ be the optimal solution to \eqref{eq:prob-original-1} and \eqref{eq:prob-strict}, respectively, we have 
\begin{align*}
    F(u^{*,\epsilon})-F(u^{*}) \leq LR\epsilon/\delta. 
\end{align*}\label{lem:fun-tight}
\end{lemma}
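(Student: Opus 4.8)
The plan is to mirror the argument of Lemma \ref{lem:fun-loose}, but now pulling the optimizer of the \emph{original} problem toward a Slater point so that it becomes feasible for the \emph{tightened} problem. First I would fix a Slater point $u^{\mathrm{in}}\in\mathcal U$ with $G(u^{\mathrm{in}})\le-\delta$, and let $u^{*}$ be an optimal solution of \eqref{eq:prob-original-1}, so in particular $G(u^{*})\le 0$. Since $\delta>\epsilon>0$, the ratio $\epsilon/\delta$ lies in $(0,1)$, hence $u:=(1-\tfrac{\epsilon}{\delta})u^{*}+\tfrac{\epsilon}{\delta}u^{\mathrm{in}}$ is a genuine convex combination and therefore lies in $\mathcal U$ by convexity of $\mathcal U$.

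Next I would use convexity of $G$ to certify that this $u$ is feasible for the tight problem \eqref{eq:prob-strict}:
\begin{align*}
G(u)\le\Bigl(1-\tfrac{\epsilon}{\delta}\Bigr)G(u^{*})+\tfrac{\epsilon}{\delta}\,G(u^{\mathrm{in}})\le 0+\tfrac{\epsilon}{\delta}(-\delta)=-\epsilon .
\end{align*}
Optimality of $u^{*,\epsilon}$ for \eqref{eq:prob-strict} then gives $F(u^{*,\epsilon})\le F(u)$, and it only remains to bound $F(u)-F(u^{*})$. By $L$-Lipschitz continuity of $F$ together with the bound $R$ on distances in $\mathcal U$,
\begin{align*}
F(u)-F(u^{*})\le L\|u-u^{*}\|=L\,\tfrac{\epsilon}{\delta}\,\|u^{\mathrm{in}}-u^{*}\|\le \frac{LR\epsilon}{\delta}.
\end{align*}
Chaining $F(u^{*,\epsilon})\le F(u)$ with this estimate yields $F(u^{*,\epsilon})-F(u^{*})\le LR\epsilon/\delta$, as claimed.

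I do not expect a serious obstacle: the construction is symmetric to that of Lemma \ref{lem:fun-loose}, the only new wrinkle being that the feasibility witness for the tightened constraint is built from the original optimizer rather than from an arbitrary feasible point of a relaxed problem. The single point to keep straight is that $\epsilon/\delta<1$ is precisely what makes the interpolation $u$ admissible \emph{and} simultaneously what forces $G(u)\le-\epsilon$; were $\epsilon$ allowed to exceed $\delta$ the argument would collapse, which is exactly why the hypothesis $\delta>\epsilon>0$ is imposed.
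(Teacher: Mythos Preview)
Your proposal is correct and follows exactly the approach the paper indicates: the paper does not write out a separate proof for Lemma~\ref{lem:fun-tight} but simply says it is established ``similar with Lemma~\ref{lem:fun-loose},'' and your argument is precisely that symmetric adaptation---interpolating the original optimizer toward a Slater point to gain feasibility for the tightened problem, then invoking Lipschitz continuity and the radius bound.
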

We present the lemma of Lyapunov drift analysis in \cite{YuNeeWei_17}, which is used to quantify the virtual queue $Q_t$ (a proxy of soft violation). The reader may refer to \cite{Haj_82, Nee_10, YuNeeWei_17} for more details on these techniques. 
\begin{lemma} \label{lemma:drift}
Let $Z_t$ be a random process with $Z_0=0$ and $\mathcal H_t$ be the filtration at time $t.$ Given $\theta,$ $\nu,$ and $\nu_{\max}$ with $0 < \nu \leq \nu_{\max},$ suppose the following conditions hold for $t_0:$
\begin{itemize}
    \item[(i)] There exists constants $\nu > 0$ and $\theta > 0$ such that  $\mathbb E[Z_{t+t_0}-Z_t|\mathcal H_t] \leq -\nu t_0$ when $z \geq \theta,$ 
\item[(ii)] $|Z_{t+1} - Z_t| \leq \nu_{\max}$ holds with probability one;
\end{itemize}
then we have
\begin{align}
    \mathbb E[Z_t]
    \leq \theta + t_0 \nu_{\max} + t_0\frac{4 \nu_{\max}^2}{\nu} \log\frac{8 \nu_{\max}^2}{\nu^2}, \label{eq: q exp bound}
\end{align}
and
\begin{align}
    \mathbb P[Z_t \geq z]
    \leq 1-p, \label{eq: q exp bound}
\end{align}
with $z = \theta + t_0 \nu_{\max} + t_0\frac{4 \nu_{\max}^2}{\nu}\left(\log\frac{8 \nu_{\max}^2}{\nu^2} +\log \frac{1}{p}\right).$
\end{lemma}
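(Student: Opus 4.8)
The statement is the standard Hajek-type drift lemma, and the plan is to pass to the exponentiated (Lyapunov) process, establish a geometric contraction of its moment generating function over windows of length $t_0$, and then read off the expectation bound by Jensen's inequality and the tail bound by Chernoff's inequality. Concretely, set $\phi_t := e^{r Z_t}$ for a parameter $r>0$ of order $\nu/(t_0\nu_{\max}^2)$ to be fixed at the end (this scaling is what produces the factor $\tfrac{4\nu_{\max}^2}{\nu}$ in the claimed bounds), and note $\phi_0 = 1$ since $Z_0 = 0$. The goal of the bulk of the argument is to show $\sup_t \mathbb E[\phi_t] \le C$ for an explicit constant $C$ with $\tfrac1r\log C$ equal to the right-hand side of the expectation bound.

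First I would prove a one-window drift inequality. Fix $t$ and let $X := Z_{t+t_0} - Z_t$; telescoping condition (ii) gives $|X| \le t_0\nu_{\max}$ almost surely, and since $r$ is small, $|rX| \le 1$, so the elementary bound $e^y \le 1 + y + y^2$ applies to $y = rX$. If $Z_t \ge \theta$, condition (i) gives $\mathbb E[X \mid \mathcal H_t] \le -\nu t_0$, hence $\mathbb E[\phi_{t+t_0}\mid\mathcal H_t] \le \phi_t\bigl(1 - r\nu t_0 + r^2 t_0^2\nu_{\max}^2\bigr) =: \rho\,\phi_t$ with $\rho\in(0,1)$ for $r$ small. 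If $Z_t < \theta$, then $Z_{t+t_0} < \theta + t_0\nu_{\max}$ deterministically, so $\phi_{t+t_0} \le e^{r(\theta + t_0\nu_{\max})}$. Combining the two cases, $\mathbb E[\phi_{t+t_0}\mid\mathcal H_t] \le \rho\,\phi_t + e^{r(\theta + t_0\nu_{\max})}$ in all cases.

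Next I would iterate. Applying the previous display along $0, t_0, 2t_0, \dots$ and using the tower property gives $\mathbb E[\phi_{k t_0}] \le \rho^k + \tfrac{1}{1-\rho}e^{r(\theta + t_0\nu_{\max})} \le 1 + \tfrac{1}{1-\rho}e^{r(\theta + t_0\nu_{\max})}$; for general $t = k t_0 + j$ with $0 \le j < t_0$, the bound $|Z_t - Z_{k t_0}| \le t_0\nu_{\max}$ yields $\mathbb E[\phi_t] \le e^{r t_0\nu_{\max}}\bigl(1 + \tfrac{1}{1-\rho}e^{r(\theta + t_0\nu_{\max})}\bigr) =: C$, uniformly in $t$. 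Then Jensen's inequality applied to the concave logarithm gives $\mathbb E[Z_t] = \tfrac1r\mathbb E[\log\phi_t] \le \tfrac1r\log\mathbb E[\phi_t] \le \tfrac1r\log C$, and plugging in the choice of $r$ and bounding $\tfrac{1}{1-\rho} \le \tfrac{8\nu_{\max}^2}{\nu^2}$ recovers the stated $\mathbb E[Z_t] \le \theta + t_0\nu_{\max} + t_0\tfrac{4\nu_{\max}^2}{\nu}\log\tfrac{8\nu_{\max}^2}{\nu^2}$. For the tail, Markov's inequality on $\phi_t$ gives $\mathbb P[Z_t \ge z] = \mathbb P[\phi_t \ge e^{rz}] \le Ce^{-rz}$, and solving for the threshold at which the right-hand side drops to the prescribed level produces $z = \tfrac1r\log C + \tfrac1r\log\tfrac1p = \theta + t_0\nu_{\max} + t_0\tfrac{4\nu_{\max}^2}{\nu}\bigl(\log\tfrac{8\nu_{\max}^2}{\nu^2} + \log\tfrac1p\bigr)$, which is the claimed high-probability bound up to the usual convention on the confidence parameter.

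The main obstacle is purely the constant chase, not the structure: one must pick the single parameter $r$ so that, simultaneously, (i) $e^y \le 1+y+y^2$ is legitimate on the range of $y = rX$ (forcing $r t_0\nu_{\max} \le 1$), (ii) $\rho = 1 - r\nu t_0 + r^2 t_0^2\nu_{\max}^2$ stays bounded away from $1$ with $\tfrac{1}{1-\rho}$ collapsing to exactly $\tfrac{8\nu_{\max}^2}{\nu^2}$ (the source of the factor $8$ and of the $\nu^2$ in the logarithm), and (iii) the residual additive slack $\theta + t_0\nu_{\max}$ coming from the sub-$\theta$ case and the block-alignment step lands precisely as stated rather than merely order-wise. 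The remaining ingredients — the block split, the tower-rule recursion, Jensen, and Chernoff — are routine.
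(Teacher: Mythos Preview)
The paper does not actually prove this lemma: it is stated as a known Lyapunov drift result and the reader is referred to Hajek (1982), Neely (2010), and Yu--Neely--Wei (2017) for the argument. Your proposal follows precisely the standard Hajek-type exponential Lyapunov approach used in those references --- pass to $\phi_t=e^{rZ_t}$, establish a geometric contraction over $t_0$-blocks above the threshold via $e^y\le 1+y+y^2$, iterate, then apply Jensen and Chernoff --- so it is correct and is exactly the method the paper is implicitly invoking.
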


\section{Auxiliary Lemmas for COCA} \label{app:coc}
The following lemmas are from \cite{AgaBulHaz_19} and we include them for the sake of completeness. 
\begin{lemma}
Under a disturbance-action policy $\pi(K,\{\mathbf M_t\}),$ the disturbance-state transfer matrix is bounded as follows:
\begin{align*}
    \|\Psi_{t,i}^{\pi}(\mathbf M_{t-H}, \cdots, \mathbf M_{t-1})\|
    \leq \kappa^2(1-\rho)^{i-1}\mathbb I(i\leq H) + Ha\kappa^2\kappa_B(1-\rho)^{i-1}.
\end{align*}
\end{lemma}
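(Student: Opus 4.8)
The plan is to bound $\|\Psi_{t,i}^{\pi}\|$ directly from its definition $\Psi_{t,i}^\pi = \tilde A^{i-1}\mathbb I(i\le H) + \sum_{j=1}^H \tilde A^{j-1} B \mathbf M^{[i-j]}_{t-j}\mathbb I_{i-j\in[1,H]}$ using the triangle inequality and submultiplicativity of the operator norm. First I would control the powers of $\tilde A = A - BK$: since $K$ is $(\kappa,\rho)$-strongly stable, $\tilde A = ULU^{-1}$ with $\max(\|U\|_2,\|U^{-1}\|_2)\le\kappa$ and $\|L\|_2\le 1-\rho$, so $\tilde A^{k} = UL^{k}U^{-1}$ and hence $\|\tilde A^{k}\|\le \|U\|_2\,\|L\|_2^{k}\,\|U^{-1}\|_2\le \kappa^2(1-\rho)^{k}$ for every integer $k\ge 0$. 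Next I would invoke the DAC constraint $\mathbf M_t\in\mathcal M$, which gives $\|\mathbf M^{[i-j]}_{t-j}\|\le a(1-\rho)^{i-j}$ whenever $i-j\in[1,H]$, together with $\|B\|\le\kappa_B$.

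Combining these pieces, I would write
\begin{align*}
\|\Psi_{t,i}^\pi\| &\le \|\tilde A^{i-1}\|\,\mathbb I(i\le H) + \sum_{j=1}^H \|\tilde A^{j-1}\|\,\|B\|\,\|\mathbf M^{[i-j]}_{t-j}\|\,\mathbb I_{i-j\in[1,H]}\\
&\le \kappa^2(1-\rho)^{i-1}\mathbb I(i\le H) + \sum_{j=1}^H \kappa^2(1-\rho)^{j-1}\,\kappa_B\,a\,(1-\rho)^{i-j}\,\mathbb I_{i-j\in[1,H]}.
\end{align*}
The key observation is that the two geometric factors collapse, $(1-\rho)^{j-1}(1-\rho)^{i-j}=(1-\rho)^{i-1}$, independently of $j$, so the second sum equals $a\kappa^2\kappa_B(1-\rho)^{i-1}\sum_{j=1}^H\mathbb I_{i-j\in[1,H]}$; bounding the remaining sum of at most $H$ indicator terms by $H$ yields exactly $\kappa^2(1-\rho)^{i-1}\mathbb I(i\le H) + Ha\kappa^2\kappa_B(1-\rho)^{i-1}$, as claimed.

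The argument is essentially mechanical, so I do not expect a serious obstacle; this lemma is quoted from \cite{AgaBulHaz_19} as a building block for the subsequent state-approximation estimates. The only places warranting mild care are the exponent bookkeeping and the indicator ranges — in particular checking that $\|\tilde A^{j-1}\|\le\kappa^2(1-\rho)^{j-1}$ remains valid at $j=1$ (it does, since $\tilde A^0=I$ and $\kappa\ge 1$), and confirming that the index $j$ ranges over at most $H$ values so that the factor $H$ in the final bound is legitimate.
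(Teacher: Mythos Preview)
Your proposal is correct and follows exactly the same route as the paper's proof: apply the triangle inequality to the definition of $\Psi_{t,i}^\pi$, use strong stability to bound $\|\tilde A^{k}\|\le\kappa^2(1-\rho)^k$, use the DAC bound $\|\mathbf M^{[i-j]}_{t-j}\|\le a(1-\rho)^{i-j}$ and $\|B\|\le\kappa_B$, and then collapse the exponents and bound the number of summands by $H$. If anything, your write-up is more explicit than the paper's, which jumps directly from the triangle inequality to the final bound without spelling out the $(1-\rho)^{j-1}(1-\rho)^{i-j}=(1-\rho)^{i-1}$ step.
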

\begin{proof}
\normalfont
We establish the disturbance-state transfer matrix is bounded according to its definition  $$\Psi_{t,i}^\pi(\mathbf M_{t-H}, \cdots, \mathbf M_{t-1}) = \tilde{A}^{i-1} \mathbb I(i\leq H) + \sum_{j=1}^H \tilde{A}^{j-1} B \mathbf M^{[i-j]}_{t-j} \mathbb I_{i-j \in [1,H]},$$ where $\tilde{A} = A - KB.$ We have
\begin{align*}
    \|\Psi_{t,i}^\pi(\mathbf M_{t-H}, \cdots, \mathbf M_{t-1})\| \leq&  \|\tilde{A}^{i-1}\|\mathbb I(i\leq H) + \sum_{j=1}^H \|\tilde{A}^{j-1} B \mathbf M^{[i-j]}_{t-j}\| \\
    \leq& \kappa^2(1-\rho)^{i-1}\mathbb I(i\leq H) + Ha\kappa^2\kappa_B(1-\rho)^{i-1}.
\end{align*}
\end{proof}

\begin{lemma}
Under a disturbance-action policy $\pi(K,\{\mathbf M_t\}),$ the disturbance-state and controller are bounded as follows:
\begin{align*}
\|x^\pi_{t}\| \leq D, ~~
\|u^\pi_{t}\| \leq (\kappa+1) D. \\
\end{align*}
where $D=\frac{W(\kappa^2 + Ha\kappa_B\kappa^2)}{\rho(1-\kappa^2(1-\rho)^{H+1})}.$
\end{lemma}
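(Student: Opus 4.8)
The plan is to bound the state norm by summing the geometric-type bounds on the disturbance-state transfer matrices established in the previous lemma, and then to bound the control norm using the structure of the disturbance-action policy. Throughout I write $W$ for a uniform bound on the disturbances $\|w_t\| \le W$, $\kappa_B$ for a bound on $\|B\|$, and use the constants $\kappa, \rho, a, H$ from the DAC definition.

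First I would use the representation $x_t^\pi = \sum_{i=1}^t \Psi_{t,i}^\pi w_{t-i}$ from the preliminary section. Taking norms and applying the triangle inequality together with the disturbance bound gives $\|x_t^\pi\| \le W \sum_{i=1}^t \|\Psi_{t,i}^\pi\|$. Now I substitute the bound from the previous lemma, $\|\Psi_{t,i}^\pi\| \le \kappa^2(1-\rho)^{i-1}\mathbb{I}(i\le H) + Ha\kappa^2\kappa_B(1-\rho)^{i-1}$, and extend the sum to $i = 1,\dots,\infty$ (which only increases the right-hand side). Summing the geometric series $\sum_{i=1}^\infty (1-\rho)^{i-1} = 1/\rho$ yields $\|x_t^\pi\| \le \frac{W(\kappa^2 + Ha\kappa_B\kappa^2)}{\rho}$. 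To match the stated constant $D = \frac{W(\kappa^2 + Ha\kappa_B\kappa^2)}{\rho(1-\kappa^2(1-\rho)^{H+1})}$, I note that $1 - \kappa^2(1-\rho)^{H+1} \le 1$ so the stated bound is weaker and hence also holds; alternatively, a slightly more careful accounting of the truncated-versus-full state representation (i.e. the $\tilde A^H x_{t-H}$ term and its recursive contribution, which is where the factor $(1-\kappa^2(1-\rho)^{H+1})^{-1}$ naturally appears) reproduces the exact denominator.

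Next, for the control bound, I would start from the policy definition $u_t^\pi = -Kx_t^\pi + \sum_{i=1}^H \mathbf{M}_t^{[i]} w_{t-i}$. Taking norms, $\|u_t^\pi\| \le \|K\|\,\|x_t^\pi\| + \sum_{i=1}^H \|\mathbf{M}_t^{[i]}\|\,\|w_{t-i}\| \le \kappa D + W\sum_{i=1}^H a(1-\rho)^i \le \kappa D + \frac{Wa}{\rho}$. Since $\frac{Wa}{\rho} \le D$ (the term $Wa/\rho$ is dominated by the $Ha\kappa_B\kappa^2 W/\rho$ piece of $D$ when $\kappa \ge 1$ and $\kappa_B \ge 1$, which hold by the strong-stability definition and can be assumed without loss of generality), we obtain $\|u_t^\pi\| \le (\kappa+1)D$.

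The main obstacle — really the only non-routine point — is making the constants line up exactly, in particular justifying the precise denominator $1-\kappa^2(1-\rho)^{H+1}$ in $D$: this requires being careful about whether one uses the exact state identity $x_t^\pi = \sum_{i=1}^t \Psi_{t,i}^\pi w_{t-i}$ (which gives a clean $1/\rho$) or the truncated identity $x_t^\pi = \tilde A^H x_{t-H} + \sum_{i=1}^{2H}\Psi_{t,i}^\pi w_{t-i}$ and unrolls the recursion in $x_{t-H}$, which is what produces the geometric correction factor. Since this is a restatement of a known lemma from \cite{AgaBulHaz_19}, I would follow their convention; everything else is geometric-series bookkeeping.
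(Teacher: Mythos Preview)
Your proposal is correct and follows essentially the same logic as the paper: bound $\|x_t^\pi\|$ via the transfer-matrix estimates and a geometric series, then bound $\|u_t^\pi\|$ from the DAC structure using $\|K\|\le\kappa$ and $\sum_i\|\mathbf M_t^{[i]}\|\|w_{t-i}\|\le Wa/\rho\le D$.

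The one substantive difference is in how the state bound is obtained. Your primary route sums $\sum_{i\ge 1}\|\Psi_{t,i}^\pi\|$ directly and gets the tighter constant $W(\kappa^2+Ha\kappa_B\kappa^2)/\rho$, then observes the stated $D$ is weaker. The paper instead uses the truncated identity $x_t^\pi=\tilde A^H x_{t-H}^\pi+\sum_{i=1}^{2H}\Psi_{t,i}^\pi w_{t-i}$, bounds the second sum by $W(\kappa^2+Ha\kappa_B\kappa^2)/\rho$, and solves the resulting recursion $\|x_t^\pi\|\le\kappa^2(1-\rho)^H\|x_{t-H}^\pi\|+\text{const}$; this is exactly the ``alternative'' you sketch and is what produces the denominator $1-\kappa^2(1-\rho)^{H+1}$ verbatim. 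Your direct argument is cleaner and yields a sharper bound; the paper's recursion matches the stated constant exactly without needing the extra remark that $D$ is an overestimate.
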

\begin{proof}
\normalfont
We establish the bound on the state $x_t,$ which corresponds to the system stability. According to the definition of the state update $x^\pi_t = \tilde{A}^H x^\pi_{t-H} + \sum_{i=1}^{2H}  \Psi_{t,i}^\pi w_{t-i},$ we have
\begin{align*}
    \|x^\pi_{t}\| =& \|\tilde{A}^H x^\pi_{t-H} + \sum_{i=1}^{2H}  \Psi_{t,i}^\pi w_{t-i}\| \\
    \leq& \|\tilde{A}^H x^\pi_{t-H}\| + \| \sum_{i=1}^{2H}  \Psi_{t,i}^\pi w_{t-i}\| \\
    \leq& \|\tilde{A}^H\| \|x^\pi_{t-H}\| +  \sum_{i=1}^{2H} \|\Psi_{t,i}^\pi\| \|w_{t-i}\|\\
    \leq& \kappa^2(1-\rho)^{H} \|x^\pi_{t-H}\| + \frac{W(\kappa^2 + Ha\kappa_B\kappa^2)}{\rho} 
\end{align*}
which implies 
$$\|x^\pi_{t}\| \leq \frac{W(\kappa^2 + Ha\kappa_B\kappa^2)}{\rho(1-\kappa^2(1-\rho)^{H+1})}=D.$$
For the controller $u_t^\pi,$ we have 
\begin{align*}
    \|u^\pi_{t}\| = \|-K x^\pi_t + \sum_{i=1}^H M^{[i]} w_{t-i}\| 
    \leq \kappa \|x^\pi_t\| + \|\sum_{i=1}^H M^{[i]} w_{t-i}\| 
    \leq (\kappa + 1)D. 
\end{align*}
\end{proof}

\begin{lemma}
Under a disturbance-action policy, the difference of a state and its approximation is bounded as follows:
\begin{align*}
\|x^\pi_t - \tilde x_t^\pi\| 
\leq (1-\rho)^H \frac{W\kappa^2(\kappa^2 + Ha\kappa_B\kappa^2)}{\rho(1-\kappa^2(1-\rho)^{H+1})},\\
\|u^\pi_t - \tilde u_t^\pi\| 
\leq \kappa(1-\rho)^H \frac{W\kappa^2(\kappa^2 + Ha\kappa_B\kappa^2)}{\rho(1-\kappa^2(1-\rho)^{H+1})}. 
\end{align*}\label{lem: dac approx error}
\end{lemma}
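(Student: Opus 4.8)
The plan is to bound the state approximation error by directly subtracting the exact and approximate state representations and exploiting the exponential decay of $\tilde A^H$ together with the uniform bound $\|x_{t-H}^\pi\| \le D$ from the previous lemma. Recall that under a disturbance-action policy the exact state admits the decomposition $x_t^\pi = \tilde A^H x_{t-H}^\pi + \sum_{i=1}^{2H}\Psi_{t,i}^\pi w_{t-i}$, while the approximate state is by definition $\tilde x_t^\pi = \sum_{i=1}^{2H}\Psi_{t,i}^\pi w_{t-i}$. Hence the two representations share the entire truncated summation term and differ only by the tail term $\tilde A^H x_{t-H}^\pi$, so $x_t^\pi - \tilde x_t^\pi = \tilde A^H x_{t-H}^\pi$. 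First I would make this cancellation explicit. Then I would bound $\|\tilde A^H x_{t-H}^\pi\| \le \|\tilde A^H\|\,\|x_{t-H}^\pi\| \le \kappa^2(1-\rho)^H \cdot D$, using the strong-stability property $\|\tilde A^H\| = \|U L^H U^{-1}\| \le \|U\|\,\|L\|^H\,\|U^{-1}\| \le \kappa^2(1-\rho)^H$ and the state bound $\|x_{t-H}^\pi\| \le D = \frac{W(\kappa^2 + Ha\kappa_B\kappa^2)}{\rho(1-\kappa^2(1-\rho)^{H+1})}$. Substituting the formula for $D$ gives exactly the claimed bound $(1-\rho)^H \frac{W\kappa^2(\kappa^2 + Ha\kappa_B\kappa^2)}{\rho(1-\kappa^2(1-\rho)^{H+1})}$.

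For the control error, I would start from the definitions $u_t^\pi = -K x_t^\pi + \sum_{i=1}^H \mathbf M_t^{[i]} w_{t-i}$ and $\tilde u_t^\pi = -K\tilde x_t^\pi + \sum_{i=1}^H \mathbf M_t^{[i]} w_{t-i}$. The two disturbance-feedback summation terms are identical and cancel, leaving $u_t^\pi - \tilde u_t^\pi = -K(x_t^\pi - \tilde x_t^\pi)$. Taking norms and using $\|K\| \le \kappa$ together with the state-error bound just established yields $\|u_t^\pi - \tilde u_t^\pi\| \le \kappa \|x_t^\pi - \tilde x_t^\pi\| \le \kappa(1-\rho)^H \frac{W\kappa^2(\kappa^2 + Ha\kappa_B\kappa^2)}{\rho(1-\kappa^2(1-\rho)^{H+1})}$, which is the second claim.

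There is essentially no hard part here: the lemma is a routine consequence of the exact/approximate state decompositions, the spectral decomposition bound on $\tilde A^H$, and the previously established uniform state bound $D$. The only point requiring a little care is getting the cancellations right, i.e., verifying that the truncated sums $\sum_{i=1}^{2H}\Psi_{t,i}^\pi w_{t-i}$ in $x_t^\pi$ and $\tilde x_t^\pi$ really match term-by-term (they do, since $\Psi_{t,i}^\pi$ depends only on $\mathbf M_{t-H:t-1}$ and not on the truncation), and likewise that the disturbance-feedback terms in $u_t^\pi$ and $\tilde u_t^\pi$ coincide; after that, everything is a one-line norm estimate.
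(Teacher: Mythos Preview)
Your proposal is correct and follows essentially the same approach as the paper: subtract the exact and approximate state representations to isolate the tail term $\tilde A^H x_{t-H}^\pi$, bound it via $\|\tilde A^H\|\le \kappa^2(1-\rho)^H$ and the uniform state bound $D$, and then deduce the control error from $u_t^\pi-\tilde u_t^\pi=-K(x_t^\pi-\tilde x_t^\pi)$ together with $\|K\|\le\kappa$. If anything, your write-up is slightly more explicit about why $\|\tilde A^H\|\le\kappa^2(1-\rho)^H$ (via the factorization $\tilde A=ULU^{-1}$) than the paper's own proof.
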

\begin{proof}
\normalfont
According to the definition of the approximated states, we have
\begin{align*}
    x_{t}^\pi =& \tilde{A}^H x^\pi_{t-H} + \sum_{i=1}^{2H}  \Psi_{t,i}^\pi w_{t-i}, \\
    \tilde x_t^\pi =& \sum_{i=1}^{2H}  \Psi_{t,i}^\pi w_{t-i}. 
\end{align*}
We study the difference $x^\pi_t$ and $\tilde x_t^\pi$ as follows
\begin{align*}
    \|x^\pi_t-\tilde x_t^\pi\| =& \|\tilde{A}^H x^\pi_{t-H}\| \\
    \leq& \kappa^2 (1-\rho)^H \|x^\pi_{t-H}\| \\
    \leq&  (1-\rho)^H \frac{W\kappa^2(\kappa^2 + Ha\kappa_B\kappa^2)}{\rho(1-\kappa^2(1-\rho)^{H+1})}.
\end{align*}
According to the definition of the (approximated) controller, we have  
\begin{align*}
    u^\pi_t =& -K x^\pi_t + \sum_{i=1}^H M^{[i]} w_{t-i}, \\
    \tilde u^\pi_t =&  -K \tilde x_t^\pi + \sum_{i=1}^H M^{[i]} w_{t-i}. 
\end{align*}
We study the difference $u^\pi_t$ and $\tilde u^\pi_t$ as follows
\begin{align*}
\|u^\pi_t - \tilde u^\pi_t\| 
\leq \|K\|\|x^\pi_t - \tilde x^\pi_t\|  
\leq \kappa(1-\rho)^H \frac{W\kappa^2(\kappa^2 + Ha\kappa_B\kappa^2)}{\rho(1-\kappa^2(1-\rho)^{H+1})}. 
\end{align*}
\end{proof}

\section{Proof of COCA with COCO-Soft Solver}\label{app:soft}
Let the learning rates be $V=\sqrt{T}\log^4 T, \eta = T^{1.5}\log^2 T, \alpha= T\log^7 T,$ and $\epsilon = (1+D^2)\log^3 T/\sqrt{T}.$ We define the parameters in this section as follows: $\nu_{\max}=E+LD+\epsilon,$ $\theta= 2\nu_{\max} + \frac{2VLD+4\nu_{\max}^2}{\xi} + \frac{2\alpha D^2}{\xi S},$ and $Q_{\max} =\theta + S \nu_{\max} + \frac{16 S \nu_{\max}^2}{\xi} \log\frac{128 \nu_{\max}^2T^2}{\xi^2}$ with $S=\sqrt{T}\log^5 T,$ where $L,D,E,$ and $\xi$ are parameters related to $\tilde{c}_t(\mathbf M), \tilde{d}_t(\mathbf M),$ and $\tilde{l}(\mathbf M)$ in Assumptions \ref{assumption:set}, \ref{assumption:obj}, and \ref{assumption:slater} when invoking COCO-Soft and will be specified at the end of the subsection \ref{sec: para soft}. 

\subsection{Performance of COCA with COCO-Soft Solver in Theorem \ref{thm:occa}}
According to the roadmap in Section \ref{sec:template}, we plug the regret and constraint violation of COCO-Soft solver in Theorem \ref{thm:ocowm} to justify Theorem \ref{thm:occa}. 

{\bf Regret analysis:} we have the following regret decomposition 
\begin{align}
\mathcal R(T) 
=& \sum_{t=1}^T \left[c_t(x_t^{\pi}, u_t^{\pi}) -  c_t(\tilde x_t^{\pi}, \tilde u_t^{\pi})\right] 
+ \sum_{t=1}^T  c_t(\tilde x_t^{\pi}, \tilde u_t^{\pi}) - \min_{\pi \in \tilde{\Omega} \bigcap
 \mathcal E}\sum_{t=1}^T c_t(\tilde x_t^{\pi}, \tilde u_t^{\pi}) \nonumber\\
&+\min_{\pi \in \tilde{\Omega} \bigcap
 \mathcal E}\sum_{t=1}^T c_t(\tilde x_t^{\pi}, \tilde u_t^{\pi}) - \sum_{t=1}^T c_t(x^{K^*}_t, u^{K^*}_t) \nonumber\\
\leq&  \frac{TL^2H^2(V+Q_{\max})}{2\alpha} + \frac{T V L^2}{4\alpha} +  \frac{2T\nu^2_{\max}}{V}+ \frac{\alpha D^2}{V} + \frac{\epsilon TLD}{\delta} + LH+2 \nonumber\\
=& O(\sqrt{T}\log^3 T) \nonumber
\end{align}
where the inequality holds because of the regret in Theorem \ref{thm:ocowm} and the approximated error in Lemma \ref{lem:fun-diff} and the representation ability of DAC policy in Lemma \ref{lem:dac-rep}. The order-wise result holds by substituting the learning rates and the parameters.

{\bf Cumulative soft violation of $d_t$ function:} we have the following decomposition for the constraint function $d_t$ 
\begin{align*}
\mathcal V_d^{soft}(T)
\leq& \sum_{t=1}^T [d_t(x_t^\pi, u_t^\pi) - d_t(\tilde x_t^\pi, \tilde u_t^\pi)] + \sum_{t=1}^T d_t(\tilde x_t^\pi, \tilde u_t^\pi)\\
\leq& 1  + \frac{\sqrt{T}L^2H^2(V+Q_{\max})}{2\alpha} + \frac{(\sqrt{T} + Q_{\max})LD + \alpha D^2}{\eta} -T\epsilon\\
=& O(1)
\end{align*}
where the inequality holds because of the soft violation in Theorem \ref{thm:ocowm} and the approximated error in Lemma \ref{lem:fun-diff}. The order-wise result holds by substituting the learning rates and the parameters for a large $T$ such that $\frac{\sqrt{T}\log^{10} T}{V+Q_{\max}} \geq L^2H^2 + LD + D^2.$ 

{\bf Anytime violation of $l$ function:} we have the following decomposition for the constraint function $l$
\begin{align*}
\mathcal V_l(t)
=& l(x_t^\pi, u_t^\pi) - l(\tilde x_t^\pi, \tilde u_t^\pi) + l(\tilde x_t^\pi, \tilde u_t^\pi)\\
\leq& \frac{1}{T}  + \frac{\sqrt{T}L^2H^2(V+Q_{\max})}{2\alpha} + \frac{(\sqrt{T} + Q_{\max})LD + \alpha D^2}{\eta}\\
=&(1/\log T)
\end{align*}
where the inequality holds because of the anytime violation in Theorem \ref{thm:ocowm} and the approximated error in Lemma \ref{lem:fun-diff}. The order-wise result holds by substituting the learning rates and the parameters for a large $T$ such that $\frac{\sqrt{T}\log^{10} T}{V+Q_{\max}} \geq L^2H^2 + LD + D^2.$ 

\subsection{Performance of COCO-Soft Solver in Theorem \ref{thm:ocowm}}
To prove Theorem \ref{thm:ocowm}, we first verify  Lemmas \ref{lemma:ocowm} and \ref{lemma:diff} 
in the following. Note the term of $H$-step difference can be bounded as follows
\begin{align}
\sum_{i=1}^{H} \|\mathbf M_{t-i} - \mathbf M_{t}\| =& \sum_{i=1}^{H} \sum_{j=1}^{i}  \|\mathbf M_{t-j} - \mathbf M_{t-j+1}\|. \nonumber
\end{align}
Based on Lemma \ref{lemma:ocowm}, we have 
\begin{align*}
\sum_{t=1}^T|f_t(\mathbf M_{t-H:t}) - f_t(\mathbf M_{t})|  
\leq& \frac{TL^2H^2(V+Q_{\max})}{2\alpha}, \\
\sum_{t=1}^T|g_t(\mathbf M_{t-H:t}) - g_t(\mathbf M_{t})|  \leq&  \frac{TL^2H^2(V+Q_{\max})}{2\alpha}, \\
|h(\mathbf M_{t-H:t}) - h(\mathbf M_{t})| \leq& \frac{\sqrt{T}L^2H^2(V+Q_{\max})}{2\alpha}.
\end{align*}
Finally, in conjugation with Lemma \ref{lemma:diff}, we prove Theorem \ref{thm:ocowm} 
as follows:
\begin{align*}
\mathcal R_f(T) \leq&  \sum_{t=1}^T|f_t(\mathbf M_{t-H:t}) - f_t(\mathbf M_{t})| +
\sum_{t=1}^T f_t(\mathbf M_{t}) - f_t(\mathbf M^*) \\
\leq& \frac{TL^2H^2(V+Q_{\max})}{2\alpha} + \frac{T V L^2}{4\alpha} +  \frac{2T\nu^2_{\max}}{V}+ \frac{\alpha D^2}{V} + \frac{LDT\epsilon}{\delta}, ~\\
\mathcal V_g^{soft}(T) \leq&  \sum_{t=1}^T|g_t(\mathbf M_{t-H:t}) - g_t(\mathbf M_{t})| +
\sum_{t=1}^T g_t(\mathbf M_{t}) \\
\leq& \frac{TL^2H^2(V+Q_{\max})}{2\alpha} + Q_{\max} + \frac{TL^2(V+Q_{\max})}{2\alpha} - T\epsilon,\\
\mathcal V_h(t) \leq& |h(\mathbf M_{t-H:t}) - h(\mathbf M_{t})| + h(\mathbf M_{t})\\
\leq&\frac{\sqrt{T}L^2H^2(V+Q_{\max})}{2\alpha} + \frac{(\sqrt{T} + Q_{\max})LD + \alpha D^2}{\eta}, \nonumber
\end{align*}
which completes the proof by substituting the learning rates. 

\begin{lemma}
Under COCO-Soft, we have \begin{align*}
    \sum_{t=1}^T f_t(\mathbf M_{t}) - f_t(\mathbf M^*) \leq&~ \frac{T V L^2}{4\alpha} +  \frac{2T\nu^2_{\max}}{V}+ \frac{\alpha D^2}{V} + \frac{LDT\epsilon}{\delta},\\
    \sum_{t=1}^T g_t(\mathbf M_{t}) \leq&~ Q_{\max} + \frac{TL^2(V+Q_{\max})}{2\alpha} - T\epsilon,\\
    h(\mathbf M_t) \leq&~ \frac{(\sqrt{T} + Q_{\max})LD + \alpha D^2}{\eta}, \forall t \in [T],
\end{align*} 
hold with probability at least $1-1/T.$ \label{lemma:ocowm}
\end{lemma}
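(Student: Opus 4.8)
The plan is to run a standard drift-plus-penalty (Lyapunov optimization) argument on the per-round surrogate that COCO-Soft minimizes, combined with the pessimistic virtual-queue update. First I would fix a round $t$ and use the optimality of $\mathbf M_{t+1}$ for the surrogate $V\langle \mathbf M - \mathbf M_t, \nabla \tilde c_t(\mathbf M_t)\rangle + Q_t\langle \mathbf M - \mathbf M_t, \nabla \tilde d_t(\mathbf M_t)\rangle + \eta \tilde l^+(\mathbf M) + \alpha\|\mathbf M - \mathbf M_t\|^2$, which is $2\alpha$-strongly convex in $\mathbf M$; invoking Lemma \ref{lemma:tool} at the optimizer against any comparator $\mathbf M \in \mathcal M$ (in particular $\mathbf M^*$ and the Slater point from Assumption \ref{assumption:slater}) gives the one-step inequality, paying the strong-convexity term $\alpha\|\mathbf M_{t+1}-\mathbf M_t\|^2$ on the favorable side. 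Next I would form the Lyapunov function $\tfrac12 Q_t^2$ and bound its drift $\tfrac12(Q_{t+1}^2 - Q_t^2)$ using the update rule: expanding the square and using $([z]^+)^2 \le z^2$ yields $\tfrac12(Q_{t+1}^2 - Q_t^2) \le Q_t(\tilde d_t(\mathbf M_t) + \langle \mathbf M_{t+1}-\mathbf M_t, \nabla \tilde d_t(\mathbf M_t)\rangle + \epsilon) + \tfrac12\nu_{\max}^2$, and the cross term $Q_t\langle \mathbf M_{t+1}-\mathbf M_t,\nabla\tilde d_t\rangle$ is exactly what the surrogate controls. Adding $V(f_t(\mathbf M_{t+1}) - f_t(\mathbf M))$ to the drift, using convexity $f_t(\mathbf M_{t+1}) \le f_t(\mathbf M) + \langle \mathbf M_{t+1}-\mathbf M,\nabla\tilde c_t\rangle$, and telescoping over $t=1,\dots,T$ with $Q_0=0$ produces the regret bound: the $\sum \langle \mathbf M_{t+1}-\mathbf M_t, \cdot\rangle$ residuals are absorbed by $\alpha\|\mathbf M_{t+1}-\mathbf M_t\|^2$ via Young's inequality (this generates the $TVL^2/(4\alpha)$ term), the $\tfrac12\nu_{\max}^2$ per round gives $2T\nu_{\max}^2/V$ after dividing by $V$, the initial Lyapunov term gives $\alpha D^2/V$, and the drift of the Slater/comparator gap contributes the $LDT\epsilon/\delta$ term through Lemma \ref{lem:fun-loose} comparing $\mathbf M^*$ to the $\epsilon$-tightened optimum.

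For the soft-violation bound on $\sum_t g_t(\mathbf M_t)$, the strategy is to show $Q_t$ stays $O(Q_{\max})$ and then relate $\sum_t g_t(\mathbf M_t)$ to $Q_T$ plus lower-order terms. The queue update gives $Q_{t+1} \ge Q_t + \tilde d_t(\mathbf M_t) + \langle \mathbf M_{t+1}-\mathbf M_t,\nabla\tilde d_t\rangle + \epsilon$, so telescoping yields $\sum_t \tilde d_t(\mathbf M_t) \le Q_{T+1} - T\epsilon + \sum_t|\langle \mathbf M_{t+1}-\mathbf M_t,\nabla\tilde d_t\rangle|$, and the last sum is controlled by the stability term $\sum_t\|\mathbf M_{t+1}-\mathbf M_t\|$, which Lemma \ref{lemma:diff} bounds by $O(TL^2(V+Q_{\max})/\alpha)$ — hence the $TL^2(V+Q_{\max})/(2\alpha)$ term. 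To bound $Q_{\max}$ I would apply the drift lemma, Lemma \ref{lemma:drift}: I must verify the negative-drift condition, namely that when $Q_t \ge \theta$ the expected $S$-step change of $Q_t$ is at most $-\xi S/2$ (say); this follows by plugging the Slater point $\mathbf M$ with $g(\mathbf M)\le -\xi$ as comparator into the one-step drift-plus-penalty inequality, which forces $Q_t$-proportional negative drift once $Q_t$ is large enough to dominate the $V$, $\alpha$, and $\nu_{\max}^2$ terms — this is precisely the definition of $\theta$ given at the start of the appendix. The bounded-increment condition $|Q_{t+1}-Q_t|\le \nu_{\max}$ is immediate from the update and Assumptions \ref{assumption:set}–\ref{assumption:obj}. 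Then Lemma \ref{lemma:drift} with failure probability $1/T^2$ gives $Q_t \le Q_{\max}$ for all $t$ with probability $\ge 1-1/T$ after a union bound over $t\in[T]$.

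For the anytime $h$-violation bound, the idea is to exploit the $\eta\tilde l^+(\mathbf M)$ penalty in the surrogate directly: evaluating the one-step surrogate optimality of $\mathbf M_{t+1}$ against any comparator $\mathbf M$ with $h(\mathbf M)\le 0$ (e.g. the Slater point) and rearranging gives $\eta\, \tilde l^+(\mathbf M_{t+1}) \le V\langle \mathbf M - \mathbf M_{t+1}, \nabla\tilde c_t\rangle + Q_t\langle \mathbf M - \mathbf M_{t+1}, \nabla\tilde d_t\rangle + \alpha\|\mathbf M - \mathbf M_t\|^2 - \alpha\|\mathbf M_{t+1}-\mathbf M_t\|^2$, and bounding each inner product by $LD$ and each squared distance by $D^2$, together with $Q_t\le Q_{\max}$ (the bound $\sqrt{T}$ appearing is $V$'s contribution at the relevant scale), yields $h(\mathbf M_{t+1}) \le \tilde l^+(\mathbf M_{t+1}) \le \big((V + Q_{\max})LD + \alpha D^2\big)/\eta$; shifting indices covers all $t\in[T]$. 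The main obstacle I expect is the verification of the negative-drift hypothesis of Lemma \ref{lemma:drift} with the correct constant $\theta$: one has to carefully isolate, inside the drift-plus-penalty inequality evaluated at the Slater point, a clean $-\xi Q_t$ term and confirm that all the remaining terms ($VLD$, $\alpha D^2/S$, $\nu_{\max}^2$, and the stability residuals) are dominated once $Q_t \ge \theta$, which requires the interplay between the window length $S=\sqrt T\log^5 T$, the pessimistic factor $\epsilon$, and the learning rates to be exactly as prescribed — the rest of the proof is bookkeeping with convexity, Young's inequality, and telescoping.
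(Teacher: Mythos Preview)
Your proposal is correct and follows essentially the same drift-plus-penalty route as the paper: the one-step pushback via Lemma \ref{lemma:tool}, the Lyapunov drift of $\tfrac12 Q_t^2$, telescoping with Young's inequality for the regret, the multi-step drift analysis (Lemma \ref{lemma:drift}) with the Slater point to bound $Q_t\le Q_{\max}$ with high probability, and the direct surrogate-optimality argument against a feasible comparator for $h^+(\mathbf M_{t+1})$. Two small corrections: the regret is over $f_t(\mathbf M_t)$ (not $f_t(\mathbf M_{t+1})$), so you add $Vf_t(\mathbf M_t)$ to both sides and apply convexity at $\mathbf M_t$; and comparing $\mathbf M^*$ to the $\epsilon$-\emph{tightened} optimum requires Lemma \ref{lem:fun-tight}, not Lemma \ref{lem:fun-loose}.
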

Further, we establish the difference between $\mathbf M_{t}$ and $\mathbf M_{t+1}.$     
\begin{lemma}
Under COCO-Soft, we have
\begin{align*}
      \sum_{t=1}^T\|\mathbf M_{t+1} - \mathbf M_{t}\| \leq& \frac{TL(V+Q_{\max})}{2\alpha},\\
      \|\mathbf M_{t+1} - \mathbf M_{t}\| \leq& \frac{\sqrt{T}L(V+Q_{\max})}{2\alpha}, ~\forall t \in [T],
\end{align*}
hold with probability at least $1-1/T.$
\label{lemma:diff}
\end{lemma}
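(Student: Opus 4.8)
The plan is to exploit the $2\alpha$-strong convexity of the per-step control surrogate minimized by COCO-Soft, compare that surrogate at its minimizer $\mathbf M_{t+1}$ against the previous iterate $\mathbf M_t$, and then \emph{sum} the resulting one-step bounds. The decisive trick is that the static-constraint penalty has to be telescoped over $t$, not bounded term by term.

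Write the surrogate as $\Phi_t(\mathbf M) = V\langle \mathbf M-\mathbf M_t,\nabla f_t(\mathbf M_t)\rangle + Q_t\langle \mathbf M-\mathbf M_t,\nabla g_t(\mathbf M_t)\rangle + \eta\,h^{+}(\mathbf M) + \alpha\|\mathbf M-\mathbf M_t\|^2$, so that $\mathbf M_{t+1}=\argmin_{\mathbf M\in\mathcal M}\Phi_t(\mathbf M)$. Since $h^{+}$ is convex, $\Phi_t$ is $2\alpha$-strongly convex on $\mathcal M$, so Lemma \ref{lemma:tool} with comparison point $\mathbf M_t$ gives $\Phi_t(\mathbf M_{t+1})\le\Phi_t(\mathbf M_t)-\alpha\|\mathbf M_{t+1}-\mathbf M_t\|^2$. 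Substituting $\Phi_t(\mathbf M_t)=\eta\,h^{+}(\mathbf M_t)$ and the expansion of $\Phi_t(\mathbf M_{t+1})$ and rearranging yields the one-step inequality
\[
2\alpha\|\mathbf M_{t+1}-\mathbf M_t\|^2 \le \eta\big(h^{+}(\mathbf M_t)-h^{+}(\mathbf M_{t+1})\big) - \big\langle V\nabla f_t(\mathbf M_t)+Q_t\nabla g_t(\mathbf M_t),\,\mathbf M_{t+1}-\mathbf M_t\big\rangle .
\]
For the inner-product term I use Cauchy--Schwarz together with the Lipschitz gradient bounds $\|\nabla f_t\|,\|\nabla g_t\|\le L$ (Assumption \ref{assumption:obj}) and $Q_t\le Q_{\max}$, which holds for every $t\in[T]$ with probability at least $1-1/T$ on the same event used in the proof of Lemma \ref{lemma:ocowm}; this bounds the term by $L(V+Q_{\max})\|\mathbf M_{t+1}-\mathbf M_t\|$.

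Summing over $t=1,\dots,T$ and setting $A=\sum_t\|\mathbf M_{t+1}-\mathbf M_t\|$ and $B=\sum_t\|\mathbf M_{t+1}-\mathbf M_t\|^2$, the penalty terms telescope to $\eta\big(h^{+}(\mathbf M_1)-h^{+}(\mathbf M_{T+1})\big)\le \eta\,h^{+}(\mathbf M_1)=0$, because $\mathbf M_1$ is initialized at a point feasible for the static constraint (such points exist by Assumption \ref{assumption:slater}). Hence $2\alpha B\le L(V+Q_{\max})A$, and Cauchy--Schwarz gives $A^2\le T B$; combining, $2\alpha A^2/T\le L(V+Q_{\max})A$, i.e. $A\le TL(V+Q_{\max})/(2\alpha)$, which is the first claimed bound. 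Plugging this back, $B\le L(V+Q_{\max})A/(2\alpha)\le TL^2(V+Q_{\max})^2/(4\alpha^2)$, and since $\|\mathbf M_{t+1}-\mathbf M_t\|^2\le B$ for each $t$, we get the uniform bound $\|\mathbf M_{t+1}-\mathbf M_t\|\le \sqrt{T}\,L(V+Q_{\max})/(2\alpha)$.

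I expect the main obstacle to be handling the $\eta h^{+}$ term: bounding $h^{+}(\mathbf M_t)-h^{+}(\mathbf M_{t+1})$ by Lipschitzness would give only $\|\mathbf M_{t+1}-\mathbf M_t\|\le L(\eta+V+Q_{\max})/(2\alpha)$, which is too weak since $\eta=T^{3/2}\log^2 T$ is polynomially larger than $\sqrt{T}(V+Q_{\max})$; it is precisely the telescoping (plus feasibility of $\mathbf M_1$) that eliminates the $\eta$-dependence, and the argument never invokes the crude bound $\|\mathbf M_{t+1}-\mathbf M_t\|\le D$. A secondary check is that the event $\{Q_t\le Q_{\max},\ \forall t\in[T]\}$ is the one already controlled via the drift lemma (Lemma \ref{lemma:drift}) inside the proof of Lemma \ref{lemma:ocowm}, so that all three COCO-Soft lemmas share the same $1-1/T$ probability.
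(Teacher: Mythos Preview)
Your proof is correct and follows essentially the same route as the paper: set $\mathbf M=\mathbf M_t$ in the pushback inequality (Lemma~\ref{lem:pushback}), telescope the $\eta h^{+}$ terms using $h^{+}(\mathbf M_1)=0$, and combine $2\alpha B\le L(V+Q_{\max})A$ with Cauchy--Schwarz $A^2\le TB$ to get the first bound. For the per-step bound the paper takes a slightly more roundabout path---summing the one-step inequality up to an arbitrary $\tau$, lower-bounding each earlier term $\frac{2\alpha}{L(V+Q_{\max})}\|\cdot\|^2-\|\cdot\|$ by its minimum $-\frac{L(V+Q_{\max})}{8\alpha}$, and then solving the resulting quadratic in $\|\mathbf M_{\tau+1}-\mathbf M_\tau\|$---whereas your observation that $\|\mathbf M_{t+1}-\mathbf M_t\|^2\le B\le TL^2(V+Q_{\max})^2/(4\alpha^2)$ reaches the same conclusion more directly.
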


\subsubsection{Proof of Lemma \ref{lemma:ocowm}}
To prove Lemma \ref{lemma:ocowm}, we first introduce the following key lemma. 
\begin{lemma}
For any $\mathbf M \in \mathcal M,$ we have 
\begin{align*}
    &V\langle \mathbf M_{t+1} - \mathbf M_{t}, \nabla f_t(\mathbf M_t) \rangle + Q_t \langle \mathbf M_{t+1} - \mathbf M_{t}, \nabla g_t(\mathbf M_t) \rangle + \eta h^{+}(\mathbf M_{t+1}) + \alpha \|\mathbf M_{t+1} - \mathbf M_{t}\|^2 \\
    \leq& V\langle \mathbf M - \mathbf M_{t}, \nabla f_t(\mathbf M_t) \rangle + Q_t \langle \mathbf M-\mathbf M_t, \nabla g_t(\mathbf M_t) \rangle + \eta h^{+}(\mathbf M) + \alpha \|\mathbf M - \mathbf M_{t}\|^2 - \alpha \|\mathbf M - \mathbf M_{t+1}\|^2
\end{align*}\label{lem:pushback}
\end{lemma}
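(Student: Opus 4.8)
The plan is to recognize that the control-decision step of COCO-Soft is exactly the minimization over $\mathcal M$ of a strongly convex surrogate, and then to quote Lemma \ref{lemma:tool}. Concretely, set
$$\Phi_t(\mathbf M) := V\langle \mathbf M - \mathbf M_t, \nabla f_t(\mathbf M_t)\rangle + Q_t\langle \mathbf M - \mathbf M_t, \nabla g_t(\mathbf M_t)\rangle + \eta\, h^{+}(\mathbf M) + \alpha\|\mathbf M - \mathbf M_t\|^2,$$
so that by definition of the algorithm $\mathbf M_{t+1} = \argmin_{\mathbf M\in\mathcal M}\Phi_t(\mathbf M)$. The target inequality is then just a statement comparing $\Phi_t(\mathbf M_{t+1})$ with $\Phi_t(\mathbf M)$ for an arbitrary competitor $\mathbf M\in\mathcal M$, with a quadratic slack term.

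The first step is to verify that $\Phi_t$ is $2\alpha$-strongly convex on $\mathcal M$. The two inner-product terms are affine in $\mathbf M$ (everything except $\mathbf M$ is frozen at time $t$), hence convex and contributing nothing to the strong-convexity modulus; since $h$ is convex by Assumption \ref{assumption:obj} and $x\mapsto\max\{x,0\}$ is convex and nondecreasing, the composition $h^{+}=\max\{h,0\}$ is convex, and with $\eta>0$ the term $\eta h^{+}(\mathbf M)$ is convex; finally $\alpha\|\mathbf M-\mathbf M_t\|^2$ is $2\alpha$-strongly convex. Adding these, $\Phi_t$ is $2\alpha$-strongly convex.

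The second step is to apply Lemma \ref{lemma:tool} with $\mathcal U=\mathcal M$, $F=\Phi_t$, strong-convexity parameter $2\alpha$, and $u_{opt}=\mathbf M_{t+1}$, which gives, for every $\mathbf M\in\mathcal M$,
$$\Phi_t(\mathbf M_{t+1}) \le \Phi_t(\mathbf M) - \alpha\|\mathbf M - \mathbf M_{t+1}\|^2.$$
Writing out both sides according to the definition of $\Phi_t$ — the inner products on the left involve $\mathbf M_{t+1}-\mathbf M_t$ and on the right $\mathbf M-\mathbf M_t$, which is already exactly the form appearing in the statement — yields precisely the claimed inequality.

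There is essentially no hard step: the lemma is a one-line consequence of the strongly convex optimality bound. The only points needing care are the convexity of $h^{+}$ and bookkeeping the strong-convexity modulus as $2\alpha$ rather than $\alpha$, so that the slack term comes out as $\alpha\|\mathbf M-\mathbf M_{t+1}\|^2$ with the correct constant. (Alternatively, one could derive the same conclusion from the first-order optimality condition $\langle\nabla\Phi_t(\mathbf M_{t+1}),\,\mathbf M-\mathbf M_{t+1}\rangle\ge 0$ combined with $2\alpha$-strong convexity of $\Phi_t$, but routing through Lemma \ref{lemma:tool} is cleaner.)
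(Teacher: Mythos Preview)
Your proposal is correct and follows essentially the same approach as the paper: define the surrogate function, observe it is $2\alpha$-strongly convex (affine terms plus convex $\eta h^{+}$ plus the $\alpha\|\cdot\|^2$ term), and apply Lemma~\ref{lemma:tool} with $u_{opt}=\mathbf M_{t+1}$. The paper's proof is the same one-line invocation of Lemma~\ref{lemma:tool}; your additional justification that $h^{+}$ is convex and the bookkeeping on the $2\alpha$ modulus are fine elaborations.
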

\begin{proof}
\normalfont
The proof is a direct application of Lemma \ref{lemma:tool}. 
Let $$F(\mathbf M) = V\langle \mathbf M - \mathbf M_{t}, \nabla f_t(\mathbf M) \rangle + Q_t \langle \mathbf M - \mathbf M_{t}, \nabla g_t(\mathbf M_t) \rangle + \eta h^{+}(\mathbf M).$$ Since $F(\mathbf M)+ \alpha \|\mathbf M - \mathbf M_{t}\|^2$ is $2\alpha$-strongly convex, the proof is completed according to Lemma \ref{lemma:tool}. 
\end{proof} 
By adding $Q_tg_t(\mathbf M_t)$ into the inequality in Lemma \ref{lem:pushback}, we have
for any $\mathbf M \in \mathcal M$ such that
\begin{align}
    &V\langle \mathbf M_{t+1} - \mathbf M_{t}, \nabla f_t(\mathbf M_t) \rangle + Q_t[g_t(\mathbf M_t) + \langle \mathbf M_{t+1} - \mathbf M_{t}, \nabla g_t(\mathbf M_t) \rangle] + \eta h^{+}(\mathbf M_{t+1}) + \alpha \|\mathbf M_{t+1} - \mathbf M_{t}\|^2 \nonumber\\
    \leq& V\langle \mathbf M - \mathbf M_{t}, \nabla f_t(\mathbf M_t) \rangle + Q_t[g_t(\mathbf M_t) + \langle \mathbf M-\mathbf M_t, \nabla g_t(\mathbf M_t) \rangle] + \eta h^{+}(\mathbf M) \nonumber\\
    &+ \alpha \|\mathbf M - \mathbf M_{t}\|^2 - \alpha \|\mathbf M - \mathbf M_{t+1}\|^2 \nonumber\\
    \leq& V\langle \mathbf M - \mathbf M_{t}, \nabla f_t(\mathbf M_t) \rangle + Q_tg_t(\mathbf M) + \eta h^{+}(\mathbf M) + \alpha \|\mathbf M - \mathbf M_{t}\|^2 - \alpha \|\mathbf M - \mathbf M_{t+1}\|^2, \label{eq:pushback}
\end{align}
where the last inequality holds because $g_t(\cdot)$ is convex. 
According to the virtual queue update as follows
\begin{align}
  Q_{t+1} = \left[Q_t + g_t(\mathbf M_t) + \langle \mathbf M_{t+1} - \mathbf M_{t}, \nabla g_t(\mathbf M_t) \rangle  + \epsilon \right]^{+}, \nonumber
\end{align}
we have
\begin{align}
  \frac{1}{2}Q^2_{t+1} - \frac{1}{2} Q^2_t \leq& \frac{1}{2}\left(Q_t + g_t(\mathbf M_t) + \langle \mathbf M_{t+1} - \mathbf M_{t}, \nabla g_t(\mathbf M_t) \rangle  + \epsilon \right)^{2} - \frac{1}{2} Q^2_t \nonumber\\
  \leq& Q_t(g_t(\mathbf M_t) + \langle \mathbf M_{t+1} - \mathbf M_{t}, \nabla g_t(\mathbf M_t) \rangle  + \epsilon) + 2\nu^2_{\max}, \label{eq:vq}
 \end{align}
where the first inequality holds because $(x^{+})^2 \leq x^2$ for any $x\in \mathbb R;$ the second inequality holds because of Assumptions \ref{assumption:set} and \ref{assumption:obj}. 
 
By combining the two inequalities in \eqref{eq:pushback} and \eqref{eq:vq}, we have 
\begin{align}
    &V\langle \mathbf M_{t+1} - \mathbf M_{t}, \nabla f_t(\mathbf M_t)\rangle +  \frac{1}{2}Q^2_{t+1} - \frac{1}{2} Q^2_t + \eta h^{+}(\mathbf M_{t+1}) + \alpha \|\mathbf M_{t+1} - \mathbf M_{t}\|^2 \nonumber\\
    \leq& V\langle \mathbf M - \mathbf M_{t}, \nabla f_t(\mathbf M_t) \rangle + Q_t(g_t(\mathbf M)+\epsilon) + 2\nu^2_{\max} + \eta h^{+}(\mathbf M) +\alpha \|\mathbf M - \mathbf M_{t}\|^2 \nonumber\\
    &- \alpha \|\mathbf M - \mathbf M_{t+1}\|^2. \label{eq:soft key}
\end{align}
Based on the key inequality in \eqref{eq:soft key}, we establish the regret and constraint violations in Lemma \ref{lemma:ocowm} in the following. 
\vspace{7pt}

{\noindent \bf Regret bound:}
We first define the following $\epsilon$-tightness problem of COCOwM as our baseline (with $\epsilon>0$): 
\begin{align*}
    \min_{\mathbf M\in \mathcal M}& ~ \sum_{t=1}^T f_t(\mathbf M) \\
    \text{s.t.} 
    & ~ h(\mathbf M) + \epsilon \leq 0,~ g_t(\mathbf M) + \epsilon \leq 0, \forall t \in [T].
\end{align*}
By adding $Vf_t(\mathbf M_t)$ on both sides of the inequality in \eqref{eq:soft key}, we have
\begin{align}
    &Vf_t(\mathbf M_t) + V\langle \mathbf M_{t+1} - \mathbf M_{t}, \nabla f_t(\mathbf M_t)\rangle  + \eta h^{+}(\mathbf M_{t+1}) +  \frac{1}{2}Q^2_{t+1} - \frac{1}{2} Q^2_t + \alpha \|\mathbf M_{t+1} - \mathbf M_{t}\|^2 \nonumber\\
    \leq& Vf_t(\mathbf M_t) + V\langle \mathbf M - \mathbf M_{t}, \nabla f_t(\mathbf M_t) \rangle + Q_t(g_t(\mathbf M)+\epsilon) + 2\nu^2_{\max} + \eta h^{+}(\mathbf M) \nonumber\\
    &+ \alpha \|\mathbf M - \mathbf M_{t}\|^2 - \alpha \|\mathbf M - \mathbf M_{t+1}\|^2 \nonumber \\
    \leq& Vf_t(\mathbf M) + Q_t(g_t(\mathbf M)+\epsilon) + 2\nu^2_{\max} + \alpha \|\mathbf M - \mathbf M_{t}\|^2 - \alpha \|\mathbf M - \mathbf M_{t+1}\|^2 \label{eq: soft selfbound per step}
\end{align}
where the last inequality holds because $f_t(\cdot)$ is convex $\forall t \in [T]$.
Let $\mathbf M = \mathbf M^{\epsilon}$ to be any feasible solution to the $\epsilon$-tightness problem such that $g_t(\mathbf M^{\epsilon}) + \epsilon \leq 0$. Taking the summation of \eqref{eq: soft selfbound per step} from time $t=1$ to $T,$ we have
\begin{align}
    &\sum_{t=1}^T f_t(\mathbf M_t) - f_t(\mathbf M^{\epsilon}) + \frac{\eta}{V} \sum_{t=1}^T h^{+}(\mathbf M_{t+1}) + \frac{Q^2_{T+1}}{2V}  - \frac{Q^2_1}{2V}  \nonumber\\
    \leq& \sum_{t=1}^T \left(\langle \mathbf M_{t} - \mathbf M_{t+1}, \nabla f_t(\mathbf M_t)\rangle - \frac{\alpha}{V} \|\mathbf M_{t+1} - \mathbf M_{t}\|^2\right) +  \frac{2T\nu^2_{\max}}{V}+ \frac{\alpha}{V} \|\mathbf M^{\epsilon} - \mathbf M_{1}\|^2 \nonumber\\
    \leq& \frac{T V L^2}{4\alpha} +  \frac{2T\nu^2_{\max}}{V}+ \frac{\alpha D^2}{V}, \nonumber
\end{align}
which, in conjugation with $Q_1=0,$ implies 
\begin{align}
    \sum_{t=1}^T f_t(\mathbf M_t) - f_t(\mathbf M^{\epsilon}) + \frac{\eta}{V} \sum_{t=1}^T h^{+}(\mathbf M_{t+1}) 
    \leq \frac{T V L^2}{4\alpha} +  \frac{2T\nu^2_{\max}}{V}+ \frac{\alpha D^2}{V}. \label{eq:regret}
\end{align}
Therefore, we have for any $\mathbf M \in \mathcal M$ such that
\begin{align*}
 \sum_{t=1}^T f_t(\mathbf M_t) - f_t(\mathbf M) =& \sum_{t=1}^T f_t(\mathbf M_t) - f_t(\mathbf M^{\epsilon}) + \sum_{t=1}^T f_t(\mathbf M^{\epsilon}) - f_t(\mathbf M)\\ 
\leq& \frac{T V L^2}{4\alpha} +  \frac{2T\nu^2_{\max}}{V}+ \frac{\alpha D^2}{V} + \sum_{t=1}^T f_t(\mathbf M^{\epsilon}) - f_t(\mathbf M) \\
\leq& \frac{T V L^2}{4\alpha} +  \frac{2T\nu^2_{\max}}{V}+ \frac{\alpha D^2}{V} + \frac{LDT\epsilon}{\delta}, 
\end{align*}
where the first inequality holds by \eqref{eq:regret}; the last inequality holds by Lemma \ref{lem:tightness}. The regret bound in Lemma \ref{lemma:ocowm} is completed by letting $\mathbf M = \mathbf M^*.$ 

\vspace{7pt}
{\noindent \bf Violation bound of $\mathcal V_g^{soft}(T)$:}
According to the virtual queue update, we have
\begin{align}
  Q_{t+1} = \left[Q_t + g_t(\mathbf M_t) + \langle \mathbf M_{t+1} - \mathbf M_{t}, \nabla g_t(\mathbf M_t) \rangle  + \epsilon \right]^{+}, \nonumber
\end{align}
we have
\begin{align}
  Q_{t+1} \geq Q_t + g_t(\mathbf M_t) + \langle \mathbf M_{t+1} - \mathbf M_{t}, \nabla g_t(\mathbf M_t) \rangle  + \epsilon. \nonumber
\end{align}
Take summation from time $t=1$ to $T,$ it implies
\begin{align}
    \mathcal V_g(T) := \sum_{t=1}^T g_t(\mathbf M_t) \leq& Q_{T+1} + L\sum_{t=1}^T \|\mathbf M_t-\mathbf M_{t+1}\|- T\epsilon.\nonumber
\end{align}
To establish the violation $\mathcal V_g(T),$ we need to bound the virtual queue $Q_{T+1}$ and the difference $\sum_{t=1}^T \|\mathbf M_t-\mathbf M_{t+1}\|.$ 

By invoking the multi-step Lyapunov drift analysis ($S=\sqrt{T}\log^5 T$ steps) Lemma 5 in \cite{YuNeeWei_17}, we establish the bound of the virtual queue in the following lemma.
\begin{lemma}
Let $\nu_{\max}=E+LD+\epsilon,$ $\theta= 2\nu_{\max} + \frac{2VLD+4\nu^2_{\max}}{\delta} + \frac{2\alpha D^2}{\delta S},$ and $Q_{\max} =\theta + S \nu_{\max} + \frac{16 S \nu_{\max}^2}{\delta} \log\frac{128 \nu_{\max}^2T^2}{\delta^2},$ with $S=\sqrt{T}\log^5 T.$ Under COCO-Soft, we have
\begin{align}
\mathbb E[Q_t] \leq Q_{\max}, ~\forall t \in [T],\nonumber
\end{align}
and 
\begin{align}
\mathbb P\left(Q_t \leq Q_{\max}\right) \geq 1-{1}/{T^2}, ~\forall t \in [T]. \nonumber
\end{align}\label{lemma:VQ}
\end{lemma}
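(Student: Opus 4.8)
The plan is to derive Lemma~\ref{lemma:VQ} by checking the two hypotheses of the drift lemma (Lemma~\ref{lemma:drift}) for the process $Z_t=Q_t$, with window length $t_0=S$, drift rate $\nu=\delta/4$ (chosen so that $\tfrac{4\nu_{\max}^2}{\nu}=\tfrac{16\nu_{\max}^2}{\delta}$ and $\tfrac{8\nu_{\max}^2}{\nu^2}=\tfrac{128\nu_{\max}^2}{\delta^2}$, which are exactly the constants in $Q_{\max}$), and confidence level $p=1/T^2$; here $\delta$ denotes the Slater constant of Assumption~\ref{assumption:slater}. For $t<S$ the bound holds trivially, since $Q_0=0$ and the one-step estimate below give $Q_t\le t\nu_{\max}\le S\nu_{\max}\le Q_{\max}$, so only $t\ge S$ needs the drift argument. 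For hypothesis (ii), the virtual-queue update together with the elementary bound $|[a+x]^{+}-a|\le|x|$ (valid for $a\ge 0$) gives $|Q_{t+1}-Q_t|\le |g_t(\mathbf M_t)|+\|\mathbf M_{t+1}-\mathbf M_t\|\,\|\nabla g_t(\mathbf M_t)\|+\epsilon\le E+LD+\epsilon=\nu_{\max}$, using the diameter bound of Assumption~\ref{assumption:set} and the boundedness/Lipschitz bounds of Assumption~\ref{assumption:obj}.

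The heart of the argument is hypothesis (i), the $S$-step negative drift. I would start from the push-back inequality of Lemma~\ref{lem:pushback} (equivalently \eqref{eq:soft key}) and evaluate it at the Slater point $\bar{\mathbf M}\in\mathcal M$ of Assumption~\ref{assumption:slater}. Since $h(\bar{\mathbf M})\le-\delta$ forces $h^{+}(\bar{\mathbf M})=0$, and $g_t(\bar{\mathbf M})+\epsilon\le-\delta+\epsilon$ gives $Q_t\bigl(g_t(\bar{\mathbf M})+\epsilon\bigr)\le-(\delta-\epsilon)Q_t$ (for $T$ large enough that $\epsilon<\delta$), then after dropping the nonnegative terms $\eta h^{+}(\mathbf M_{t+1})$ and $\alpha\|\mathbf M_{t+1}-\mathbf M_t\|^2$ and bounding $V\langle\bar{\mathbf M}-\mathbf M_{t+1},\nabla f_t(\mathbf M_t)\rangle\le VLD$, one obtains the per-step estimate
\[
\tfrac12 Q_{t+1}^2-\tfrac12 Q_t^2\;\le\;VLD+2\nu_{\max}^2-(\delta-\epsilon)Q_t+\alpha\|\bar{\mathbf M}-\mathbf M_t\|^2-\alpha\|\bar{\mathbf M}-\mathbf M_{t+1}\|^2 .
\]
Summing over a window $\tau=t,\dots,t+S-1$, the regularization terms telescope and contribute only $\alpha\|\bar{\mathbf M}-\mathbf M_t\|^2-\alpha\|\bar{\mathbf M}-\mathbf M_{t+S}\|^2\le\alpha D^2$ in total; this amortization of the $O(\alpha)$ penalty over $S$ steps is precisely why a multi-step drift (rather than the single-step drift used in memoryless OCO) is needed, and it is what keeps the drift threshold $\theta$ at the $\tilde O(\sqrt T)$ scale instead of $\tilde O(T)$. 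Lower-bounding $\sum_{\tau=t}^{t+S-1}Q_\tau$ via $Q_\tau\ge Q_t-(\tau-t)\nu_{\max}$, and converting the resulting inequality on $Q_{t+S}^2-Q_t^2$ into one on $Q_{t+S}-Q_t$ using $Q_{t+S}^2-Q_t^2=(Q_{t+S}-Q_t)(Q_{t+S}+Q_t)\ge(Q_{t+S}-Q_t)Q_t$ together with $Q_t\ge\theta$, a calculation shows that the choice $\theta=2\nu_{\max}+\tfrac{2VLD+4\nu_{\max}^2}{\delta}+\tfrac{2\alpha D^2}{\delta S}$ makes every stray term dominated and yields $\mathbb E[Q_{t+S}-Q_t\mid\mathcal H_t]\le-\tfrac{\delta}{4}S$ whenever $Q_t\ge\theta$. (For COCO-Soft the conditional expectation is in fact superfluous, but keeping it costs nothing and covers the stochastic instantiations.)

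With hypotheses (i) and (ii) verified, applying Lemma~\ref{lemma:drift} with $t_0=S$, $\nu=\delta/4$, $\nu_{\max}=E+LD+\epsilon$, the above $\theta$, and $p=1/T^2$ yields $\mathbb E[Q_t]\le\theta+S\nu_{\max}+\tfrac{16S\nu_{\max}^2}{\delta}\log\tfrac{128\nu_{\max}^2T^2}{\delta^2}=Q_{\max}$ and $\mathbb P(Q_t\le Q_{\max})\ge1-1/T^2$ for all $t\in[T]$, which is the claim. The main obstacle is the drift verification itself: one must carefully amortize the $\alpha$-regularization across the $S$-step window and, in the passage from the $Q^2$-drift to the $Q$-drift, control the regime where $Q_t$ exceeds $\theta$ only modestly (and the queue may still lose a non-negligible amount within the window), since this is what pins down both the exact form of $\theta$ and the constant $\nu=\delta/4$; making these constants line up with the stated $\theta$ and $Q_{\max}$ is where the bulk of the bookkeeping lies.
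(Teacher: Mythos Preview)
Your outline matches the paper's proof almost exactly: both verify the bounded-increment hypothesis via the virtual-queue update, both derive a per-step $Q^2$-drift inequality from \eqref{eq:soft key} evaluated at the Slater point, both sum over an $S$-step window so that the $\alpha\|\cdot\|^2$ terms telescope to $\alpha D^2$, and both then invoke Lemma~\ref{lemma:drift} with $t_0=S$, $\nu=\delta/4$, $p=1/T^2$.

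The one step that fails as written is your conversion from the $Q^2$-drift to the $Q$-drift. The inequality you quote,
\[
Q_{t+S}^2-Q_t^2=(Q_{t+S}-Q_t)(Q_{t+S}+Q_t)\;\ge\;(Q_{t+S}-Q_t)\,Q_t,
\]
is equivalent to $(Q_{t+S}-Q_t)\,Q_{t+S}\ge 0$, which is \emph{false} precisely in the regime you care about, namely $0<Q_{t+S}<Q_t$. So the chain ``$(Q_{t+S}-Q_t)Q_t\le Q_{t+S}^2-Q_t^2\le -C$'' breaks at the first link, and you cannot conclude $Q_{t+S}-Q_t\le -C/Q_t$ from it. The paper handles this step differently: after obtaining $\mathbb E[Q_{t+S}^2\mid\mathcal H_t]\le Q_t^2-\tfrac{\delta S}{2}Q_t$, it completes the square, $Q_t^2-\tfrac{\delta S}{2}Q_t\le\bigl(Q_t-\tfrac{\delta S}{4}\bigr)^2$, and then applies Jensen's inequality $(\mathbb E[Q_{t+S}\mid\mathcal H_t])^2\le\mathbb E[Q_{t+S}^2\mid\mathcal H_t]$ to get $\mathbb E[Q_{t+S}\mid\mathcal H_t]\le Q_t-\tfrac{\delta S}{4}$. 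In the deterministic setting you note, this reduces to the elementary bound $\sqrt{Q_t^2-C}\le Q_t-\tfrac{C}{2Q_t}$, which is the correct replacement for your factorization. With that fix your argument and the paper's coincide.
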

\begin{proof}
\normalfont
Given the history $\mathcal H_t$ and according to the Lyapunov drift analysis, we have for any time $t$ such that
\begin{align}
    &\frac{1}{2}Q^2_{t+1} - \frac{1}{2} Q^2_t\nonumber\\
    \leq& VLD + Q_t(g_t(\mathbf M)+\epsilon) + 2\nu^2_{\max} + \alpha \|\mathbf M - \mathbf M_{t}\|^2 - \alpha \|\mathbf M - \mathbf M_{t+1}\|^2 \nonumber\\ 
    \leq& VLD - \frac{\delta}{2}Q_t + 2\nu^2_{\max} + \alpha \|\mathbf M - \mathbf M_{t}\|^2 - \alpha \|\mathbf M - \mathbf M_{t+1}\|^2 \nonumber
\end{align}
We study the multi-step drift as follows
\begin{align}
    &\mathbb E\left[\frac{1}{2}Q^2_{t+S} - \frac{1}{2} Q^2_t|\mathcal H_t\right]\nonumber\\
    =&\sum_{s=1}^S\mathbb E\left[\mathbb E\left[\frac{1}{2}Q^2_{t+s} - \frac{1}{2} Q^2_{t+s-1}|\mathcal H_{t+s-1}\right]|\mathcal H_t\right]\nonumber\\
    \leq&  - \frac{\delta}{2}\sum_{s=1}^S \mathbb E\left[Q_{t+s}|\mathcal H_t\right] + (VLD + 2\nu^2_{\max})S + \alpha \|\mathbf M - \mathbf M_{t}\|^2 
    \nonumber \\
    \leq& - \frac{\delta S}{2}Q_t + \frac{\delta S }{2}\nu_{\max} + (VLD+2\nu^2_{\max})S + \alpha D^2 \nonumber
\end{align}
Recall $Q_t \geq \theta := 2\nu_{\max} + \frac{2VLD+ 4\nu^2_{\max}}{\delta} + \frac{2\alpha D^2}{\delta S},$ we have
\begin{align}
    \mathbb E\left[\frac{1}{2}Q^2_{t+S} - \frac{1}{2} Q^2_t|\mathcal H_t\right]
    \leq& - \frac{\delta S}{4}Q_t, \nonumber
\end{align}
which implies conditional on $\mathcal H_t$
\begin{align}
    \mathbb E\left[Q^2_{t+S}|\mathcal H_t\right]
    \leq& Q^2_t - \frac{\delta S}{2}Q_t \nonumber \\
    \leq & (Q_t - \frac{\delta S}{4})^2. \nonumber
\end{align}
According to the Jenssn's inequality $(\mathbb E\left[Q_{t+S}|\mathcal H_t\right])^2 \leq \mathbb E\left[Q^2_{t+S}|\mathcal H_t\right],$ we have
\begin{align}
    \mathbb E\left[Q_{t+S} - Q_t|\mathcal H_t\right]
    \leq&  - \frac{\delta S}{4}, \nonumber
\end{align}
which, in conjugation with Lemma \ref{lemma:drift}, implies that $$\mathbb E[Q_t] \leq \theta + S \nu_{\max} + \frac{16 S \nu_{\max}^2}{\delta} \log\frac{128 \nu_{\max}^2}{\delta^2}.$$
and 
$$\mathbb P\left(Q_t > \theta + S \nu_{\max} + \frac{16 S \nu_{\max}^2}{\delta} \left(\log\frac{128 \nu_{\max}^2}{\delta^2}+\log \frac{1}{p}\right)\right) < p.$$
Note the definition of $Q_{\max}$ and let $p=1/T^2$ prove the lemma. 
\end{proof}
Therefore, we have 
\begin{align}
    \sum_{t=1}^T g_t(\mathbf M_t) \leq& Q_{T+1} + L\sum_{t=1}^T \|\mathbf M_t-\mathbf M_{t+1}\|- T\epsilon\\
    \leq& Q_{\max} + \frac{TL^2(V+Q_{\max})}{2\alpha} - T\epsilon,
\end{align}
with the high probability $1-1/T,$ where the second inequality holds because of Lemma \ref{lemma:diff}.
\vspace{7pt}

{\noindent \bf Violation bound of $V_h(t)$}:
Let $\mathbf M$ be any feasible point such that $h^{+}(\mathbf M) = 0$ in Lemma \ref{lem:pushback}, we have
\begin{align*}
    &V\langle \mathbf M_{t+1} - \mathbf M_{t}, \nabla f_t(\mathbf M_t) \rangle + Q_t \langle \mathbf M_{t+1} - \mathbf M_{t}, \nabla g_t(\mathbf M_t) \rangle + \eta h^{+}(\mathbf M_{t+1}) + \alpha \|\mathbf M_{t+1} - \mathbf M_{t}\|^2 \\
    \leq& V\langle \mathbf M - \mathbf M_{t}, \nabla f_t(\mathbf M_t) \rangle + Q_t \langle \mathbf M-\mathbf M_t, \nabla g_t(\mathbf M_t) \rangle + \alpha \|\mathbf M - \mathbf M_{t}\|^2 - \alpha \|\mathbf M - \mathbf M_{t+1}\|^2
\end{align*}
which implies that
\begin{align*}
    \eta h^{+}(\mathbf M_{t+1})  
    \leq V\langle \mathbf M - \mathbf M_{t+1}, \nabla f_t(\mathbf M_t) \rangle + Q_t \langle \mathbf M-\mathbf M_{t+1}, \nabla g_t(\mathbf M_t) \rangle + \alpha \|\mathbf M - \mathbf M_{t}\|^2. 
\end{align*}
Therefore, we have 
\begin{align*}
     h^{+}(\mathbf M_{t+1})  
    \leq& \frac{(V+Q_{\max})LD+\alpha D^2}{\eta}.
\end{align*}

\subsubsection{Proof of Lemma \ref{lemma:diff}}
Recall $Q_t \leq Q_{\max}, \forall t \in [T]$ hold with the high probability in Lemma \ref{lemma:VQ}.
\begin{itemize}
    \item We study the term of the total difference $\sum_{t=1}^T\|\mathbf M_{t+1}-\mathbf M_t\|.$ Let $\mathbf M=\mathbf M_t$ in Lemma \ref{lem:pushback}, we have 
\begin{align*}
    &V\langle \mathbf M_{t+1} - \mathbf M_{t}, \nabla f_t(\mathbf M_t) \rangle + Q_t \langle \mathbf M_{t+1} - \mathbf M_{t}, \nabla g_t(\mathbf M_t) \rangle + 2\alpha \|\mathbf M_{t+1} - \mathbf M_{t}\|^2 \\
    \leq&  \eta h^{+}(\mathbf M_t) - \eta h^{+}(\mathbf M_{t+1})
\end{align*}
which implies that
\begin{align}
    &-L(V+Q_{\max})\|\mathbf M_{t+1} - \mathbf M_{t}\| + 2\alpha \|\mathbf M_{t+1} - \mathbf M_{t}\|^2 \nonumber\\
    &\leq  \eta h^{+}(\mathbf M_t) - \eta h^{+}(\mathbf M_{t+1}) \label{term: big eta}
\end{align}
holds with the high probability at least $1-1/T^2$ by Lemma \ref{lemma:VQ} ($Q_t \leq Q_{\max}$). Since $h^{+}(\mathbf M_1) = 0$ and $\sum_{t=1}^T  [h^{+}(\mathbf M_t) -  h^{+}(\mathbf M_{t+1})] \leq 0,$ we have 
\begin{align*}
   2\alpha \sum_{t=1}^T \|\mathbf M_{t+1} - \mathbf M_{t}\|^2
    \leq L(V+Q_{\max}) \sum_{t=1}^T \| \mathbf M_{t+1} - \mathbf M_{t}\|,
\end{align*} with the high probability at least $1-1/T$ according to the union bound. It implies that 
\begin{align*}
   \frac{2\alpha}{T} \left(\sum_{t=1}^T \|\mathbf M_{t+1} - \mathbf M_{t}\| \right)^2
    \leq L(V+Q_{\max}) \sum_{t=1}^T \|\mathbf M_{t+1} - \mathbf M_{t}\|,
\end{align*}
and we have 
\begin{align*}
   \sum_{t=1}^T \|\mathbf M_{t+1} - \mathbf M_{t}\| 
    \leq \frac{TL(V+Q_{\max})}{2\alpha}
\end{align*}
hold with the probability at least $1-1/T.$
\item We study the term of the individual difference $\|\mathbf M_{t} - \mathbf M_{t-1}\|,$ and we have 
\begin{align*}
    -L(V+Q_{\max})\|\mathbf M_{t+1} - \mathbf M_{t}\| + 2\alpha \|\mathbf M_{t+1} - \mathbf M_{t}\|^2 
    \leq  \eta h^{+}(\mathbf M_t) - \eta h^{+}(\mathbf M_{t+1})
\end{align*}
where $h^{+}(u_1) = 0.$ For any $\tau \in [T],$ we have
\begin{align*}
    \frac{2\alpha}{L(V+Q_{\max})} \sum_{t=1}^{\tau}\|\mathbf M_{t+1} - \mathbf M_{t}\|^2 -\sum_{t=1}^{\tau}\|\mathbf M_{t+1} - \mathbf M_{t}\|
    \leq  0
\end{align*}
hold with the probability at least $1-1/T.$
Therefore, we have
\begin{align*}
      \frac{2\alpha}{L(V+Q_{\max})} \|\mathbf M_{\tau+1} - \mathbf M_{\tau}\|^2 -\|\mathbf M_{\tau+1} - u_{\tau}\|
    \leq   \frac{\tau L(V+Q_{\max})}{8\alpha}, ~\forall \tau \in [T] 
\end{align*}
and 
\begin{align}
      \|\mathbf M_{\tau+1} - \mathbf M_{\tau}\| \leq   \frac{\sqrt{T}L(V+Q_{\max})}{2\alpha}, ~\forall \tau \in [T]\label{term: big stable}
\end{align}
hold with the probability at least $1-1/T.$
\end{itemize}

\subsection{Specifying Parameters of COCO-Soft with $\tilde{c}_t(\mathbf M),\tilde{d}_t(\mathbf M),$ and $\tilde{l}(\mathbf M).$}\label{sec: para soft}
We verify the assumptions in Theorem \ref{thm:ocowm} and specify $D, L, E$ and $\xi$ in these assumptions as follows
\begin{align*}
    D:=& \frac{2a}{\rho} \\
    L:=& \frac{ 2C_0W(1+\kappa)(\kappa^2 + Ha\kappa_B\kappa^2)}{\rho} + \frac{2aC_0W}{\rho} \\
    E:=& C_0 D + C_1D \\
    \xi:=& \frac{\delta}{2}
\end{align*}

\begin{itemize}
    \item For Assumption \ref{assumption:set}, we have \begin{align*}
        \|\mathbf M - \mathbf M'\| \leq \|\mathbf M\| + \|\mathbf M'\| 
        \leq 2 a \sum_{i=1}^H (1-\rho)^i
        \leq \frac{2a}{\rho} = D.
    \end{align*} 
    \item For Assumption \ref{assumption:obj}, given any policies $\pi$ and $\pi'$ associated with $\mathbf M$ and $\mathbf M'$, respectively, we have \begin{align*}
        |\tilde c_t(\mathbf M) - \tilde c_t(\mathbf M')| \leq& C_0\left(\|\tilde x^{\pi}_t - \tilde x^{\pi'}_t\| + \|\tilde u^{\pi}_t - \tilde u^{\pi'}_t\|\right) \\
        \leq& C_0\left(\|\tilde x^{\pi}_t \| + \|\tilde u^{\pi}_t\| + \|\tilde x^{\pi'}_t \| + \|\tilde u^{\pi'}_t\|\right)\\
        \leq& 2C_0\left(\|\tilde x^{\pi}_t \| + \kappa\|\tilde x^{\pi}_t\| + \|\sum_{i=1}^H M^{[i]} w_{t-i}\| \right)\\
        \leq& 2C_0(1+\kappa) \sum_{i=1}^{2H} \| \Psi_{t,i}^\pi w_{t-i}\| + \frac{2aC_0W}{\rho}\\
        \leq& \frac{ 2C_0W(1+\kappa)(\kappa^2 + Ha\kappa_B\kappa^2)}{\rho} + \frac{2aC_0W}{\rho} = L.
    \end{align*} 
Similarly, we have
    \begin{align*}
        |\tilde d_t(\mathbf M) - \tilde d_t(\mathbf M')|
        \leq L, ~~ \|\tilde l(\mathbf M) - \tilde l(\mathbf M')\|
        \leq L. 
    \end{align*} 
    \item For Assumption \ref{assumption:obj}, we have for a policy $\pi(K, \mathbf M)$ such that 
    \begin{align*}
        |\tilde d_t(\mathbf M)|
        &\leq |d_t(x_t^\pi, u_t^\pi)| + |\tilde d_t(\mathbf M)- d_t(x_t^\pi, u_t^\pi)| \leq C_0 D + C_1D,\\
        |\tilde l(\mathbf M)|
        &\leq |l(x_t^\pi, u_t^\pi)| + |\tilde l(\mathbf M)- l(x_t^\pi, u_t^\pi)| \leq C_0 D + C_1D. 
    \end{align*} 
    
    \item For Assumption \ref{assumption:slater}, we have for a policy $\pi(K, \mathbf M)$ such that  
\begin{align*}
     d_t(\tilde x^{\pi}_t, \tilde u^{\pi}_t) =&  d_t(\tilde x^{\pi}_t, \tilde u^{\pi}_t) - d_t(x^{\pi}_t, u^{\pi}_t) + d_t(x^{\pi}_t, u^{\pi}_t) \\
    \leq&  d_t(\tilde x^{\pi}_t, \tilde u^{\pi}_t) - d_t(x^{\pi}_t, u^{\pi}_t)  - \delta\\
    \leq& C_0 (\|\tilde x_{t}^{\pi} - x_{t}^{\pi}\| + \|\tilde u_{t}^{\pi} - u_{t}^{\pi}\|) - \delta\\
    \leq& 2C_0 \kappa(1-\rho)^H \frac{W\kappa^2(\kappa^2 + Ha\kappa_B\kappa^2)}{\rho(1-\kappa^2(1-\rho)^{H+1})} - \delta\\
    \leq& \bar \epsilon - \delta,
\end{align*}
which is smaller than $-\frac{\delta}{2}$ when $\bar \epsilon = \frac{1}{T} \leq \frac{\delta}{2}.$
Similarly, we have $l(\tilde x^{\pi}_t, \tilde u^{\pi}_t) 
    \leq - \frac{\delta}{2}.$
\end{itemize}

\subsection{Zero Anytime Violation with Projection-based Method} \label{app: good anytime}
We present how to achieve zero anytime violation as in \cite{LiDasLi_21} with the projection-based method in Remark \ref{remark: good anytime}. We illustrate the key changes for anytime violation because the regret and cumulative violation follow the exact analysis above. 

For \eqref{term: big eta}, we have the following inequality hold under the projection-based method 
\begin{align*}
    -L(V+Q_{\max})\|\mathbf M_{t+1} - \mathbf M_{t}\| + 2\alpha \|\mathbf M_{t+1} - \mathbf M_{t}\|^2 \leq 0
\end{align*}
which implies 
\begin{align}
      \|\mathbf M_{\tau+1} - \mathbf M_{\tau}\| \leq   \frac{L(V+Q_{\max})}{2\alpha}, ~\forall \tau \in [T], \label{term refined stable}
\end{align}
hold with the probability at least $1-1/T.$ Compared to \eqref{term: big stable}, we have a refined stability term, which is the key to reducing the anytime violation. 
To achieve zero anytime violation, we impose a slight pessimistic constraint $\Tilde{h}$ where $h(\mathbf M)-\Tilde{h}(\mathbf M) \leq \beta/\sqrt{T}, \forall \mathbf M \in \mathcal M$ such that 
\begin{align}
\mathcal V_h(t) \leq& |h(\mathbf M_{t-H:t}) - h(\mathbf M_{t})| + \Tilde{h}(\mathbf M_{t}) + h(\mathbf M_{t}) - \Tilde{h}(\mathbf M_{t}) \nonumber\\
\leq&\frac{L^2H^2(V+Q_{\max})}{2\alpha} + \frac{(\sqrt{T} + Q_{\max})LD + \alpha D^2}{\eta}-\frac{\beta}{\sqrt{T}} \nonumber\\
\leq& 0 \nonumber
\end{align}
where the second inequality holds by using \eqref{term refined stable}; the last inequality holds by choosing a proper constant $\beta.$ Note we impose a tight constraint only induces additional $O(\sqrt{T})$ regret according to Lemma \ref{lem:tightness}. Therefore, we have verified the zero anytime violation is achievable with projection-based method in Remark \ref{remark: good anytime}.

\section{Proof of COCA with COCO-Hard Solver}\label{app:hard}
Let the learning rates be $V = 1, \gamma = T^{2/3}, \eta = T^{3/2},$ and $\alpha = T^{2/3}.$
\subsection{Performance of COCA with COCO-Hard Solver in Theorem \ref{thm:occa-hard}}
According to the roadmap in Section \ref{sec:template}, we plug the regret and constraint violation of COCO-Hard solver in Theorem \ref{thm:ocowm-hard} to justify Theorem \ref{thm:occa-hard}. 

{\bf Regret analysis:} we have the following regret decomposition 
\begin{align}
\mathcal R(T) = & \sum_{t=1}^T c_t(x_t^{\pi}, u_t^{\pi}) - \sum_{t=1}^T c_t(x_t^{K^*}, u_t^{K^*}) \nonumber\\
=& \sum_{t=1}^T \left[c_t(x_t^{\pi}, u_t^{\pi}) -  c_t(\tilde x_t^{\pi}, \tilde u_t^{\pi})\right] 
+ \sum_{t=1}^T  c_t(\tilde x_t^{\pi}, \tilde u_t^{\pi}) - \min_{\pi \in \tilde{\Omega} \bigcap
 \mathcal E}\sum_{t=1}^T c_t(\tilde x_t^{\pi}, \tilde u_t^{\pi}) \nonumber\\
&+\min_{\pi \in \tilde{\Omega} \bigcap
 \mathcal E}\sum_{t=1}^T c_t(\tilde x_t^{\pi}, \tilde u_t^{\pi}) - \sum_{t=1}^T c_t(x^{K^*}_t, u^{K^*}_t) \nonumber\\
 \leq & 2(L\sqrt{LD}H^2+L^2+D^2+D) T^{\frac{2}{3}} + LH+2  \nonumber
\end{align} 
where the inequality holds because of the regret in Theorem \ref{thm:ocowm-hard} and the approximated error in Lemma \ref{lem:fun-diff} and the representation ability of DAC policy in Lemma \ref{lem:dac-rep}. 

{\bf Anytime violation of $l$ function:} we have the following decomposition for the constraint function $l$
\begin{align*}
\mathcal V_l(t)
=& l(x_t^\pi, u_t^\pi) - l(\tilde x_t^\pi, \tilde u_t^\pi) + l(\tilde x_t^\pi, \tilde u_t^\pi)\\
\leq& 1/T + (L^2DH^2\log^2 T+2LD+D^2)/T^{\frac{1}{3}} 
\end{align*}
where the inequality holds because of the anytime violation in Theorem \ref{thm:ocowm-hard} and the approximated error in Lemma \ref{lem:fun-diff}.

{\bf Cumulative hard violation of $d_t$ function:} we have the following decomposition for $d_t$ function
\begin{align*}
\mathcal V_d^{hard}(T)
\leq& \sum_{t=1}^T [d_t^{+}(x_t^\pi, u_t^\pi) - d_t^{+}(\tilde x_t^\pi, \tilde u_t^\pi)] + \sum_{t=1}^T d_t^{+}(\tilde x_t^\pi, \tilde u_t^\pi)\\
\leq&  1 + 2L\sqrt{LD}H^2+L^2+D^2+D) T^{\frac{2}{3}}
\end{align*}
where the inequality holds because of the soft violation in Theorem \ref{thm:ocowm-hard} and the approximated error in Lemma \ref{lem:fun-diff}. 

\subsection{Performance of COCO-Hard Solver in Theorem \ref{thm:ocowm-hard}}
Note the term of $H$-step difference can be bounded as follows
\begin{align}
\sum_{i=1}^{H} \|\mathbf M_{t-i} - \mathbf M_{t}\| =& \sum_{i=1}^{H} \sum_{j=1}^{i}  \|\mathbf M_{t-j} - \mathbf M_{t-j+1}\|. \nonumber
\end{align}
Based on Lemma \ref{lemma:diff-hard}, we have
\begin{align*}
\sum_{t=1}^T|f_t(\mathbf M_{t-H:t}) - f_t(\mathbf M_{t})|  
\leq&  LH^2(2\sqrt{LD}T^{\frac{2}{3}} + D\sqrt{T}), \\
\sum_{t=1}^T|g_t^{+}(\mathbf M_{t-H:t}) - g_t^{+}(\mathbf M_{t})| \leq& LH^2(2\sqrt{LD}T^{\frac{2}{3}} + D\sqrt{T}),\\
|h(\mathbf M_{t-H:t}) - h(\mathbf M_{t})| \leq& L^2DH^2\log^2 T/T^{\frac{1}{3}}.
\end{align*}
Finally, in conjugation with Lemma \ref{lemma:ocowm-hard}, we prove Theorem \ref{thm:ocowm-hard} 
as follows:
\begin{align*}
\mathcal R_f(T) \leq&  \sum_{t=1}^T|f_t(\mathbf M_{t-H:t}) - f_t(\mathbf M_{t})| +
\sum_{t=1}^T f_t(\mathbf M_{t}) - f_t(\mathbf M^*)\\ 
\leq& (2L\sqrt{LD}H^2+L^2+D^2+D) T^{\frac{2}{3}},\\
\mathcal V_g(T) \leq&  \sum_{t=1}^T|g_t^{+}(\mathbf M_{t-H:t}) - g_t^{+}(\mathbf M_{t})| +
\sum_{t=1}^T g_t^{+}(\mathbf M_{t}) \\
\leq& (4L\sqrt{LD}H^2+3LD+D) T^{\frac{2}{3}}+D^2,\\
\mathcal V_h(T) \leq& |h(\mathbf M_{t-H:t}) - h(\mathbf M_{t})| + h(\mathbf M_{t})\\
\leq& (L^2DH^2\log^2 T+2LD+D^2)/T^{\frac{1}{3}}.
\end{align*}

\begin{lemma}
Under COCO-Hard Solver, we have \begin{align*}
    \sum_{t=1}^T f_t(\mathbf M_{t}) - f_t(\mathbf M^*) \leq& T^{\frac{2}{3}}\left(L^2 + D^2\right),\\
    \sum_{t=1}^T g_t^{+}(\mathbf M_{t}) \leq (2L^{\frac{3}{2}}D^{\frac{1}{2}} + 3LD)T^{\frac{2}{3}} + D^2,& ~h^{+}(\mathbf M_t) \leq \frac{2LD+D^2}{\sqrt{T}}, \forall t \in [T],
\end{align*}
hold with probability at least $1-1/T.$ \label{lemma:ocowm-hard}
\end{lemma}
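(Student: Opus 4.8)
The plan is to run the same template as for \textbf{COCO-Soft} (Lemma~\ref{lemma:ocowm}), but with the virtual queue and the Lyapunov-drift lemma removed: the static penalties $\gamma g_t^{+}(\mathbf M)$ and $\eta h^{+}(\mathbf M)$ now play the role previously carried by $Q_t$ and $\eta \tilde l^{+}$. Since COCO-Hard is a deterministic update given the revealed functions, all the claimed bounds will hold surely, so the ``$1-1/T$'' in the statement is automatic (and Assumption~\ref{assumption:slater} is not needed). The starting point is the push-back inequality obtained exactly as in Lemma~\ref{lem:pushback}: because $F(\mathbf M):=V\langle\mathbf M-\mathbf M_t,\nabla f_t(\mathbf M_t)\rangle+\gamma g_t^{+}(\mathbf M)+\eta h^{+}(\mathbf M)$ is convex and $F(\mathbf M)+\alpha\|\mathbf M-\mathbf M_t\|^2$ is $2\alpha$-strongly convex, Lemma~\ref{lemma:tool} with $\mathbf M_{t+1}$ as the minimizer yields, for every $\mathbf M\in\mathcal M$, that $V\langle\mathbf M_{t+1}-\mathbf M_t,\nabla f_t(\mathbf M_t)\rangle+\gamma g_t^{+}(\mathbf M_{t+1})+\eta h^{+}(\mathbf M_{t+1})+\alpha\|\mathbf M_{t+1}-\mathbf M_t\|^2\le V\langle\mathbf M-\mathbf M_t,\nabla f_t(\mathbf M_t)\rangle+\gamma g_t^{+}(\mathbf M)+\eta h^{+}(\mathbf M)+\alpha\|\mathbf M-\mathbf M_t\|^2-\alpha\|\mathbf M-\mathbf M_{t+1}\|^2$.

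For the regret, I would add $Vf_t(\mathbf M_t)$ to both sides, use convexity of $f_t$ to replace $f_t(\mathbf M_t)+\langle\mathbf M-\mathbf M_t,\nabla f_t(\mathbf M_t)\rangle$ by $f_t(\mathbf M)$, then take $\mathbf M=\mathbf M^{*}$: by feasibility $g_t^{+}(\mathbf M^{*})=h^{+}(\mathbf M^{*})=0$, while $\gamma g_t^{+}(\mathbf M_{t+1})$ and $\eta h^{+}(\mathbf M_{t+1})$ are nonnegative and are discarded from the left side. Bounding $V\langle\mathbf M_t-\mathbf M_{t+1},\nabla f_t(\mathbf M_t)\rangle-\alpha\|\mathbf M_{t+1}-\mathbf M_t\|^2\le V^2L^2/(4\alpha)$ by AM--GM (using $\|\nabla f_t(\mathbf M_t)\|\le L$) and telescoping $\alpha(\|\mathbf M^{*}-\mathbf M_t\|^2-\|\mathbf M^{*}-\mathbf M_{t+1}\|^2)$ with $\|\mathbf M^{*}-\mathbf M_1\|\le D$, summation gives $\sum_t (f_t(\mathbf M_t)-f_t(\mathbf M^{*}))\le TVL^2/(4\alpha)+\alpha D^2/V$, which with $V=1,\alpha=T^{2/3}$ is at most $T^{2/3}(L^2+D^2)$. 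The anytime bound on $h$ comes from the same push-back inequality at $\mathbf M=\mathbf M^{*}$: discarding the nonnegative terms $\gamma g_t^{+}(\mathbf M_{t+1})$, $\alpha\|\mathbf M_{t+1}-\mathbf M_t\|^2$ and $-\alpha\|\mathbf M^{*}-\mathbf M_{t+1}\|^2$, one isolates $\eta h^{+}(\mathbf M_{t+1})\le V\langle\mathbf M^{*}-\mathbf M_{t+1},\nabla f_t(\mathbf M_t)\rangle+\alpha\|\mathbf M^{*}-\mathbf M_t\|^2\le VLD+\alpha D^2$, whence $h^{+}(\mathbf M_t)\le(VLD+\alpha D^2)/\eta\le(2LD+D^2)/\sqrt T$ once $V=1,\alpha=T^{2/3},\eta=T^{3/2}$.

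The hard violation of $g$ needs one extra ingredient, a slow-movement bound for the iterates (the content of Lemma~\ref{lemma:diff-hard}). First, $L$-Lipschitzness of $g_t^{+}$ gives $g_t^{+}(\mathbf M_t)\le g_t^{+}(\mathbf M_{t+1})+L\|\mathbf M_t-\mathbf M_{t+1}\|$, so $\sum_t g_t^{+}(\mathbf M_t)\le\sum_t g_t^{+}(\mathbf M_{t+1})+L\sum_t\|\mathbf M_t-\mathbf M_{t+1}\|$; summing the push-back inequality at $\mathbf M=\mathbf M^{*}$ and telescoping the quadratics yields $\gamma\sum_t g_t^{+}(\mathbf M_{t+1})\le VLDT+\alpha D^2$. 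To control $S:=\sum_t\|\mathbf M_{t+1}-\mathbf M_t\|$ I would return to the per-step regret inequality, drop the two nonnegative penalties at $\mathbf M_{t+1}$, and use the Lipschitz lower bound $f_t(\mathbf M^{*})-f_t(\mathbf M_t)\ge-LD$ to get $\alpha\sum_t\|\mathbf M_{t+1}-\mathbf M_t\|^2\le VLDT+VL\,S+\alpha D^2$; combining with $\sum_t\|\mathbf M_{t+1}-\mathbf M_t\|^2\ge S^2/T$ (Cauchy--Schwarz) gives a quadratic inequality in $S$ whose solution is $S=O(\sqrt{LD}\,T^{2/3}+D\sqrt T)$ for $V=1,\alpha=T^{2/3}$. (The same argument truncated at an arbitrary $\tau$, together with $\|\mathbf M_{\tau+1}-\mathbf M_\tau\|^2\le\sum_{t\le\tau}\|\mathbf M_{t+1}-\mathbf M_t\|^2$, produces the individual-step bound used for the memory-truncation terms.) Putting the pieces together, $\sum_t g_t^{+}(\mathbf M_t)\le(VLDT+\alpha D^2)/\gamma+L\,S=O\bigl((L^{3/2}D^{1/2}+LD)T^{2/3}+D^2\bigr)$ with $\gamma=\alpha=T^{2/3}$, which is the claimed bound.

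I expect the slow-movement estimate to be the crux. A naive telescoping of the push-back inequality at $\mathbf M=\mathbf M_t$ (the route used for COCO-Soft) only gives $S=O(T)$ here, because $\gamma(g_t^{+}(\mathbf M_t)-g_t^{+}(\mathbf M_{t+1}))$ does not telescope when $g_t$ varies with $t$; the self-bounding quadratic argument above, which plays the linear term $VL\,S$ off against $\alpha\sum_t\|\mathbf M_{t+1}-\mathbf M_t\|^2$ and exploits $\alpha=\Theta(T^{2/3})$, is what rescues the sublinear rate. The rest is the routine bookkeeping of choosing $(V,\gamma,\eta,\alpha)=(1,T^{2/3},T^{3/2},T^{2/3})$ so that regret, hard $g$-violation and anytime $h$-violation come out simultaneously as $\widetilde O(T^{2/3})$, $\widetilde O(T^{2/3})$ and $O(T^{-1/2})$; combining these with the memory-mismatch terms of Lemma~\ref{lemma:diff-hard} then gives Theorem~\ref{thm:ocowm-hard}, and plugging Theorem~\ref{thm:ocowm-hard} into the roadmap of Section~\ref{sec:template} gives Theorem~\ref{thm:occa-hard}.
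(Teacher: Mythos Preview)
Your proposal is correct and tracks the paper's proof closely: the push-back inequality from Lemma~\ref{lemma:tool}, the regret bound via convexity and AM--GM, and the decomposition $\sum_t g_t^{+}(\mathbf M_t)\le\sum_t g_t^{+}(\mathbf M_{t+1})+L\sum_t\|\mathbf M_{t+1}-\mathbf M_t\|$ are exactly what the paper does. Your per-step derivation of the anytime bound on $h^{+}$ is in fact slightly more direct than the paper's (the paper bounds the \emph{sum} $\sum_t h^{+}(\mathbf M_{t+1})$ and then observes each nonnegative term is at most the sum).

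The one place where you take a longer path is the slow-movement estimate. You set up a self-bounding quadratic in $S=\sum_t\|\mathbf M_{t+1}-\mathbf M_t\|$ by keeping the term $VL\,S$ on the right. The paper avoids this entirely: from the push-back inequality at $\mathbf M=\mathbf M^{*}$ it bounds $V(f_t(\mathbf M^{*})-f_t(\mathbf M_t))-V\langle\mathbf M_{t+1}-\mathbf M_t,\nabla f_t(\mathbf M_t)\rangle\le 2VLD$ using the crude diameter estimate $\|\mathbf M_{t+1}-\mathbf M_t\|\le D$, so after telescoping one gets directly $\alpha\sum_t\|\mathbf M_{t+1}-\mathbf M_t\|^2\le 2VLDT+\alpha D^2$, and Cauchy--Schwarz then yields $S\le\sqrt{T}\sqrt{2LDT^{1/3}+D^2}\le 2\sqrt{LD}\,T^{2/3}+D\sqrt T$. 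Your quadratic argument reaches the same order but is unnecessary here; the concern you voice in the last paragraph is valid for the $\mathbf M=\mathbf M_t$ route, but the $\mathbf M=\mathbf M^{*}$ route with the trivial diameter bound already suffices.
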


Further, we establish the difference between $\mathbf M_{t+1}$ and $\mathbf M_{t}.$ 

\begin{lemma}
Under COCO-Hard Solver, we have \begin{align*}
      \sum_{t=1}^T\|\mathbf M_{t+1} - \mathbf M_{t}\| \leq&  2\sqrt{LD}T^{\frac{2}{3}} + D\sqrt{T},\\
      \|\mathbf M_{t+1} - \mathbf M_{t}\| \leq& \frac{LD}{T^\frac{1}{3}}, ~\forall t \in [T].
\end{align*}\label{lemma:diff-hard}
\end{lemma}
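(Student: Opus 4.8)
The plan is to mirror the COCO-Soft path-length argument (Lemma~\ref{lemma:diff}), replacing the virtual-queue input by a direct use of the feasible baseline $\mathbf M^*$, since in COCO-Hard the adversarial constraint enters the surrogate through the penalty $\gamma g_t^{+}(\mathbf M)$ rather than the linearized term $Q_t\langle\cdot,\nabla g_t\rangle$.

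First I would record the push-back inequality analogous to Lemma~\ref{lem:pushback}: the COCO-Hard surrogate $F_t(\mathbf M):=V\langle\mathbf M-\mathbf M_t,\nabla f_t(\mathbf M_t)\rangle+\gamma g_t^{+}(\mathbf M)+\eta h^{+}(\mathbf M)+\alpha\|\mathbf M-\mathbf M_t\|^2$ is $2\alpha$-strongly convex, so Lemma~\ref{lemma:tool} gives $F_t(\mathbf M_{t+1})\le F_t(\mathbf M)-\alpha\|\mathbf M-\mathbf M_{t+1}\|^2$ for every $\mathbf M\in\mathcal M$. For the \emph{total} path length I would evaluate this at $\mathbf M=\mathbf M^*$. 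Because $\mathbf M^*$ is feasible for the offline COCOwM problem we have $g_t^{+}(\mathbf M^*)=h^{+}(\mathbf M^*)=0$, so after cancelling the linear term down to $V\langle\mathbf M^*-\mathbf M_{t+1},\nabla f_t(\mathbf M_t)\rangle$, bounding it by $VLD$, and dropping the nonnegative penalties $\gamma g_t^{+}(\mathbf M_{t+1})+\eta h^{+}(\mathbf M_{t+1})$, we are left with
\begin{align*}
\alpha\|\mathbf M_{t+1}-\mathbf M_t\|^2\;\le\;VLD+\alpha\|\mathbf M^*-\mathbf M_t\|^2-\alpha\|\mathbf M^*-\mathbf M_{t+1}\|^2 .
\end{align*}
Summing over $t=1,\dots,T$ telescopes the last two terms to at most $\alpha D^2$, giving $\sum_{t=1}^{T}\|\mathbf M_{t+1}-\mathbf M_t\|^2\le VLDT/\alpha+D^2$; with $V=1$ and $\alpha=T^{2/3}$ this is $LDT^{1/3}+D^2$, and Cauchy--Schwarz $\big(\sum_t\|\mathbf M_{t+1}-\mathbf M_t\|\big)^2\le T\sum_t\|\mathbf M_{t+1}-\mathbf M_t\|^2$ turns it into $\sum_{t=1}^{T}\|\mathbf M_{t+1}-\mathbf M_t\|\le\sqrt{LDT^{4/3}+D^2T}\le 2\sqrt{LD}\,T^{2/3}+D\sqrt{T}$, the first claimed bound.

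For the anytime bound $\|\mathbf M_{t+1}-\mathbf M_t\|\le LD/T^{1/3}$ I would, as in the last step of the COCO-Soft proof, go back to the push-back evaluated at $\mathbf M=\mathbf M_t$: using that the static penalty $\eta h^{+}$ is time-invariant, that $|g_t^{+}(\mathbf M_t)-g_t^{+}(\mathbf M_{t+1})|\le L\|\mathbf M_{t+1}-\mathbf M_t\|$, and that $h^{+}(\mathbf M_1)=0$, summation over a prefix $1,\dots,\tau$ and telescoping yield the hard-case analogues of \eqref{term: big eta} and \eqref{term: big stable}; combining the resulting prefix inequality with the total path bound and with the sharper prefix sum-of-squares estimate $\sum_{t\le\tau}\|\mathbf M_{t+1}-\mathbf M_t\|^2\le VLD\tau/\alpha+D^2$ from the $\mathbf M^*$-comparison, then solving the scalar quadratic in $\|\mathbf M_{\tau+1}-\mathbf M_\tau\|$, produces the stated per-step estimate; the refined stability term obtained under a projection onto $\{h\le 0\}$ (cf.\ \eqref{term refined stable}) gives the same order. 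Once both estimates are in hand, they plug into Lemma~\ref{lemma:ocowm-hard} via the memory reduction $\sum_{i=1}^{H}\|\mathbf M_{t-i}-\mathbf M_t\|\le\sum_{i=1}^{H}\sum_{j=1}^{i}\|\mathbf M_{t-j}-\mathbf M_{t-j+1}\|$, and hence into Theorem~\ref{thm:ocowm-hard}.

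The main obstacle is precisely the non-telescoping adversarial penalty $\gamma g_t^{+}$: comparing only against the running iterate $\mathbf M_t$ leaves a coefficient of order $(V+\gamma)L=\Theta(\alpha L)$ which, pushed through Cauchy--Schwarz, degrades to a useless $\Theta(T)$ path length. The resolution -- and the single point where the COCO-Hard bookkeeping genuinely departs from the COCO-Soft one -- is to compare against the \emph{fixed feasible} point $\mathbf M^*$, which annihilates both penalty terms and reduces the whole estimate to a telescoping of the proximal distances; the $T^{2/3}$ rate then comes out as $\sqrt{T\cdot(VLDT/\alpha)}$ under the balance $V=1,\ \gamma=\alpha=T^{2/3},\ \eta=T^{3/2}$. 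The anytime per-step bound is the more delicate half, since the $\mathbf M^*$-comparison only controls the sum of squared increments and one must reintroduce the $\mathbf M_t$-comparison and exploit the time-invariance of $h^{+}$ to localize a single increment.
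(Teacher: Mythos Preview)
Your treatment of the total path length $\sum_t\|\mathbf M_{t+1}-\mathbf M_t\|$ is correct and coincides with the paper's proof: compare against a feasible $\mathbf M$ (the paper uses the offline optimum, as you do) in the push-back inequality so that $\gamma g_t^{+}(\mathbf M)=\eta h^{+}(\mathbf M)=0$ on the right, drop the nonnegative penalties $\gamma g_t^{+}(\mathbf M_{t+1})+\eta h^{+}(\mathbf M_{t+1})$ on the left, sum and telescope to obtain $\sum_t\|\mathbf M_{t+1}-\mathbf M_t\|^2=O(LDT^{1/3}+D^2)$, and finish with Cauchy--Schwarz.

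For the per-step bound your route has a genuine gap. Going back to the push-back at $\mathbf M=\mathbf M_t$ and using $|g_t^{+}(\mathbf M_t)-g_t^{+}(\mathbf M_{t+1})|\le L\|\mathbf M_{t+1}-\mathbf M_t\|$ leaves a coefficient $(V+\gamma)L$ in front of $\|\mathbf M_{t+1}-\mathbf M_t\|$; with $\gamma=\alpha=T^{2/3}$ the resulting per-step estimate is only $(V+\gamma)L/(2\alpha)=O(1)$, not $O(T^{-1/3})$. Combining this with the prefix sum-of-squares bound $\sum_{t\le\tau}\|\mathbf M_{t+1}-\mathbf M_t\|^2\le VLD\tau/\alpha+D^2$ from the $\mathbf M^*$-comparison cannot rescue the rate either: a single summand is then bounded by at most $O(T^{1/6})$, and no scalar quadratic built from these two ingredients yields $T^{-1/3}$. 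The obstacle you correctly name---the adversarial penalty $\gamma g_t^{+}$ does not telescope in $t$---is exactly what kills the $\mathbf M_t$-comparison path for the anytime claim, and your proposal does not actually get around it.

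The paper's argument for the anytime bound is a single line and bypasses all of this machinery: it simply evaluates the one-step inequality \eqref{eq: key-hard} at $\mathbf M=\mathbf M_t$, so that $\|\mathbf M-\mathbf M_t\|^2=0$ on the right, and drops the nonnegative left-hand penalties to get $\alpha\|\mathbf M_{t+1}-\mathbf M_t\|^2\le VLD$ and hence $\|\mathbf M_{t+1}-\mathbf M_t\|=O(T^{-1/3})$ immediately---no prefix sums, no telescoping of $h^{+}$, no scalar quadratic. In other words, the key difference from COCO-Soft is not a more elaborate localization argument but the absence of one: because the feasible-$\mathbf M$ inequality \eqref{eq: key-hard} already isolates $\alpha\|\mathbf M_{t+1}-\mathbf M_t\|^2$ against a constant right-hand side per step, you never incur the $(V+\gamma)L$ coefficient at all.
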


\subsubsection{Proof of Lemma \ref{lemma:ocowm-hard}}
To prove Lemma \ref{lemma:ocowm-hard}, we first introduce the following key lemma.
\begin{lemma}
For any $\mathbf M \in \mathcal M,$ we have 
\begin{align}
    &V\langle \mathbf M_{t+1} - \mathbf M_{t}, \nabla f_t(\mathbf M_t) \rangle + \gamma g_t^{+}(\mathbf M)+ \eta h^{+}(\mathbf M_{t+1}) + \alpha \|\mathbf M_{t+1} - \mathbf M_{t}\|^2 \nonumber\\
    \leq& V\langle \mathbf M - \mathbf M_{t}, \nabla f_t(\mathbf M_t) \rangle  + \gamma g_t^{+}(\mathbf M) + \eta h^{+}(\mathbf M) + \alpha \|\mathbf M - \mathbf M_{t}\|^2 - \alpha \|\mathbf M - \mathbf M_{t+1}\|^2 \label{eq:pushback-hard}
\end{align}\label{lem:pushback-hard}
\end{lemma}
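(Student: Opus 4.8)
The plan is to mirror the proof of Lemma \ref{lem:pushback} essentially verbatim, the only change being that the virtual-queue term $Q_t\langle \mathbf M - \mathbf M_t, \nabla g_t(\mathbf M_t)\rangle$ of COCO-Soft is replaced by the proximal penalty $\gamma g_t^{+}(\mathbf M)$ of COCO-Hard. First I would recall that, by the control-decision step of the COCO-Hard solver, $\mathbf M_{t+1}$ is a minimizer over $\mathcal M$ of
$$G(\mathbf M) := V\langle \mathbf M - \mathbf M_t, \nabla f_t(\mathbf M_t)\rangle + \gamma g_t^{+}(\mathbf M) + \eta h^{+}(\mathbf M) + \alpha\|\mathbf M - \mathbf M_t\|^2 .$$

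Next I would check that $G$ is $2\alpha$-strongly convex on $\mathcal M$: the first summand is affine in $\mathbf M$; by Assumption \ref{assumption:obj} the functions $g_t$ and $h$ are convex, so $g_t^{+}=\max\{g_t,0\}$ and $h^{+}=\max\{h,0\}$ are convex (composition of a convex function with the nondecreasing convex map $x\mapsto x^{+}$), and $\gamma,\eta\ge 0$; finally $\alpha\|\mathbf M - \mathbf M_t\|^2$ is $2\alpha$-strongly convex. Hence $G$ is the sum of a convex function and a $2\alpha$-strongly convex quadratic, so it is $2\alpha$-strongly convex.

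Then I would invoke Lemma \ref{lemma:tool} with $\mathcal U=\mathcal M$, $F=G$ (which is $2\alpha$-strongly convex), and $u_{opt}=\mathbf M_{t+1}$, which yields $G(\mathbf M_{t+1})\le G(\mathbf M)-\alpha\|\mathbf M-\mathbf M_{t+1}\|^2$ for every $\mathbf M\in\mathcal M$. Note that Lemma \ref{lemma:tool} already handles the constrained minimization over $\mathcal M$ through the first-order optimality condition, so no separate projection argument is needed. Writing this inequality out term by term, and (if one wishes the exact form in the statement) discarding the nonnegative quantity $\gamma g_t^{+}(\mathbf M_{t+1})$ from the left-hand side, gives precisely \eqref{eq:pushback-hard}.

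I do not expect a genuine obstacle here; the lemma is a one-line corollary of Lemma \ref{lemma:tool} once strong convexity is observed. The only point worth a moment's care is confirming that the penalty terms $\gamma g_t^{+}$ and $\eta h^{+}$ preserve convexity, so that $G$ is truly $2\alpha$-strongly convex rather than a strongly convex quadratic plus a nonconvex perturbation; this is immediate from convexity of $g_t,h$ in Assumption \ref{assumption:obj} and the nonnegativity of the multipliers.
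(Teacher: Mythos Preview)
Your proposal is correct and follows exactly the paper's own proof: define the surrogate $F(\mathbf M)=V\langle \mathbf M-\mathbf M_t,\nabla f_t(\mathbf M_t)\rangle+\gamma g_t^{+}(\mathbf M)+\eta h^{+}(\mathbf M)$, observe that $F(\mathbf M)+\alpha\|\mathbf M-\mathbf M_t\|^2$ is $2\alpha$-strongly convex, and apply Lemma~\ref{lemma:tool}. Note that the $\gamma g_t^{+}(\mathbf M)$ on the left-hand side of the displayed statement is evidently a typo for $\gamma g_t^{+}(\mathbf M_{t+1})$ (this is what Lemma~\ref{lemma:tool} yields directly and what is used immediately afterward in deriving \eqref{eq: hard key}), so your remark about discarding that term is unnecessary once the typo is corrected.
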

\begin{proof}
\normalfont
The proof is a direct application of Lemma \ref{lemma:tool}. 
Let $$F(\mathbf M) = V\langle \mathbf M - \mathbf M_{t}, \nabla f_t(\mathbf M) \rangle + \gamma g^{+}_t(\mathbf M) + \eta h^{+}(\mathbf M).$$ Since $F(\mathbf M)+ \alpha \|\mathbf M - \mathbf M_{t}\|^2$ is $2\alpha$-strongly convex, the proof is completed by Lemma \ref{lemma:tool}. 
\end{proof} 
By adding $Vf_t(\mathbf M_t)$ on both sides of the inequality in \eqref{eq:pushback-hard}, we have
\begin{align}
    &Vf_t(\mathbf M_t) + V\langle \mathbf M_{t+1} - \mathbf M_{t}, \nabla f_t(\mathbf M_t)\rangle  + \gamma g^{+}_t(\mathbf M_{t+1})  + \eta h^{+}(\mathbf M_{t+1}) + \alpha \|\mathbf M_{t+1} - \mathbf M_{t}\|^2 \nonumber\\
    \leq& V f_t(\mathbf M_t) + V \langle \mathbf M - \mathbf M_{t}, \nabla f_t(\mathbf M_t) \rangle  + \gamma g^{+}_t(\mathbf M) + \eta_t h^{+}(\mathbf M)+ \alpha \|\mathbf M - \mathbf M_{t}\|^2 - \alpha \|\mathbf M - \mathbf M_{t+1}\|^2 \nonumber \\
    \leq& V f_t(\mathbf M) + \gamma g^{+}_t(\mathbf M) + \eta h^{+}(\mathbf M)+ \alpha \|\mathbf M - \mathbf M_{t}\|^2 - \alpha \|\mathbf M - \mathbf M_{t+1}\|^2 \label{eq: hard key}
\end{align}
where the last inequality holds because $f_t(\cdot)$ is convex $\forall t \in [T]$.

Based on the key inequality in \eqref{eq: hard key}, we establish the regret and constraint violations in Lemma \ref{lemma:ocowm-hard} in the following. 

{\noindent \bf Regret Analysis:}
Recall our baseline is the following offline COCOwM: 
\begin{align*}
    \min_{\mathbf M\in \mathcal M}& ~ \sum_{t=1}^T f_t(\mathbf M) \\
    \text{s.t.} 
    & ~ h(\mathbf M)  \leq 0,~ g_t(\mathbf M)  \leq 0, \forall t \in [T].
\end{align*}
Let $\mathbf M$ be any feasible solution to the offline problem such that $h(\mathbf M) \leq 0$ and $g_t(\mathbf M) \leq 0, \forall t\in[T]$. Taking the summation of \eqref{eq: hard key} from time $t=1$ to $T,$ we have
\begin{align}
    &\sum_{t=1}^T f_t(\mathbf M_t) - f_t(\mathbf M) + \sum_{t=1}^T \frac{\gamma g^{+}_t(\mathbf M_{t+1})}{V}  + \sum_{t=1}^T\frac{\eta h^{+}(\mathbf M_{t+1})}{V} \leq  \frac{T V L^2}{4\alpha} + \frac{\alpha D^2}{V}, \nonumber
\end{align}
which implies 
\begin{align}
    \sum_{t=1}^T f_t(\mathbf M_t) - f_t(\mathbf M) 
    \leq  T^{\frac{2}{3}}\left(L^2 + D^2\right). \nonumber
\end{align}

\vspace{7pt}
{\noindent \bf Hard Violation of $\mathcal V_g^{hard}(T)$:}
From \eqref{eq: hard key}, we have
\begin{align}
    & \gamma g^{+}_t(\mathbf M_{t+1}) + \eta h^{+}(\mathbf M_{t+1}) + \alpha \|\mathbf M_{t+1} - \mathbf M_{t}\|^2 \nonumber\\
    \leq& V \left(f_t(\mathbf M)-f_t(\mathbf M_t\right)-\langle \mathbf M_{t+1} - \mathbf M_{t}, \nabla f_t(\mathbf M_t)\rangle)+ \alpha \|\mathbf M - \mathbf M_{t}\|^2 - \alpha \|\mathbf M - \mathbf M_{t+1}\|^2 \nonumber\\
    \leq& 2VLD + \alpha \|\mathbf M - \mathbf M_{t}\|^2 - \alpha \|\mathbf M - \mathbf M_{t+1}\|^2 \label{eq: key-hard}
\end{align}
which implies 
\begin{align*}
    \sum_{t=1}^T g^{+}_t(\mathbf M_{t+1})
    \leq&  \frac{2TVLD}{\gamma} + \frac{\alpha D^2}{\gamma}  \leq 2LDT^{\frac{1}{3}}+ D^2, \nonumber\\
    \sum_{t=1}^T \|\mathbf M_{t+1} - \mathbf M_{t}\|^2 \leq&  \frac{2TVLD}{\alpha} + D^2
    \leq 2LDT^{\frac{1}{3}}+ D^2.
\end{align*}
Therefore, we have
\begin{align*}
    \sum_{t=1}^T g^{+}_t(\mathbf M_{t}) - g^{+}_t(\mathbf M_{t+1}) \leq& \sum_{t=1}^T \langle \nabla g_t^{+}(\mathbf M_t), \mathbf M_{t+1} - \mathbf M_t\rangle \\
    \leq& L\sum_{t=1}^T \|\mathbf M_{t+1} - \mathbf M_t\| \leq 2L^{\frac{3}{2}}D^{\frac{1}{2}}T^{\frac{2}{3}} + LD\sqrt{T}.
\end{align*}
The hard violation is 
\begin{align*}
    \sum_{t=1}^T g^{+}_t(\mathbf M_{t}) =& \sum_{t=1}^T g^{+}_t(\mathbf M_{t}) - g^{+}_t(\mathbf M_{t+1}) + \sum_{t=1}^Tg^{+}_t(\mathbf M_{t+1})\leq (2L^{\frac{3}{2}}D^{\frac{1}{2}} + 3LD)T^{\frac{2}{3}} + D^2. 
\end{align*}

{\noindent \bf Anytime Violation of $V_h(t)$}:
From \eqref{eq: hard key}, we have
\begin{align*}
    \sum_{t=1}^T h^{+}(\mathbf M_{t+1})
    \leq& \frac{2TVLD}{\eta} + \frac{\alpha D^2}{\eta}   \leq \frac{2LD+D^2}{\sqrt{T}}, 
\end{align*}
which implies 
\begin{align*}
     h^{+}(\mathbf M_{t})
    \leq \frac{2LD+D^2}{\sqrt{T}}.
\end{align*}

\subsubsection{Proof of Lemma \ref{lemma:diff-hard}}
The first part of Lemma \ref{lemma:diff-hard} has been proved above. Let's restate it for the purpose of completeness. From \eqref{eq: key-hard} in the proof of Lemma \ref{lemma:ocowm-hard}, we have \begin{align*}
    \sum_{t=1}^T \|\mathbf M_{t+1} - \mathbf M_{t}\|^2 \leq 2LDT^{\frac{1}{3}} + D^2,
\end{align*}
which implies by Cauchy-Schwarz inequality
\begin{align*}
      \sum_{t=1}^T\|\mathbf M_{t+1} - \mathbf M_{t}\| \leq&  2L^{\frac{1}{2}}D^{\frac{1}{2}}T^{\frac{2}{3}} + DT^{\frac{1}{2}}.
\end{align*}
Let $\mathbf M = \mathbf M_t$ in \eqref{eq: key-hard}, we have
\begin{align}
    \alpha \|\mathbf M_{t+1} - \mathbf M_{t}\|^2 
    \leq VLD, \nonumber
\end{align}
which implies $$\|\mathbf M_{t+1} - \mathbf M_{t}\| \leq \frac{LD}{T^\frac{1}{3}}.$$

\subsection{Specifying Parameters of COCO-Hard with $\tilde{c}_t(\mathbf M),\tilde{d}_t(\mathbf M),$ and $\tilde{l}(\mathbf M).$}
The parameters $D, L, E,$ and $\xi$ in Assumptions \ref{assumption:set}-\ref{assumption:slater} are exactly the same as in COCO-Soft as follows
\begin{align*}
    D:=& \frac{2a}{\rho},~
    L:= \frac{ 2CW(1+\kappa)(\kappa^2 + Ha\kappa_B\kappa^2)}{\rho} + \frac{2aCW}{\rho} \\
    E:=& C_0 D + C_1D.
\end{align*}

\subsection{Achievable Trade-off Via Learning Rates Tuning} \label{app: lr tuning}
We illustrate how to tune the learning rates in COCO-Hard to achieve the trade-off between regret and violation for COCA with COCO-Hard in Remark \ref{remark: lr tuning}. 

We summarize the regret and constraint violation for COCA with COCO-Hard 
\begin{align*}
    \mathcal R(T) =& \Tilde{O}(\frac{TV}{\alpha} + \frac{\alpha}{V} + T\sqrt{\frac{V}{\alpha}} + \sqrt{T}),\\
    \mathcal V^{hard}_d(T) =& \Tilde{O}(\frac{TV}{\gamma} + \frac{\alpha}{\gamma} + T\sqrt{\frac{V}{\alpha}} + \sqrt{T}),\\
    \mathcal V_l(t) =& \Tilde{O}(\frac{TV}{\eta}+\frac{\alpha}{\gamma}+\sqrt{\frac{V}{\alpha}}).
\end{align*}
Let the learning rates be $V = 1, \gamma = T^{2c}, \eta = T^{3/2},$ and $\alpha = T^{c}$ where $c \in [0.5, 1),$ we establish the trade-off in Remark \ref{remark: lr tuning} such that
\begin{align*}
    \mathcal R(T) = \Tilde{O}(T^{c} + T^{1-\frac{c}{2}} + \sqrt{T}),~
    \mathcal V^{hard}_d(T) = \Tilde{O}(T^{1-\frac{c}{2}} + \sqrt{T}),~
    \mathcal V_l(t) = \Tilde{O}(T^{-\frac{c}{2}}).
\end{align*}

\section{Proof of COCA with COCO-Best2Worlds Solver}\label{app:best}
Similar to COCA with COCO-Soft and with COCO-Hard, we prove COCA with COCO-Best2Worlds. According to the roadmap in Section \ref{sec:template}, we plug the regret and constraint violation of COCO-Best2Worlds in Theorem \ref{thm:ocowm-best} to justify Theorem \ref{thm:occa-best}.  

Let the learning rate be $V_t = \sqrt{t}, \gamma_t = t^{1.5}, \eta_t = t^{1.5}, \alpha_t = \lambda t^{1.5}/2$ and $\epsilon = \log^3 T/\sqrt{T}.$ We again define the parameters in this section as follows: $\nu_{\max} := E+LD+\epsilon, \theta := \frac{2(LD+\nu_{\max}^2)}{\delta},$ and
$Q_{\max}:=\theta + \nu_{\max} + \frac{16 \nu_{\max}^2}{\delta} \left(\log\frac{128 \nu_{\max}^2}{\delta^2}+2\log T\right).$

\subsection{Performance of COCA with COCO-Best2Worlds Solver in Theorem \ref{thm:occa-best}}

{\bf Regret analysis:} we have the following regret decomposition 
\begin{align}
\mathcal R(T) 
=& \sum_{t=1}^T \left[c_t(x_t^{\pi}, u_t^{\pi}) -  c_t(\tilde x_t^{\pi}, \tilde u_t^{\pi})\right] 
+ \sum_{t=1}^T  c_t(\tilde x_t^{\pi}, \tilde u_t^{\pi}) - \min_{\pi \in \tilde{\Omega} \bigcap
 \mathcal E}\sum_{t=1}^T c_t(\tilde x_t^{\pi}, \tilde u_t^{\pi}) \nonumber\\
&+\min_{\pi \in \tilde{\Omega} \bigcap
 \mathcal E}\sum_{t=1}^T c_t(\tilde x_t^{\pi}, \tilde u_t^{\pi}) - \sum_{t=1}^T c_t(x^{K^*}_t, u^{K^*}_t) \nonumber\\
 \leq& 1 + LH^2\sqrt{\frac{(2LD+\nu_{\max}^2)T\log T}{\lambda}} + \frac{L^2\log T}{\lambda} + 2\nu_{\max}^2 \sqrt{T} + \frac{\epsilon TLD}{\delta} + LD \nonumber\\
 =& O(\sqrt{T}\log^3 T)\nonumber
\end{align}
where the inequality holds because of the regret in Theorem \ref{thm:ocowm-best} and the approximated error in Lemma \ref{lem:fun-diff} and the representation ability of DAC policy in Lemma \ref{lem:dac-rep}. The order-wise result is established by substituting the learning rates.

{\bf Cumulative soft violation of $d_t$ function:} we have the following decomposition for the constraint function $d_t$ 
\begin{align*}
\mathcal V_d^{soft}(T)
\leq& \sum_{t=1}^T [d_t(x_t^\pi, u_t^\pi) - d_t(\tilde x_t^\pi, \tilde u_t^\pi)] + \sum_{t=1}^T d_t(\tilde x_t^\pi, \tilde u_t^\pi)\\
\leq& 1 + 3LH^2\sqrt{\frac{(2LD+\nu_{\max}^2)T\log T}{\lambda}} -T\epsilon \\
=& O(1)
\end{align*}
where the inequality holds because of the soft violation in Theorem \ref{thm:ocowm-best} and the approximated error in Lemma \ref{lem:fun-diff}. The order-wise result is established by substituting the learning rates and when $\log T \geq \frac{18L(LD+\nu_{\max}^2)}{\lambda}.$ 

{\bf Cumulative hard violation of $d_t$ function:} we have the following decomposition for $d_t$ function
\begin{align*}
\mathcal V_d^{hard}(T)
\leq& \sum_{t=1}^T [d_t^{+}(x_t^\pi, u_t^\pi) - d_t^{+}(\tilde x_t^\pi, \tilde u_t^\pi)] + \sum_{t=1}^T d_t^{+}(\tilde x_t^\pi, \tilde u_t^\pi)\\
\leq& 1 + 2LH^2\sqrt{\frac{(2LD+\nu_{\max}^2)T\log T}{\lambda}} + (2LD+\nu_{\max}^2)\log T\\
=& O(\sqrt{T}\log^3 T)
\end{align*}
where the inequality holds because of the hard violation in Theorem \ref{thm:ocowm-best} and the approximated error in Lemma \ref{lem:fun-diff}. 

{\bf Anytime violation of $l$ function:} we have the following decomposition for the constraint function $l$
\begin{align*}
\mathcal V_l(t)
=& l(x_t^\pi, u_t^\pi) - l(\tilde x_t^\pi, \tilde u_t^\pi) + l(\tilde x_t^\pi, \tilde u_t^\pi)\\
\leq& \frac{1}{T}  + 2H^2\sqrt{\frac{(4\sqrt{t}LD + 2\nu_{\max}^2 +  Q^2_{\max})}{2\lambda t^{1.5}}}\\
 =& O(1/\log T)
\end{align*}
where the inequality holds because of the anytime violation in Theorem \ref{thm:ocowm-best} and the approximated error in Lemma \ref{lem:fun-diff}. The order-wise result is established when $t \geq \frac{10LD\log^6 T}{\lambda}$ and $\log^3 T \geq \frac{(2\lambda\nu_{\max}^2 + Q_{\max}^2)}{(LD)^{1.5}}.$

\subsection{Performance of COCO-Best2Worlds Solver in Theorem \ref{thm:ocowm-best}}
Note the term of $H$-step difference can be bounded as follows
\begin{align}
\sum_{i=1}^{H} \|\mathbf M_{t-i} - \mathbf M_{t}\| =& \sum_{i=1}^{H} \sum_{j=1}^{i}  \|\mathbf M_{t-j} - \mathbf M_{t-j+1}\|. \nonumber
\end{align}
Based on Lemma \ref{lemma:diff-best}, we have 
\begin{align*}
\sum_{t=1}^T|f_t(\mathbf M_{t-H:t}) - f_t(\mathbf M_{t})|  
\leq&  L\sqrt{\frac{(2LD+\nu_{\max}^2)T\log T}{\lambda}}, \\
\sum_{t=1}^T|g_t(\mathbf M_{t-H:t}) - g_t(\mathbf M_{t})|  \leq&  LH^2\sqrt{\frac{(2LD+\nu_{\max}^2)T\log T}{\lambda}}, \\
\sum_{t=1}^T|g_t^{+}(\mathbf M_{t-H:t}) - g_t^{+}(\mathbf M_{t})|  \leq&  LH^2\sqrt{\frac{(2LD+\nu_{\max}^2)T\log T}{\lambda}}, \\
|h(\mathbf M_{t-H:t}) - h(\mathbf M_{t})| \leq& LH^2\sqrt{\frac{(4\sqrt{t}LD + 2\nu_{\max}^2 +  Q^2_{\max})}{2\lambda t^{1.5}}}.
\end{align*}
Finally, in conjugation with Lemma \ref{lemma:ocowm-best}, we prove Theorem \ref{thm:ocowm-best} 
as follows:
\begin{align*}
\mathcal R_f(T) \leq&  \sum_{t=1}^T|f_t(\mathbf M_{t-H:t}) - f_t(\mathbf M_{t})| +
\sum_{t=1}^T f_t(\mathbf M_{t}) - f_t(\mathbf M^*) \\
\leq& LH^2\sqrt{\frac{(2LD+\nu_{\max}^2)\log T}{\lambda}} + \frac{L^2\log T}{\lambda} + 2\nu_{\max}^2 \sqrt{T} + \frac{\epsilon TLD}{\delta}, ~\\
\mathcal V_g^{hard}(T) \leq&  \sum_{t=1}^T|g_t(\mathbf M_{t-H:t}) - g_t(\mathbf M_{t})| +
\sum_{t=1}^T g_t(\mathbf M_{t}) \\
\leq& 2LH^2\sqrt{\frac{(2LD+\nu_{\max}^2)\log T}{\lambda}} + (2LD+\nu_{\max}^2)\log T ,\\
\mathcal V_g^{soft}(T) \leq&  \sum_{t=1}^T|g_t(\mathbf M_{t-H:t}) - g_t(\mathbf M_{t})| +
\sum_{t=1}^T g_t(\mathbf M_{t}) \\
\leq& 3LH^2\sqrt{\frac{(2LD+\nu_{\max}^2)\log T}{\lambda}} -T\epsilon ,\\
\mathcal V_h(t) \leq& |h(\mathbf M_{t-H:t}) - h(\mathbf M_{t})| + h(\mathbf M_{t})\\
\leq&2LH^2\sqrt{\frac{(4\sqrt{t}LD + 2\nu_{\max}^2 +  Q^2_{\max})}{2\lambda t^{1.5}}}.
\end{align*}

\begin{lemma}
Under COCO-Best2Worlds solver, we have \begin{align*}
    \sum_{t=1}^T f_t(\mathbf M_{t}) - f_t(\mathbf M^*) \leq& \frac{L^2\log T}{\lambda} + 2\nu_{\max}^2 \sqrt{T} + \frac{\epsilon TLD}{\delta}, ~\\
    \sum_{t=1}^T g_t^{+}(\mathbf M_{t}) \leq&  L\sqrt{\frac{(2LD+\nu_{\max}^2)T\log T}{\lambda}} + (2LD+\nu_{\max}^2)\log T, \\
    ~\sum_{t=1}^T g_t(\mathbf M_{t}) \leq& 2L\sqrt{\frac{(2LD+\nu_{\max}^2)T\log T}{\lambda}}-T\epsilon,\\
    h(\mathbf M_t) \leq& \frac{2LD\sqrt{t} +\nu_{\max}^2 + Q_{\max}^2}{t^{1.5}}, ~\forall t \in [T].\\
\end{align*}
with the probability at least $1-1/T.$
\label{lemma:ocowm-best}
\end{lemma}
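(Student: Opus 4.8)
The plan is to reuse the two-lemma template of COCO-Soft (Lemmas~\ref{lem:pushback}--\ref{lemma:diff}) and COCO-Hard (Lemmas~\ref{lem:pushback-hard}--\ref{lemma:diff-hard}), replacing the constant learning rates by $V_t=\sqrt t$, $\gamma_t=\eta_t=t^{1.5}$, $\alpha_t=\lambda t^{1.5}/2$, and additionally exploiting the $\lambda$-strong convexity of $f_t$ that the choice $\alpha_t=\lambda t^{1.5}/2$ is matched to. First I would prove the push-back inequality: since the surrogate minimized at step $t$ is $2\alpha_t$-strongly convex, Lemma~\ref{lemma:tool} gives, for every $\mathbf M\in\mathcal M$,
\begin{align*}
&V_t\langle \mathbf M_{t+1}-\mathbf M_t,\nabla f_t(\mathbf M_t)\rangle + Q_t\langle \mathbf M_{t+1}-\mathbf M_t,\nabla g_t(\mathbf M_t)\rangle + \gamma_t g_t^{+}(\mathbf M_{t+1}) + \eta_t h^{+}(\mathbf M_{t+1}) + \alpha_t\|\mathbf M_{t+1}-\mathbf M_t\|^2\\
&\le V_t\langle \mathbf M-\mathbf M_t,\nabla f_t(\mathbf M_t)\rangle + Q_t\langle \mathbf M-\mathbf M_t,\nabla g_t(\mathbf M_t)\rangle + \gamma_t g_t^{+}(\mathbf M) + \eta_t h^{+}(\mathbf M) + \alpha_t\|\mathbf M-\mathbf M_t\|^2 - \alpha_t\|\mathbf M-\mathbf M_{t+1}\|^2.
\end{align*}
Squaring the virtual-queue update and using Assumptions~\ref{assumption:set}--\ref{assumption:obj} exactly as in \eqref{eq:vq} gives $\tfrac12 Q_{t+1}^2-\tfrac12 Q_t^2\le Q_t\big(g_t(\mathbf M_t)+\langle\mathbf M_{t+1}-\mathbf M_t,\nabla g_t(\mathbf M_t)\rangle+\epsilon\big)+2\nu_{\max}^2$; adding $Q_t g_t(\mathbf M_t)$ to the push-back bound and invoking convexity of $g_t$ turns the $Q_t$-inner-product terms into $Q_t g_t(\mathbf M)$, and combining the two yields the single key inequality (the time-varying analogue of \eqref{eq:soft key} and \eqref{eq: hard key}) on which everything else rests.

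For the regret I would add $V_t f_t(\mathbf M_t)$ to both sides of the key inequality, cancel $V_t\langle\mathbf M_{t+1}-\mathbf M_t,\nabla f_t\rangle$ against $\alpha_t\|\mathbf M_{t+1}-\mathbf M_t\|^2$ up to $V_t^2L^2/(4\alpha_t)$, and replace $V_t\langle\mathbf M-\mathbf M_t,\nabla f_t\rangle$ by $V_t(f_t(\mathbf M)-f_t(\mathbf M_t))-\tfrac{\lambda V_t}{2}\|\mathbf M-\mathbf M_t\|^2$ using strong convexity. Dividing by $V_t$ makes the quadratic coefficients $\alpha_t/V_t-\lambda/2=\lambda(t-1)/2=\alpha_{t-1}/V_{t-1}$, so the $\|\mathbf M-\mathbf M_t\|^2$ terms telescope; summing, with $\mathbf M$ a feasible point of the $\epsilon$-tight baseline so that $Q_t(g_t(\mathbf M)+\epsilon)\le 0$ can be dropped, leaves $\sum_t V_tL^2/(4\alpha_t)=O(L^2\log T/\lambda)$, $\sum_t 2\nu_{\max}^2/V_t=O(\nu_{\max}^2\sqrt T)$, a boundary term $O(\alpha_1 D^2/V_1)$, and an Abel-summation residual of the $Q_t^2/V_t$ telescope bounded by $Q_{\max}^2$ of polylogarithmic size; Lemma~\ref{lem:fun-tight} then converts the $\epsilon$-tight optimum into $\mathbf M^*$ at cost $\epsilon TLD/\delta$, producing the stated regret.

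For the soft violation I would telescope the queue lower bound $Q_{t+1}\ge Q_t+g_t(\mathbf M_t)+\langle\mathbf M_{t+1}-\mathbf M_t,\nabla g_t(\mathbf M_t)\rangle+\epsilon$ to get $\sum_t g_t(\mathbf M_t)\le Q_{T+1}+L\sum_t\|\mathbf M_t-\mathbf M_{t+1}\|-T\epsilon$, bound $Q_{T+1}\le Q_{\max}$ by a \emph{single-step} Lyapunov-drift argument (Slater, Assumption~\ref{assumption:slater}, together with the queue drift already forces $\mathbb E[Q_{t+1}-Q_t\mid\mathcal H_t]\le-\delta/2$ once $Q_t\ge\theta$, so unlike Lemma~\ref{lemma:VQ} no $S$-step window is needed) combined with Lemma~\ref{lemma:drift} at confidence $1-1/T^2$, and control $\sum_t\|\mathbf M_t-\mathbf M_{t+1}\|$ via Lemma~\ref{lemma:diff-best}; with $\epsilon=\log^3T/\sqrt T$ this cancels against $-T\epsilon$ to leave the claimed bound. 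For the hard violation I would put into the key inequality an $\mathbf M$ feasible for every $g_t$ (so $g_t^{+}(\mathbf M)=h^{+}(\mathbf M)=0$), drop the nonnegative queue term, obtain $\gamma_t g_t^{+}(\mathbf M_{t+1})+\alpha_t\|\mathbf M_{t+1}-\mathbf M_t\|^2\le 2V_tLD+\alpha_t\|\mathbf M-\mathbf M_t\|^2-\alpha_t\|\mathbf M-\mathbf M_{t+1}\|^2$, sum (the growing $\alpha_t$ forcing an Abel-summation step) to bound $\sum_t g_t^{+}(\mathbf M_{t+1})$, and pass from $\mathbf M_{t+1}$ to $\mathbf M_t$ using $L$-Lipschitzness of $g_t^{+}$ and the per-step bound from Lemma~\ref{lemma:diff-best}. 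Finally, for the anytime violation I would take $\mathbf M$ with $h^{+}(\mathbf M)=0$ in the push-back inequality, isolate $\eta_t h^{+}(\mathbf M_{t+1})$, and on the event $Q_t\le Q_{\max}$ bound its right side by $2(V_t+Q_{\max})LD+\alpha_tD^2$, giving $h^{+}(\mathbf M_{t+1})\le(2(V_t+Q_{\max})LD+\alpha_tD^2)/\eta_t=O\big((LD\sqrt t+\nu_{\max}^2+Q_{\max}^2)/t^{1.5}\big)$ since $Q_{\max}=O(\nu_{\max}^2)$ up to logs. The main obstacle is the bookkeeping created by the time-varying rates: the $\alpha_t$-terms no longer telescope for free, so the strongly convex cancellation in the regret and the per-step drift bound for $Q_t$ must both be re-derived with $t$-dependent coefficients (several Abel-summation manipulations), and one must verify that the union bound over $t\in[T]$ of the queue concentration inflates $Q_{\max}$ only by a $\log T$ factor; everything else is a careful re-run of the COCO-Soft derivation with $(V,\gamma,\eta,\alpha)$ replaced by $(V_t,\gamma_t,\eta_t,\alpha_t)$ and an extra $\tfrac{\lambda}{2}\|\cdot\|^2$ from strong convexity.
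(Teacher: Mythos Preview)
Your overall plan matches the paper's proof closely: the push-back lemma (the time-varying analogue of Lemmas~\ref{lem:pushback} and \ref{lem:pushback-hard}), the combination with the squared-queue increment to obtain a single ``key inequality'', the strong-convexity cancellation $\alpha_t/V_t-\lambda/2=\alpha_{t-1}/V_{t-1}$ that makes the $\|\mathbf M-\mathbf M_t\|^2$ terms telescope after dividing by $V_t$, the single-step (rather than $S$-step) drift for the soft violation, and the Lipschitz shift $g_t^{+}(\mathbf M_t)-g_t^{+}(\mathbf M_{t+1})$ for the hard violation are all exactly what the paper does. Your observation that no multi-step drift window is needed here is correct and is the point of the Best2Worlds design.

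There is one genuine slip in the anytime-violation step. You propose to isolate $\eta_t h^{+}(\mathbf M_{t+1})$ from the raw push-back inequality and bound the right side by $2(V_t+Q_{\max})LD+\alpha_tD^2$, then claim this equals $O\big((LD\sqrt t+\nu_{\max}^2+Q_{\max}^2)/t^{1.5}\big)$ after dividing by $\eta_t$. But $\alpha_t D^2/\eta_t=(\lambda t^{1.5}/2)D^2/t^{1.5}=\lambda D^2/2$ is a \emph{constant}, so your bound does not decay in $t$ and does not recover the stated $O(t^{-1.5})$ behaviour. The paper instead works from the combined key inequality \eqref{eq: strongly key} (i.e.\ after incorporating the queue drift and the strong-convexity term), which is why the numerator it records is $2V_tLD+\nu_{\max}^2+Q_{\max}^2$ rather than your $(V_t+Q_{\max})LD+\alpha_tD^2$; the $Q_{\max}^2$ in the lemma statement comes from moving $\tfrac12 Q_t^2$ to the right side, not from a linear $Q_tLD$ term. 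You should redo this step starting from the combined inequality rather than the raw push-back. (In fairness, the paper's own write-up of this step is terse about the $(\alpha_t-\lambda V_t/2)\|\cdot\|^2-\alpha_t\|\cdot\|^2$ residual, so some care is needed either way.)

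Two minor remarks: the Abel-summation residual of $\sum_t(Q_{t+1}^2-Q_t^2)/(2V_t)$ is actually nonnegative because $V_t$ is increasing and $Q_1=0$, so you can drop it on the left without invoking $Q_{\max}$; and in the hard-violation step the queue difference $\tfrac12 Q_{t+1}^2-\tfrac12 Q_t^2$ is not nonnegative per step, so ``drop the nonnegative queue term'' should be phrased as ``telescope the queue term over $t$ (via Abel summation, since $\gamma_t$ varies) and drop the nonnegative result''.
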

Further, we establish the difference between $\mathbf M_{t+1}$ and $\mathbf M_{t}.$    
\begin{lemma}
Under COCO-Best2Worlds , we have \begin{align*}
      \sum_{t=1}^T\|\mathbf M_{t+1} - \mathbf M_{t}\| \leq&  \sqrt{\frac{(2LD+\nu_{\max}^2)T\log T}{\lambda}},\\
      \|\mathbf M_{t+1} - \mathbf M_{t}\| \leq&  \sqrt{\frac{4\sqrt{t}LD + 2\nu_{\max}^2 +  Q^2_{\max}}{2\lambda t^{1.5}}}, ~\forall t \in [T],
\end{align*}\label{lemma:diff-best}
\end{lemma}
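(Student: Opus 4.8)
\emph{Proof proposal for Lemma~\ref{lemma:diff-best}.} The plan is to run the stability argument exactly as for COCO-Hard (Lemma~\ref{lemma:diff-hard}), but now carrying the virtual-queue drift through the computation, which is what produces the extra $2\nu_{\max}^2$ and $Q_{\max}^2$ terms. First I would establish the per-step ``key inequality'' for COCO-Best2Worlds, obtained in the same manner as \eqref{eq: key-hard} for COCO-Hard and \eqref{eq:soft key} for COCO-Soft: apply the push-back Lemma~\ref{lemma:tool} to the $2\alpha_t$-strongly convex surrogate whose minimizer is $\mathbf M_{t+1}$, add $V_t f_t(\mathbf M_t)$ and $Q_t g_t(\mathbf M_t)$ to both sides, use convexity of $f_t$ and $g_t$, and fold in the squared virtual-queue inequality $\tfrac12 Q_{t+1}^2-\tfrac12 Q_t^2\le Q_t\big(g_t(\mathbf M_t)+\langle \mathbf M_{t+1}-\mathbf M_t,\nabla g_t(\mathbf M_t)\rangle+\epsilon\big)+2\nu_{\max}^2$. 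Together with $|f_t(\mathbf M)-f_t(\mathbf M_t)|\le LD$ and $|\langle \mathbf M_{t+1}-\mathbf M_t,\nabla f_t(\mathbf M_t)\rangle|\le LD$ (Assumptions~\ref{assumption:set}--\ref{assumption:obj}), this gives, for any feasible comparator $\mathbf M$ (so $g_t^+(\mathbf M)=h^+(\mathbf M)=0$), an inequality of the form $\gamma_t g_t^+(\mathbf M_{t+1})+\eta_t h^+(\mathbf M_{t+1})+\tfrac12 Q_{t+1}^2-\tfrac12 Q_t^2+\alpha_t\|\mathbf M_{t+1}-\mathbf M_t\|^2\le 2V_tLD+Q_t(g_t(\mathbf M)+\epsilon)+2\nu_{\max}^2+\alpha_t\|\mathbf M-\mathbf M_t\|^2-\alpha_t\|\mathbf M-\mathbf M_{t+1}\|^2$.

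For the per-step bound I would specialize the comparator to $\mathbf M=\mathbf M_t$: the proximal terms on the right collapse to $-\alpha_t\|\mathbf M_{t+1}-\mathbf M_t\|^2$, the penalties $\gamma_t g_t^+(\mathbf M_{t+1})$ and $\eta_t h^+(\mathbf M_{t+1})$ on the left are nonnegative and may be dropped, and the queue contribution $Q_t(g_t(\mathbf M_t)+\epsilon)+\tfrac12 Q_t^2-\tfrac12 Q_{t+1}^2$ is absorbed using the virtual-queue recursion together with the high-probability event $\{Q_t\le Q_{\max}\ \forall t\}$, which holds with probability at least $1-1/T$ by the multi-step Lyapunov-drift Lemma~\ref{lemma:drift} (the COCO-Best2Worlds analogue of Lemma~\ref{lemma:VQ}); the residual penalty terms $\gamma_t g_t^+(\mathbf M_t)$, $\eta_t h^+(\mathbf M_t)$ are handled by cancelling against their $\mathbf M_{t+1}$-counterparts via convexity/Lipschitzness of $g_t^+,h^+$ and by the anytime feasibility estimate $\eta_t h^+(\mathbf M_t)\le 2V_tLD+\nu_{\max}^2+Q_{\max}^2$ already proved in Lemma~\ref{lemma:ocowm-best}. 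After collecting constants this yields $2\alpha_t\|\mathbf M_{t+1}-\mathbf M_t\|^2\le 2V_tLD+\nu_{\max}^2+\tfrac12 Q_{\max}^2$; since $2\alpha_t=\lambda t^{1.5}$ and $V_t=\sqrt t$ this is exactly the claimed $\|\mathbf M_{t+1}-\mathbf M_t\|^2\le (4\sqrt t\,LD+2\nu_{\max}^2+Q_{\max}^2)/(2\lambda t^{1.5})$.

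For the cumulative bound I would sum this squared-difference inequality over $t$ and apply Cauchy--Schwarz, $\sum_{t=1}^T\|\mathbf M_{t+1}-\mathbf M_t\|\le\big(\sum_t \tfrac{1}{2\alpha_t}\big)^{1/2}\big(\sum_t 2\alpha_t\|\mathbf M_{t+1}-\mathbf M_t\|^2\big)^{1/2}$, using $\sum_t\alpha_t^{-1}=\tfrac{2}{\lambda}\sum_t t^{-3/2}=O(1/\lambda)$ and $\sum_t(2V_tLD+\nu_{\max}^2+\tfrac12 Q_{\max}^2)=O\big((2LD+\nu_{\max}^2)T\log T\big)$ to reach the stated $\sqrt{(2LD+\nu_{\max}^2)T\log T/\lambda}$ (being slightly lossy in the $\log T$ factor). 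The main obstacle is the bookkeeping forced by the time-varying learning rates: $\gamma_t,\eta_t$ and $\alpha_t$ are all $\Theta(t^{3/2})$, so the per-step penalty residuals at $\mathbf M_t$ cannot simply be discarded and must be cancelled against the penalties at $\mathbf M_{t+1}$ (which requires first proving the anytime feasibility estimate of Lemma~\ref{lemma:ocowm-best}), and the virtual-queue terms must be controlled on the high-probability event — so the order of the argument (queue bound $\to$ anytime estimate $\to$ difference lemma) and the probabilistic union bound have to be threaded carefully.
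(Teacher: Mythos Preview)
Your per-step argument is essentially the paper's: specialize the comparator to $\mathbf M=\mathbf M_t$ in the key inequality, drop the nonnegative penalty terms on the left, and control the virtual-queue contribution by $\tfrac12 Q_t^2\le \tfrac12 Q_{\max}^2$ on the high-probability event. That part is fine.

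The cumulative bound, however, does not go through as you describe. You propose to sum the per-step inequality $2\alpha_t\|\mathbf M_{t+1}-\mathbf M_t\|^2\le 2V_tLD+\nu_{\max}^2+\tfrac12 Q_{\max}^2$ and then apply the weighted Cauchy--Schwarz
\[
\sum_{t=1}^T\|\mathbf M_{t+1}-\mathbf M_t\|\le\Big(\sum_t \tfrac{1}{2\alpha_t}\Big)^{1/2}\Big(\sum_t 2\alpha_t\|\mathbf M_{t+1}-\mathbf M_t\|^2\Big)^{1/2}.
\]
The problem is your growth estimate for the second factor: with $V_t=\sqrt t$ one has $\sum_{t=1}^T 2V_tLD = 2LD\sum_{t=1}^T\sqrt t=\Theta(LD\,T^{3/2})$, not $O\big((2LD+\nu_{\max}^2)T\log T\big)$. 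Combined with $\sum_t(2\alpha_t)^{-1}=O(1/\lambda)$, the weighted Cauchy--Schwarz then only gives $\sum_t\|\mathbf M_{t+1}-\mathbf M_t\|=O(T^{3/4}/\sqrt\lambda)$, which is strictly worse than the claimed $\sqrt{(2LD+\nu_{\max}^2)T\log T/\lambda}$. The per-step bound is simply too loose to sum.

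The paper avoids this by \emph{not} summing the per-step bound. Instead it takes the key inequality \eqref{eq: key-best} with a \emph{feasible} comparator $\mathbf M^\epsilon$ (so $g_t^+(\mathbf M^\epsilon)=h^+(\mathbf M^\epsilon)=0$ and $Q_tg_t(\mathbf M^\epsilon)\le 0$), sums it over $t$, and lets the virtual-queue increments telescope to $\tfrac12 Q_{T+1}^2\ge 0$ (with $Q_1=0$) and the proximal terms telescope; this yields directly
\[
\sum_{t=1}^T\|\mathbf M_{t+1}-\mathbf M_t\|^2\le \sum_{t=1}^T\frac{2V_tLD+\nu_{\max}^2}{\alpha_t}\le \frac{(2LD+\nu_{\max}^2)\log T}{\lambda},
\]
with no $Q_{\max}$ term at all, because the ratio $(2V_tLD+\nu_{\max}^2)/\alpha_t=O(1/t)$. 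A single unweighted Cauchy--Schwarz $\sum_t\|\cdot\|\le\sqrt T\,\sqrt{\sum_t\|\cdot\|^2}$ then gives the stated bound. So the missing idea in your cumulative argument is to keep the feasible comparator and exploit the telescoping of both the queue and proximal terms before passing to Cauchy--Schwarz; summing the pointwise per-step estimate destroys exactly this telescoping and loses a factor of $T^{1/4}$.
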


\subsubsection{Proof of Lemma \ref{lemma:ocowm-best}}
To prove Lemma \ref{lemma:ocowm-best}, we first introduce the following key lemma.
\begin{lemma}
For any $\mathbf M \in \mathcal M,$ we have 
\begin{align*}
    &V_t\langle \mathbf M_{t+1} - \mathbf M_{t}, \nabla f_t(\mathbf M_t) \rangle + Q_t \langle \mathbf M_{t+1} - \mathbf M_{t}, \nabla g_t(\mathbf M_t) \rangle + \gamma_t g^{+}_t(\mathbf M_{t+1}) + \eta_t h^{+}(\mathbf M_{t+1}) \nonumber \\ 
    &~+ \alpha_t \|\mathbf M_{t+1} - \mathbf M_{t}\|^2 \\
    \leq& V_t\langle \mathbf M - \mathbf M_{t}, \nabla f_t(\mathbf M_t) \rangle + Q_t \langle \mathbf M-\mathbf M_t, \nabla g_t(\mathbf M_t) \rangle + \gamma_t g_t^{+}(\mathbf M) + \eta_t h^{+}(\mathbf M) \nonumber \\
    & ~+ \alpha_t \|\mathbf M - \mathbf M_{t}\|^2 - \alpha_t \|\mathbf M - \mathbf M_{t+1}\|^2
\end{align*}\label{lem:pushback-best}
\end{lemma}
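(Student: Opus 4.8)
The plan is to follow verbatim the argument used for Lemma~\ref{lem:pushback} and Lemma~\ref{lem:pushback-hard}: express $\mathbf M_{t+1}$ as the exact minimizer over the convex set $\mathcal M$ of a strongly convex function, and then invoke Lemma~\ref{lemma:tool}. Concretely, I would set
\begin{align*}
F(\mathbf M) :=& V_t\langle \mathbf M - \mathbf M_{t}, \nabla f_t(\mathbf M_t) \rangle + Q_t\langle \mathbf M - \mathbf M_{t}, \nabla g_t(\mathbf M_t) \rangle \\
& + \gamma_t g_t^{+}(\mathbf M) + \eta_t h^{+}(\mathbf M) + \alpha_t\|\mathbf M - \mathbf M_{t}\|^2,
\end{align*}
which is exactly the control surrogate function minimized in the \textbf{Control} step of COCO-Best2Worlds, so that $\mathbf M_{t+1} = \argmin_{\mathbf M\in\mathcal M} F(\mathbf M)$.

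The first step is to verify that $F$ is $2\alpha_t$-strongly convex on $\mathcal M$. The two inner-product terms are affine in $\mathbf M$ (the gradients are frozen at $\mathbf M_t$), hence convex; $g_t^{+}=\max\{g_t,0\}$ and $h^{+}=\max\{h,0\}$ are convex because $g_t$ and $h$ are convex by Assumption~\ref{assumption:obj} and the pointwise maximum with the constant $0$ preserves convexity, with $\gamma_t,\eta_t\ge 0$; and $\alpha_t\|\mathbf M - \mathbf M_t\|^2$ is $2\alpha_t$-strongly convex. Since $\mathcal M$ is convex by Assumption~\ref{assumption:set}, the minimizer is well defined and Lemma~\ref{lemma:tool} applies with strong-convexity modulus $2\alpha_t$, yielding, for every $\mathbf M\in\mathcal M$,
\begin{align*}
F(\mathbf M_{t+1}) \le F(\mathbf M) - \alpha_t\|\mathbf M - \mathbf M_{t+1}\|^2.
\end{align*}

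Finally I would expand both sides of this inequality and collect terms (keeping the differences $\mathbf M_{t+1}-\mathbf M_t$ and $\mathbf M-\mathbf M_t$ exactly as in the statement), which reproduces the claimed bound verbatim. There is essentially no obstacle in this lemma; the only points worth stating explicitly are the convexity of the $(\cdot)^{+}$ compositions and the fact that $F$ restricted to $\mathcal M$ admits a minimizer, both immediate from Assumptions~\ref{assumption:set}--\ref{assumption:obj}. This per-step inequality is the building block that will later be combined with the virtual-queue drift estimate (in the spirit of \eqref{eq:vq}) and with the $\alpha$-strong convexity of $f_t$ and the covariance lower bound in Assumption~\ref{assumption: strongly convex} to telescope into the regret and soft/hard violation bounds of Theorem~\ref{thm:ocowm-best}.
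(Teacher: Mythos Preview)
Your proposal is correct and matches the paper's own proof essentially verbatim: define the surrogate $F$, note it is $2\alpha_t$-strongly convex on the convex set $\mathcal M$, and apply Lemma~\ref{lemma:tool} with $u_{\mathrm{opt}}=\mathbf M_{t+1}$. Your explicit verification of the convexity of $g_t^{+}$ and $h^{+}$ is a small clarification the paper omits, but otherwise the arguments are identical.
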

\begin{proof}
\normalfont
The proof is a direct application of Lemma \ref{lemma:tool}. 
Let $$F(\mathbf M) = V_t\langle \mathbf M - \mathbf M_{t}, \nabla f_t(\mathbf M) \rangle + Q_t \langle \mathbf M - \mathbf M_{t}, \nabla g_t(\mathbf M_t) \rangle + \gamma_t g^{+}(\mathbf M) + \eta_t h^{+}(\mathbf M).$$ Since $F(\mathbf M)+ \alpha_t \|\mathbf M - \mathbf M_{t}\|^2$ is $2\alpha_t$-strongly convex, the proof is completed by Lemma \ref{lemma:tool}. 
\end{proof} 
By adding $g_t(\mathbf M_t)$ into the inequality in Lemma \ref{lem:pushback-best}, we have
for any $\mathbf M \in \mathcal M$ such that
\begin{align}
    &V_t\langle \mathbf M_{t+1} - \mathbf M_{t}, \nabla f_t(\mathbf M_t) \rangle + Q_t[g_t(\mathbf M_t) + \langle \mathbf M_{t+1} - \mathbf M_{t}, \nabla g_t(\mathbf M_t) \rangle] + \gamma_t g^{+}_t(\mathbf M_{t+1})+ \eta_t h^{+}(\mathbf M_{t+1}) \nonumber\\
    &~~+ \alpha_t \|\mathbf M_{t+1} - \mathbf M_{t}\|^2 \nonumber\\
    \leq& V_t\langle \mathbf M - \mathbf M_{t}, \nabla f_t(\mathbf M_t) \rangle + Q_t[g_t(\mathbf M_t) + \langle \mathbf M-\mathbf M_t, \nabla g_t(\mathbf M_t) \rangle] + \gamma_t g^{+}_t(\mathbf M) + \eta_t h^{+}(\mathbf M) \nonumber\\
    &+ \alpha_t \|\mathbf M - \mathbf M_{t}\|^2 - \alpha_t \|\mathbf M - \mathbf M_{t+1}\|^2 \nonumber\\
    \leq& V_t\langle \mathbf M - \mathbf M_{t}, \nabla f_t(\mathbf M_t) \rangle + Q_tg_t(\mathbf M) + \eta_t h^{+}(\mathbf M) + \gamma_t g^{+}_t(\mathbf M) + \alpha_t \|\mathbf M - \mathbf M_{t}\|^2 - \alpha_t \|\mathbf M - \mathbf M_{t+1}\|^2, \label{eq:pushback-best}
\end{align}
where the last inequality holds because $g_t(\cdot)$ is convex. 
According to the virtual queue update as follows
\begin{align}
  Q_{t+1} = \left[Q_t + g_t(\mathbf M_t) + \langle \mathbf M_{t+1} - \mathbf M_{t}, \nabla g_t(\mathbf M_t) \rangle + \epsilon \right]^{+}, \nonumber
\end{align}
we have
\begin{align}
  \frac{1}{2}Q^2_{t+1} - \frac{1}{2} Q^2_{t} \leq& \frac{1}{2}\left(Q_t + g_t(\mathbf M_t) + \langle \mathbf M_{t+1} - \mathbf M_{t}, \nabla g_t(\mathbf M_t) \rangle  \right)^{2} - \frac{1}{2} Q^2_t \nonumber\\
  \leq& Q_t(g_t(\mathbf M_t) + \langle \mathbf M_{t+1} - \mathbf M_{t}, \nabla g_t(\mathbf M_t) \rangle  ) + \nu_{\max}^2, \label{eq:vq-best}
 \end{align}
where the first inequality holds because $(x^{+})^2 \leq x^2$ for any $x\in \mathbb R;$ the second inequality holds because of Assumptions \ref{assumption:set} and \ref{assumption:obj}; and recall $\nu_{\max} = E+LD+\epsilon$. 
 
By combining the two inequalities in \eqref{eq:pushback-best} and \eqref{eq:vq-best}, we have 
\begin{align}
    &V_t\langle \mathbf M_{t+1} - \mathbf M_{t}, \nabla f_t(\mathbf M_t)\rangle +  \frac{1}{2}Q^2_{t+1} - \frac{1}{2} Q^2_t + \gamma_t g^{+}_t(\mathbf M_{t+1}) + \eta_t h^{+}(\mathbf M_{t+1}) \nonumber\\
    &+ \alpha_t \|\mathbf M_{t+1} - \mathbf M_{t}\|^2 \label{eq:key}\\
    \leq& V_t\langle \mathbf M - \mathbf M_{t}, \nabla f_t(\mathbf M_t) \rangle + Q_t g_t(\mathbf M) + \nu_{\max}^2 + \gamma_t g^{+}_t(\mathbf M)+ \eta_t h^{+}(\mathbf M) +\alpha_t \|\mathbf M - \mathbf M_{t}\|^2 \nonumber\\
    &- \alpha_t \|\mathbf M - \mathbf M_{t+1}\|^2. \nonumber
\end{align}
By adding $V_tf_t(\mathbf M_t)$ on both sides of the inequality in \eqref{eq:key}, we have
\begin{align}
    &V_tf_t(\mathbf M_t) + V_t\langle \mathbf M_{t+1} - \mathbf M_{t}, \nabla f_t(\mathbf M_t)\rangle +  \frac{1}{2}Q^2_{t+1} - \frac{1}{2} Q^2_{t} + \gamma_t g^{+}_t(\mathbf M_{t+1})  + \eta_t h^{+}(\mathbf M_{t+1}) \nonumber \\
    &+ \alpha_t \|\mathbf M_{t+1} - \mathbf M_{t}\|^2 \nonumber\\
    \leq& V_tf_t(\mathbf M_t) + V_t\langle \mathbf M - \mathbf M_{t}, \nabla f_t(\mathbf M_t) \rangle + Q_t(g_t(\mathbf M)+\epsilon) + \nu_{\max}^2 + \gamma_t g^{+}_t(\mathbf M) + \eta_t h^{+}(\mathbf M) \nonumber\\
    &+ \alpha_t \|\mathbf M - \mathbf M_{t}\|^2 - \alpha_t \|\mathbf M - \mathbf M_{t+1}\|^2 \nonumber \\
    \leq& V_t f_t(\mathbf M) + Q_tg_t(\mathbf M) + \nu_{\max}^2 + \gamma_t g^{+}_t(\mathbf M) + \eta_t h^{+}(\mathbf M) \nonumber \\
    & + (\alpha_t - \lambda V_t/2) \|\mathbf M - \mathbf M_{t}\|^2 - \alpha_t \|\mathbf M - \mathbf M_{t+1}\|^2 \label{eq: strongly key}
\end{align}
where the last inequality holds because $f_t(\cdot)$ is $\alpha_t-$strongly convex $\forall t \in [T]$.

Based on the key inequality in \eqref{eq:key}, we establish the regret and constraint violations in Lemma \ref{lemma:ocowm-best} in the following. Recall the learning rate $V_t = \sqrt{t}, \gamma_t = t^{1.5}, \eta_t = t^{1.5},$ $\alpha_t = \lambda t^{1.5}/2,$ and $\epsilon = \log^3 T/\sqrt{T}.$
\vspace{7pt}

{\noindent \bf Regret Analysis:}
recall the following $\epsilon$-tightness problem of COCOwM as our baseline (with $\epsilon>0$): 
\begin{align*}
    \min_{\mathbf M\in \mathcal M}& ~ \sum_{t=1}^T f_t(\mathbf M)\\
    \text{s.t.}&~ 
     h(\mathbf M) + \epsilon \leq 0, ~ g_t(\mathbf M) + \epsilon \leq 0, \forall t \in [T].
\end{align*}
Let $\mathbf M = \mathbf M^{\epsilon}$ be any feasible solution to the offline problem such that $g_t(\mathbf M^{\epsilon}) + \epsilon \leq 0$. Taking the summation of \eqref{eq: strongly key} from time $t=1$ to $T,$ we have
\begin{align}
    &\sum_{t=1}^T f_t(\mathbf M_t) - f_t(\mathbf M^{\epsilon}) + \sum_{t=1}^T\left(\frac{Q^2_{t+1}}{2V_t}  - \frac{Q^2_t}{2V_t}\right)  \nonumber\\
    \leq& \sum_{t=1}^T \frac{V_t L^2}{4\alpha_t} + \sum_{t=1}^T \frac{\nu_{\max}^2}{V_t} + \sum_{t=1}^T \left(\frac{\alpha_t}{V_t} - \frac{\alpha_{t-1}}{V_{t-1}} - \frac{\lambda}{2}\right) \|\mathbf M^{\epsilon} - \mathbf M_{t}\|^2  , \nonumber
\end{align}
which, in conjugation with $V_t \geq V_{t-1},$ $Q_1=0,$ and $\frac{\alpha_t}{V_t} - \frac{\alpha_{t-1}}{V_{t-1}} - \frac{\lambda}{2} = 0,$ implies 
\begin{align}
    \sum_{t=1}^T f_t(\mathbf M_t) - f_t(\mathbf M^{\epsilon}) 
    \leq \sum_{t=1}^T \frac{V_t L^2}{4\alpha_t} + \sum_{t=1}^T \frac{\nu_{\max}^2}{V_t}. \label{eq:regret-best}
\end{align}
Therefore, we have for any $\mathbf M^{\epsilon} \in \mathcal M$ such that
\begin{align*}
 \sum_{t=1}^T f_t(\mathbf M_t) - f_t(\mathbf M) =& \sum_{t=1}^T f_t(\mathbf M_t) - f_t(\mathbf M^{\epsilon}) + \sum_{t=1}^T f_t(\mathbf M^{\epsilon}) - f_t(\mathbf M)\\
\leq& \sum_{t=1}^T \frac{V_t L^2}{4\alpha_t} + \sum_{t=1}^T \frac{\nu_{\max}^2}{V_t} + \frac{\epsilon TLD}{\delta} = \frac{L^2\log T}{\lambda} + 2\nu_{\max}^2 \sqrt{T} + \frac{\epsilon TLD}{\delta}
\end{align*}
where the first inequality holds by \eqref{eq:regret-best}; the second inequality holds by Lemma \ref{lem:tightness}. 

\vspace{7pt}
{\noindent \bf Hard Violation of $\mathcal V_g^{hard}(T)$:}
From \eqref{eq: strongly key}, we have
\begin{align}
    & \gamma_t g^{+}_t(\mathbf M_{t+1}) +  \frac{1}{2}Q^2_{t+1} - \frac{1}{2} Q^2_t + \alpha_t \|\mathbf M_{t+1} - \mathbf M_{t}\|^2 \nonumber\\
    \leq& V_t \left(f_t(\mathbf M)-f_t(\mathbf M_t\right)-\langle \mathbf M_{t+1} - \mathbf M_{t}, \nabla f_t(\mathbf M_t)\rangle) + \nu_{\max}^2 + (\alpha_t - \lambda V_t/2) \|\mathbf M^{\epsilon} - \mathbf M_{t}\|^2 - \alpha_t \|\mathbf M^{\epsilon} - \mathbf M_{t+1}\|^2 \nonumber\\
    \leq& 2V_tLD + \nu_{\max}^2 + (\alpha_t - \lambda V_t/2) \|\mathbf M^{\epsilon} - \mathbf M_{t}\|^2 - \alpha_t \|\mathbf M^{\epsilon} - \mathbf M_{t+1}\|^2 \label{eq: key-best}
\end{align}
which implies 
\begin{align*}
    \sum_{t=1}^T g^{+}_t(\mathbf M_{t+1})
    \leq \sum_{t=1}^T \frac{2V_tLD + \nu_{\max}^2}{\gamma_t} \leq (2LD+\nu_{\max}^2)\log T, \nonumber\\
    \sum_{t=1}^T \|\mathbf M_{t+1} - \mathbf M_{t}\|^2 
    \leq \sum_{t=1}^T \frac{2V_tLD + \nu_{\max}^2}{\alpha_t} \leq \frac{(2LD+\nu_{\max}^2)\log T}{\lambda}.
\end{align*}
Therefore, we have
\begin{align*}
    \sum_{t=1}^T g^{+}_t(\mathbf M_{t}) - g^{+}_t(\mathbf M_{t+1}) \leq& \sum_{t=1}^T \langle \nabla g_t^{+}(\mathbf M_t), \mathbf M_{t+1} - \mathbf M_t\rangle \\
    \leq& L\sum_{t=1}^T \|\mathbf M_{t+1} - \mathbf M_{t}\| \leq L\sqrt{\frac{(2LD+\nu_{\max}^2)T\log T}{\lambda}}.
\end{align*}
Finally, the cumulative hard violation is 
\begin{align*}
    \sum_{t=1}^T g^{+}_t(\mathbf M_{t}) =& \sum_{t=1}^T g^{+}_t(\mathbf M_{t}) - g^{+}_t(\mathbf M_{t+1}) + \sum_{t=1}^Tg^{+}_t(\mathbf M_{t+1})\\
    \leq& L\sqrt{\frac{(2LD+\nu_{\max}^2)T\log T}{\lambda}} + (2LD+\nu_{\max}^2)\log T.
\end{align*}

\vspace{7pt}
{\noindent \bf Soft Violation of $\mathcal V_g^{soft}(T)$:}
From \eqref{eq: strongly key}, we have
\begin{align*}
    &\frac{1}{2}Q^2_{t+1} - \frac{1}{2} Q^2_t \\
    \leq& V_t \left(f_t(\mathbf M)-f_t(\mathbf M_t\right)-\langle \mathbf M_{t+1} - \mathbf M_{t}, \nabla f_t(\mathbf M_t)\rangle) - \alpha_t \|\mathbf M_{t+1} - \mathbf M_{t}\|^2 + Q_t g_t(\mathbf M^{\epsilon})
 + \nu_{\max}^2  \nonumber \\
    & + \gamma_t g^{+}_t(\mathbf M^{\epsilon}) + \eta_t h^{+}(\mathbf M^{\epsilon}) + (\alpha_t - \lambda V_t/2) \|\mathbf M^{\epsilon} - \mathbf M_{t}\|^2 - \alpha_t \|\mathbf M^{\epsilon} - \mathbf M_{t+1}\|^2 \\
    \leq& \frac{V_t^2 LD}{2\alpha_t} + \nu_{\max}^2 + Q_t g_t(\mathbf M^{\epsilon}) + \gamma_t g^{+}_t(\mathbf M^{\epsilon}) + \eta_t h^{+}(\mathbf M^{\epsilon}) + (\alpha_t - \lambda V_t/2) \|\mathbf M^{\epsilon} - \mathbf M_{t}\|^2 - \alpha_t \|\mathbf M^{\epsilon} - \mathbf M_{t+1}\|^2.
\end{align*}
Therefore, our Lyapunov drift is established as follows
\begin{align}
    \mathbb E\left[\frac{1}{2}Q^2_{t+1} - \frac{1}{2} Q^2_{t}|\mathcal H_t\right]
    \leq - \delta Q_t + (LD + \nu_{\max}^2). \nonumber
\end{align}
Recall $Q_t \geq \theta := \frac{2(LD+\nu_{\max}^2)}{\delta},$ we have
\begin{align}
    \mathbb E\left[\frac{1}{2}Q^2_{t+1} - \frac{1}{2} Q^2_t|\mathcal H_t\right]
    \leq& - \frac{\delta Q_t}{2}, \nonumber
\end{align}
which implies conditional on $\mathcal H_t$
\begin{align}
    \mathbb E\left[Q^2_{t+1}|\mathcal H_t\right]
    \leq& Q^2_t - \delta Q_t\nonumber \\
    \leq & (Q_t - \frac{\delta}{2})^2. \nonumber
\end{align}
According to the Jensen's inequality $(\mathbb E\left[Q_{t+1}|\mathcal H_t\right])^2 \leq \mathbb E\left[Q^2_{t+1}|\mathcal H_t\right],$ we have
\begin{align}
    \mathbb E\left[Q_{t+1} - Q_t|\mathcal H_t\right]
    \leq&  - \frac{\delta}{2}, \nonumber
\end{align}
which, in conjugation with Lemma \ref{lemma:drift}, implies that $$\mathbb E[Q_t] \leq \theta + \nu_{\max} + \frac{16\nu_{\max}^2}{\delta} \log\frac{128 \nu_{\max}^2}{\delta^2}.$$
and 
$$\mathbb P\left(Q_t > \theta + \nu_{\max} + \frac{16 \nu_{\max}^2}{\delta} \left(\log\frac{128 \nu_{\max}^2}{\delta^2}+2\log T\right)\right) < \frac{1}{T^2}.$$

Let $Q_{\max}:= \theta + \nu_{\max} + \frac{16 \nu_{\max}^2}{\delta} \left(\log\frac{128 \nu_{\max}^2}{\delta^2}+2\log T\right).$ According to the virtual queue update, we have 
\begin{align*}
    \sum_{t=1}^T g_t(\mathbf M_t) \leq& Q_{T+1} + L\sum_{t=1}^T \|\mathbf M_t-\mathbf M_{t+1}\|- T\epsilon\\
    \leq& Q_{\max} + L\sum_{t=1}^T \|\mathbf M_t-\mathbf M_{t+1}\| - T\epsilon,
\end{align*}
hold with the probability at least $1-1/T.$ Finally, the proof is completed because of Lemma \ref{lemma:diff-best}.

\vspace{7pt}
{\noindent \bf Anytime Violation of $V_h(t)$}:
From \eqref{eq: strongly key}, we have
\begin{align*}
    & \eta_t h^{+}(\mathbf M_{t+1}) +  \frac{1}{2}Q^2_{t+1} - \frac{1}{2} Q^2_t + \alpha_t \|\mathbf M_{t+1} - \mathbf M_{t}\|^2 \\
    \leq& V_t \left(f_t(\mathbf M)-f_t(\mathbf M_t\right)-\langle \mathbf M_{t+1} - \mathbf M_{t}, \nabla f_t(\mathbf M_t)\rangle) + \nu_{\max}^2 + (\alpha_t - \lambda V_t/2) \|\mathbf M - \mathbf M_{t}\|^2 - \alpha_t \|\mathbf M - \mathbf M_{t+1}\|^2\\
    \leq& 2V_tLD + \nu_{\max}^2 + (\alpha_t - \lambda V_t) \|\mathbf M^{\epsilon} - \mathbf M_{t}\|^2 - \alpha_t \|\mathbf M^{\epsilon} - \mathbf M_{t+1}\|^2
\end{align*}
which implies 
\begin{align*}
     h^{+}(\mathbf M_{t})
    \leq \frac{2V_tLD +\nu_{\max}^2 + Q_{\max}^2}{\eta_t}=\frac{2LD\sqrt{t} +\nu_{\max}^2 + Q_{\max}^2}{t^{1.5}}, ~\forall t \in [T],
\end{align*}
holds with the probability at least $1-1/T.$ 

\subsection{Proof of Lemma \ref{lemma:diff-best}}
From \eqref{eq: key-best} in the proof of Lemma \ref{lemma:ocowm-best}, we have \begin{align*}
    \sum_{t=1}^T \|\mathbf M_{t+1} - \mathbf M_{t}\|^2 \leq \frac{(2LD+\nu_{\max}^2)\log T}{\lambda},
\end{align*}
which implies by Cauchy-Schwarz inequality
\begin{align*}
      \sum_{t=1}^T\|\mathbf M_{t+1} - \mathbf M_{t}\| \leq& \sqrt{\frac{(2LD+\nu_{\max}^2)T\log T}{\lambda}}.
\end{align*}
Let $\mathbf M^{\epsilon} = \mathbf M_t$ in \eqref{lemma:ocowm-best}, we have
\begin{align}
    2\alpha_t \|\mathbf M_{t+1} - \mathbf M_{t}\|^2 
    \leq 2V_tLD + \nu_{\max}^2 +  \frac{Q^2_{\max}}{2}  \nonumber
\end{align}
which implies $$\|\mathbf M_{t+1} - \mathbf M_{t}\| \leq \sqrt{\frac{4V_tLD + 2\nu_{\max}^2 +  Q^2_{\max}}{4\alpha_t}}.$$

\subsection{Specifying Parameters of COCO-Best2Worlds with $\tilde{c}_t(\mathbf M),\tilde{d}_t(\mathbf M), \tilde{l}(\mathbf M).$}
The parameters $D, L, E,$ and $\xi$ in Assumptions \ref{assumption:set}-\ref{assumption:slater}  are exactly the same as in COCO-Soft as follows
\begin{align*}
    D:=& \frac{2a}{\rho},~
    L:= \frac{ 2C_0W(1+\kappa)(\kappa^2 + Ha\kappa_B\kappa^2)}{\rho} + \frac{2aC_0W}{\rho}, \\
    E:=& C_0 D + C_1D, ~ 
    \xi:= \frac{\delta}{2}
\end{align*}
Moreover, according to Lemma 4.2 in \cite{AgaHazSin_19}, the assumption of $\Tilde{\lambda}$-strongly-convex cost function $c_t(\tilde x_t^{\pi}, \ u_t^{\pi})$  implies $c_t(\tilde x_t^{\pi}, \tilde u_t^{\pi})$ is also $
\lambda=\frac{\Tilde{\lambda}\sigma^2\rho^2}{36\kappa^{10}}$- strongly-convex under Assumption \ref{assumption: strongly convex}. Therefore, we specify $\lambda = \frac{\Tilde{\lambda}\sigma^2\rho^2}{36\kappa^{10}}$.

\section{Approximated Errors of Cost and Constraint Functions}
We quantify the approximated errors of cost and constraint functions in the following lemma, which are used to prove the regret and constraint violation in Theorems \ref{thm:occa}, \ref{thm:occa-hard}, and \ref{thm:occa-best} as suggested in the roadmap. 
\begin{lemma}
Under a policy in constrained DAC with the memory size $H=\Theta(\log T)$, we have
\begin{align*}
    \sum_{t=1}^T [c_t(x_t^\pi, u_t^\pi) - c_t(\tilde x_t^\pi, \tilde u_t^\pi)] =& O(1), \\
    \sum_{t=1}^T [d_t(x_t^\pi, u_t^\pi) - d_t(\tilde x_t^\pi, \tilde u_t^\pi)] =& O(1),\\
    l(x_t^\pi, u_t^\pi) - l(\tilde x_t^\pi, \tilde u_t^\pi) =& O(1/T). 
\end{align*}\label{lem:fun-diff}
\end{lemma}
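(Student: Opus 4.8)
The plan is a routine Lipschitz-plus-geometric-decay estimate. For each of the three quantities I will first bound the per-step gap between the true and the truncated function value by the Lipschitz constant supplied by Assumption~\ref{assumption: fun bound} times the state/action truncation error $\|x_t^\pi-\tilde x_t^\pi\|+\|u_t^\pi-\tilde u_t^\pi\|$; then I will invoke Lemma~\ref{lem: dac approx error}, which shows this truncation error is at most $(1-\rho)^H$ times a factor polynomial in $H$; and finally I will take the memory length $H=\Theta(\log T)$ with a large enough constant so that $(1-\rho)^H$ decays faster than $1/(T\,\mathrm{poly}\log T)$. This turns the sum over $t\in[T]$ into $O(1)$ (in fact $o(1)$) for the cost $c_t$ and for $d_t$, and turns the single-step $l$-gap into $O(1/T)$.

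For the first step, recall the bounds $\|x_t^\pi\|\le D$ and $\|u_t^\pi\|\le(\kappa+1)D$ from Appendix~\ref{app:coc}; the truncated quantities $\tilde x_t^\pi=\sum_{i=1}^{2H}\Psi_{t,i}^\pi w_{t-i}$ and $\tilde u_t^\pi=-K\tilde x_t^\pi+\sum_{i=1}^{H}{\mathbf M}^{[i]}_t w_{t-i}$ are built from the same bounded disturbance-state transfer matrices and hence obey the same bounds. Consequently all the points $(x_t^\pi,u_t^\pi)$, $(\tilde x_t^\pi,\tilde u_t^\pi)$ and the intermediate point $(\tilde x_t^\pi,u_t^\pi)$ lie in the convex region on which Assumption~\ref{assumption: fun bound} guarantees $\|\nabla_x c_t\|,\|\nabla_u c_t\|,\|\nabla_x d_t\|,\|\nabla_u d_t\|,\|\nabla_x l\|,\|\nabla_u l\|\le C_0D$. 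Writing $c_t(x_t^\pi,u_t^\pi)-c_t(\tilde x_t^\pi,\tilde u_t^\pi)=[c_t(x_t^\pi,u_t^\pi)-c_t(\tilde x_t^\pi,u_t^\pi)]+[c_t(\tilde x_t^\pi,u_t^\pi)-c_t(\tilde x_t^\pi,\tilde u_t^\pi)]$ and applying the mean value inequality to each bracket along the segment joining its two arguments gives
$$|c_t(x_t^\pi,u_t^\pi)-c_t(\tilde x_t^\pi,\tilde u_t^\pi)|\le C_0D\big(\|x_t^\pi-\tilde x_t^\pi\|+\|u_t^\pi-\tilde u_t^\pi\|\big),$$
and identically with $c_t$ replaced by $d_t$ or by $l$.

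For the second step, Lemma~\ref{lem: dac approx error} yields $\|x_t^\pi-\tilde x_t^\pi\|+\|u_t^\pi-\tilde u_t^\pi\|\le\Delta_H$ with $\Delta_H:=(1+\kappa)(1-\rho)^H\frac{W\kappa^2(\kappa^2+Ha\kappa_B\kappa^2)}{\rho(1-\kappa^2(1-\rho)^{H+1})}=O\!\big(H(1-\rho)^H\big)$, where $\kappa,\rho,W,a,\kappa_B$ are constants and for $H$ large the denominator is bounded below by a positive constant. Hence every per-step gap is at most $C_0D\,\Delta_H$, so $\sum_{t=1}^T|c_t(x_t^\pi,u_t^\pi)-c_t(\tilde x_t^\pi,\tilde u_t^\pi)|\le T\,C_0D\,\Delta_H$ and the same for $d_t$, while the single $l$-gap is at most $C_0D\,\Delta_H$. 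Choosing $H=\Theta(\log T)$ with a constant large enough that $(1-\rho)^H\le T^{-2}$ (possible since $(1-\rho)^H=e^{-H\log(1/(1-\rho))}$ with $\log(1/(1-\rho))>0$) makes $\Delta_H=O(T^{-2}\log T)$, and since $D=O(\log T)$ under this choice of $H$ we get $C_0D\,\Delta_H=O(T^{-2}\log^2 T)$; therefore $T\,C_0D\,\Delta_H=O(T^{-1}\log^2 T)=o(1)$, giving the two $O(1)$ bounds, and $C_0D\,\Delta_H=O(T^{-2}\log^2 T)=O(1/T)$, giving the anytime bound for $l$.

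There is no deep obstacle here; the estimate is elementary once Lemma~\ref{lem: dac approx error} is available. The only points needing care are (i) confirming that the truncated iterates $\tilde x_t^\pi,\tilde u_t^\pi$ remain inside the region on which Assumption~\ref{assumption: fun bound} supplies the gradient bounds, so that the mean value inequality applies along the connecting segments, and (ii) fixing the constant in $H=\Theta(\log T)$ so that the $\mathrm{poly}(H)=\mathrm{poly}\log T$ factors appearing in $\Delta_H$ and in $D$ are absorbed by $(1-\rho)^H$; both are handled as above.
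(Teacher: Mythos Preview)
Your proposal is correct and follows essentially the same route as the paper: bound each per-step gap by the Lipschitz constant from Assumption~\ref{assumption: fun bound} times $\|x_t^\pi-\tilde x_t^\pi\|+\|u_t^\pi-\tilde u_t^\pi\|$, invoke Lemma~\ref{lem: dac approx error} to get the $(1-\rho)^H\cdot\mathrm{poly}(H)$ truncation error, and then pick $H=\Theta(\log T)$ with a sufficiently large constant (the paper takes $H=3\log T/\rho$) so that the geometric factor beats $T\cdot\mathrm{poly}\log T$. Your write-up is in fact slightly more careful than the paper's in two respects: you explicitly verify that $(\tilde x_t^\pi,\tilde u_t^\pi)$ stays in the region where Assumption~\ref{assumption: fun bound} applies, and you track the $D=O(\log T)$ dependence that the Lipschitz constant $C_0D$ carries.
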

\subsection{Proof of Lemma \ref{lem:fun-diff}}
We first study the following difference between $c_t(x^\pi_t, u^\pi_t)$ and $c_t(\tilde x^\pi_t, \tilde u^\pi_t)$ 
\begin{align}
&\sum_{t=1}^T c_t(x^\pi_t, u^\pi_t) - c_t(\tilde x^\pi_t, \tilde u^\pi_t)\nonumber\\
\leq & C_0 \sum_{t=1}^T \left(\|x^\pi_t-\tilde x^\pi_t\| + \|u^\pi_t - \tilde u^\pi_t\|\right) \nonumber\\
\leq& 2TC_0 \kappa(1-\rho)^H \frac{W\kappa^2(\kappa^2 + Ha\kappa_B\kappa^2)}{\rho(1-\kappa^2(1-\rho)^{H+1})}. \label{eq: c diff proof}
\end{align}
where the first inequality holds by Assumption \ref{assumption: fun bound} and the second inequality holds by Lemma \ref{lem: dac approx error}.

By following the same argument, we have
\begin{align}
&\sum_{t=1}^T d_t(x^\pi_t, u^\pi_t) - d_t(\tilde x^\pi_t, \tilde u^\pi_t)
\leq 2TC_0 \kappa(1-\rho)^H \frac{W\kappa^2(\kappa^2 + Ha\kappa_B\kappa^2)}{\rho(1-\kappa^2(1-\rho)^{H+1})}, \label{eq: d diff proof}
\end{align}
and 
\begin{align}
l(x^\pi_t, u^\pi_t) - l(\tilde x^\pi_t, \tilde u^\pi_t)
\leq 2C_0 \kappa(1-\rho)^H \frac{W\kappa^2(\kappa^2 + Ha\kappa_B\kappa^2)}{\rho(1-\kappa^2(1-\rho)^{H+1})}. \label{eq: l diff proof}
\end{align}
Choosing $H= 3\log T/\rho,$ we have 
\begin{align}
2C_0\kappa(1-\rho)^H \frac{W\kappa^2(\kappa^2 + 2Ha\kappa_B\kappa^2)}{\rho(1-\kappa^2(1-\rho)^{H+1})} \leq& \frac{2C_0W(1+\kappa)^5(1 + 2Ha\kappa_B)}{\rho(T^3-\kappa^2)}\nonumber\\
\leq& \frac{2C_0W\kappa^5(1 + 2Ha\kappa_B)}{T^2} \nonumber\\
\leq& \frac{1}{T}, \label{eq: large T}
\end{align}
where the first inequality holds because $e^{-x} \geq 1- x$ for any $0\leq x\leq 1;$ the second inequality holds $T\geq 2/\rho + \kappa^2$; the last inequality holds  because $T \geq 2C_0W\kappa^5(1 + 2Ha\kappa_B).$ The proof is completed by using \eqref{eq: large T} in \eqref{eq: c diff proof}, \eqref{eq: d diff proof}, and \eqref{eq: l diff proof}. 

\section{Representation Ability of Constrained  DAC}
We quantify the representation ability
of disturbance-action policy with constraints in the following lemma, which are used to prove the regret in Theorems \ref{thm:occa}, \ref{thm:occa-hard}, and \ref{thm:occa-best} as suggested in the roadmap. 
\begin{lemma}
Under a policy in constrained DAC with the memory size $H=\Theta(\log T)$, we have
\begin{align*}
    \min_{\pi \in \tilde{\Omega} \bigcap
 \mathcal E}\sum_{t=1}^T c_t(\tilde x_t^{\pi}, \tilde u_t^{\pi}) - \sum_{t=1}^T c_t(x^{K^*}_t, u^{K^*}_t) =& O(1).
\end{align*}\label{lem:dac-rep}
\end{lemma}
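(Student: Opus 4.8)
The plan is to show that the optimal constrained linear controller $K^*$ can be approximated arbitrarily well by a fixed-weight disturbance-action policy $\pi(K,\mathbf M)$ that lies in $\tilde\Omega$, so that the minimum over $\tilde\Omega\cap\mathcal E$ of the approximated cost is no larger than the cost of $K^*$ up to an $O(1)$ additive term. First I would recall the standard representation result from \cite{AgaBulHaz_19}: for any $(\kappa,\rho)$-strongly stable $K^*$ (and the fixed stabilizing $K$ used by the algorithm), there exists a sequence of matrices $\mathbf M^{[i]}$ with $\|\mathbf M^{[i]}\|\le a(1-\rho)^i$ such that the state/input trajectory of $\pi(K,\mathbf M)$ matches that of the linear controller $K^*$ up to an exponentially small tail; concretely, there exists $\mathbf M\in\mathcal M$ with $\|x_t^{\pi(K,\mathbf M)}-x_t^{K^*}\|$ and $\|u_t^{\pi(K,\mathbf M)}-u_t^{K^*}\|$ bounded by a term of order $\kappa^{O(1)}(1-\rho)^{H}$, which is $O(1/T)$ once $H=\Theta(\log T)$ is chosen as in Lemma \ref{lem:fun-diff}. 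Combining this with the truncation error bound of Lemma \ref{lem: dac approx error} (which controls $\|x_t^{\pi}-\tilde x_t^{\pi}\|$ and $\|u_t^{\pi}-\tilde u_t^{\pi}\|$ by the same exponentially small quantity) yields $\|\tilde x_t^{\pi(K,\mathbf M)}-x_t^{K^*}\|+\|\tilde u_t^{\pi(K,\mathbf M)}-u_t^{K^*}\|=O(1/T)$ uniformly in $t$.

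The next step is to handle feasibility, i.e., to produce a policy in $\tilde\Omega\cap\mathcal E$ rather than merely a policy close to $K^*$. Here I would invoke Assumption \ref{assumption: slater}: there is a strictly feasible $\pi^{\mathrm{in}}\in\mathcal E$ with $d_t(x_t^{\pi^{\mathrm{in}}},u_t^{\pi^{\mathrm{in}}})\le-\delta$ and $l(x_t^{\pi^{\mathrm{in}}},u_t^{\pi^{\mathrm{in}}})\le-\delta$ for all $t$. Since $K^*$ itself is feasible for the (non-truncated) offline problem, $d_t(x_t^{K^*},u_t^{K^*})\le 0$ and $l\le 0$; using the Lipschitz/gradient bounds of Assumption \ref{assumption: fun bound} together with the $O(1/T)$ trajectory closeness of $\pi(K,\mathbf M)$ to $K^*$, the approximated constraint values $\tilde d_t(\mathbf M)$ and $\tilde l(\mathbf M)$ are at most $O(1/T)$. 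Then, exactly as in the proof of Lemma \ref{lem:fun-tight}/\ref{lem:fun-loose}, I would take a convex combination $\mathbf M'=(1-\tfrac{\beta}{\delta})\mathbf M+\tfrac{\beta}{\delta}\mathbf M^{\mathrm{in}}$ with $\beta=\Theta(1/T)$; convexity of $\tilde d_t$ and $\tilde l$ in $\mathbf M$ (inherited from convexity of $d_t,l$ in $(x,u)$ and linearity of $\tilde x_t^\pi,\tilde u_t^\pi$ in $\mathbf M$), plus the fact that $\mathcal M$ is convex and closed under such combinations when $a$ is taken large enough, gives $\tilde d_t(\mathbf M')\le 0$ and $\tilde l(\mathbf M')\le 0$ for all $t$, i.e., $\pi(K,\mathbf M')\in\tilde\Omega\cap\mathcal E$. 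The cost inflation from perturbing $\mathbf M$ to $\mathbf M'$ is $O(L\cdot\|\mathbf M-\mathbf M'\|\cdot T)=O(L D \beta T/\delta)=O(1)$ by Lipschitzness summed over $t$.

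Finally I would assemble the estimate:
\begin{align*}
\min_{\pi\in\tilde\Omega\cap\mathcal E}\sum_{t=1}^T c_t(\tilde x_t^\pi,\tilde u_t^\pi)
\le \sum_{t=1}^T c_t(\tilde x_t^{\pi(K,\mathbf M')},\tilde u_t^{\pi(K,\mathbf M')})
\le \sum_{t=1}^T c_t(x_t^{K^*},u_t^{K^*}) + O(1),
\end{align*}
where the second inequality uses the $O(1/T)$ trajectory closeness of $\pi(K,\mathbf M)$ to $K^*$ together with the bounded gradients of $c_t$ (giving $O(1)$ over the horizon), plus the $O(1)$ feasibility-correction term from the previous paragraph. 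The main obstacle I anticipate is the bookkeeping in step two: one must verify that the strictly feasible $\pi^{\mathrm{in}}$ from Assumption \ref{assumption: slater} also has a strict margin at the level of the \emph{approximated} constraints $\tilde d_t,\tilde l$ (not just the true ones) — this needs the $H=\Theta(\log T)$ truncation bound again so the margin degrades only by $O(1/T)$ and stays negative — and that the convex combination $\mathbf M'$ still respects the per-coordinate decay constraint $\|\mathbf M'^{[i]}\|\le a(1-\rho)^i$ defining $\mathcal M$, which holds because the constraint set is convex and both $\mathbf M,\mathbf M^{\mathrm{in}}$ satisfy it. Everything else is a routine application of Lipschitz continuity and the already-established approximation lemmas.
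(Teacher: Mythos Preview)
Your proposal is correct and follows essentially the same route as the paper: both construct the fixed-weight DAC policy $\bar{\mathbf M}^{[i]}=(K-K^*)\tilde A_{K^*}^{i-1}$ that tracks $K^*$ up to an $O(1/T)$ trajectory error, observe that its truncated constraint values are at most $O(1/T)$, and then use a Slater-point convex combination to restore exact feasibility at $O(1)$ cost. The only cosmetic difference is that the paper packages the feasibility-repair step as a black-box invocation of Lemma~\ref{lem:fun-loose} (comparing the minimum over $\tilde\Omega$ to that over the $\bar\epsilon$-relaxed set $\bar\Omega$), whereas you inline the same convex-combination argument directly.
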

We define a disturbance-action policy $\pi(K, \{\mathbf M_t\}) \in \mathcal E$ such that $$\mathbf M_t = \mathbf{\bar M}, \forall t \in [T] ~~\text{and}~~ \mathbf{\bar M}^{[i]} = (K-K^*) \tilde{A}_{K^*}^{i-1}.$$ 
and we denote it to be $\bar \pi.$ Let $\bar \epsilon = 1/T.$ In the following, we show that
\begin{align}
d_t(\tilde x^{\bar{\pi}}_t, \tilde u^{\bar{\pi}}_t) \leq \bar \epsilon, ~ l(\tilde x^{\bar{\pi}}_t, \tilde u^{\bar{\pi}}_t) \leq \bar \epsilon, \forall t \in [T].
\nonumber
\end{align}
For the constraint $d_t,$ we have 
\begin{align*}
     d_t(\tilde x^{\bar{\pi}}_t, \tilde u^{\bar{\pi}}_t) =&  d_t(\tilde x^{\bar{\pi}}_t, \tilde u^{\bar{\pi}}_t) - d_t(x^{\bar{\pi}}_t, u^{\bar{\pi}}_t) + d_t(x^{\bar{\pi}}_t, u^{\bar{\pi}}_t) - d_t(x^{K^*}_t, u^{K^*}_t) + d_t(x^{K^*}_t, u^{K^*}_t)\\
    \leq&  d_t(\tilde x^{\bar{\pi}}_t, \tilde u^{\bar{\pi}}_t) - d_t(x^{\bar{\pi}}_t, u^{\bar{\pi}}_t) + d_t(x^{\bar{\pi}}_t, u^{\bar{\pi}}_t) - d_t(x^{K^*}_t, u^{K^*}_t)\\
    \leq& C_0 (\|\tilde x_{t}^{\bar{\pi}} - x_{t}^{\bar{\pi}}\| + \|\tilde u_{t}^{\bar{\pi}} - u_{t}^{\bar{\pi}}\|) + C_0 (\|x_{t}^{\bar{\pi}} - x_{t}^{K^*}\| + \|u_{t}^{\bar{\pi}} - u_{t}^{K^*}\|)\\
    \leq& 2C_0 \kappa(1-\rho)^H \frac{W\kappa^2(\kappa^2 + Ha\kappa_B\kappa^2)}{\rho(1-\kappa^2(1-\rho)^{H+1})} + \frac{2C_0HW (\kappa+1)^2 \kappa_B a(1-\rho)^{H+1}}{\rho}\\
    \leq& \frac{2C_0W(1+\kappa)^5(1 + 2Ha\kappa_B)}{\rho(T^3-\kappa^2)}\\
    \leq& \bar \epsilon
\end{align*}
where the first inequality holds because $K^* \in \Omega$ and $d_t(x^{K^*}_t, u^{K^*}_t) \leq 0;$ the second inequality holds because of Assumption \ref{assumption: fun bound}; the third inequality holds because of Lemma \ref{lem: dac approx error} and Lemma \ref{lem:xu diff}. The last inequality holds for a large $T$ by letting $H = 3\log T/\rho$ according to \eqref{eq: large T}. 
For the constraint $l,$ by following the similar steps, we have $$l(\tilde x^{\bar{\pi}}_t, \tilde u^{\bar{\pi}}_t) \leq \bar{\epsilon}, \forall t \in [T].$$ 
Define the set $\bar{\Omega}$
$$\bar{\Omega} = \{\pi ~|~ d_t(\tilde x_t^{\pi}, \tilde u_t^{\pi}) \leq \bar{\epsilon},~  l(\tilde x_t^{\pi}, \tilde u_t^{\pi}) \leq \bar{\epsilon}, \forall t \in [T]\}.$$
Therefore, we have established $$\bar \pi \in \bar{\Omega}.$$
Next, we use $\bar \pi$ to bridge $\pi$ and $K^*,$ and prove Lemma \ref{lem:dac-rep} in the following. 
\begin{align*}
    &\min_{\pi \in \tilde{\Omega} \bigcap
 \mathcal E}\sum_{t=1}^T c_t(\tilde x_t^{\pi}, \tilde u_t^{\pi}) - \sum_{t=1}^T c_t(x^{K^*}_t, u^{K^*}_t)  \\
    =&\min_{\pi \in \tilde{\Omega} \bigcap
 \mathcal E}\sum_{t=1}^T c_t(\tilde x_t^{\pi}, \tilde u_t^{\pi}) - \sum_{t=1}^T c_t(\tilde x_t^{\bar \pi}, \tilde u_t^{\bar \pi}) + \sum_{t=1}^T [c_t(\tilde x_t^{\bar \pi}, \tilde u_t^{\bar \pi}) - c_t(x_t^{\bar \pi}, u_t^{\bar \pi})]\\
    & + \sum_{t=1}^T [c_t(x_t^{\bar \pi}, u_t^{\bar \pi}) - c_t(x^{K^*}_t, u^{K^*}_t)]
\end{align*}
\begin{itemize}
    \item The first term can be bounded as follows 
\begin{align*}
    &\min_{\pi \in \tilde{\Omega} \bigcap
 \mathcal E}\sum_{t=1}^T c_t(\tilde x_t^{\pi}, \tilde u_t^{\pi}) - \sum_{t=1}^T c_t(\tilde x_t^{\bar \pi}, \tilde u_t^{\bar \pi})\\
    \leq&\min_{\pi \in \tilde{\Omega} \bigcap
 \mathcal E}\sum_{t=1}^T c_t(\tilde x_t^{\pi}, \tilde u_t^{\pi}) - \min_{\pi \in \bar{\Omega} \bigcap
 \mathcal E} \sum_{t=1}^T c_t(\tilde x_t^{\pi}, \tilde u_t^{\pi})
\end{align*}
To quantify the difference term, we need to invoke Lemma \ref{lem:fun-loose} by comparing two optimal solutions in $\tilde{\Omega}$ and $\bar{\Omega}.$  
By Lemma \ref{lem:fun-loose}, we show 
\begin{align*}
    \min_{\pi \in \tilde{\Omega} \bigcap
 \mathcal E}\sum_{t=1}^T c_t(\tilde x_t^{\pi}, \tilde u_t^{\pi}) - \min_{\pi \in \bar{\Omega} \bigcap
 \mathcal E} \sum_{t=1}^T c_t(\tilde x_t^{\bar \pi}, \tilde u_t^{\bar \pi}) \leq TLH \bar{\epsilon}.
\end{align*}

\item The second term can be bounded as follows
\begin{align*}
\sum_{t=1}^T c_t(\tilde x_t^{\bar \pi}, \tilde u_t^{\bar \pi}) - c_t(x_t^{\bar \pi}, u_t^{\bar \pi})
\leq& C_0 (\|\tilde x_{t}^{\bar{\pi}} - x_{t}^{\bar{\pi}}\| + \|\tilde u_{t}^{\bar{\pi}} - u_{t}^{\bar{\pi}}\|) \\
\leq& 2TC_0 \kappa(1-\rho)^H \frac{W\kappa^2(\kappa^2 + Ha\kappa_B\kappa^2)}{\rho(1-\kappa^2(1-\rho)^{H+1})}.
\end{align*}

\item The third term is also bounded by $O(\log T)$ according to Lemma \ref{lem:xu diff}.
\begin{align*}
\sum_{t=1}^T c_t(x^{\bar{\pi}}_t, u^{\bar{\pi}}_t) - c_t(x^{K^*}_t, u^{K^*}_t) 
\leq& TC_0 (\|x_{t}^{\bar{\pi}} - x_{t}^{K^*}\| + \|u_{t}^{\bar{\pi}} - u_{t}^{K^*}\|)\\
\leq& \frac{2THC_0W (\kappa+1)^2 \kappa_B a(1-\rho)^{H+1}}{\rho}.
\end{align*}
\end{itemize}
We combine these three terms and complete the proof as follows: 
\begin{align*}
    &\min_{\pi \in \tilde{\Omega} \bigcap
 \mathcal E}\sum_{t=1}^T c_t(\tilde x_t^{\pi}, \tilde u_t^{\pi}) - \sum_{t=1}^T c_t(x^{K^*}_t, u^{K^*}_t)  \\
    \leq& TLH \bar{\epsilon} + 2TC_0 \kappa(1-\rho)^H \frac{W\kappa^2(\kappa^2 + Ha\kappa_B\kappa^2)}{\rho(1-\kappa^2(1-\rho)^{H+1})} + \frac{2THC_0W (\kappa+1)^2 \kappa_B a(1-\rho)^{H+1}}{\rho} \\
    \leq& TLH \bar{\epsilon} + \frac{2TC_0W(1+\kappa)^5(1 + 2Ha\kappa_B)}{\rho(T^3-\kappa^2)} \\
    \leq& LH + 1
\end{align*}
where the last inequality holds for a large $T$ by letting $H = 3\log T/\rho$ according to \eqref{eq: large T}. 

Lastly, we present the following key lemma in proving Lemma \ref{lem:dac-rep}, which is related to the differences of $x^{\bar{\pi}}_t$ and $x^{K^*}_t$ and $u^{\bar{\pi}}_t$ and $u^{K^*}_t.$ 
\begin{lemma}
Under a disturbance-action policy $\pi(K, \{\mathbf M_t\}) \in \mathcal E$ that satisfies $$\mathbf M_t = \mathbf{\bar M}, \forall t \in [T] ~~\text{and}~~ \mathbf{\bar M}^{[i]} = (K-K^*) \tilde{A}_{K^*}^{i-1},$$ we have
\begin{align}
\|x_{t}^{\bar\pi} - x_{t}^{K^*}\| 
\leq& \frac{HW \kappa^2 \kappa_B a(1-\rho)^{H+1}}{\rho}, \nonumber\\
\|u_{t}^{\bar\pi} - u_{t}^{K^*}\| 
\leq& \frac{2HW \kappa^3 \kappa_B a(1-\rho)^{H+1}}{\rho}. \nonumber
\end{align}\label{lem:xu diff}
\end{lemma}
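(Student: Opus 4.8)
The plan is to compare the disturbance-to-state representations of the two closed loops. Recall that under $u_t^{K^*}=-K^*x_t^{K^*}$ the dynamics are $x_{t+1}^{K^*}=\tilde A_{K^*}x_t^{K^*}+w_t$ with $\tilde A_{K^*}=A-BK^*$, so (with zero initial state) $x_t^{K^*}=\sum_{i=1}^{t-1}\tilde A_{K^*}^{\,i-1}w_{t-i}$, while under $\bar\pi$ the representation $x_t^{\bar\pi}=\sum_{i=1}^{t}\Psi_{t,i}^{\bar\pi}w_{t-i}$ holds with $\Psi_{t,i}^{\bar\pi}=0$ for $i>2H$. Hence $x_t^{\bar\pi}-x_t^{K^*}=\sum_i\big(\Psi_{t,i}^{\bar\pi}-\tilde A_{K^*}^{\,i-1}\big)w_{t-i}$ (reading $\Psi_{t,i}^{\bar\pi}=0$ for $i>2H$ and $\tilde A_{K^*}^{\,i-1}=0$ for $i\ge t$), so $\|x_t^{\bar\pi}-x_t^{K^*}\|\le W\sum_i\|\Psi_{t,i}^{\bar\pi}-\tilde A_{K^*}^{\,i-1}\|$ and it suffices to control each matrix coefficient.

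The crux is the algebraic identity that the weights $\bar{\mathbf M}^{[i]}=(K-K^*)\tilde A_{K^*}^{\,i-1}$ are exactly the ones making the un-truncated disturbance-to-state map of the DAC agree with that of $K^*$. I would prove by induction on $i\le H$ that $\Psi_{t,i}^{\bar\pi}=\tilde A_{K^*}^{\,i-1}$: for $i\le H$ none of the relevant weights $\bar{\mathbf M}^{[1]},\dots,\bar{\mathbf M}^{[i-1]}$ are truncated, so from the definition of $\Psi$ one gets the recursion $\Psi_{t,i+1}^{\bar\pi}=\tilde A\,\Psi_{t,i}^{\bar\pi}+B\bar{\mathbf M}^{[i]}$ (and $\Psi_{t,1}^{\bar\pi}=I$); plugging the induction hypothesis and using $\tilde A+B(K-K^*)=A-BK+BK-BK^*=\tilde A_{K^*}$ turns this into $\tilde A_{K^*}\tilde A_{K^*}^{\,i-1}=\tilde A_{K^*}^{\,i}$. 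This annihilates the entire $i\le H$ block of the difference.

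It then remains to bound the two residual blocks. For $H<i\le 2H$, $\Psi_{t,i}^{\bar\pi}$ is the full expansion $\tilde A^{i-1}+\sum_{j=1}^{i-1}\tilde A^{j-1}B\bar{\mathbf M}^{[i-j]}$ with every term dropped whose $\tilde A$-power exceeds $H-1$ or whose $\bar{\mathbf M}$-index falls outside $[1,H]$ (and the leading $\tilde A^{i-1}$ is always dropped since $i>H$); each dropped term has norm $\le\kappa^2\kappa_B a(1-\rho)^{i-1}$ by the strong-stability bounds $\|\tilde A^k\|,\|\tilde A_{K^*}^{\,k}\|\le\kappa^2(1-\rho)^k$, $\|B\|\le\kappa_B$, $\|\bar{\mathbf M}^{[k]}\|\le a(1-\rho)^k$ (the last also showing $\bar{\mathbf M}\in\mathcal M$), and there are at most $O(i)=O(H)$ of them, so summing over $H<i\le 2H$ the geometric factor yields a term of the form $C\,HW\kappa^2\kappa_B a(1-\rho)^{H+1}/\rho$; the remaining tail $\sum_{i>2H}\|\tilde A_{K^*}^{\,i-1}\|W\le\kappa^2W(1-\rho)^{2H}/\rho$ is absorbed. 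For the input, I would write $u_t^{\bar\pi}-u_t^{K^*}=-K(x_t^{\bar\pi}-x_t^{K^*})+(K^*-K)x_t^{K^*}+\sum_{i=1}^{H}\bar{\mathbf M}^{[i]}w_{t-i}$ and observe that $(K^*-K)x_t^{K^*}=-\sum_{i=1}^{t-1}\bar{\mathbf M}^{[i]}w_{t-i}$, so the last two terms collapse to $-\sum_{i>H}\bar{\mathbf M}^{[i]}w_{t-i}$, of norm $\le aW(1-\rho)^{H+1}/\rho$; combining this with $\|K\|\le\kappa$ times the state bound gives the stated input bound.

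The main obstacle I anticipate is purely the combinatorial bookkeeping in the residual block: one must track exactly which index ranges survive the three truncation constraints $i\le H$, $j\le H$, $i-j\in[1,H]$ in the definition of $\Psi_{t,i}^{\bar\pi}$, and be careful that the count of dropped terms is bounded by $O(H)$ (not $O(i)$ summed over all $i$) so that the geometric sum leaves only a single factor of $H$; the constants, and in particular why one clean power of $(1-\rho)$ can be pulled out, are pinned down here. A minor additional point is the initial transient for $t\le 2H$, which is handled by the alternative representation $x_t^{\bar\pi}=\tilde A^Hx_{t-H}+\sum_{i=1}^{2H}\Psi_{t,i}^{\bar\pi}w_{t-i}$ together with the $O(1)$ bound on $\|x_{t-H}\|$ and the geometric decay of $\tilde A^H$, contributing nothing to the order.
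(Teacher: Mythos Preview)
Your proposal is correct and follows essentially the same route as the paper: both use the disturbance-to-state expansion, establish the identity $\Psi_{t,i}^{\bar\pi}=\tilde A_{K^*}^{\,i-1}$ for $i\le H$ (the paper via the explicit telescoping sum $\sum_{j=1}^{i-1}\tilde A_K^{j-1}(\tilde A_{K^*}-\tilde A_K)\tilde A_{K^*}^{\,i-j-1}=\tilde A_{K^*}^{\,i-1}-\tilde A_K^{\,i-1}$, you via the equivalent recursion), and then bound the residual and the input difference in the same way. The bookkeeping you worry about is actually simpler in the paper: for $i>H$ it does not track ``dropped terms'' at all but simply bounds $\|\Psi_{t,i}^{\bar\pi}\|\le H\kappa^2\kappa_B a(1-\rho)^{i-1}$ directly (at most $H$ summands, each of size $\kappa^2\kappa_B a(1-\rho)^{i-1}$) and sums the geometric series, and there is no separate transient case since the full expansion $x_t^{\bar\pi}=\sum_{i=1}^t\Psi_{t,i}^{\bar\pi}w_{t-i}$ is used throughout.
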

\begin{proof}
\normalfont
Under a disturbance-action policy, we have
\begin{align*}
    x_{t}^{\bar{\pi}} = \sum_{i=1}^t  \Psi_{t,i}^{\bar{\pi}} w_{t-i},
\end{align*}
where $$\Psi_{t,i}^{\bar{\pi}}:= \Psi_{t,i}^{\bar{\pi}}(\mathbf{\bar M}, \cdots, \mathbf{\bar M}) = \tilde{A}_K^{i-1} \mathbb I(i\leq H) + \sum_{j=1}^H \tilde{A}_K^{j-1} B \mathbf{\bar M}^{[i-j]} \mathbb I_{i-j \in [1,H]}.$$
Therefore, we compute
\begin{align*}
    x_{t}^{\bar{\pi}} - x_{t}^{K^*} = \sum_{i=1}^t \left[(\tilde{A}_K^{i-1} - \tilde{A}_{K^*}^{i-1})\mathbb I(i\leq H) w_{t-i} + \sum_{j=1}^H \tilde{A}_K^{j-1} B \mathbf{\bar M}^{[i-j]}\mathbb I(1\leq i-j\leq H) w_{t-i}\right].
\end{align*}
According to the definition of $\mathbf{\bar M}^{[i-j]} = (K-K^*) \tilde{A}_{K^*}^{i-j-1},$ we have for any $i\leq H$ such that 
\begin{align*}
    &\sum_{j=1}^H \tilde{A}_K^{j-1} B \mathbf{\bar M}^{[i-j]}\mathbb I(1\leq i-j\leq H) \\
    =& \sum_{j=1}^H \tilde{A}_K^{j-1} B (K-K^*) \tilde{A}_{K^*}^{i-j-1}\mathbb I(1\leq i-j\leq H) \\
    =& \sum_{j=1}^H \tilde{A}_K^{j-1} (\tilde{A}_{K^*}-\tilde{A}_{K}) \tilde{A}_{K^*}^{i-j-1}\mathbb I(1\leq i-j\leq H) \\
    =& \sum_{j=1}^{i-1} \tilde{A}_K^{j-1} (\tilde{A}_{K^*}-\tilde{A}_{K})  \tilde{A}_{K^*}^{i-j-1} \\
    =& \tilde{A}_{K^*}^{i-1} - \tilde{A}_K^{i-1}
\end{align*}
which implies
\begin{align*}
    \|x_{t}^{\bar\pi} - x_{t}^{K^*}\| =& \| \sum_{i=H+1}^t \sum_{j=1}^H \tilde{A}_K^{j-1} B \mathbf{\bar M}^{[i-j]}\mathbb I(1\leq i-j\leq H) w_{t-i} \| \\
    \leq& \sum_{i=H+1}^t HW \kappa^2 \kappa_B a (1-\rho)^i \\
    \leq& \frac{HW \kappa^2 \kappa_B a(1-\rho)^{H+1}}{\rho}.
\end{align*}
Recall the definition of $u_t^{\bar \pi} = -K x_t^{\bar \pi} + \sum_{i=1}^H \mathbf{\bar M}^{[i]} w_{t-i}$ and we compute 
\begin{align*}
    \|u_{t}^{\bar\pi} - u_{t}^{K^*}\| =& \|K^* x^{K^*}_t -K x^{\bar\pi}_t + \sum_{i=1}^H \mathbf{\bar M}^{[i]} w_{t-i}\| \\
    =& \|(K^* - K) x^{K^*}_t + K (x^{K^*}_t - x^{\bar\pi}_t) + \sum_{i=1}^H (K-K^*) \tilde{A}_{K^*}^{i-1} w_{t-i}\| \\
    =& \|K (x^{K^*}_t - x^{\bar\pi}_t)\| + \|\sum_{i=H+1}^t (K-K^*) \tilde{A}_{K^*}^{i-1} w_{t-i}\| \\
\leq& \frac{2HW \kappa^3 \kappa_B a(1-\rho)^{H+1}}{\rho}.
\end{align*}
\end{proof}

\newpage
\section{Addition Details of the Experiments}
\label{app-exp}
\subsection{QVT Control}
The system equation is $\ddot{x}_t = \frac{u_t}{m} - g - \frac{I^a {\dot x}_t}{m} + w_t$, where $x_t$ is the altitude of the quadrotor, $u_t$ is the motor thrust, $m$ is the mass of the quadrotor, $g$ is the gravitational acceleration, and $I^a$ is the drag coefficient of the air resistance. 
Let $m=1 \mathrm{kg}$, $g=9.8 \mathrm{m/s}^2$, and $I^a=0.25\mathrm{kg/s}$. The system is discretized with $\Delta_t=1s$. 
We impose time-varying constraints, $z_t \ge 0.3 + 0.3\sin(t/10)$, to emulate the complicated time-varying obstacles on the ground. 
The static affine constraints are $z_t\le 1.7$ and $0\le v_t \le 12$. We consider a time-varying quadratic cost function $0.1(z_t-0.7)^2 + 0.1\dot{z}_t^2 + \chi_t (v_t-9.8)^2$, where $\chi_t\sim U(0.1, 0.2).$ 
We simulate two different wind conditions $w_t\sim U(-5.5,-4.5)$ (winds blow down) and $w_t\sim U(4.5, 5.5)$ (winds blow up), respectively.
The parameters of COCA-Soft, COCA-Hard, and COCA-Best2Worlds for QVT control are in Table~\ref{tab:exp-para}.
\begin{table}[ht]
    \centering
    \begin{tabular}{c|c|c|c|c|c|c}
         Algorithm & $V$ & $\eta$ & $\alpha$ & $\epsilon$ & $\gamma$ & $H$\\
         \hline
         COCA-Soft & $\sqrt{T}$ & $T^{3/2}$ & $T$ & $1/\sqrt{T}$ & N/A & 7\\
         \hline
         COCA-Hard & $1$ & $T^{3/2}$ & $T^{2/3}$ & N/A & $T^{2/3}$ & 7\\
         \hline
         COCA-Best2Worlds & $\sqrt{t}$ & $t^{3/2}$ & $0.5t^{3/2}$ & $1/\sqrt{T}$ & $t^{3/2}$ & 7
    \end{tabular}
    \caption{Parameters of COCA-Soft, COCA-Hard, and COCA-Best2Worlds in QVT Control.}
    \label{tab:exp-para}
\end{table}

Figure~\ref{fig:exp-results-app-1} and Figure~\ref{fig:exp-results-app-2} shows the experiment results for QVT control with winds blowing down $w_t\sim U(-5.5,-4.5)$ and with winds blowing up $w_t\sim U(4.5, 5.5),$ respectively. 
These two figures show that COCA-Soft, COCA-Hard, and COCA-Best2Worlds achieve much better performance than the stable controller. 
Specifically, our algorithms have much smaller cumulative costs, near-zero static constraint violations, negative cumulative soft violations that decrease by time, and cumulative hard violations that remain unchanged small constant shortly after the initial stages. These results verify our algorithms are very adaptive to the adversarial environment and achieve minimal cumulative costs and best constraints satisfaction. Moreover, we observe COCA algorithms have almost identical performance on cumulative costs and soft violations, but COCA-Hard and COCA-Best2Worlds has a smaller hard violation than COCA-Soft. It justifies the penalty-based design is efficient in tackling the hard violation.

\begin{figure}[h]
    \centering
    \tabcolsep=0.02\linewidth
    \divide\tabcolsep by 8
    \begin{tabular}{ccccc}
        \includegraphics[width=0.194\linewidth]{Figures/setting1_negative_v2/cost.pdf} &
        \includegraphics[width=0.192\linewidth]{Figures/setting1_negative_v2/constr_viol_t_var_soft.pdf} &
        \includegraphics[width=0.19\linewidth]{Figures/setting1_negative_v2/constr_viol_t_var_hard.pdf} &
        \includegraphics[width=0.193\linewidth]{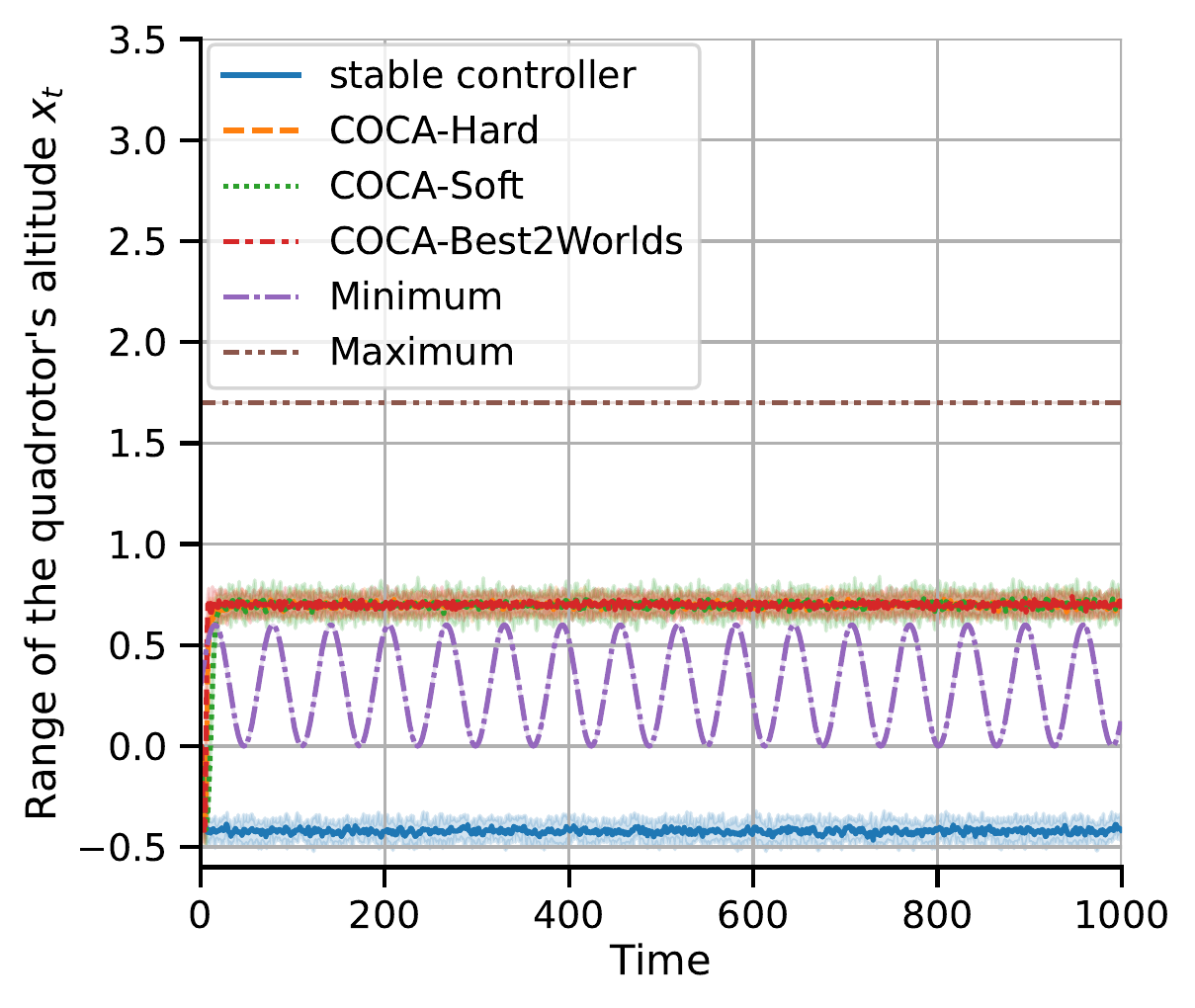} &
        \includegraphics[width=0.195\linewidth]{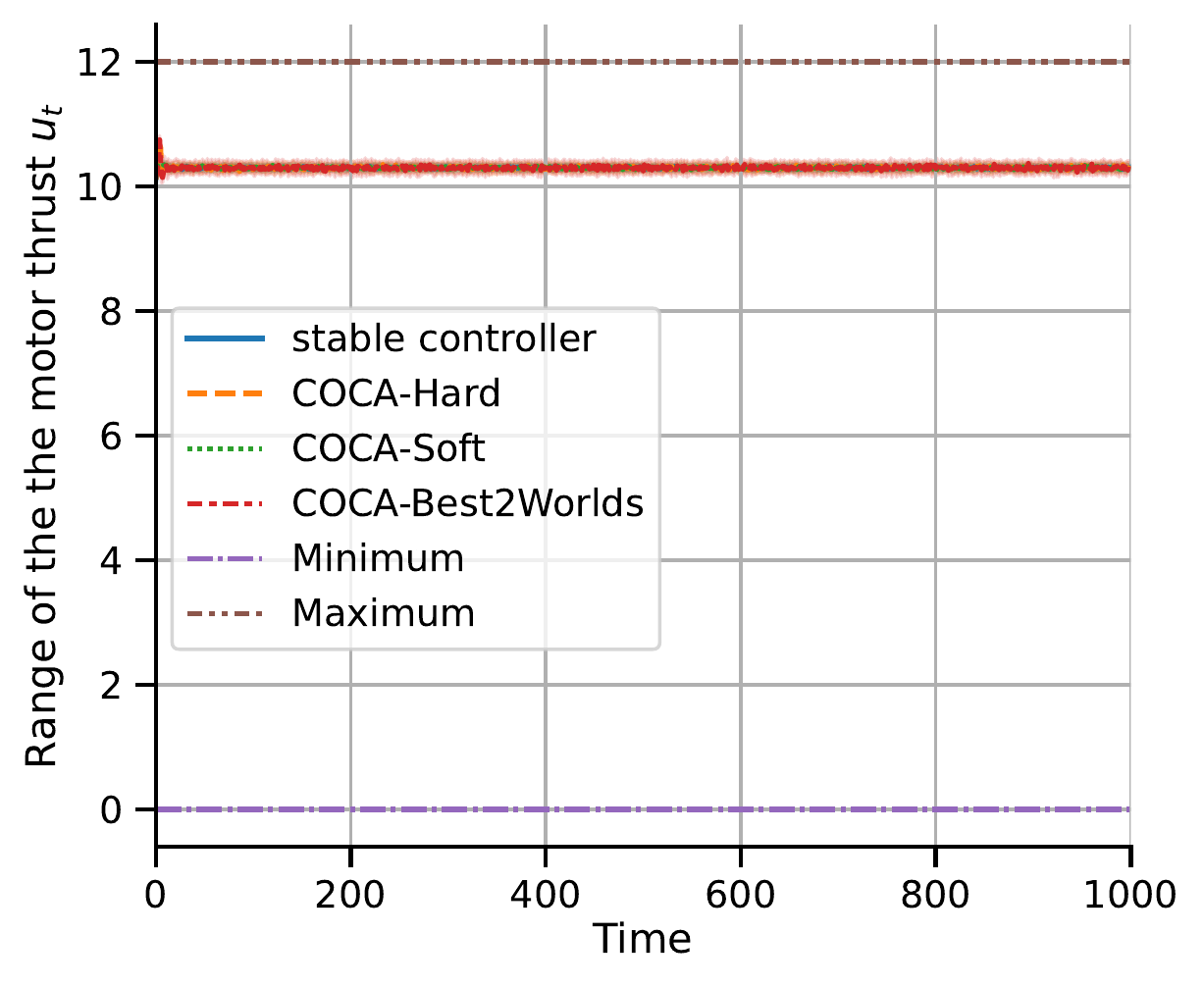}\\
        \small (a) Cumulative costs &
        \small (b) Soft violations &
        \small (c) Hard violations &
        \small (d) Range of $x_t$ &
        \small (e) Range of $u_t$
    \end{tabular}
    \caption{Experiment results for QVF control with winds blowing down $w_t\sim U(-5.5,-4.5).$
    The lines are plotted by averaging over 10 independent runs. The shaded areas in Figures (a)-(c) are 95\% confidence intervals and in Figure (d)-(e) are the full ranges of the data.
    }
    \label{fig:exp-results-app-1}
\end{figure}

\begin{figure}[h]
    \centering
    \tabcolsep=0.02\linewidth
    \divide\tabcolsep by 8
    \begin{tabular}{ccccc}
        \includegraphics[width=0.194\linewidth]{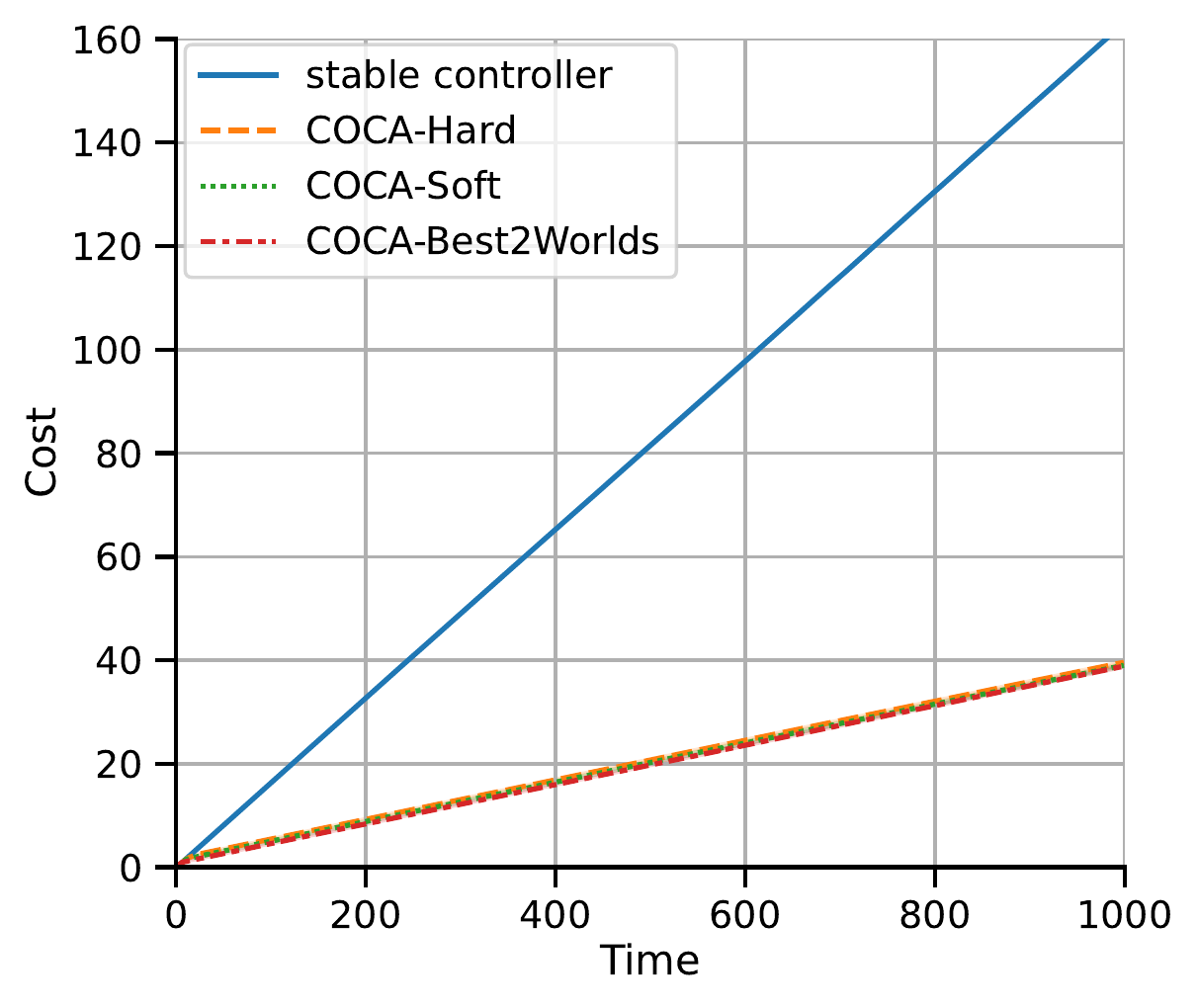} &
        \includegraphics[width=0.188\linewidth]{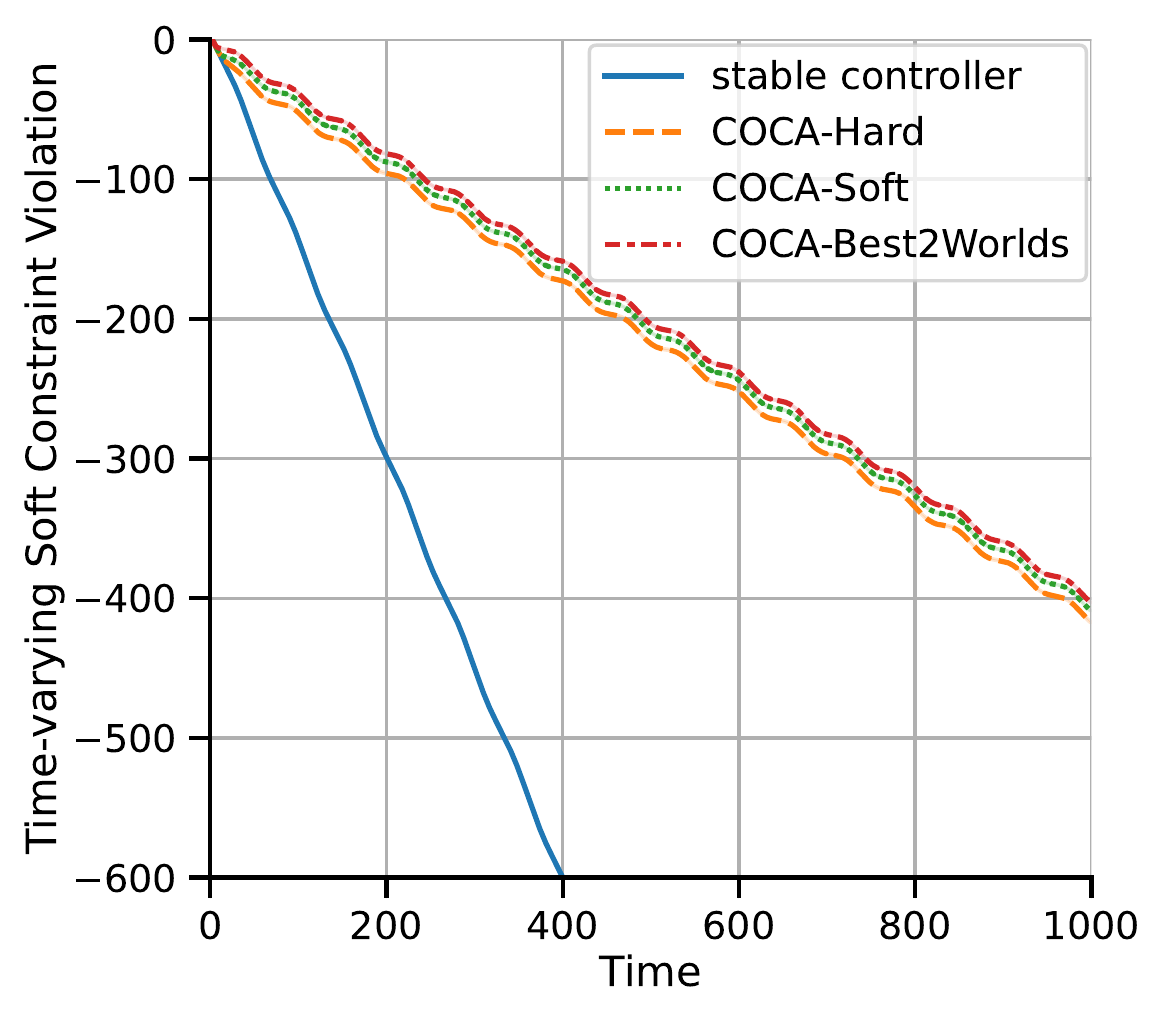} &
        \includegraphics[width=0.19\linewidth]{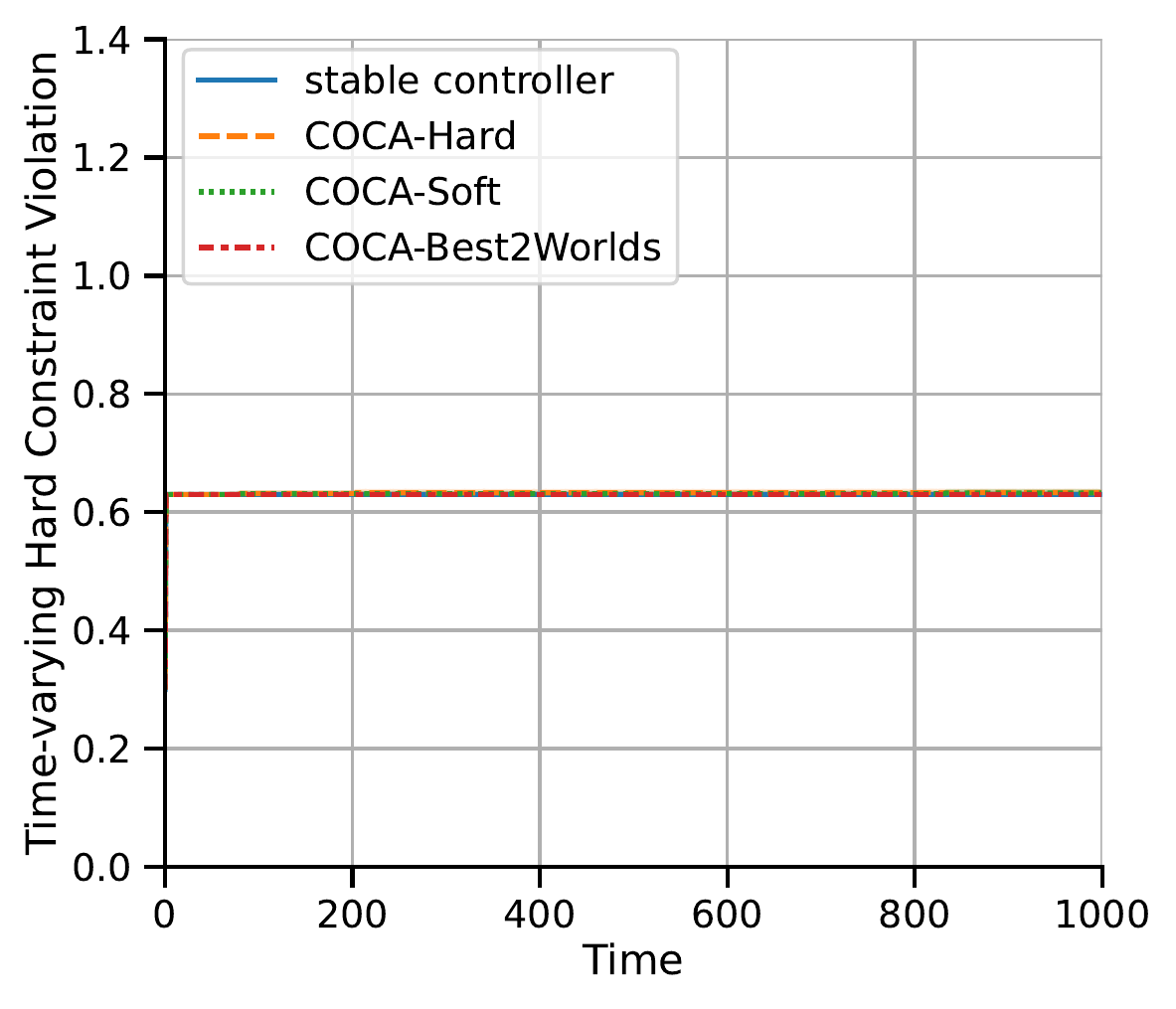} &
        \includegraphics[width=0.195\linewidth]{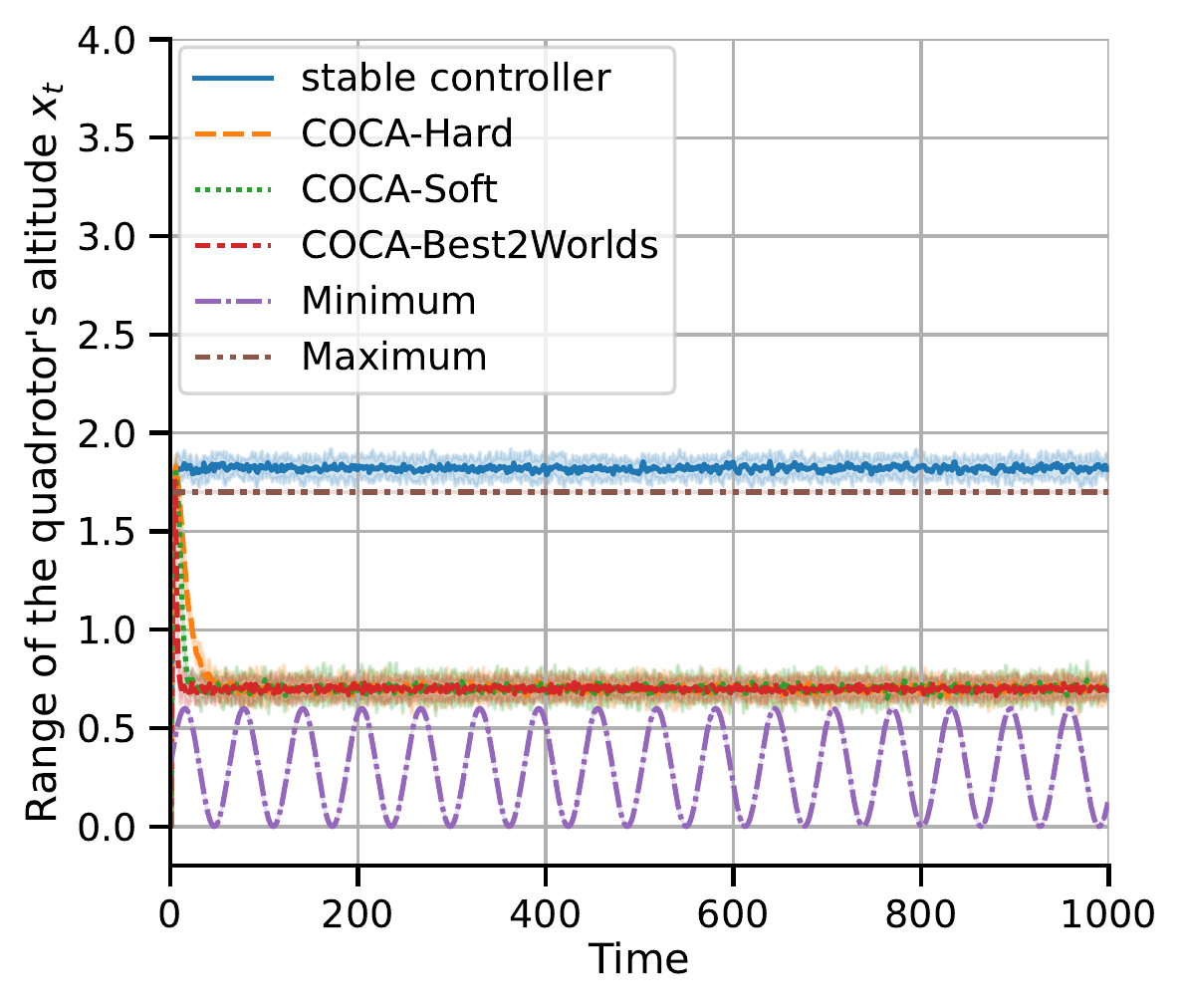} &
        \includegraphics[width=0.195\linewidth]{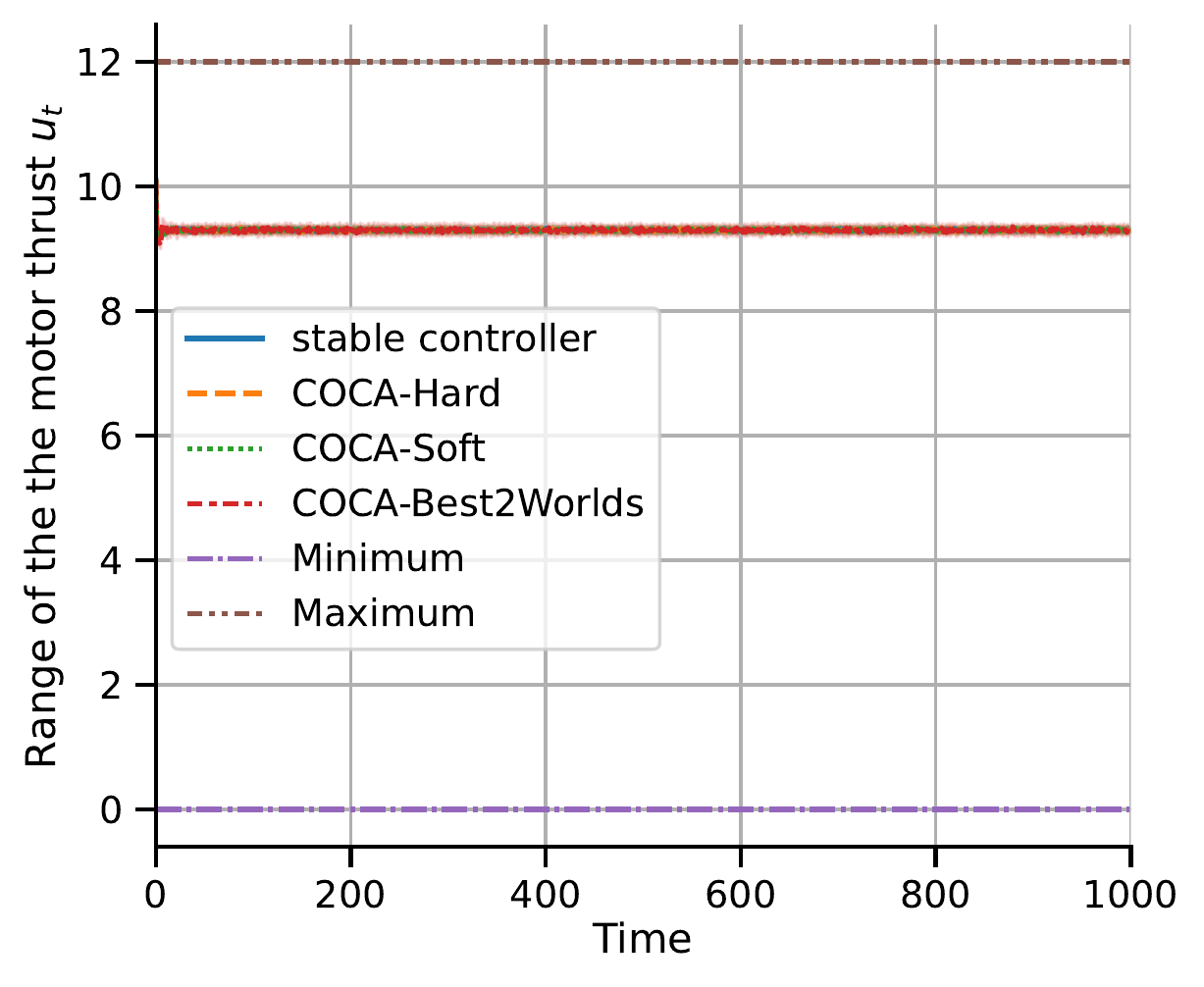}\\
        \small (a) Cumulative costs &
        \small (b) Soft violations &
        \small (c) Hard violations &
        \small (d) Range of $x_t$ &
        \small (e) Range of $u_t$
    \end{tabular}
    \caption{Experiment results for QVF control with winds blowing up $w_t\sim U(4.5, 5.5).$
    The lines are plotted by averaging over 10 independent runs. The shaded areas in Figures (a)-(c) are 95\% confidence intervals and in Figure (d)-(e) are the full ranges of the data.
    }
    \label{fig:exp-results-app-2}
\end{figure}

\newpage
\subsection{HVAC Control}
The system equation is $\dot{x}_t = \frac{\theta^o - x_t}{v\zeta} - \frac{u_t}{v} + \frac{w_t+\iota}{v},$ where $x_t$ is the room temperature, $u_t$ is the airflow rate of the HVAC system as the control input, $\theta^o$ is the outdoor temperature, $w_t$ is the random disturbance, $\iota$ represents the impact of the external heat sources, $v$ and $\zeta$ denotes the environmental parameters. Let $v=100$, $\zeta=6$, $\theta^o=30 ^\circ C$, and $\iota=1.5$. Let $w_t\sim U(-1.1,1.3)$ and we discretize the system with $\Delta_t = 60s.$ 
Similar to \cite{LiDasLi_21}, the state and input constraints are $22.5 \le x_t \le 25.5$ and $0.5\le u_t\le 4.5,$ respectively. We specify the time-varying cost functions $c_t=2(x_t-24)^2 + \chi_t(u_t-2.5)^2$ with $\chi_t \sim U(0.1, 4.0).$ The parameters of COCA for HVAC control are in Table~\ref{tab:exp-para-HVAC}.
\begin{table}[ht]
    \centering
    \begin{tabular}{c|c|c|c|c|c|c}
         Algorithm & $V$ & $\eta$ & $\alpha$ & $\epsilon$ & $\gamma$ & $H$\\
         \hline
         COCA & $0.1$ & $T^{3/2}$ & $T^{2/3}$ & N/A & $T^{2/3}$ & 7
    \end{tabular}
    \caption{Parameters of COCA in HVAC Control.}
    \label{tab:exp-para-HVAC}
\end{table}

We compare COCA with OGD-BZ algorithm (COCA-Soft, COCA-Hard, and COCA-Best2Worlds are exactly identical, called COCA, because there only exist static state and input constraints). Figure~\ref{fig:exp-results-app-3} (a)-(c) show the cumulative costs, the ranges of the room temperature $x_t$ and control input $u_t.$ We observe that COCA has a significantly better cumulative cost than OGD-BZ algorithm with a near-zero constraint violation. The results verify our approach is effective in designing less-conservative COCA algorithms.

\begin{figure}[h]
    \centering
    \tabcolsep=0.02\linewidth
    \divide\tabcolsep by 8
    \begin{tabular}{ccc}
        \includegraphics[width=0.32\linewidth]{Figures/setting2_biased/cost.pdf} &
        \includegraphics[width=0.32\linewidth]{Figures/setting2_biased/x_range.pdf} &
        \includegraphics[width=0.32\linewidth]{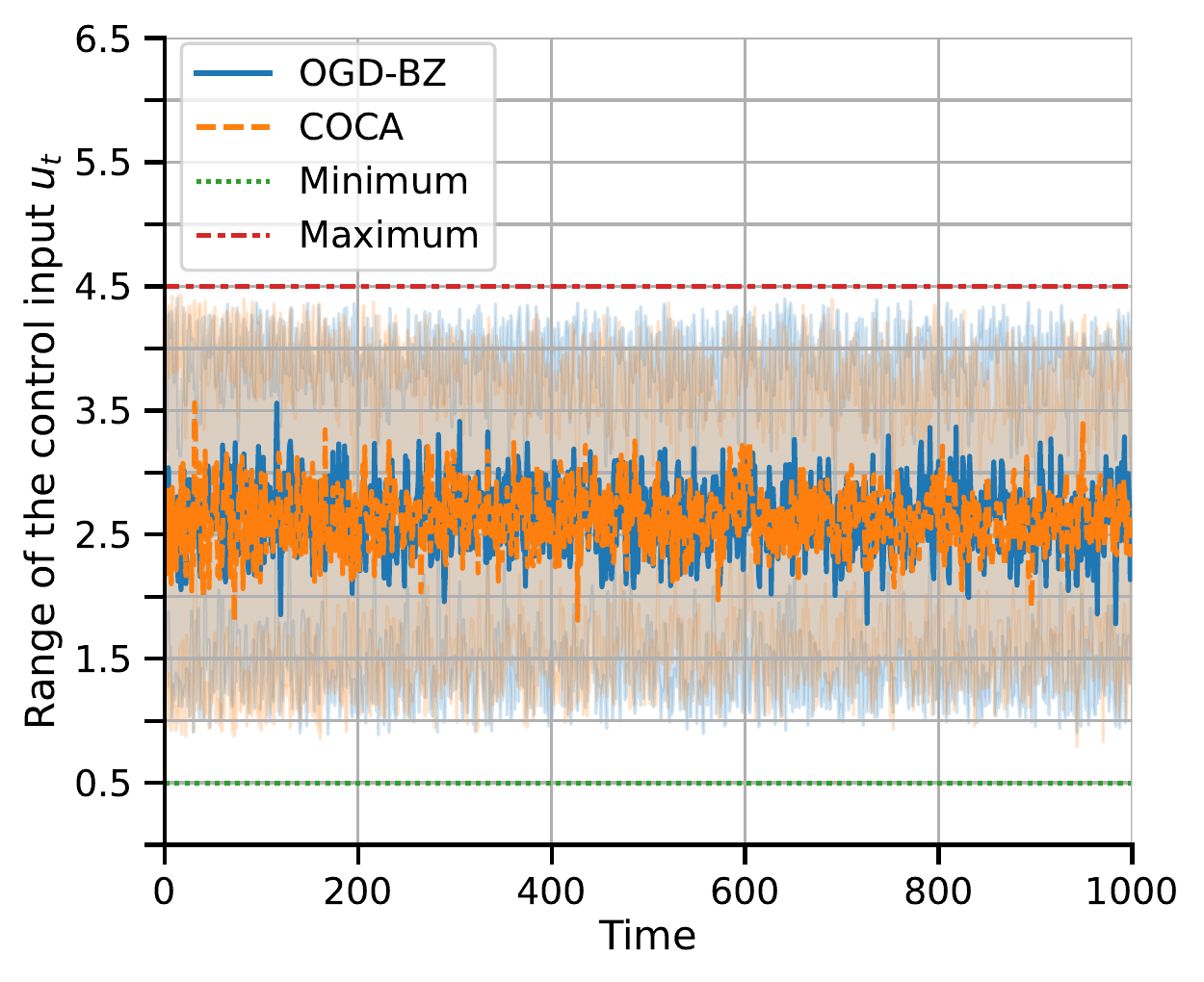}\\
        \small (a) Cumulative costs &
        \small (b) Range of $x_t$ &
        \small (c) Range of $u_t$
    \end{tabular}
    \caption{Experiment results for HVAC Control.
    The lines are plotted by averaging over 10 independent runs. The shaded areas in Figure (a) are 95\% confidence intervals and in Figures (b)-(c) are the full ranges of the data.
    }
    \label{fig:exp-results-app-3}
\end{figure}
\end{document}